\documentclass[reqno]{amsart}

\usepackage[utf8]{inputenc} 
\usepackage[T1]{fontenc}    
\usepackage{url}            
\usepackage{booktabs}       
\usepackage{caption}
\usepackage{amsfonts}       
\usepackage{nicefrac}       
\usepackage{microtype}      
\usepackage[]{xcolor}         

\usepackage{amsmath,amssymb,amsfonts,mathrsfs}
\usepackage{nicefrac}
\usepackage{algorithmic}

\usepackage{algorithm}
\usepackage{graphicx}

\usepackage{multirow}

\usepackage{wrapfig}

\usepackage{booktabs}

\usepackage{tikz}
\usetikzlibrary{positioning, arrows.meta, calc, matrix}

\usepackage{xcolor}

\definecolor{strblue}{HTML}{0F1ED2}
\definecolor{strred}{HTML}{E61E8C}

\usepackage{natbib}

\usepackage{comment}

\usepackage{aliascnt}
\usepackage{hyperref}
\usepackage{cleveref}

\theoremstyle{definition}

\newtheorem{theorem}{Theorem}[section]
\crefname{theorem}{Theorem}{Theorems}
\Crefname{theorem}{Theorem}{Theorems}

\newaliascnt{definition}{theorem}
\newtheorem{definition}[definition]{Definition}
\aliascntresetthe{definition}
\crefname{definition}{Definition}{Definitions}
\Crefname{definition}{Definition}{Definitions}

\newaliascnt{proposition}{theorem}
\newtheorem{proposition}[proposition]{Proposition}
\aliascntresetthe{proposition}
\crefname{proposition}{Proposition}{Propositions}
\Crefname{proposition}{Proposition}{Propositions}

\newaliascnt{lemma}{theorem}
\newtheorem{lemma}[lemma]{Lemma}
\aliascntresetthe{lemma}
\crefname{lemma}{Lemma}{Lemmas}
\Crefname{lemma}{Lemma}{Lemmas}

\newaliascnt{corollary}{theorem}
\newtheorem{corollary}[corollary]{Corollary}
\aliascntresetthe{corollary}
\crefname{corollary}{Corollary}{Corollaries}
\Crefname{corollary}{Corollary}{Corollaries}

\newaliascnt{remark}{theorem}
\newtheorem{remark}[remark]{Remark}
\aliascntresetthe{remark}
\crefname{remark}{Remark}{Remarks}
\Crefname{remark}{Remark}{Remarks}

\newaliascnt{example}{theorem}
\newtheorem{example}[example]{Example}
\aliascntresetthe{example}
\crefname{example}{Example}{Examples}
\Crefname{example}{Example}{Examples}

\newaliascnt{assumption}{theorem}
\newtheorem{assumption}[assumption]{Assumption}
\aliascntresetthe{assumption}
\crefname{assumption}{Assumption}{Assumptions}
\Crefname{assumption}{Assumption}{Assumptions}

\numberwithin{equation}{section}

\crefname{equation}{Equation}{Equations}
\crefname{figure}{Figure}{Figures}
\crefname{table}{Table}{Tables}
\crefname{algorithm}{Algorithm}{Algorithms}

\crefname{section}{}{}
\creflabelformat{section}{\S #2#1#3}
\crefname{subsection}{}{}
\creflabelformat{subsection}{\S #2#1#3}
\crefname{appendix}{Appendix}{Appendixes}

\DeclareMathOperator*{\argmin}{arg\,min} 
\DeclareMathOperator*{\argmax}{arg\,max} 
\DeclareMathOperator{\Conv}{Conv}

\DeclareMathOperator{\inte}{int}
\DeclareMathOperator{\ext}{ext}

\DeclareMathOperator{\diam}{diam}

\usepackage{pifont}

\newcommand{\sub}{\mathrm{sub}}
\newcommand{\optdex}{\mathrm{OS}}
\newcommand{\twoline}[2]{\begin{tabular}{@{}c@{}}#1\\ {}#2\end{tabular}}

\title[Outperformance Inverse Optimization]{Outperformance Inverse Optimization: Learning Objective Functions that Outperform Agent Decisions}

\author{%
  Akira Kitaoka
}
\address{NEC Corporation, 1753 Shimonumabe, Nakahara-ku, Kawasaki, Kanagawa, Japan }
\email{akira-kitaoka@nec.com}

\keywords{
  inverse optimization, outperformance, suboptimality loss, mixed-integer linear programming, DC optimization, generalization error%
}

\begin{document}

\begin{abstract}

Inverse optimization estimates the weights of an objective function that explain observed decisions as optimal solutions, and is used in a variety of fields. For mixed-integer linear programs (MILPs), existing methods aim to reproduce the observations as optimal solutions, and thus learn compromise weights when the observations are suboptimal. We propose outperformance inverse optimization, which instead seeks weights that induce, at each state, an optimal solution outperforming the observed action in every component. We give a loss function that can be evaluated with forward-problem oracles alone and is thus applicable to MILPs, together with gradient-based and DC optimization algorithms for minimizing it. For weights inducing a unique outperforming optimal solution at all observations, we prove that the probability of failing to induce such a solution at a new state (the generalization error) is bounded by a quantity inversely proportional to the number of observations, and that this bound is tight in the number of observations up to logarithmic factors. In experiments on synthetic and real data, the proposed methods improve the prediction of solutions outperforming the actions over existing methods.

\end{abstract}

\maketitle

\section{Introduction}
Many processes and systems, from human decision making to natural phenomena, can often be described by mathematical optimization. In practice, however, the true objective function of such a model is in many cases unknown in advance. Therefore, the inverse optimization problem~\citep{ahuja2001inverse,heuberger2004inverse,chan2019inverse,chan2023inverse}, which estimates the objective function from observed data, is of great practical importance.

Inverse optimization problems have been studied extensively \citep{ahuja2001inverse,heuberger2004inverse,chan2019inverse,chan2023inverse} and applied in a variety of fields, including
geophysics \citep{tarantola1998inverse,burton1992instance},
transportation \citep{bertsimas2015data},
power systems \citep{birge2017inverse},
scheduling of television advertisements \citep{suzuki2019TV},
and healthcare \citep{chan2022inverse}.
They have also been developed as a foundation of machine learning methods such as inverse reinforcement learning \citep{ng2000algorithms} and contrastive learning \citep{shi2023understanding}.


A challenge in inverse optimization is that the observed actions are not necessarily optimal solutions of the mathematical optimization under consideration (due to, e.g., measurement noise and the bounded rationality of the decision maker; \citealp{aswani2018inverse}).
For such observations, tractable methods that relax the KKT conditions, the variational inequality, or the strong duality constraint \citep{keshavarz2011imputing,bertsimas2015data,chan2014generalized,chan2019inverse} have been proposed, but all of them presuppose the convexity of the forward problem and cannot be extended to mixed-integer linear programs (MILPs) \citep{barmann2018online}.
On the other hand, methods \citep{barmann2017emulating,barmann2018online,kitaoka2024exact,lin2024conformal,sakaue2025revisiting_online_learning_approach,sakaue2025online,kitaoka2026explicit} that use losses evaluable only with an optimization oracle for the forward problem, such as the suboptimality loss \citep{mohajerin2018data}, are applicable also to MILPs, but since they search for weights that explain the actions as optimal solutions, for suboptimal actions they learn compromise weights.
Moreover, in designing mathematical models, an objective function that reproduces the observed actions may be required, but there are also situations that require one inducing solutions that outperform the observed actions in every indicator.

In this paper, we therefore propose a framework that learns weights inducing solutions that outperform the agent's actions instead of reproducing them. The contributions of this paper are the following four.

\textbf{Formulation and the OS loss}\ \ We formulate the problem of finding weights that give, at each state, an optimal solution outperforming the agent's action in every component as outperformance inverse optimization, and as its surrogate loss we introduce the outperformance suboptimality (OS) loss, which can be evaluated using only optimization oracles for the forward problem and its dominance-constrained version (\Cref{sec:setup,sec:performance_metric}).

\textbf{Minimization methods}\ \ As methods for minimizing the OS loss, we propose gradient-based optimization methods, including the projected subgradient method, and a DC (difference of convex) optimization method that terminates after finitely many iterations if the inner convex subproblems are solved exactly (\Cref{sec:proposed_method}). Moreover, when the feasible region of each state is an M$^\natural$-convex set (including M-convex sets) and the weight space is the probability simplex, we show that the uniform weight makes the OS loss $0$ at all observations and thus attains its global minimum (\Cref{prop:mnatural_uniform_exact}, \Cref{cor:testcut_per_call_N}).

\textbf{Generalization error}\ \ For weights inducing a unique outperforming optimal solution at all observations, we show that the probability of failing to induce such a solution at a new state (the generalization error) is bounded by a quantity inversely proportional to the number of observations $N$, and via a lower bound that this upper bound is tight in $N$ up to log factors (\Cref{sec:generalization}).

\textbf{Experiments}\ \ In five experiments with synthetic and real data, we show that, with the optimization method fixed, replacing the suboptimality loss of existing inverse optimization with the OS loss improves the prediction of solutions outperforming the actions (smaller componentwise shortfalls): indeed, the OS-loss variant wins in 14 of the 15 (optimization method, experiment) pairs (\Cref{sec:numerical_experiments}).

\section{Related work}
\label{sec:related_work}

Details, including classical inverse optimization, are given in \Cref{sec:appendix_related_work}.

\paragraph{Inverse optimization for noisy data}
As data-driven inverse optimization that handles the case where the observed actions are not optimal solutions, methods have been proposed that use as the loss the distance between the observations and the optimal solutions \citep{aswani2018inverse} (NP-hard even when the forward problem is convex), the residual of the KKT conditions \citep{keshavarz2011imputing}, the violation of the variational inequality \citep{bertsimas2015data}, the relaxation of the strong duality constraint \citep{chan2014generalized,chan2019inverse}, and distributionally robust optimization of the suboptimality loss \citep{mohajerin2018data}.
However, since these tractable reformulations use the KKT conditions or duality, they presuppose the convexity of the forward problem and cannot be extended to the integer case \citep{bertsimas2015data,barmann2018online}.

\paragraph{Inverse optimization for MILPs}
Methods that minimize the suboptimality loss using only the forward-problem oracle do not require the convexity of the feasible region and are therefore applicable to MILPs \citep{barmann2017emulating,barmann2018online,kitaoka2024exact,lin2024conformal,sakaue2025revisiting_online_learning_approach,sakaue2025online,kitaoka2026explicit}.
However, since these seek weights that explain the agent's actions as optimal solutions, when the actions are suboptimal the suboptimality loss is positive even at the true weight \citep{sakaue2025revisiting_online_learning_approach,sakaue2025online}.

\paragraph{Generalization error analysis of inverse optimization}
\citet{kitaoka2025inverse} bounded the expected suboptimality loss by $O(\sqrt{d/N})$ for inverse optimization of MILPs that learns both the objective function and the constraints ($d$ is the dimension of the weights and $N$ is the number of observations).
\citet{fatemi2026tight} bounded, for noiseless inverse optimization in which the action is the unique optimal solution under the true weight, the probability that the true optimal solution ceases to be optimal under the learned weight by $O(d/N)$, and showed that this order is tight.
Both presuppose that the agent's actions are optimal solutions.

\paragraph{Position of this paper}
Rather than explaining the agent's actions as optimal solutions, this paper seeks weights that induce optimal solutions outperforming the actions in every component.
Since the OS loss, like the suboptimality loss, can be evaluated using only the forward-problem oracle, it is applicable to nonconvex forward problems including MILPs, and it can learn weights inducing solutions that outperform the actions even when the actions are suboptimal.
In addition, the generalization error analysis (\Cref{sec:generalization}) evaluates, in a setting where the actions may be suboptimal, the probability that a unique optimal solution outperforming the action is not obtained at a new state.

\section{Problem setting}
\label{sec:setup}

We consider an offline learning setting with two players, a \emph{learner} and an \emph{agent}.\footnote{\label{footnote:linear-model} The ``agent'' is sometimes called the ``expert,'' but we do not use this term to avoid confusion with experts in online learning.}
Let $d$ be a positive integer, and let $\mathbb{R}^d$ be the space on which the forward optimization is defined.
We call a nonempty set $\mathcal{S}$ the state set, and for each state $s \in \mathcal{S}$ we denote the feasible region by $ \mathcal{X} (s) (\subseteq \mathbb{R}^d)$.
For a weight $\theta \in \mathbb{R}^d$ and a state $s \in \mathcal{S}$, we write the forward problem (linear optimization) and its optimal solution as
\begin{equation}
    \label{eq:fop}
    x^* (\theta , s) :\in \argmax_{x \in \mathcal{X} (s)} \langle \theta , x \rangle
    .
\end{equation}
Let $\Theta \subseteq \mathbb{R}^d_{\geq 0 } (:= [0, \infty)^d )$ be the weight space.

For $n=1, \ldots , N $, given a state $s^n \in \mathcal{S}$, the agent chooses $x^n \in \mathcal{X}(s^n)$ as its action.
From the observations $\{ (s^n, x^n) \}_{n=1}^N$, we wish to find a weight $\theta^*$ satisfying, at each state $s^n$,
\begin{equation}
    \label{eq:outperformance_IOP}
    x^*(\theta^*, s^n) \geq x^n
    ,
\end{equation}
where $\geq$ means that $\geq$ holds componentwise. Hereafter, when $x\geq x^\prime$, we say that $x$ dominates (outperforms) $x^\prime$. We call solving \cref{eq:outperformance_IOP} outperformance inverse optimization (outperformance IO).
Each component of $x$ is oriented so that larger is better (e.g., times enter with reversed sign), and observation noise is assumed to keep $x^n$ feasible.

Note that the set $\mathcal{X}(s)$ is not necessarily convex.
If $\mathcal{X}(s)$ is the feasible region of a MILP, the solution of any MILP solver, e.g., Gurobi \citep{gurobi} or SCIP via OR-Tools \citep{ortools}, can serve as the oracle $x^*(\theta,s)$.

\section{Performance metrics}
\label{sec:performance_metric}
For $s \in \mathcal{S}$, we define an optimal solution that dominates
$x^\prime \in \mathcal{X}(s)$ as follows (we call this problem the dominance-constrained forward problem):
\begin{equation}
        x^* (\theta ; x^\prime ,s ) : \in \argmax_{x \in \mathcal{X}_{\geq x^\prime} (s)} \langle \theta , x \rangle  ,
        \quad
        \mathcal{X}_{\geq x^\prime} (s) := \{ x \in \mathcal{X} (s) \, | \, x \geq x^\prime \}
        .
    \label{eq:fop_superior}
\end{equation}
As a metric useful for outperformance inverse optimization,
we define the outperformance suboptimality loss (OS loss) $\ell^{\optdex}$ for any $s \in \mathcal{S}$ and data $x^\prime \in \mathcal{X}(s)$ by
\begin{equation}
    \ell^{\optdex} (\theta ; x^\prime , s ) :=
    \max_{x \in \mathcal{X} (s)} \langle \theta , x \rangle
    -
    \max_{x \in \mathcal{X}_{\geq x^\prime} (s) } \langle \theta , x \rangle
    .
    \label{eq:def_optdex}
\end{equation}
By the inclusion $\mathcal{X} (s) \supseteq \mathcal{X}_{\geq x^\prime} (s)$, the OS loss $\ell^{\optdex} $ is nonnegative.
The OS loss $\ell^{\optdex}(\theta;x^\prime,s)$ is $0$ if and only if there exists an optimal solution $x\in\argmax_{x\in\mathcal{X}(s)}\langle\theta,x\rangle$ satisfying $x\geq x^\prime$. In particular, if no point of $\mathcal{X}_{\geq x^\prime}(s)$ is optimal under any $\theta\in\Theta$ (e.g., an efficient but unsupported action; \Cref{sec:appendix_unsupported}), no weight attains \cref{eq:outperformance_IOP}.
However, when the optimal solution is not unique, the optimal solution $x^*(\theta,s)$ returned by the oracle does not necessarily dominate $x^\prime$.
To solve \cref{eq:outperformance_IOP}, we introduce the empirical mean of the OS loss
\begin{equation}
    \bar{\ell}^{\optdex} (\theta )
    :=
    \frac{1}{N} \sum_{n=1}^N
    \ell^{\optdex} (\theta ; x^n , s^n )
    .
    \label{eq:def_mean_optdex}
\end{equation}
The empirical mean of the OS loss $\bar{\ell}^{\optdex}(\theta)$ is $0$ if and only if, for each $n$, there exists an optimal solution $x\in\argmax_{x\in\mathcal{X}(s^n)}\langle\theta,x\rangle$ that dominates $x^n$.
In particular, if $\theta$ satisfies \cref{eq:outperformance_IOP}, then $\bar{\ell}^{\optdex}(\theta)=0$ since $x^*(\theta,s^n)\in\mathcal{X}_{\geq x^n}(s^n)$.
Conversely, even if $\bar{\ell}^{\optdex}(\theta)=0$, when the optimal solution is not unique, $x^*(\theta,s^n)$ returned by the oracle does not necessarily dominate $x^n$, and attaining \cref{eq:outperformance_IOP} is not guaranteed.
Accordingly, since a weight satisfying \cref{eq:outperformance_IOP} attains the minimum value $0$ of the empirical mean of the OS loss $\bar{\ell}^{\optdex}$, we treat \cref{eq:outperformance_IOP} as the following problem using the empirical mean of the OS loss $\bar{\ell}^{\optdex}$:
\begin{equation}
    \text{minimize}\,
    \bar{\ell}^{\optdex} (\theta )
    \text{ such that } \theta \in \Theta
    .
    \label{eq:empirical_outperformance_IOP}
\end{equation}

\begin{remark}
    \label{rem:sub_loss_def}
    The suboptimality loss is defined by $\ell^{\sub}(\theta;x^\prime,s):=\max_{x\in\mathcal{X}(s)}\langle\theta,x\rangle-\langle\theta,x^\prime\rangle$ \citep{mohajerin2018data}.
\end{remark}

To solve outperformance inverse optimization, we introduce the following assumption.
\begin{assumption}
    \label{assu:margin}
    \begin{description}
        \item[(1)] The weight space $\Theta \subset \mathbb{R}^d_{\geq 0}$ is a nonempty bounded polyhedron $\Theta = \{ \theta \in \mathbb{R}^d_{\geq 0} \mid A_\Theta \theta \leq b_\Theta \}$ with $A_\Theta\in\mathbb{R}^{m_\Theta\times d}$ and $b_\Theta\in\mathbb{R}^{m_\Theta}$, and $\mathbf{0}\notin\Theta$ (in the implementation, we take the probability simplex $\Delta^{d-1} := \{ \theta \in \mathbb{R}^d_{\geq 0} \mid \sum_{i=1}^d \theta_i = 1 \}$).
        \item[(2)] For each state $s \in \mathcal{S}$, $\mathcal{X} (s) \subset \mathbb{R}^d$ is a finite union of nonempty bounded polyhedra (we call each bounded polyhedron constituting the union a piece). Moreover, $\mathcal{X}(s)$ is uniformly bounded in $s$, that is, $\sup_{s\in\mathcal{S}}\sup_{x\in\mathcal{X}(s)}\|x\|<\infty$.
        \item[(3)] (Extreme-point oracle) For any $\theta \in \mathbb{R}^d$, the forward-problem oracle $x^* (\cdot , s)$ and the dominance-constrained forward-problem oracle $x^* (\cdot \, ; x^\prime , s)$ return a vertex of a bounded polyhedral piece attaining the maximum.
    \end{description}
\end{assumption}

We denote the sets of all values that the forward-problem oracle $x^* (\cdot , s)$ and the dominance-constrained forward-problem oracle $x^* (\cdot \, ; x^\prime , s)$ can return by $\mathcal{Y} (s)$ and $\mathcal{Y}_{\geq x^\prime} (s)$, respectively.
\begin{remark}[On \Cref{assu:margin}]
    \label{rem:assu_margin}
    By (2), $\mathcal{X} (s)$ is nonempty and compact, and for $x^\prime \in \mathcal{X} (s)$ the dominance-constrained feasible region $\mathcal{X}_{\geq x^\prime} (s) := \{ x \in \mathcal{X} (s) \mid x \geq x^\prime \}$ (\cref{eq:fop_superior}) is also a finite union of nonempty bounded polyhedra.

    (3) is a natural assumption as the standard behavior of LP and MILP solvers based on the simplex method and branch-and-bound.
    Moreover, $\mathcal{Y} (s)$ and $\mathcal{Y}_{\geq x^\prime} (s)$ are finite subsets of $\mathcal{X}(s)$, since they consist of vertices of the bounded polyhedral pieces.
\end{remark}

\section{Proposed methods}
\label{sec:proposed_method}

\subsection{Gradient-based optimization methods}
\label{subsec:gradient_method}
\label{sec:gradient_based_method}

\begin{proposition}
    \label{prop:gredient_OA_loss}
    Assume \Cref{assu:margin}(2). Let $s \in \mathcal{S}$ and
    $x^\prime \in \mathcal{X}(s)$. Then
    $g (\theta; x^\prime, s) := x^* (\theta , s) - x^* (\theta ; x^\prime ,s )$ is
    a gradient of $\ell^{\optdex}  (\theta ; x^\prime , s )$ with respect to $\theta$ in the sense of weak derivatives (derivatives in the sense of Schwartz distributions; \citealp[cf.][]{kunzinger2019theory}).\footnote{For a continuously differentiable function, the weak derivative coincides with the ordinary gradient. Moreover, by Rademacher's theorem~\citep[cf.][Theorem~10.8~(ii)]{villani2009optimal}, $\ell^{\optdex}$ is differentiable almost everywhere, and $g=\nabla_{\theta}\ell^{\optdex}$ holds at such points (\Cref{sec:weak_derivative_OA_loss}).}
\end{proposition}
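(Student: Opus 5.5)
The plan is to write $\ell^{\optdex}(\theta;x^\prime,s)=\phi(\theta)-\psi(\theta)$ with
\[
\phi(\theta):=\max_{x\in\mathcal{X}(s)}\langle\theta,x\rangle,
\qquad
\psi(\theta):=\max_{x\in\mathcal{X}_{\geq x^\prime}(s)}\langle\theta,x\rangle .
\]
Both are support functions of compact sets. By \Cref{assu:margin}(2) and \Cref{rem:assu_margin}, $\mathcal{X}(s)$ and $\mathcal{X}_{\geq x^\prime}(s)$ are nonempty finite unions of bounded polyhedra. Hence each maximum is attained at a vertex of some piece, so it can be taken over a finite vertex set $V$ (respectively $V_{\geq}$). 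Thus $\phi(\theta)=\max_{v\in V}\langle\theta,v\rangle$ and $\psi(\theta)=\max_{v\in V_{\geq}}\langle\theta,v\rangle$ are convex, piecewise linear, and globally Lipschitz with constant $\sup_{x\in\mathcal{X}(s)}\|x\|$. Since the weak gradient is linear, it suffices to show that the weak gradient of $\phi$ is $x^*(\theta,s)$, and likewise for $\psi$ with $x^*(\theta;x^\prime,s)$.

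The first key step is the a.e.\ identification. The set of $\theta$ where two distinct vertices $v\neq w$ in $V$ tie, $\langle\theta,v-w\rangle=0$, is a finite union of hyperplanes and hence Lebesgue-null. Off this set the maximizer over $V$ is unique, so $\phi$ is locally the linear function $\langle\theta,v\rangle$ and $\nabla\phi(\theta)=v$. Any maximizer $x^*(\theta,s)\in\mathcal{X}(s)$ must then equal this $v$. Otherwise $x^*(\theta,s)$ lies in some piece $P$ attaining the max, and the face of $P$ maximizing $\langle\theta,\cdot\rangle$ would contain two vertices tying at the optimum, unless that face is a single point, which is a vertex. This argument does not even need \Cref{assu:margin}(3). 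Hence $x^*(\theta,s)=\nabla\phi(\theta)$ for a.e.\ $\theta$, and the same argument applies to $\psi$.

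The second step passes from a.e.\ classical gradients to weak derivatives. For a locally Lipschitz function the distributional gradient is an $L^\infty_{\mathrm{loc}}$ function equal a.e.\ to the classical gradient (Rademacher together with the standard result that Lipschitz functions lie in $W^{1,\infty}_{\mathrm{loc}}$). To stay self-contained, I would verify $\int\phi\,\partial_i\varphi=-\int(\partial_i\phi)\varphi$ for test functions $\varphi$ directly. On each line parallel to $e_i$, $\phi$ is Lipschitz and hence absolutely continuous, so one-dimensional integration by parts holds. Fubini then gives the identity, with the a.e.\ partial derivative coinciding with the component of $x^*(\theta,s)$. Since weak derivatives are defined only up to null sets, $g$ is independent of the oracle's tie-breaking. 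This also explains the footnote: $g=\nabla_\theta\ell^{\optdex}$ at points of differentiability.

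The main obstacle is the measurability and tie-handling in the first step. One must check that the selection $\theta\mapsto x^*(\theta,s)$ is measurable and locally bounded, so that $g$ defines a distribution. The finite-vertex reduction handles this: $g$ takes values in a bounded set and is constant on each open cell of the hyperplane arrangement, which has full measure, so it is measurable. I would also note that $\theta$ ranges over $\mathbb{R}^d$, so the weak derivative is taken on an open set, even though the paper restricts $\theta$ to $\Theta$ elsewhere.
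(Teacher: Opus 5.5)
Your proof is correct, but it identifies the a.e.\ gradient by a different route from the paper.

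The paper, in \Cref{prop:max_linear_weak_derivative_distribution_F}, works with an arbitrary nonempty compact $\mathcal{Z}$, and \Cref{assu:margin}(2) is used only to ensure that $\mathcal{X}(s)$ and $\mathcal{X}_{\geq x^\prime}(s)$ are nonempty and compact. Its argument runs as follows:
\begin{enumerate}
\item Rademacher's theorem gives a.e.\ differentiability of the Lipschitz function $F(\theta)=\max_{x\in\mathcal{Z}}\langle\theta,x\rangle$.
\item Any maximizer $x^*(\theta)$ is a subgradient of $F$ at $\theta$, so it equals $\nabla F(\theta)$ wherever $F$ is differentiable.
\item Measurability of $x^*$ follows from completeness of Lebesgue measure.
\item The distributional derivative is identified by letting difference quotients converge under dominated convergence.
\end{enumerate}

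You instead use the polyhedral structure. Reducing to the finite vertex set makes the exceptional set an explicit finite union of hyperplanes. Your face argument then shows that every maximizer, not only vertex ones, coincides with the unique vertex maximizer off that set; in the paper this is handled automatically by the subgradient inequality. Measurability comes for free, because $x^*$ agrees a.e.\ with a function that is constant on the open cells of the arrangement.

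Your passage to weak derivatives, via absolute continuity on lines and Fubini, costs about the same as the paper's difference-quotient argument. The one detail to write out is that almost every line parallel to $e_i$ meets the union of hyperplanes in a null set, which is again Fubini.

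What each route buys: the paper's covers any compact feasible set and needs no vertex structure. Yours avoids Rademacher entirely, since $\phi$ is piecewise linear along each line. It also shows that $g$ is piecewise constant on a finite hyperplane arrangement, which is the structure the paper itself exploits later in \Cref{lem:gen_face_constant} and \Cref{prop:argmax_unique_ae}.
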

For the proof, see \Cref{sec:weak_derivative_OA_loss}.
By \Cref{prop:gredient_OA_loss}, a gradient of the empirical mean of the OS loss $\bar{\ell}^{\optdex}$ in the sense of weak derivatives is given by $\frac{1}{N} \sum_{n=1}^N g (\theta; x^n, s^n )$.
Therefore,
gradient-based optimization methods can be applied to solve \cref{eq:empirical_outperformance_IOP}, although they may stall at a local minimum or a stationary point without reaching the minimum value.
In particular, in its simplest form, the projected subgradient method for the OS loss is the update $\theta^{k+1}\leftarrow\Pi_\Theta\bigl(\theta^k-\frac{\alpha_k}{N}\sum_{n=1}^N(x^*(\theta^k,s^n)-x^*(\theta^k;x^n,s^n))\bigr)$ ($\Pi_\Theta$ is the Euclidean projection onto $\Theta$ and $\alpha_k>0$ is the learning rate at iteration $k$), which differs from that for the suboptimality loss \citep{kitaoka2024exact} only in that the observed action $x^n$ is replaced by $x^*(\theta^k;x^n,s^n)$ (\Cref{alg:psg}).
In this paper, as gradient-based optimization methods, we adopt the projected subgradient method (PSG) \citep[cf.][]{boyd2003subgradient}, the Online Newton Step (ONS) \citep{hazan2007logarithmic}, and MetaGrad \citep{van2016metagrad}. For the Online Newton Step and MetaGrad, we adapt the implementation that \citet{sakaue2025online} used for online inverse linear optimization. Although these are online methods, we update them with the average of the gradients over all states at each iteration (full-batch). Implementation details are given in \Cref{sec:appendix_experiments}.

\subsection{DC optimization}
\label{sec:dc_opt}

We call a tuple of dominating feasible solutions $y^\prime=(y^{\prime n})_{n=1}^N$ ($y^{\prime n}\in\mathcal{X}_{\geq x^n}(s^n)$) a branch (not to be confused with branching in branch-and-bound). Fixing a branch replaces the second term of \cref{eq:def_optdex} (concave in $\theta$) with the linear function $\langle\theta,y^{\prime n}\rangle$, which yields the convex upper bound on the empirical mean of the OS loss
\begin{equation}
    \bar{\ell}^{\sub}(\theta;y^\prime)
    :=\frac{1}{N}\sum_{n=1}^N\Bigl[\max_{x\in\mathcal{X}(s^n)}\langle\theta,x\rangle-\langle\theta,y^{\prime n}\rangle\Bigr]
    \;\ge\;\bar{\ell}^{\optdex}(\theta)
    .
    \label{eq:def_sub_loss_main}
\end{equation}
We call $\min_{\theta\in\Theta}\bar{\ell}^{\sub}(\theta;y^\prime)$ the convex subproblem of the branch $y^\prime$.
We denote the set of all branches by $\mathcal{X}^{\mathrm{all}}_{\geq}:=\prod_{n=1}^N\mathcal{X}_{\geq x^n}(s^n)$ and the set of those consisting of values that the dominance-constrained forward-problem oracle can return by $\mathcal{Y}^{\mathrm{all}}_{\geq}:=\prod_{n=1}^N\mathcal{Y}_{\geq x^n}(s^n)$, and we let $m_{\sub}(y^\prime):=\min_{\theta\in\Theta}\bar{\ell}^{\sub}(\theta;y^\prime)$ be the minimum value of the convex subproblem of a branch $y^\prime\in\mathcal{X}^{\mathrm{all}}_{\geq}$ (the minimum is attained since $\Theta$ is compact and $\bar{\ell}^{\sub}(\cdot\,;y^\prime)$ is continuous).

\begin{proposition}[Decomposition into branches]
    \label{prop:dc_branch_decomp}
    Under \Cref{assu:margin}, for any $\theta\in\mathbb{R}^d$, we have
    \begin{equation}
        \bar{\ell}^{\optdex}(\theta)
        =
        \min_{y^\prime\in\mathcal{X}^{\mathrm{all}}_{\geq}} \bar{\ell}^{\sub}(\theta;y^\prime)
        =
        \min_{y^\prime\in\mathcal{Y}^{\mathrm{all}}_{\geq}} \bar{\ell}^{\sub}(\theta;y^\prime) .
        \label{eq:dc_pointwise_min}
    \end{equation}
    Therefore, for the finite set $\mathcal{Y}^{\mathrm{all}}_{\geq}$, we have
    \begin{equation}
        \min_{\theta\in\Theta}\bar{\ell}^{\optdex}(\theta)
        =
        \min_{y^\prime\in\mathcal{Y}^{\mathrm{all}}_{\geq}} m_{\sub}(y^\prime) .
        \label{eq:dc_min_min}
    \end{equation}
\end{proposition}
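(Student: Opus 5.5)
The plan is to prove \cref{eq:dc_pointwise_min} by separating the minimization over branches across the $N$ observations. The only coupling between observations in $\bar{\ell}^{\sub}(\theta;y^\prime)$ is the average, and the branch set is a Cartesian product. Fix $\theta\in\mathbb{R}^d$. Each summand of $\bar{\ell}^{\sub}(\theta;y^\prime)$ depends only on its own component $y^{\prime n}$, and $\mathcal{X}^{\mathrm{all}}_{\geq}=\prod_n\mathcal{X}_{\geq x^n}(s^n)$. Hence the minimum over branches is the average of per-observation minima:
\begin{equation*}
\inf_{y^\prime\in\mathcal{X}^{\mathrm{all}}_{\geq}}\bar{\ell}^{\sub}(\theta;y^\prime)=\frac{1}{N}\sum_{n=1}^N\Bigl[\max_{x\in\mathcal{X}(s^n)}\langle\theta,x\rangle-\sup_{y\in\mathcal{X}_{\geq x^n}(s^n)}\langle\theta,y\rangle\Bigr].
\end{equation*}
The same identity holds with $\mathcal{Y}^{\mathrm{all}}_{\geq}$ and $\mathcal{Y}_{\geq x^n}(s^n)$ in place of $\mathcal{X}^{\mathrm{all}}_{\geq}$ and $\mathcal{X}_{\geq x^n}(s^n)$. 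It remains to show that each inner supremum is attained and equals $\max_{y\in\mathcal{X}_{\geq x^n}(s^n)}\langle\theta,y\rangle$, also when restricted to $\mathcal{Y}_{\geq x^n}(s^n)$.

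For the set $\mathcal{X}_{\geq x^n}(s^n)$, I use \Cref{rem:assu_margin}. It says this set is a finite union of nonempty bounded polyhedra, hence nonempty (it contains $x^n$) and compact. So $\langle\theta,\cdot\rangle$ attains its maximum there, the per-$n$ minimizer $y^{\prime n}$ exists, and the infimum is a minimum. The sum then reproduces $\ell^{\optdex}(\theta;x^n,s^n)$ by \cref{eq:def_optdex}, which gives the first equality.

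For the second equality, the key step is showing that the maximum over $\mathcal{X}_{\geq x^n}(s^n)$ is attained within $\mathcal{Y}_{\geq x^n}(s^n)$. I plan to use \Cref{assu:margin}(3). The dominance-constrained oracle at $\theta$ returns a point $x^*(\theta;x^n,s^n)$ that attains the maximum, and by definition this point lies in $\mathcal{Y}_{\geq x^n}(s^n)\subseteq\mathcal{X}_{\geq x^n}(s^n)$. Therefore
\begin{equation*}
\max_{y\in\mathcal{Y}_{\geq x^n}(s^n)}\langle\theta,y\rangle=\max_{y\in\mathcal{X}_{\geq x^n}(s^n)}\langle\theta,y\rangle.
\end{equation*}
Here $\le$ holds by inclusion and $\ge$ is witnessed by the oracle output. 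Choosing $y^{\prime n}=x^*(\theta;x^n,s^n)$ for every $n$ gives a branch in $\mathcal{Y}^{\mathrm{all}}_{\geq}$ that attains $\bar{\ell}^{\optdex}(\theta)$. Together with the inequality $\bar{\ell}^{\sub}\ge\bar{\ell}^{\optdex}$ from \cref{eq:def_sub_loss_main}, this proves \cref{eq:dc_pointwise_min}. The finiteness of $\mathcal{Y}^{\mathrm{all}}_{\geq}$ follows from \Cref{rem:assu_margin}.

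For \cref{eq:dc_min_min}, I minimize both sides of the second equality of \cref{eq:dc_pointwise_min} over $\theta\in\Theta$ and exchange the two minimizations:
\begin{equation*}
\min_{\theta}\min_{y^\prime}\bar{\ell}^{\sub}(\theta;y^\prime)=\min_{y^\prime}\min_{\theta}\bar{\ell}^{\sub}(\theta;y^\prime)=\min_{y^\prime}m_{\sub}(y^\prime).
\end{equation*}
The exchange is legitimate because both are minimizations of the same function over the product $\Theta\times\mathcal{Y}^{\mathrm{all}}_{\geq}$. On the right-hand side the outer minimum exists because the index set is finite and each $m_{\sub}(y^\prime)$ is attained. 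This in turn shows that the left-hand minimum is attained, with no separate continuity argument needed for $\bar{\ell}^{\optdex}$.

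The only delicate point I expect is the attainment of the maximum inside $\mathcal{Y}_{\geq x^n}(s^n)$. This is where \Cref{assu:margin}(3) is essential: it guarantees that the oracle returns a maximizer for the given $\theta$. The rest is bookkeeping with separable minima.
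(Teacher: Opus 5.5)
Your proposal is correct and follows the paper's proof essentially step for step. Both arguments separate the minimum across the product of branch sets and get attainment from compactness of $\mathcal{X}_{\geq x^n}(s^n)$. Both then replace $\mathcal{X}_{\geq x^n}(s^n)$ by $\mathcal{Y}_{\geq x^n}(s^n)$ using the maximizing oracle output from \Cref{assu:margin}(3), and exchange the two minima over $\Theta\times\mathcal{Y}^{\mathrm{all}}_{\geq}$ using the finiteness of the branch set and the attainment of each $m_{\sub}(y^\prime)$.
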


The proof is given in \Cref{subsec:dc_decomp}.

\Cref{prop:dc_branch_decomp} specializes to our problem the result of Sections 4.1--4.2 of \citet{pham1997convex} that a global solution of a polyhedral DC program reduces to finitely many convex programs (the DC structure of the empirical mean of the OS loss is described in \Cref{rem:OA_loss_is_DC}, and the correspondence in \Cref{subsec:dc_literature}). Our DC optimization uses this structure directly. By \cref{eq:dc_pointwise_min}, \cref{eq:empirical_outperformance_IOP} is equivalent to the joint minimization of $\bar{\ell}^{\sub}(\theta;y^\prime)$ over $(\theta,y^\prime)\in\Theta\times\mathcal{Y}^{\mathrm{all}}_{\geq}$, and \Cref{alg:dc_outer} in the appendix (\Cref{sec:appendix_dc_opt}) performs this joint minimization alternately in $y^\prime$ and $\theta$. The minimization in $y^\prime$ amounts to updating the branch by calling the dominance-constrained forward-problem oracle at each state, and the minimization in $\theta$ amounts to solving the convex subproblem of the branch. This alternating minimization is nothing but the simplified DCA (a simplified version of the DC Algorithm) of \citet{pham1997convex} (\Cref{rem:OA_loss_is_DC,subsec:dc_literature}).
Moreover, if we take the agent's actions themselves as the branch, $\bar{\ell}^{\sub}(\theta;(x^n)_{n=1}^N)$ is the empirical mean of the suboptimality loss \citep{mohajerin2018data}. Hence the first equality of \cref{eq:dc_pointwise_min} means that the empirical mean of the OS loss is the pointwise minimum of the empirical means of the suboptimality loss obtained by replacing the agent's actions with feasible solutions dominating them.

In this paper, we solve \cref{eq:empirical_outperformance_IOP} by a double-loop method with this simplified DCA as the outer loop and a solver such as the cutting-plane method \citep{kelley1960cutting} (the inner solver) as the inner loop. Details such as the pseudocode are given in \Cref{sec:appendix_dc_opt}.

If the inner convex subproblems are solved exactly, this double-loop method terminates after finitely many iterations (\Cref{prop:dc_finite_convergence}). If the optimal solutions of the dominance-constrained forward problems are unique at a point where \Cref{alg:dc_outer} stops because the branch is no longer updated (a fixed point of DCA), that point is a local minimizer of the empirical mean of the OS loss (\Cref{prop:dc_local_min}; the set of weights at which the optimal solution is not unique has measure $0$, \Cref{prop:argmax_unique_ae}). However, reaching the global minimum is not guaranteed. Since the OS loss is nonnegative, if the loss value becomes $0$, it can be confirmed that the point is a global minimizer (\Cref{prop:dc_global_certificate}). In particular, when the feasible region of each state is an M$^\natural$-convex set (including M-convex sets) and $\Theta=\Delta^{d-1}$, the uniform weight makes the empirical mean of the OS loss $0$ and attains the global minimum (\Cref{prop:mnatural_uniform_exact}, \Cref{cor:testcut_per_call_N}).

\section{Generalization error analysis}
\label{sec:generalization}

In this section, we evaluate whether weights learned from the observations induce a unique optimal solution outperforming the agent's action also at states that were not observed, that is, the generalization of \cref{eq:outperformance_IOP}.
We show that the generalization error (\Cref{def:gen_error}) of weights that give such optimal solutions at all observations is uniformly bounded by order $d/N$ up to log factors for fixed $\kappa$ (\Cref{assu:gen}) (in general, the bound contains a term $d\log\kappa/N$), and that this upper bound is tight up to log factors.
The proofs are given in \Cref{sec:appendix_generalization}.

Suppose that the pair $(s,x)$ of a state $s$ and the agent's action $x$ at that state follows a distribution $\mathcal{P}$, that the observations $\{(s^n,x^n)\}_{n=1}^N$ are i.i.d.\ samples from $\mathcal{P}$, and we denote the probability with respect to the distribution of the observations by $\mathbb{P}$ (see \Cref{subsec:appendix_gen_preliminaries} for details of the setting).
We make the following assumption.

\begin{assumption}
    \label{assu:gen}
    \begin{description}
        \item[(1)] $\kappa:=\operatorname{ess\,sup}_{s}|\mathcal{Y}(s)|$ satisfies $2\le\kappa<\infty$.
        \item[(2)] (Realizability) There exists $\theta^*\in\Theta$ such that, a.s.\ for $(s,x)\sim\mathcal{P}$, the optimal solution of the forward problem \cref{eq:fop} at the state $s$ is unique and that optimal solution $x^*(\theta^*,s)$ dominates the agent's action $x$ ($x^*(\theta^*,s)\geq x$).
    \end{description}
\end{assumption}

We define the generalization error as follows.

\begin{definition}
    \label{def:gen_error}
    For a weight $\theta\in\Theta$, a state $s\in\mathcal{S}$, and data $x^\prime\in\mathbb{R}^d$, we define the \emph{outperformance error} by
    \begin{equation*}
        \ell^{01}(\theta;x^\prime,s):=
        \begin{cases}
            0 & \text{if }\argmax_{y\in\mathcal{X}(s)}\langle\theta,y\rangle=\{v\}\text{ for some }v\geq x^\prime,\\
            1 & \text{otherwise},
        \end{cases}
    \end{equation*}
    and define the \emph{generalization error} as the probability $\mathrm{GE}(\theta):=\mathcal{P}\bigl(\ell^{01}(\theta;x,s)=1\bigr)$ with respect to $(s,x)\sim\mathcal{P}$.
    We say that $\theta$ is \emph{consistent} with the observations if $\ell^{01}(\theta;x^n,s^n)=0$ for all $n$.
\end{definition}

The upper bound is as follows.

\begin{theorem}
    \label{thm:gen_upper_general}
    Assume \Cref{assu:margin}(2)(3) and \Cref{assu:gen}(1), and let $N\ge d$.
    For any $\beta\in(0,1)$, with probability at least $1-\beta$, any weight $\hat{\theta}^N\in\Theta$ consistent with the observations (it may depend on the observations) satisfies
    \begin{equation*}
        \mathrm{GE}(\hat{\theta}^N)\le\frac{2}{N}\left(d\log_2\frac{2\mathrm{e}N\kappa^2}{d}+\log_2\frac{2}{\beta}\right)
        ,
    \end{equation*}
    where $\mathrm{e}$ is Napier's constant.
\end{theorem}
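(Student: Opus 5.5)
The plan is to run a standard realizable-case PAC argument (Blumer et al., or the double-sample/symmetrization bound) for the hypothesis class of 0--1 losses $\{(s,x)\mapsto \ell^{01}(\theta;x,s) : \theta\in\Theta\}$. The bound $\frac{2}{N}(\log_2 \Pi(2N) + \log_2\frac{2}{\beta})$ holds for any class with growth function $\Pi$ and any consistent hypothesis, by the classical symmetrization with a ghost sample of size $N$ followed by random swapping. The work therefore reduces to showing that the number of distinct loss patterns on $m$ points obeys $\Pi(m)\le (\mathrm{e} m\kappa^2/d)^d$. With $m=2N$ this gives exactly the stated logarithm $d\log_2\frac{2\mathrm{e}N\kappa^2}{d}$.

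To count the patterns, fix $m$ pairs $(s^j,x^j)$. By \Cref{assu:margin}(2)(3) and \Cref{assu:gen}(1), almost surely each $\mathcal{Y}(s^j)$ has at most $\kappa$ points. Whenever $\argmax_{y\in\mathcal{X}(s)}\langle\theta,y\rangle$ is a singleton, it is a vertex $v\in\mathcal{Y}(s^j)$: a unique maximizer of a linear function over a finite union of polytopes is a vertex of some piece, and the oracle must return it. Hence $\ell^{01}(\theta;x^j,s^j)$ is determined by which $v\in\mathcal{Y}(s^j)$, if any, is the unique maximizer. This is in turn determined by the sign vector of the linear functionals $\theta\mapsto\langle\theta,v-w\rangle$ for pairs $v\ne w$ in $\mathcal{Y}(s^j)$, using signs in $\{+,0,-\}$. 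Uniqueness at $v$ requires $\langle\theta,v\rangle>\langle\theta,w\rangle$ for all other vertices $w$; I would also check that no non-vertex point ties. I would handle this by showing that every maximizer set is a union of faces whose vertices lie in $\mathcal{Y}(s^j)$, so ties with non-vertex points force ties among vertices.

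It follows that the loss pattern on the $m$ points is a function of the sign pattern of at most $M\le m\binom{\kappa}{2}\le m\kappa^2/2$ homogeneous linear functionals on $\mathbb{R}^d$. The number of sign patterns, zeros included, realized by $M$ hyperplanes through the origin in $\mathbb{R}^d$ is at most the number of faces of the arrangement. A Warren/Sauer-type bound gives at most $(2\mathrm{e}M/d)^d$ for $M\ge d$; I would cite Warren's theorem or the face-count of hyperplane arrangements. Substituting $M\le m\kappa^2/2$ yields $\Pi(m)\le(\mathrm{e}m\kappa^2/d)^d$, and $N\ge d$ ensures $M\ge d$ for the bound to apply.

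The main obstacle is getting the constants to land exactly on $2\mathrm{e}N\kappa^2/d$, in particular handling the zero signs (ties) in the arrangement count without losing an extra factor. If the three-valued count is too loose, my first fallback is to avoid it. I would note that the set of $\theta$ where a vertex is the unique maximizer is open, so pattern-$0$ entries can be certified by strict inequalities only. I would then count only the realizable subsets of "unique-maximizer events", which are cells of the arrangement, and bound those by Sauer--Shelah for halfspaces. Finally, I would verify measurability of $\ell^{01}$, so that the symmetrization step is valid, via this finite description.
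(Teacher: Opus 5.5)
Your plan is essentially the paper's proof: ghost-sample symmetrization with random swaps giving failure probability at most $2\Pi(2N)2^{-\varepsilon N/2}$, constancy of the $\ell^{01}$-pattern on the cells (zero signs included) of the arrangement of the at most $m\kappa^2/2$ central hyperplanes $\{\theta\mid\langle\theta,u-u'\rangle=0\}$, $u\ne u'\in\mathcal{Y}(s_i)$, and the count $\Pi(2N)\le\Phi_d(2N\kappa^2)\le(2\mathrm{e}N\kappa^2/d)^d$. The paper gets this count by replacing each hyperplane with the shifted pair $\langle\theta,a_H\rangle=\pm\delta$ and counting regions, while your fallback (Sauer--Shelah for homogeneous open halfspaces, applied to the ordered differences $u-u'$) gives the same $\Phi_d$ bound more directly. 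Two small repairs: first, rule out ties with non-vertex points via $\ext\Conv\mathcal{X}(s)\subseteq\mathcal{Y}(s)$ (extreme points of the hull are exposed, and the oracle must return a unique maximizer), not via vertices of the pieces, which need not lie in $\mathcal{Y}(s)$; second, $N\ge d$ does not make the number of distinct hyperplanes at least $d$, so pass through monotonicity of $\Phi_d$ (which only needs $2N\kappa^2\ge d$), and note that $N\ge d$ together with $\kappa\ge 2$ is what guarantees $N\varepsilon\ge 8$ in the Chebyshev step of the symmetrization.
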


\begin{example}[Integer programming]
    \label{ex:gen_upper_integer}
    In addition to the assumptions of \Cref{thm:gen_upper_general}, suppose that there exist positive integers $M_1,\ldots,M_d$ such that $\mathcal{X}(s)\subseteq\prod_{i=1}^d\{0,1,\ldots,M_i\}$ holds a.s.
    Since $\kappa\le\prod_{i=1}^d(M_i+1)$ by $\mathcal{Y}(s)\subseteq\mathcal{X}(s)$ (\Cref{rem:assu_margin}), \Cref{thm:gen_upper_general} implies that, for any $\beta\in(0,1)$, with probability at least $1-\beta$, any weight $\hat{\theta}^N\in\Theta$ consistent with the observations satisfies
    \begin{equation*}
        \mathrm{GE}(\hat{\theta}^N)\le\frac{2}{N}\left(d\log_2\frac{2\mathrm{e}N}{d}+2d\sum_{i=1}^d\log_2(M_i+1)+\log_2\frac{2}{\beta}\right)
        .
    \end{equation*}
    In particular, for 0-1 integer programming ($M_i=1$), the second term in the parentheses on the right-hand side is $2d^2$.
\end{example}

As in \Cref{ex:gen_upper_integer}, $\kappa$ can depend exponentially on $d$, in which case the upper bound of \Cref{thm:gen_upper_general} contains a term of order $d^2/N$.

Since \Cref{thm:gen_upper_general} does not depend on a specific estimator, it applies directly to the weight reached by the methods of \Cref{sec:proposed_method} whenever that weight has OS loss $0$ and a unique optimal solution at all observations.

The lower bound is as follows. It is obtained by embedding the instance that \citet{fatemi2026tight} constructed to show that their upper bound cannot be improved (Theorem~3.1 of that paper; the case where the agent's action is the unique optimal solution under the true weight) into our setting, in which the weights are restricted to the probability simplex.

\begin{theorem}
    \label{thm:gen_lower}
    For any integer $d_0\ge2$, with $d=2d_0+1$, there exists a tuple of a state set $\mathcal{S}$, a distribution $\mathcal{P}$, feasible regions $\mathcal{X}(s)$, and the weight space $\Theta=\Delta^{d-1}$ that satisfies \Cref{assu:margin} and \Cref{assu:gen} with $\kappa=3$ such that, for any $\beta\in(0,1/2]$ and $N\ge\max\{d_0,\ln(1/\beta)\}$, there exists a weight $\hat{\theta}^N\in\Theta$ consistent with the observations (depending on the observations) that satisfies, with probability at least $\beta$,
    \begin{equation*}
        \mathrm{GE}(\hat{\theta}^N)>\frac{d_0+\ln(1/\beta)}{4N}=\frac{d-1+2\ln(1/\beta)}{8N}
        .
    \end{equation*}
\end{theorem}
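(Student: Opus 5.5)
The plan is to embed the lower-bound instance of \citet{fatemi2026tight} (their Theorem~3.1) into the present outperformance setting. In that instance the weight space is in effect $d_0$-dimensional, the action is the unique optimal solution under the true weight, and some consistent estimator has error exceeding $(d_0+\ln(1/\beta))/(4N)$ with probability at least $\beta$. Two features of our setting must be handled: the weights are restricted to the nonnegative simplex $\Delta^{d-1}$, and the error is $\ell^{01}$ (uniqueness plus dominance) rather than mere optimality of the true solution.

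The device for the first feature is to split each signed coordinate into two nonnegative ones and add one auxiliary coordinate. Any $w\in\mathbb{R}^{d_0}$ can be written as $w=u-v$ with $u,v\ge0$. After scaling, $(u,v,t)$ with $t\ge0$ absorbs the slack so that $\theta=(u,v,t)\in\Delta^{d-1}$ with $d=2d_0+1$. For each original state with feasible set $\mathcal{X}_0(s)\subset\mathbb{R}^{d_0}$, I will let the new feasible set consist of the points $(z,-z,c)$ for $z\in\mathcal{X}_0(s)$ and a common constant $c$. Then $\langle\theta,(z,-z,c)\rangle=\langle u-v,z\rangle+tc$, so the argmax structure under $\theta$ coincides with that of the original problem under $w=u-v$. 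The map is injective, and $\mathcal{Y}(s)$ has the same cardinality as before. The instance should be chosen so that each state has three vertex candidates, giving $\kappa=3$. If the Fatemi instance uses more points, I will restrict to one with at most three extreme points per state, e.g.\ a segment plus a point, or points on a small simplex.

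For the second feature I will take the agent's action $x$ equal to the unique optimum $x^*(\theta^*,s)$. Realizability (\Cref{assu:gen}(2)) then holds with dominance trivially. Moreover, $\ell^{01}(\theta;x,s)=0$ holds iff the argmax is a singleton $\{v\}$ with $v\ge x$. I must ensure that the only feasible point dominating $x$ is $x$ itself, so that the outperformance error reduces exactly to ``$x$ is not the unique optimum''. The construction $(z,-z,c)$ guarantees this: if $(z',-z',c)\ge(z,-z,c)$, then $z'\ge z$ and $-z'\ge -z$, so $z'=z$. This is why the coordinate doubling is useful beyond sign handling.

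The remaining steps are bookkeeping. First I will verify \Cref{assu:margin}: each set is a finite set of points (bounded polyhedra), uniformly bounded, and $\Theta$ is the simplex. Next I will transfer the estimator, mapping Fatemi's consistent $\hat w^N$ (normalised as needed, since argmax is scale invariant) to $\hat\theta^N=((\hat w^N)^+,(\hat w^N)^-,t)/Z$. I will check that consistency in their sense matches ours, including uniqueness, which I must confirm their estimator provides or else perturb slightly. Finally, their event of large error transfers verbatim, giving $\mathrm{GE}>(d_0+\ln(1/\beta))/(4N)$. The main obstacle I expect is matching Fatemi's precise instance and error notion: whether their ``failure'' event equals our $\ell^{01}=1$, in particular the treatment of ties, and whether their instance has $\kappa=3$. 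If ties are an issue, I will choose the learned weight on the boundary of the consistent region so that the new states in the bad region have a strictly different unique optimum.
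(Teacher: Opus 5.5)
Your overall strategy of embedding the instance of \citet{fatemi2026tight} and splitting the signed part of the weight into positive and negative parts is the paper's, but the specific embedding $z\mapsto(z,-z,c)$ with a \emph{common} constant $c$ fails.

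That instance is not a homogeneous $d_0$-dimensional problem with a free signed weight. Its features are $\psi(s,a)=(-2|a|+a,\,as)\in\mathbb{R}^{1+d_0}$, and its weights are normalized as $(1,u)$. It is the first feature, carrying a fixed \emph{positive} weight, that makes $0$ the unique optimum under the true weight (values $0,-1,-3$) and produces the constraints $\langle s,u\rangle<1$, $-\langle s,u\rangle<3$. In your map the extra coordinate adds the same $tc$ to every point, so it is inert. This leaves two options, and neither works:
\begin{itemize}
\item If you take $\mathcal{X}_0(s)=\{0,s,-s\}\subset\mathbb{R}^{d_0}$, the values are $0,\pm\langle s,w\rangle$. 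Then $0$ is never the unique optimum, and realizability fails.
\item If you take the full $\psi\in\mathbb{R}^{d_0+1}$, you land in dimension $2d_0+3$. That only gives $(d-3+2\ln(1/\beta))/(8N)$, not the stated bound for $d=2d_0+1$.
\end{itemize}
More generally, with an inert slack only the direction of $w=u-v$ matters, which costs one degree of freedom relative to $u\in\mathbb{R}^{d_0}$. The missing idea is to not split the normalized coordinate at all, but to place it in the extra slot: $\mathcal{X}(s)=\{0,(-1,s,-s),(-3,-s,s)\}$ and $\theta^*=(1,0,0)$, with $\theta_0$ playing the role of Fatemi's $1$. The negative first components then also give $\mathcal{X}_{\geq 0}(s)=\{0\}$.

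The transfer of the estimator is also not verbatim. Fatemi's estimator is a vertex $\hat u_N$ of the LP, with active constraints at some observations. The optimum is tied there, so the estimator is not consistent in the sense of \Cref{def:gen_error}.

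Your fallback of choosing a weight on the boundary of the consistent region goes the wrong way. Ties at new states already count as errors under $\ell^{01}$; it is ties at the \emph{observations} that must be removed, which requires moving into the interior. The paper takes $u_\eta=(1-\eta)\hat u_N$, which satisfies all observed constraints strictly. But this shrinks the error region, so $\mathrm{GE}(\theta^{\eta})\le V(\hat u_N)$. To conclude that $V(\hat u_N)>\varepsilon$ implies $\mathrm{GE}(\hat\theta^N)>\varepsilon$, one needs two more ingredients:
\begin{itemize}
\item $\mathrm{GE}(\theta^{\eta})\uparrow V(\hat u_N)$ as $\eta\downarrow 0$, by continuity of measure from below;
\item a data-dependent choice of $\eta$, namely the first $k$ with $\mathrm{GE}(\theta^{1/k})>\varepsilon$.
\end{itemize}

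Finally, what Fatemi's argument yields is the exact binomial tail $\sum_{i=0}^{d_0-1}\binom{N}{i}\varepsilon^i(1-\varepsilon)^{N-i}$. This comes from Campi--Garatti with exactly $d_0$ support constraints, which needs $N\ge d_0$ and $d_0\ge 2$. The stated form follows only after bounding this tail at $\varepsilon=\max\{d_0/(2N),\ln(1/\beta)/(2N)\}$. That is where $\beta\le 1/2$ and $N\ge\ln(1/\beta)$ are used.
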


Since \Cref{thm:gen_upper_general} is an upper bound holding with probability at least $1-\beta$ and \Cref{thm:gen_lower} is a lower bound holding with probability at least $\beta$, we compare them by applying \Cref{thm:gen_upper_general} with $\beta/2$. Then, for $\beta\in(0,1/2]$ and $N\ge\max\{d,\ln(1/\beta)\}$, both events hold simultaneously with probability at least $\beta/2$, so the right-hand side of \Cref{thm:gen_upper_general} with $\beta$ replaced by $\beta/2$, for $\kappa=3$ and $d=2d_0+1$, is larger than that of \Cref{thm:gen_lower}. This replacement only changes the constant inside the logarithm, so tightness up to log factors is preserved. That is, \Cref{thm:gen_lower} guarantees the tightness of \Cref{thm:gen_upper_general}.

\section{Numerical experiments}
\label{sec:numerical_experiments}

In this section, we compare outperformance inverse optimization (minimization of the empirical mean of the OS loss \cref{eq:def_mean_optdex}) with existing inverse optimization (minimization of the suboptimality loss) in terms of the degree to which \cref{eq:outperformance_IOP} is attained.
The five experiments are the knapsack problem and the single-machine scheduling problem with release times using synthetic data, the movie selection problem with viewing costs using MovieLens100K \citep{harper2015movielens}, and the weighted multi-cover problem (selection of reading lists and reference lists) using Cora \citep{sen2008collective} and ogbn-arxiv \citep{hu2020open}.
The results of all methods, implementation details, hyperparameters, and the experimental environment are given in \Cref{sec:appendix_experiments}.

\subsection{Experimental setup}
\label{subsec:experimental_setup}

\paragraph{Compared methods}
In the main text, we compare seven methods. Sub- and OS- indicate minimizing the empirical mean of the suboptimality loss $\bar{\ell}^{\sub}(\theta;(x^n)_{n=1}^N)$ (\cref{eq:def_sub_loss_main}) and the empirical mean of the OS loss, respectively, and the following name indicates the optimization method (PSG: projected subgradient method \citep[cf.][]{boyd2003subgradient}, ONS \citep{hazan2007logarithmic}, MetaGrad \citep{van2016metagrad}, (FB): the full-batch version using the average of the subgradients (weak derivatives for the OS loss) over all states at each iteration).
The comparison of the three pairs consisting of the existing methods Sub-PSG \citep{kitaoka2024exact}, Sub-ONS(FB), and Sub-MetaGrad(FB) \citep{sakaue2025online} and the proposed methods OS-PSG, OS-ONS(FB), and OS-MetaGrad(FB), each obtained by replacing only the loss function, represents the effect of the difference in the loss function. We call the OS- method of each pair the OS-loss variant and the Sub- method the suboptimality-loss variant. The remaining method, OS-DC, is the proposed method based on the DC optimization of \Cref{sec:dc_opt}. As the inner solver we use the proximal bundle method with cutting-plane finishing (\Cref{subsec:rule_cpfinish}), and after DC optimization stops, we apply the projected subgradient method with a small step size to the empirical mean of the OS loss, starting from the obtained weight (subgradient finishing; OS-DC[PB+CPF]+SGF in \Cref{tab:methods}).
For all methods, the initial point is the uniform weight $\theta=\mathbf{1}/d$ and the weight space is the probability simplex $\Theta=\Delta^{d-1}$.
The list of all 13 methods including these (\Cref{tab:methods}), and the detailed settings and hyperparameters, are given in \Cref{sec:appendix_experiments}.

\paragraph{Evaluation metrics}
The main evaluation metric is the outperformance prediction loss $\ell^{\mathrm{OP}}(\theta):=\frac{1}{N}\sum_{n=1}^N\sum_{i=1}^d\max\{x_i^n-x_i^*(\theta,s^n),0\}$ (\cref{eq:def_opl}). That is, $\ell^{\mathrm{OP}}$ is the average over states of the sum of the componentwise shortfalls of the oracle's solution relative to the agent's action.
We have $\ell^{\mathrm{OP}}(\theta)=0$ if and only if $\theta$ is a solution of outperformance inverse optimization \cref{eq:outperformance_IOP}.
In addition, we record the attainment rate of $\ell^{\mathrm{OP}}=0$ (the fraction of episodes---the unit in which one set of observations $\{(s^n,x^n)\}_{n=1}^N$ is generated or extracted and all methods are run on it---that reached $\ell^{\mathrm{OP}}=0$) and the empirical mean of the OS loss $\bar{\ell}^{\optdex}$.

\paragraph{Measure of computational cost}
We measure the computational cost by the number of calls to the forward-problem oracles.
Calls to $x^*(\cdot,s)$ and to $x^*(\cdot\,;x^\prime,s)$ are both counted as one, and calls to $x^*(\cdot\,;x^\prime,s)$, which the existing methods do not solve, are counted only on the side of the proposed methods.
Letting $T$ be the number of iterations of OS-PSG (Sub-PSG, which makes half as many oracle calls per iteration, runs $2T$ iterations) and $N$ the number of states, we impose a common oracle-call budget $B:=2TN$ on all methods, evaluate each method by the weight with the smallest $\ell^{\mathrm{OP}}$ among the weights visited so far (best-so-far), and compare the values at the point where the number of calls reaches $B$.
Details of how the calls are counted and of the evaluation procedure are given in \Cref{subsec:appendix_pseudocode_methods}.

\subsection{Overview of the experiments}
\label{subsec:experiment_overview}

In the main text, among the five experiments, we present the results of the knapsack problem with synthetic data and the weighted multi-cover problem with real data (Cora \citep{sen2008collective}, ogbn-arxiv \citep{hu2020open}).
The results of the single-machine scheduling problem and the movie selection problem with viewing costs are given in \Cref{subsec:appendix_other_results}, and the mathematical models and instances of all experiments in \Cref{subsec:appendix_problems}.
The knapsack problem uses the bounded integer knapsack problem as the forward problem, and the agent's action is obtained by decreasing each component of the optimal solution under the true weight $\theta^*$ by $1$ with probability $0.2$. Hence a weight $\theta^*$ satisfying \cref{eq:outperformance_IOP} exists.
The weighted multi-cover problem models, as a weighted multi-cover of words, the situation in which a researcher selects a limited number of papers, and the agent's action is the depth to which the set of papers actually in citation relations covers each word (the depth to which a word is covered is the number of selected papers containing that word, capped at an upper bound $r$).
Except for the single-machine scheduling problem ($N=10$, $d\in\{4,6,8\}$, $B=2000$), we set $N=20$, $d\in\{20,40,60,80,100\}$, and $T=100$ ($B=4000$); in each episode, all methods use the same instance.

\subsection{Experimental results}
\label{subsec:experiment_results}

\begin{table}[tbp]
    \centering
    \caption{Comparison of the three pairs differing only in the loss function (OS-PSG vs.\ Sub-PSG, OS-ONS(FB) vs.\ Sub-ONS(FB), OS-MetaGrad(FB) vs.\ Sub-MetaGrad(FB)). The numbers of episodes on the same instances in which $\ell^{\mathrm{OP}}$ at the point where the number of calls reaches $B$ is smaller/equal/larger for the OS-loss variant (wins/ties/losses). The larger of wins and losses is shown in bold.}
    \label{tab:loss_only_pairs}
    \setlength{\tabcolsep}{2.5pt}
    \begin{tabular}{l|ccc|c}
        \hline
        Problem & PSG & ONS(FB) & MetaGrad(FB) & Episodes \\
        \hline
        Knapsack & \textbf{241}/9/0 & 75/27/\textbf{148} & \textbf{230}/4/16 & 250 \\
        Single machine & \textbf{24}/51/0 & \textbf{30}/42/3 & \textbf{28}/47/0 & 75 \\
        Movie (MovieLens) & \textbf{97}/2/1 & \textbf{45}/23/32 & \textbf{90}/9/1 & 100 \\
        Multi-cover (Cora) & \textbf{92}/2/6 & \textbf{59}/19/22 & \textbf{72}/11/17 & 100 \\
        Multi-cover (ogbn-arxiv) & \textbf{81}/15/4 & \textbf{74}/22/4 & \textbf{63}/30/7 & 100 \\
        \hline
    \end{tabular}
\end{table}

\Cref{tab:loss_only_pairs} shows, for all five experiments, the wins and losses of the three pairs differing only in the loss function; \Cref{tab:knapsack_main} shows the attainment rates for the knapsack problem; and \Cref{fig:opl_vs_calls} shows the evolution of $\ell^{\mathrm{OP}}$ for the weighted multi-cover problem (that for the knapsack problem is shown in \Cref{fig:opl_vs_calls_knapsack} in the appendix).
Overall, when the optimization method is fixed and only the suboptimality loss is replaced with the OS loss, the prediction of solutions outperforming the agent's actions improves. Indeed, the OS-loss variant won in 14 of the 15 pairs.
A similar tendency is observed in the comparison replacing only the loss function also for the single-machine scheduling problem and the movie selection problem with viewing costs in the appendix (\Cref{subsec:appendix_other_results}).

\begin{table}[tbp]
    \centering
    \caption{Knapsack problem: the fraction [\%] of episodes that attained $\ell^{\mathrm{OP}}=0$ at the point where the number of calls reaches $B=4000$. 50 episodes for each $d$. For each optimization method, the OS loss (OS) and the suboptimality loss (Sub) are shown side by side, and the higher one in each pair is shown in bold. The DC optimization uses only the OS loss. The median of $\ell^{\mathrm{OP}}$ for each dimension is shown in \Cref{tab:knapsack_median}.}
    \label{tab:knapsack_main}
    \begin{tabular}{c|cc|cc|cc|c}
        \hline
        & \multicolumn{2}{c|}{PSG} & \multicolumn{2}{c|}{ONS(FB)} & \multicolumn{2}{c|}{MetaGrad(FB)} & OS-DC \\
        $d$ & OS & Sub & OS & Sub & OS & Sub & \\
        \hline
        20 & \textbf{98} & 16 & \textbf{34} & 0 & \textbf{86} & 6 & 90 \\
        40 & \textbf{96} & 0 & \textbf{2} & 0 & \textbf{52} & 0 & 88 \\
        60 & \textbf{92} & 0 & 0 & 0 & \textbf{46} & 0 & 86 \\
        80 & \textbf{90} & 0 & 0 & 0 & \textbf{8} & 0 & 82 \\
        100 & \textbf{76} & 0 & 0 & 0 & \textbf{12} & 0 & 84 \\
        \hline
    \end{tabular}
\end{table}

\begin{figure}[tbp]
    \centering
    \includegraphics[width=\linewidth]{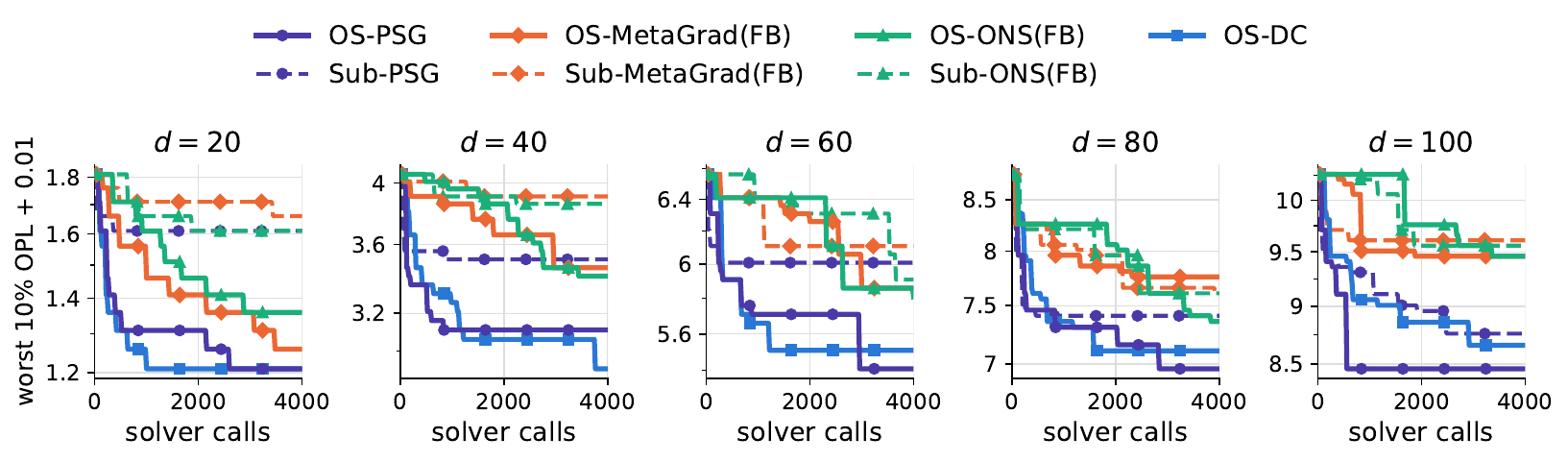}\\
    (a) Weighted multi-cover problem (Cora)\\[2mm]
    \includegraphics[width=\linewidth]{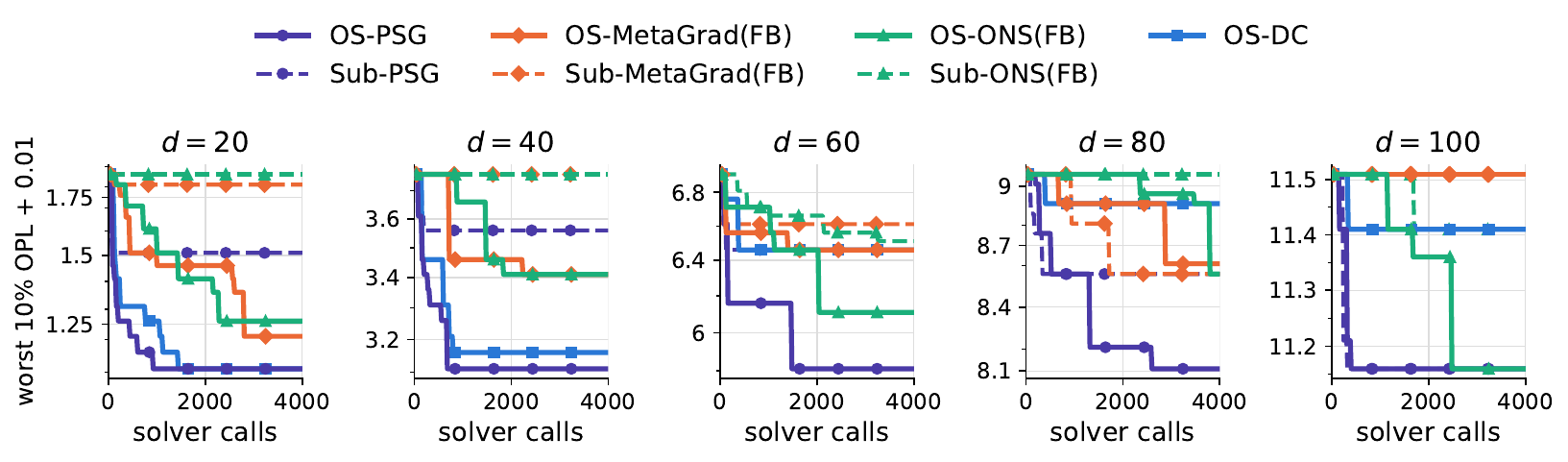}\\
    (b) Weighted multi-cover problem (ogbn-arxiv)
    \caption{Relation between the worst 10\% (the 90th percentile across episodes) of $\ell^{\mathrm{OP}}$ (best-so-far) and the number of oracle calls. The vertical axis is $\ell^{\mathrm{OP}}+0.01$ on a logarithmic scale, and $0.01$ represents $\ell^{\mathrm{OP}}=0$. Since the actions are integer-valued, $\ell^{\mathrm{OP}}$ takes discrete values in steps of $1/N$, and the percentiles are chosen from the observed values without interpolation. The colors indicate the optimization methods (PSG, MetaGrad(FB), ONS(FB), DC optimization); solid lines indicate the OS loss (proposed) and dashed lines the suboptimality loss (existing).}
    \label{fig:opl_vs_calls}
\end{figure}

\paragraph{Knapsack problem}
In this problem, where the agent's actions are degraded versions of optimal solutions, the improvement is the most pronounced.
OS-PSG and OS-DC attained \cref{eq:outperformance_IOP} in at least 76\% of the episodes in all dimensions, whereas the attainment rate of the methods with the suboptimality loss is at most 16\% (\Cref{tab:knapsack_main}; the attainment rates of Sub-ONS and Sub-MetaGrad, which update state by state (\Cref{tab:methods}), are also at most 9\% when aggregated over all dimensions; \Cref{tab:full_knapsack}).
Existing inverse optimization searches for weights that make the degraded actions optimal and therefore moves toward compromise weights that fit the degradation, whereas the OS loss measures the gap to the best feasible solution dominating the action and is thus presumably less affected by it.
However, only in the ONS(FB) pair did the suboptimality-loss variant win (75 wins and 148 losses; \Cref{tab:loss_only_pairs}). In this pair, the attainment rate is $0$\% for both losses for $d\ge60$ (\Cref{tab:knapsack_main}), presumably because the full-batch ONS does not converge within the oracle-call budget $B$.

\paragraph{Weighted multi-cover problem}
In this problem, which uses the actions of real data, on both Cora and ogbn-arxiv the OS-loss variant won in all three pairs (\Cref{tab:loss_only_pairs}).
In particular, in the PSG pair, the OS-loss variant won in 92 of 100 episodes on Cora and 81 of 100 on ogbn-arxiv.

\section{Conclusion}
\label{sec:conclusion}

In this paper, we proposed outperformance inverse optimization. Instead of explaining the actions as optimal solutions, it seeks weights that induce, at each state, an optimal solution outperforming the action in every component.
Its surrogate, the OS loss, requires only oracles for the forward problem and its dominance-constrained version, and thus applies to nonconvex forward problems including MILPs. Its empirical mean vanishes if and only if optimal solutions dominating the actions exist.
To minimize the OS loss, we gave gradient-based optimization methods and a DC optimization that terminates finitely if the inner convex subproblems are solved exactly. When each feasible region is M$^\natural$-convex (including M-convex) and $\Theta=\Delta^{d-1}$, the uniform weight attains the global minimum $0$ of the OS loss. We also showed that the upper bound on the generalization error is tight up to log factors for fixed $\kappa$.
In the experiments, replacing only the loss function made the OS-loss variant win in 14 of 15 pairs, with particularly large gains when the actions were degraded optimal solutions.

We list two directions for future work.
The first is to clarify when the DC optimization for the OS loss reaches a global minimizer and how it can be made to reach one.
The second is to clarify whether the dependence of the upper bound on the generalization error on $\kappa$ (which yields a term of order $d^2/N$ for integer programming; \Cref{ex:gen_upper_integer}) is essential, and whether an upper bound without log factors can be obtained for a specific estimator.

\bibliographystyle{apalike}
\bibliography{outperformance_IO}

\newpage
\appendix

\section{Related work}
\label{sec:appendix_related_work}

\paragraph{Classical inverse optimization}
Classical inverse optimization is the problem of finding the objective weight closest to a prior estimate such that a given solution becomes optimal; when the forward problem is an LP, it can be solved via LP duality \citep{ahuja2001inverse}.
As extensions to MILP, \citet{wang2009cutting} proposed a cutting-plane method that uses the forward problem as an oracle, and \citet{bulut2021complexity} studied the computational complexity of inverse MILP.
All of these assume that the given solution is optimal under some weight.
When this assumption fails, classical inverse optimization returns trivial solutions such as the zero vector in the linear case \citep{chan2018trade_off_preservation}.
For an overview of inverse optimization, see the survey by \citet{chan2023inverse}.

\paragraph{Details of inverse optimization for noisy data}
The method that minimizes the distance between the observations and the optimal solutions (the prediction loss) \citep{aswani2018inverse} is statistically consistent, but it leads to a bilevel program that is NP-hard even when the forward problem is convex, and it has to rely on enumerating the parameters.
The tractable reformulations based on the residual of the KKT conditions \citep{keshavarz2011imputing}, the violation of variational inequalities \citep{bertsimas2015data}, the relaxation of the strong duality constraint \citep{chan2014generalized,chan2019inverse}, and distributionally robust optimization of the suboptimality loss \citep{mohajerin2018data} all assume convexity (conic representability) of the forward problem.
Methods that estimate the constraints of the feasible region \citep{chan2020inverse,ghobadi2020inferring,ren2025inverse} also mainly target LPs.

\paragraph{Details of inverse optimization for MILP}
Among the methods that use only the forward-problem oracle, in the offline setting, \citet{lin2024conformal} proposed point estimation by a cutting-plane method, and \citet{kitaoka2024exact,kitaoka2026explicit} obtained exact solutions in finitely many iterations by gradient methods.
The augmented suboptimality loss \citep{zattoni2024learning} is evaluated by solving an augmented forward problem with a distance term, and its convex reformulation over mixed-integer sets enumerates the integer points; except for binary decision variables, for which the augmented forward problem reduces to the forward problem with modified weights, it cannot be evaluated with the forward-problem oracle alone. Moreover, like the suboptimality loss, it aims to reproduce the observations as optimal solutions. We therefore do not compare with it in our experiments.
As a method that learns the constraints in addition to the objective, \citet{kitaoka2025inverse} proposed two-stage learning of the constraints and the objective.
In the online setting, \citet{barmann2017emulating,barmann2018online} and \citet{sakaue2025revisiting_online_learning_approach,sakaue2025online} give regret upper bounds, and also give guarantees in the offline setting via online-to-batch conversion (a conversion that outputs, e.g., the average of the iterates of online learning and derives an upper bound on the offline loss from a regret upper bound).
Among these, \citet{sakaue2025online} give a regret upper bound that adapts to the cumulative suboptimality of the agent's actions.
Moreover, although it does not minimize the suboptimality loss, \citet{besbes2025contextual} proposed a method that handles offline and online learning in a setting that does not assume the structure of the forward problem.
The theoretical guarantees of \citet{kitaoka2024exact,kitaoka2026explicit,kitaoka2025inverse} are limited to the setting where the agent's actions are optimal.

\paragraph{Details of generalization error analysis for inverse optimization}
\citet{kitaoka2025inverse} observed that the ``distance'' between parameters that naturally arises in the generalization error analysis of inverse optimization is in general not a metric but a pseudometric, and extended the Dudley-type bound via covering numbers of sub-Gaussian processes to pseudometric spaces for losses that are Lipschitz with respect to a pseudometric.
Applying this to inverse optimization for MILP that learns the objective and the constraints, they bounded the expectation of the suboptimality loss at the learned weight by $O(\sqrt{d/N})$, where $d$ is the dimension of the features.
\citet{fatemi2026tight} treated noiseless inverse optimization, in which the agent's action is the unique optimal solution under the true weight, regarded the consistency constraint that each observation imposes on the weight as a constraint of a scenario program, and applied the result of \citet{campi2008exact}.
For the estimator that minimizes a convex function under the consistency constraints, they bounded the probability that the true optimal solution is not contained in the set of optimal solutions under the learned weight (the set-level mismatch) by $2(d+\log(1/\beta))/N$ with probability at least $1-\beta$ (\citealp[Proposition~2.1]{fatemi2026tight}), and, under an assumption of covariance diversity or for the incenter estimator, bounded the probability that the action selected under the learned weight by a fixed measurable rule that chooses one optimal solution when there are several does not coincide with the true optimal solution (the action-level mismatch) at the same order (\citealp[Theorems~2.2 and~2.3]{fatemi2026tight}).
Furthermore, for the class of estimators that minimize a linear objective, they constructed an instance showing that these orders are tight (\citealp[Theorem~3.1]{fatemi2026tight}).
\citet[Appendix~A.4]{fatemi2026tight} also point out that, for an arbitrary consistent estimator, only a bound with a $\log$ factor based on the VC dimension is available.
\Cref{thm:gen_upper_general} of this paper gives a uniform bound of this type for outperformance inverse optimization, in which the agent's actions may be suboptimal, and \Cref{thm:gen_lower} is obtained by embedding the instance of Theorem~3.1 of \citet{fatemi2026tight} into the setting of this paper.

\section{An action that cannot be outperformed}
\label{sec:appendix_unsupported}

We give a concrete example in which no weight makes the OS loss $0$.
A point $x^\prime\in\mathcal{X}(s)$ is called \emph{efficient} if there is no $x\in\mathcal{X}(s)$ with $x\geq x^\prime$ and $x\neq x^\prime$. An efficient point is called \emph{supported} if it is an optimal solution of $\max_{x\in\mathcal{X}(s)}\langle\theta,x\rangle$ for some $\theta\in\mathbb{R}^d_{\geq0}\setminus\{0\}$, and \emph{unsupported} otherwise.
If $x^\prime$ is efficient, then $\mathcal{X}_{\geq x^\prime}(s)=\{x^\prime\}$, so $\ell^{\optdex}(\theta;x^\prime,s)=0$ if and only if $x^\prime$ is an optimal solution under $\theta$. Hence, if $x^\prime$ is unsupported and $0\notin\Theta$, we have $\ell^{\optdex}(\theta;x^\prime,s)>0$ for every $\theta\in\Theta$.

Let $d=2$, $\Theta=\Delta^1$, $\mathcal{X}(s)=\{(3,0),(0,3),(1,1)\}$, and $x^\prime=(1,1)$. Since $\mathcal{X}(s)$ is a union of three bounded polyhedra each consisting of a single point, it satisfies \Cref{assu:margin}(2). Neither $(3,0)$ nor $(0,3)$ dominates $x^\prime$, so $x^\prime$ is efficient and $\mathcal{X}_{\geq x^\prime}(s)=\{x^\prime\}$. For $\theta=(\theta_1,\theta_2)\in\Delta^1$, since $\max\{\theta_1,\theta_2\}\ge1/2$ and $\theta_1+\theta_2=1$, we have
\[
    \ell^{\optdex}(\theta;x^\prime,s)
    =\max\{3\theta_1,3\theta_2,\theta_1+\theta_2\}-(\theta_1+\theta_2)
    =3\max\{\theta_1,\theta_2\}-1
    \ge\frac{1}{2} ,
\]
with equality if and only if $\theta=(1/2,1/2)$. Hence $x^\prime$ is unsupported, and the OS loss at this state is at least $1/2$ for every $\theta\in\Theta$. In particular, if $(s,x^\prime)$ is one of $N$ observations, the empirical mean of the OS loss is at least $1/(2N)$ for every $\theta\in\Theta$, and \cref{eq:outperformance_IOP} cannot be attained.
Thus, even an action that is not dominated by any feasible solution can be impossible to outperform by an optimal solution under a single linear objective.

\section{Weak derivative of the OS loss}
\label{sec:weak_derivative_OA_loss}
\begin{proposition}[Weak derivative in the sense of Schwartz distributions]
\label{prop:max_linear_weak_derivative_distribution_F}
Let $W\subset\mathbb{R}^d$ be an open set and let $\mathcal{Z}\subset\mathbb{R}^d$ be nonempty and compact.
Define
\[
F(\theta):=\max_{x\in\mathcal{Z}} \langle \theta ,  x \rangle .
\]
Moreover, for each $\theta\in W$, take
\[
x^*(\theta)\in\argmax_{x\in\mathcal{Z}} \langle \theta ,  x \rangle .
\]
Then, for each $i=1,\ldots,d$, in the sense of Schwartz distributions $\mathcal{D}'(W)$~\citep[cf.][]{kunzinger2019theory}, we have
\[
\partial_i F = x_i^*\quad\text{in }\mathcal{D}'(W) .
\]
That is, for any test function $\psi\in C_c^\infty(W)$, we have
\[
\langle \partial_i F,\psi\rangle
:=
-\int_{W}F(\theta)\,\partial_i\psi(\theta)\,d\theta
=
\int_{W}x_i^*(\theta)\,\psi(\theta)\,d\theta .
\]
\end{proposition}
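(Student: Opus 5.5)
The plan is to show that $F$ is convex and Lipschitz, that it is differentiable almost everywhere with gradient $x^*$, and then to pass the derivative from $\psi$ to $F$ by a difference-quotient argument controlled by dominated convergence.

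\textbf{Convexity and Lipschitz bound.} Since $\mathcal{Z}$ is compact, $F$ is finite. As a pointwise maximum of linear functions, it is convex. Set $R:=\max_{x\in\mathcal{Z}}\|x\|<\infty$. Then for any $\theta,\theta'$,
\[
\langle\theta,x^*(\theta)\rangle-\langle\theta',x^*(\theta)\rangle\le F(\theta)-F(\theta')\le\langle\theta,x^*(\theta')\rangle-\langle\theta',x^*(\theta')\rangle .
\]
This gives $|F(\theta)-F(\theta')|\le R\|\theta-\theta'\|$. Hence $F$ is locally integrable and $\langle\partial_iF,\psi\rangle$ is well defined.

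\textbf{Almost-everywhere identification of the gradient.} The same inequalities show that $x^*(\theta)$ is a subgradient of $F$ at $\theta$: $F(\theta')\ge F(\theta)+\langle\theta'-\theta,x^*(\theta)\rangle$. By Rademacher's theorem (or Alexandrov/convexity), $F$ is differentiable at almost every $\theta\in W$. At such a point the subdifferential is the singleton $\{\nabla F(\theta)\}$, so $x^*(\theta)=\nabla F(\theta)$ a.e. Thus $x_i^*$ agrees a.e. with the pointwise partial derivative $\partial_iF$. It is then bounded by $R$, and it is measurable as an a.e. limit of difference quotients of a continuous function. Hence $\int x_i^*\psi$ makes sense regardless of how ties in the argmax are broken.

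\textbf{Integration by parts via difference quotients.} Fix $\psi\in C_c^\infty(W)$ with support $K$. For small $h$, the translate $K+[-h,h]e_i$ stays inside $W$. A change of variables gives
\[
\int_W \frac{F(\theta+he_i)-F(\theta)}{h}\psi(\theta)\,d\theta=-\int_W F(\theta)\frac{\psi(\theta)-\psi(\theta-he_i)}{h}\,d\theta .
\]
As $h\to0$, the right-hand side tends to $-\int F\partial_i\psi$, by uniform convergence of the difference quotient of $\psi$ on a compact set and boundedness of $F$ there. On the left, the integrands are bounded by $R|\psi|$ by the Lipschitz bound, and they converge a.e. to $\partial_iF=x_i^*\psi$. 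Dominated convergence therefore gives $\int x_i^*\psi$, which proves the claim.

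The main care point is the measurability of an arbitrary selection $x^*$. It is handled by noting that $x^*$ coincides a.e. with $\nabla F$, which is measurable, so Lebesgue-measurability up to null sets suffices. If the argmax selection is not assumed measurable, I would state explicitly that $x_i^*$ is interpreted as an element of $L^1_{\mathrm{loc}}$, determined a.e.
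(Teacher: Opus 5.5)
Your proposal is correct and follows essentially the same route as the paper: the Lipschitz bound, then Rademacher together with the subgradient inequality $F(\theta')\ge F(\theta)+\langle\theta'-\theta,x^*(\theta)\rangle$ to get $\nabla F=x^*$ a.e., then the translated difference-quotient identity with dominated convergence. Completeness of Lebesgue measure makes any selection measurable, so your closing hedge is unnecessary. One slip to fix: your first displayed chain has its two bounds swapped. It should read $\langle\theta-\theta',x^*(\theta')\rangle\le F(\theta)-F(\theta')\le\langle\theta-\theta',x^*(\theta)\rangle$, although the Lipschitz estimate and the subgradient inequality you draw from it are correct.
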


\begin{proof}
In what follows, let $L_F:=\max_{x\in\mathcal{Z}}\|x\|$ (since $\mathcal{Z}$ is nonempty and compact, $0\le L_F<\infty$).

\textbf{Step 0. $F$ is a finite-valued convex function on $\mathbb{R}^d$ with Lipschitz constant $L_F$.}
Since $\mathcal{Z}$ is nonempty and compact and $x\mapsto\langle\theta,x\rangle$ is continuous, the maximum is attained at each $\theta\in\mathbb{R}^d$, and $F$ is well defined as a finite-valued function on $\mathbb{R}^d$. Since $F$ is the supremum of the family of linear functions $\{\langle\cdot,x\rangle\}_{x\in\mathcal{Z}}$, it is convex. Moreover, for any $\theta,\theta'\in\mathbb{R}^d$ and $x\in\mathcal{Z}$, we have
\[
\langle\theta',x\rangle=\langle\theta,x\rangle+\langle\theta'-\theta,x\rangle\le F(\theta)+L_F\|\theta'-\theta\| ,
\]
so taking the maximum over $x\in\mathcal{Z}$ yields $F(\theta')\le F(\theta)+L_F\|\theta'-\theta\|$. Exchanging the roles of $\theta$ and $\theta'$,
\begin{equation}
\label{eq:F_lipschitz}
|F(\theta')-F(\theta)|\le L_F\|\theta'-\theta\|\qquad(\forall\theta,\theta'\in\mathbb{R}^d)
\end{equation}
follows. In particular, $F$ is continuous and $F\in L^1_{\mathrm{loc}}(W)$.

\textbf{Step 1. $F$ is differentiable a.e., and $\nabla F(\theta)=x^*(\theta)$ at such points.}
By \cref{eq:F_lipschitz} and Rademacher's theorem~\citep[cf.][Theorem~10.8~(ii)]{villani2009optimal}, $F$ is differentiable in the classical sense almost everywhere on $W$.
Take a point of differentiability $\theta\in W$. Since $x^*(\theta)\in\argmax_{x\in\mathcal{Z}}\langle\theta,x\rangle$, for any $\theta'\in\mathbb{R}^d$,
\[
F(\theta')\ge\langle\theta',x^*(\theta)\rangle
=\langle\theta,x^*(\theta)\rangle+\langle\theta'-\theta,x^*(\theta)\rangle
=F(\theta)+\langle\theta'-\theta,x^*(\theta)\rangle ,
\]
that is, $x^*(\theta)$ is a subgradient of $F$ at $\theta$:
\begin{equation}
\label{eq:support_theta}
F(\theta')-F(\theta)\ge\langle\theta'-\theta,x^*(\theta)\rangle\qquad(\forall\theta'\in\mathbb{R}^d) .
\end{equation}
Take any $h\in\mathbb{R}^d$. Since $W$ is open, $\theta+th\in W$ for sufficiently small $t>0$; setting $\theta'=\theta+th$ in \cref{eq:support_theta} and dividing by $t>0$ gives
\[
\frac{F(\theta+th)-F(\theta)}{t}\ge\langle h,x^*(\theta)\rangle .
\]
By differentiability at $\theta$, letting $t\downarrow0$ yields
\[
\langle h,\nabla F(\theta)\rangle\ge\langle h,x^*(\theta)\rangle .
\]
Replacing $h$ with $-h$ gives the reverse inequality, so
\[
\langle h,\nabla F(\theta)\rangle=\langle h,x^*(\theta)\rangle\qquad(\forall h\in\mathbb{R}^d) ,
\]
that is, $\nabla F(\theta)=x^*(\theta)$. Hence we have
\begin{equation}
\label{eq:grad_eq_xstar_ae}
\nabla F(\theta)=x^*(\theta)\quad\text{a.e. }\theta\in W .
\end{equation}
Note that $\theta\mapsto\nabla F(\theta)$ is Lebesgue measurable (as the pointwise limit of difference quotients on the set of points of differentiability), and by the completeness of the Lebesgue measure and \cref{eq:grad_eq_xstar_ae}, $x^*$ is also Lebesgue measurable. Moreover, since $\|x^*(\theta)\|\le L_F$, we have $x^*\in L^\infty(W;\mathbb{R}^d)\subset L^1_{\mathrm{loc}}(W;\mathbb{R}^d)$, and the integral on the right-hand side of the claim converges absolutely.

\textbf{Step 2. Identification of the distributional derivative via convergence of difference quotients.}
Take any $i\in\{1,\ldots,d\}$ and $\psi\in C_c^\infty(W)$, and let $\mathcal{K}:=\operatorname{supp}\psi$ (a compact subset of $W$). Since $W$ is open,
\[
\delta:=\operatorname{dist}(\mathcal{K},\mathbb{R}^d\setminus W)>0
\]
(when $W=\mathbb{R}^d$, read $\delta:=+\infty$).
Let $\mathcal{K}_\delta:=\{\theta\in\mathbb{R}^d \, | \, \operatorname{dist}(\theta,\mathcal{K})\le\delta/2\}$. Then $\mathcal{K}_\delta$ is a compact subset of $W$, and if $0<|t|<\delta/2$, then $\mathcal{K}+te_i\subset \mathcal{K}_\delta\subset W$ (where $e_i$ is the $i$th standard basis vector).

First, for $0<|t|<\delta/2$, since the support of $\psi$ is contained in $\mathcal{K}$, the change of variables $\theta\mapsto\theta-te_i$ gives
\[
\int_{W}F(\theta+te_i)\,\psi(\theta)\,d\theta
=
\int_{\mathcal{K}+te_i}F(\theta)\,\psi(\theta-te_i)\,d\theta
=
\int_{W}F(\theta)\,\psi(\theta-te_i)\,d\theta ,
\]
and therefore we have
\begin{equation}
\label{eq:difference_quotient_ibp}
\int_{W}\frac{F(\theta+te_i)-F(\theta)}{t}\,\psi(\theta)\,d\theta
=
\int_{W}F(\theta)\,\frac{\psi(\theta-te_i)-\psi(\theta)}{t}\,d\theta .
\end{equation}

For the right-hand side of \cref{eq:difference_quotient_ibp}, since $\psi\in C_c^\infty(W)$, the convergence
$\bigl(\psi(\cdot-te_i)-\psi(\cdot)\bigr)/t\to-\partial_i\psi$ ($t\to0$) is uniform, and for $0<|t|<\delta/2$ the support of the integrand is contained in $\mathcal{K}_\delta$. Since $F$ is continuous (Step 0), it is bounded on the compact set $\mathcal{K}_\delta$, and therefore
\[
\lim_{t\to0}\int_{W}F(\theta)\,\frac{\psi(\theta-te_i)-\psi(\theta)}{t}\,d\theta
=
-\int_{W}F(\theta)\,\partial_i\psi(\theta)\,d\theta .
\]

For the left-hand side of \cref{eq:difference_quotient_ibp}, by Step 1, for almost every $\theta\in W$,
\[
\lim_{t\to0}\frac{F(\theta+te_i)-F(\theta)}{t}=\partial_i F(\theta)=x_i^*(\theta) ,
\]
and by \cref{eq:F_lipschitz},
\[
\left|\frac{F(\theta+te_i)-F(\theta)}{t}\,\psi(\theta)\right|\le L_F\,|\psi(\theta)| \in L^1(W) ,
\]
so the integrand is dominated by an integrable function independent of $t$. Hence, by Lebesgue's dominated convergence theorem,
\[
\lim_{t\to0}\int_{W}\frac{F(\theta+te_i)-F(\theta)}{t}\,\psi(\theta)\,d\theta
=
\int_{W}x_i^*(\theta)\,\psi(\theta)\,d\theta .
\]

Combining these by letting $t\to0$ in \cref{eq:difference_quotient_ibp}, for any $\psi\in C_c^\infty(W)$, we have
\[
\langle\partial_i F,\psi\rangle
=
-\int_{W}F(\theta)\,\partial_i\psi(\theta)\,d\theta
=
\int_{W}x_i^*(\theta)\,\psi(\theta)\,d\theta .
\]
That is, $\partial_i F=x_i^*$ in $\mathcal{D}'(W)$.
\end{proof}

\begin{proof}[Proof of \Cref{prop:gredient_OA_loss}]
    Apply \Cref{prop:max_linear_weak_derivative_distribution_F} with $W=\mathbb{R}^d$ and $\mathcal{Z}=\mathcal{X}(s)$, and with $W=\mathbb{R}^d$ and $\mathcal{Z}=\mathcal{X}_{\geq x^\prime}(s)$ (both are nonempty and compact), and take the difference of the two terms in \cref{eq:def_optdex}.
\end{proof}

\section{Pseudocode of the rules for selecting evaluation points in the inner solver: Step 12 of \Cref{alg:dc_inner}}
\label{sec:appendix_inner_rules}

In this section, as concrete forms of Step 12 of \Cref{alg:dc_inner}, we give the pseudocode of the cutting-plane method \citep{kelley1960cutting}, the proximal bundle method \citep{kiwiel1990proximity}, and the level method \citep{lemarechal1995new}. Furthermore, in \Cref{subsec:rule_cpfinish}, we give the pseudocode of the cutting-plane finishing of \Cref{theo:cpfinish_exact}, which starts with one of these and switches to the cutting-plane method for finishing. Below, $\mathrm{UB}$ denotes the minimum of the values of the convex subproblem evaluated so far in \Cref{alg:dc_inner} (an upper bound), $\mathrm{LB}$ denotes the minimum value of the model of cuts (a lower bound), and $\epsilon_{\mathrm{in}}$ denotes the inner tolerance. The cut $g_j$, the cut set $\mathcal{G}$, the solution pool $\mathcal{V}$, the maximum number of iterations $J$, the point $\hat{\theta}$ with the smallest evaluated value, and $\mathrm{converged}$ are defined in \Cref{alg:dc_inner} (\Cref{subsec:dc_algorithm}), and inherited cuts and cut inheritance are defined in \Cref{subsec:dc_cut_pool}. The selection step refers to Step 12 of \Cref{alg:dc_inner}.
Each is written as a procedure that is called at iteration $j$ of \Cref{alg:dc_inner} immediately after the cut $g_j$ is added to $\mathcal{G}$ and $\mathrm{UB},\mathrm{LB}$ are updated (after passing the stopping test), and that returns $\theta^{j+1}$ (only the proximal bundle method also returns the quantities carried over to the next call).

We fix the notation. Let $\mathcal{G}_j$ be the cut set at the selection step of iteration $j$ (including the inherited cuts and $g_0,\ldots,g_j$), and define the cutting-plane model
\begin{equation}
    \hat{\ell}_j(\theta):=\max_{g\in\mathcal{G}_j}\langle\theta,g\rangle .
    \label{eq:def_cp_model}
\end{equation}
By \Cref{prop:dc_cut_validity}, $\hat{\ell}_j\le\bar{\ell}^{\sub}(\cdot\,;y^\prime)$, and at an evaluated point $\theta^{i}$ ($i\le j$), since $g_i\in\mathcal{G}_j$, we have $\hat{\ell}_j(\theta^{i})=\bar{\ell}^{\sub}(\theta^{i};y^\prime)=\langle\theta^{i},g_i\rangle$. The $\mathrm{LB}$ of \Cref{alg:dc_inner} equals $\min_{\theta\in\Theta}\hat{\ell}_j(\theta)$. Let $\|\cdot\|$ denote the Euclidean norm.
The subproblems of the three rules differ only in the objective over the common polyhedron
\begin{equation}
    P_j:=\bigl\{(\theta,\tau)\in\mathbb{R}^d\times\mathbb{R} \ \bigm|\ \theta\in\Theta,\ \langle\theta,g\rangle\le\tau\ (\forall g\in\mathcal{G}_j)\bigr\}
    \label{eq:def_cp_epigraph}
\end{equation}
(the epigraph of the model $\hat{\ell}_j$ over $\Theta$); since $\Theta$ is a polyhedron by \Cref{assu:margin}(1), they are an LP for the cutting-plane method and a convex quadratic program (QP) for the proximal bundle method and the level method.

\subsection{Cutting-plane method}
\label{subsec:rule_kelley}

The cutting-plane method \citep{kelley1960cutting} takes a minimizer of the model $\hat{\ell}_j$ over $\Theta$ as the next evaluation point (\Cref{alg:rule_kelley}).
Since this minimization is exactly the LP that \Cref{alg:dc_inner} solves to compute $\mathrm{LB}$, no additional solve is needed, and the $\theta$ component of the optimal solution of that LP can be reused.
When the LP has multiple optimal solutions, any of them may be chosen (the proof of \Cref{lem:dc_inner_finite} uses only optimality).
Finite exactness and an upper bound on the number of iterations under this rule are given in \Cref{lem:dc_inner_finite}, and its use as the rule after the switch in cutting-plane finishing (\Cref{subsec:rule_cpfinish}) is given in \Cref{theo:cpfinish_exact}.

\begin{algorithm}[th]
    \caption{Rule for selecting evaluation points: cutting-plane method}
    \label{alg:rule_kelley}
    \begin{algorithmic}[1]
        \REQUIRE {Cut set} $\mathcal{G}_j$
        \STATE $(\theta^{\mathrm{LP}},\tau^{\mathrm{LP}})\gets$ optimal solution and optimal value of the LP $\min\{\,\tau \mid (\theta,\tau)\in P_j\,\}$
        \STATE $\theta^{j+1}\gets\theta^{\mathrm{LP}}$
        \RETURN $\theta^{j+1}$
    \end{algorithmic}
\end{algorithm}

\subsection{Proximal bundle method}
\label{subsec:rule_bundle}

The proximal bundle method \citep{kiwiel1990proximity} maintains a stability center $\theta^{\mathrm{c}}\in\Theta$ and takes as the candidate point the optimal solution of
\begin{equation}
    \min\Bigl\{\,\tau+\frac{u}{2}\|\theta-\theta^{\mathrm{c}}\|^2 \ \Bigm|\ (\theta,\tau)\in P_j\,\Bigr\} ,
    \label{eq:bundle_qp}
\end{equation}
which adds the proximal term $\frac{u}{2}\|\theta-\theta^{\mathrm{c}}\|^2$ ($u>0$ is the proximal weight) to the model (\Cref{alg:rule_bundle}).
Since the value $\bar{\ell}^{\sub}(\theta^{j+1};y^\prime)$ of the candidate point $\theta^{j+1}$ is evaluated at the next iteration of \Cref{alg:dc_inner}, the test of whether to update the center (the descent test) is performed at the beginning of the selection step of the next iteration.
The predicted decrease $\nu:=f^{\mathrm{c}}-\hat{\ell}_j(\theta^{j+1})$ (where $f^{\mathrm{c}}:=\bar{\ell}^{\sub}(\theta^{\mathrm{c}};y^\prime)$ is the value at the center) is carried over to the next call together with the candidate point; there, if the actual decrease $f^{\mathrm{c}}-\bar{\ell}^{\sub}(\theta^{j+1};y^\prime)$ is at least $\rho\,\nu$ ($\rho\in(0,1)$ is the descent parameter), the center is moved to the candidate point (serious step). Otherwise, the center is kept and only the cut of the candidate point is added (null step).
Since $(\theta^{\mathrm{c}},\tau)=(\theta^{\mathrm{c}},f^{\mathrm{c}})\in P_j$ is a feasible solution of \cref{eq:bundle_qp} with objective value $f^{\mathrm{c}}$, we have $\hat{\ell}_j(\theta^{j+1})\le f^{\mathrm{c}}$, that is, always $\nu\ge0$.

We make two remarks specific to this framework.
First, the $\hat{\theta}$ of \Cref{alg:dc_inner} (the point with the smallest evaluated value) and the center $\theta^{\mathrm{c}}$ differ in general (a null-step point may update $\mathrm{UB}$). Always $\mathrm{UB}\le f^{\mathrm{c}}$.
Second, the stopping test specific to the proximal bundle method ($\nu\le\epsilon$, where $\epsilon>0$ is a tolerance) is replaced by the stopping test $\mathrm{UB}-\mathrm{LB}\le\epsilon_{\mathrm{in}}$ of \Cref{alg:dc_inner}. Indeed, when the selection step is reached, $\mathrm{UB}-\mathrm{LB}>\epsilon_{\mathrm{in}}\ge0$, and in particular $\mathrm{LB}<\mathrm{UB}\le f^{\mathrm{c}}=\hat{\ell}_j(\theta^{\mathrm{c}})$, so $\theta^{\mathrm{c}}$ is not a minimizer of the model $\hat{\ell}_j$ over $\Theta$. Since $\theta^{\mathrm{c}}$ being a minimizer of the model with the proximal term \cref{eq:bundle_qp} is equivalent to its being a minimizer of $\hat{\ell}_j$ (because the gradient of the proximal term vanishes at $\theta^{\mathrm{c}}$), we have $\theta^{j+1}\ne\theta^{\mathrm{c}}$, and since the optimal value of \cref{eq:bundle_qp} is at most $f^{\mathrm{c}}$, $\hat{\ell}_j(\theta^{j+1})<f^{\mathrm{c}}$, that is, $\nu>0$ holds. Conversely, in the situation where $\nu=0$ ($\theta^{\mathrm{c}}$ is a minimizer of the model), $\mathrm{LB}=f^{\mathrm{c}}\ge\mathrm{UB}$, so \Cref{alg:dc_inner} has already stopped at the stopping test of the same iteration.
Moreover, since cuts are never discarded owing to cut inheritance, the bundle compression (aggregate subgradient) of \citet{kiwiel1990proximity} is not used.
The proximal weight $u$ is updated according to the proximity control rule of \citet{kiwiel1990proximity} ($u$ is not decreased at null steps, and the lower bound $u_{\min}>0$ is maintained; keeping $u$ constant is also allowed).

\begin{algorithm}[th]
    \caption{Rule for selecting evaluation points: proximal bundle method}
    \label{alg:rule_bundle}
    \begin{algorithmic}[1]
        \REQUIRE {Cut set} $\mathcal{G}_j$, current evaluation point $\theta^{j}$ and its value $\bar{\ell}^{\sub}(\theta^{j};y^\prime)=\langle\theta^{j},g_j\rangle$, descent parameter $\rho\in(0,1)$, initial proximal weight $u_0$ and its lower bound $u_{\min}$ ($u_0\ge u_{\min}>0$)
        \REQUIRE Quantities carried over from the call at iteration $j-1$ (undefined for $j=0$): stability center $\theta^{\mathrm{c}}\in\Theta$, its value $f^{\mathrm{c}}=\bar{\ell}^{\sub}(\theta^{\mathrm{c}};y^\prime)$, proximal weight $u\ge u_{\min}$, predicted decrease $\nu=f^{\mathrm{c}}-\hat{\ell}_{j-1}(\theta^{j})>0$
        \IF{$j=0$}
            \STATE $\theta^{\mathrm{c}}\gets\theta^{0}$, $f^{\mathrm{c}}\gets\langle\theta^{0},g_0\rangle$, $u\gets u_0$ (initialization of the center)
        \ELSIF{$\langle\theta^{j},g_j\rangle\le f^{\mathrm{c}}-\rho\,\nu$}
            \STATE $\theta^{\mathrm{c}}\gets\theta^{j}$, $f^{\mathrm{c}}\gets\langle\theta^{j},g_j\rangle$ (serious step: update of the center)
            \STATE $u\gets$ update by the proximity control rule (\citealp{kiwiel1990proximity})
        \ELSE
            \STATE $u\gets$ update by the proximity control rule (without decreasing $u$) (null step: the center is kept)
        \ENDIF
        \STATE $(\theta^{j+1},\tau_{j+1})\gets$ optimal solution of the QP \cref{eq:bundle_qp} (minimization of the model with the proximal term; $\tau_{j+1}=\hat{\ell}_j(\theta^{j+1})$)
        \STATE $\nu\gets f^{\mathrm{c}}-\tau_{j+1}$ (predicted decrease; $\nu>0$)
        \RETURN $\theta^{j+1}$, and $\theta^{\mathrm{c}},f^{\mathrm{c}},u,\nu$ carried over to the call at iteration $j+1$
    \end{algorithmic}
\end{algorithm}

\subsection{Level method}
\label{subsec:rule_level}

The level method \citep{lemarechal1995new} sets a level between the lower bound $\mathrm{LB}$ and the upper bound $\mathrm{UB}$,
\begin{equation}
    \mathrm{lev}_j:=\mathrm{LB}+\lambda\,(\mathrm{UB}-\mathrm{LB})
    \qquad(\lambda\in(0,1)\text{ is the level parameter}),
    \label{eq:def_level}
\end{equation}
and takes the projection of the current evaluation point $\theta^{j}$ onto the level set $\{\theta\in\Theta \mid \hat{\ell}_j(\theta)\le\mathrm{lev}_j\}$ of the model as the next evaluation point (\Cref{alg:rule_level}).
The level set is nonempty because the optimal solution $\theta^{\mathrm{LP}}$ of the LP attaining $\mathrm{LB}$ satisfies $\hat{\ell}_j(\theta^{\mathrm{LP}})=\mathrm{LB}\le\mathrm{lev}_j$, and the projection is the unique optimal solution of
\begin{equation}
    \min\Bigl\{\,\frac{1}{2}\|\theta-\theta^{j}\|^2 \ \Bigm|\ \theta\in\Theta,\ \langle\theta,g\rangle\le\mathrm{lev}_j\ (\forall g\in\mathcal{G}_j)\,\Bigr\}
    \label{eq:level_qp}
\end{equation}
(the feasible region is the slice of $P_j$ at $\tau=\mathrm{lev}_j$).
Since $\mathrm{UB},\mathrm{LB}$ are the quantities maintained by \Cref{alg:dc_inner}, this rule carries no quantities over across calls.
The number of iterations required to reach a certified gap $\mathrm{UB}-\mathrm{LB}\le\epsilon$ (where $\epsilon>0$ is a target accuracy) is discussed in \Cref{rem:level_complexity} (see also the same place for a remark on using inherited cuts).

\begin{algorithm}[th]
    \caption{Rule for selecting evaluation points: level method}
    \label{alg:rule_level}
    \begin{algorithmic}[1]
        \REQUIRE {Cut set} $\mathcal{G}_j$, current evaluation point $\theta^{j}$, upper bound $\mathrm{UB}$, lower bound $\mathrm{LB}$, level parameter $\lambda\in(0,1)$
        \STATE $\mathrm{lev}_j\gets\mathrm{LB}+\lambda\,(\mathrm{UB}-\mathrm{LB})$ (setting the level; \cref{eq:def_level})
        \STATE $\theta^{j+1}\gets$ optimal solution of the QP \cref{eq:level_qp} (projection of $\theta^{j}$ onto the level set $\{\theta\in\Theta \mid \hat{\ell}_j(\theta)\le\mathrm{lev}_j\}$; nonempty)
        \RETURN $\theta^{j+1}$
    \end{algorithmic}
\end{algorithm}

\subsection{Cutting-plane finishing}
\label{subsec:rule_cpfinish}

The cutting-plane finishing of \Cref{theo:cpfinish_exact} can also be written as a selection rule of Step 12 (\Cref{alg:rule_cpfinish}).
Under the guarantee of \Cref{theo:cpfinish_exact}, \Cref{alg:dc_inner} is run with tolerance $0$ and $J=\infty$, and the switching threshold $\epsilon_{\mathrm{sw}}>0$ is held by the rule (since $\epsilon_{\mathrm{sw}}$ is a different quantity from the tolerance $\epsilon_{\mathrm{in}}$ of \Cref{alg:dc_inner}, we use a separate symbol).
The rule distinguishes cases by comparing $\mathrm{UB}-\mathrm{LB}$ with $\epsilon_{\mathrm{sw}}$: if $\mathrm{UB}-\mathrm{LB}>\epsilon_{\mathrm{sw}}$, it applies a selection rule $\mathsf{R}$ (e.g., \Cref{alg:rule_kelley}, \ref{alg:rule_bundle}, \ref{alg:rule_level}); otherwise, it applies the cutting-plane method (\Cref{alg:rule_kelley}).
We call the iterations at which the former condition holds Phase 1, and those at which the latter holds Phase 2 (they correspond to the two phases of \Cref{theo:cpfinish_exact}).

This case distinction gives a two-phase procedure because $\mathrm{UB}-\mathrm{LB}$ is nonincreasing in the iteration $j$. Indeed, by Steps 5--7 of \Cref{alg:dc_inner}, $\mathrm{UB}$ is nonincreasing as the minimum of the evaluated values, and since the cut set $\mathcal{G}$ only grows in Step 4, $\mathrm{LB}=\min_{\theta\in\Theta}\hat{\ell}_j(\theta)$ in Step 8 is nondecreasing. Therefore, once $\mathrm{UB}-\mathrm{LB}\le\epsilon_{\mathrm{sw}}$ holds, it continues to hold thereafter, and the switch of phases occurs only once and never goes back.

The guarantee is as stated in \Cref{theo:cpfinish_exact}. If we take $\epsilon_{\mathrm{sw}}<\delta(y^\prime)$ ($\delta(y^\prime)$ is the submodel gap, \cref{eq:def_submodel_gap}), then at the time of the switch the lower bound is determined exactly as $\mathrm{LB}=m_{\sub}(y^\prime)$ ($m_{\sub}(y^\prime)$ is the minimum value of the convex subproblem of the branch $y^\prime$, \cref{eq:def_branch_min}) (\Cref{lem:lb_snap}), and since the cuts only increase thereafter, this value is maintained. In Phase 2, the evaluation point is a minimizer of the model, so $\mathrm{UB}=\mathrm{LB}=m_{\sub}(y^\prime)$ is reached in finitely many iterations, the stopping test with tolerance $0$ is satisfied, and an exact minimizer is returned.
Note that since the cut set $\mathcal{G}$ and the solution pool $\mathcal{V}$ are carried over unchanged within \Cref{alg:dc_inner} across the phases, the switch requires no additional calls to the forward-problem oracle. When the proximal bundle method is used as the rule $\mathsf{R}$, the quantities carried over by \Cref{alg:rule_bundle} may be discarded at the switch.

\begin{algorithm}[th]
    \caption{Rule for selecting evaluation points: cutting-plane finishing (\Cref{theo:cpfinish_exact})}
    \label{alg:rule_cpfinish}
    \begin{algorithmic}[1]
        \REQUIRE {Cut set} $\mathcal{G}_j$, current evaluation point $\theta^{j}$, upper bound $\mathrm{UB}$, lower bound $\mathrm{LB}$, switching threshold $\epsilon_{\mathrm{sw}}>0$, termination tolerance $\epsilon_{\mathrm{term}}\ge0$, selection rule $\mathsf{R}$ (e.g., \Cref{alg:rule_kelley}, \ref{alg:rule_bundle}, \ref{alg:rule_level})
        \IF{$\mathrm{UB}-\mathrm{LB}>\epsilon_{\mathrm{sw}}$}
            \STATE $\theta^{j+1}\gets$ the point obtained by applying the rule $\mathsf{R}$
        \ELSE
            \STATE (finishing by the cutting-plane method) $\theta^{j+1}\gets$ the point obtained by applying \Cref{alg:rule_kelley}
            \IF{$\|\theta^{j+1}-\theta^{i}\|_2\le\epsilon_{\mathrm{term}}$ for some evaluated point $\theta^{i}$ ($i\le j$)}
                \STATE stop \Cref{alg:dc_inner} with $\mathrm{converged}=\mathrm{TRUE}$ without evaluating $\theta^{j+1}$
            \ENDIF
        \ENDIF
        \RETURN $\theta^{j+1}$
    \end{algorithmic}
\end{algorithm}

In Phase 2, if the minimizer of the LP coincides with an evaluated point within $\epsilon_{\mathrm{term}}$, the inner loop is stopped with $\mathrm{converged}=\mathrm{TRUE}$ without evaluating that point (in this case $\mathrm{UB}-\mathrm{LB}=0$ does not necessarily hold). This is because re-evaluating the same point yields no new cut, and what remains in $\mathrm{UB}-\mathrm{LB}$ is a difference of the order of the rounding error of the loss values, so we avoid continuing the iterations due to rounding errors.
When $\epsilon_{\mathrm{term}}=0$ and the LPs are solved exactly, this test is never triggered: if $\theta^{j+1}=\theta^{i}$ for an evaluated point $\theta^{i}$ ($i\le j$), then the cut $g_i$ obtained at $\theta^{i}$ belongs to $\mathcal{G}$, so $\mathrm{LB}=\max_{g\in\mathcal{G}}\langle\theta^{j+1},g\rangle\ge\langle\theta^{i},g_i\rangle=\bar{\ell}^{\sub}(\theta^{i};y^\prime)\ge\mathrm{UB}$, and \Cref{alg:dc_inner} has already stopped by $\mathrm{UB}-\mathrm{LB}\le\epsilon_{\mathrm{in}}$. \Cref{theo:cpfinish_exact} guarantees an exact minimizer under $\epsilon_{\mathrm{term}}=0$ (the positive $\epsilon_{\mathrm{term}}$ used in the experiments is a safeguard against rounding errors; \Cref{subsec:appendix_implementation}).

When \Cref{alg:dc_outer} is run with \Cref{alg:rule_cpfinish} as the rule of Step 12, as long as we take $\epsilon_{\mathrm{sw}}<\delta_{\mathrm{in}}:=\min_{y^\prime\in\mathcal{Y}^{\mathrm{all}}_{\geq}}\delta(y^\prime)$ and $\epsilon_{\mathrm{term}}=0$, and as long as Phase 1 ends after finitely many iterations under the rule $\mathsf{R}$ (which always holds if $\mathsf{R}$ is the cutting-plane method), the inner loop is solved exactly at every outer iteration, so the assumption of \Cref{prop:dc_finite_convergence} is satisfied (\Cref{rem:gap_outlook}).

\section{DC optimization}
\label{sec:appendix_dc_opt}

This section describes the DC optimization method that solves \cref{eq:empirical_outperformance_IOP} on the basis of the decomposition into branches in \Cref{prop:dc_branch_decomp} and the DC structure in \Cref{rem:OA_loss_is_DC}.
Its skeleton is a double loop: the outer loop is simplified DCA \citep{pham1997convex}, which linearizes the concave part and reduces the problem to convex subproblems, and the inner loop solves the convex subproblems by a cutting-plane-type solver, typified by the cutting-plane method \citep{kelley1960cutting}. In addition, the method has a mechanism that accumulates the feasible solutions returned by the forward-problem oracle as global lower-bounding cuts and reuses them across iterations and runs (cut inheritance). Cut inheritance instantiates the idea of the proximal bundle method for DC programming of \citet{oliveira2019proximal} in accordance with the structure of our problem.
Below, \Cref{subsec:dc_notation} gives the notation and assumptions, \Cref{subsec:dc_decomp} the relation between branches and DCA, \Cref{subsec:dc_algorithm} the pseudocode, \Cref{subsec:dc_cut_pool} cut inheritance, \Cref{subsec:dc_literature} the underlying literature, \Cref{sec:theory_in_dc_opt} finite convergence and local optimality, and \Cref{subsec:dc_theory} the theoretical analysis.

\subsection{Notation and assumptions}
\label{subsec:dc_notation}

Throughout this section, we assume \Cref{assu:margin}. In the appendices below, a point $u$ of a convex set $C$ is an \emph{extreme point} of $C$ if $u=\lambda v+(1-\lambda)w$ with $v,w\in C$ and $\lambda\in(0,1)$ implies $v=w=u$.
For a state $s\in\mathcal{S}$ and $x^\prime\in\mathcal{X}(s)$, we set
\begin{equation}
    F_1(\theta;s):=\max_{x\in\mathcal{X}(s)}\langle\theta,x\rangle ,
    \qquad
    F_2(\theta;x^\prime,s):=\max_{x\in\mathcal{X}_{\geq x^\prime}(s)}\langle\theta,x\rangle .
    \label{eq:def_F1F2}
\end{equation}
Since $x^\prime\in\mathcal{X}_{\geq x^\prime}(s)$, the set $\mathcal{X}_{\geq x^\prime}(s)$ is nonempty, and it is compact as the intersection of the compact set $\mathcal{X}(s)$ and the closed set $\{x\in\mathbb{R}^d \mid x\ge x^\prime\}$; hence both maxima in \cref{eq:def_F1F2} are attained. By definition (\cref{eq:def_optdex}),
\[
    \ell^{\optdex}(\theta;x^\prime,s)=F_1(\theta;s)-F_2(\theta;x^\prime,s) .
\]

By the extreme-point oracle assumption (\Cref{assu:margin}(3)), $\mathcal{Y}(s^n)\subseteq\mathcal{X}(s^n)$ and $\mathcal{Y}_{\geq x^n}(s^n)\subseteq\mathcal{X}_{\geq x^n}(s^n)$ are finite sets. Furthermore, since the value returned by the oracle attains the maximum for every $\theta$, we have
\begin{equation}
    F_1(\theta;s^n)=\max_{y\in\mathcal{Y}(s^n)}\langle\theta,y\rangle ,
    \qquad
    F_2(\theta;x^n,s^n)=\max_{y\in\mathcal{Y}_{\geq x^n}(s^n)}\langle\theta,y\rangle .
    \label{eq:F1F2_finite_max}
\end{equation}
That is, $F_1$ and $F_2$ are maxima of finitely many linear functions (convex, piecewise linear, and positively homogeneous of degree $1$).

Among the branches introduced in the main text (\Cref{sec:dc_opt}), we deal with the tuples $y^\prime=(y^{\prime n})_{n=1}^N\in\mathcal{Y}^{\mathrm{all}}_{\geq}=\prod_{n=1}^N\mathcal{Y}_{\geq x^n}(s^n)$ consisting of values that the oracle can return. For $y\in\mathbb{R}^d$ and a branch $y^\prime\in\mathcal{Y}^{\mathrm{all}}_{\geq}$, we set
\begin{align}
    \ell^{\sub}(\theta;y,s) &:= F_1(\theta;s)-\langle\theta,y\rangle ,
    \label{eq:def_sub_loss}
    \\
    \bar{\ell}^{\sub}(\theta;y^\prime) &:= \frac{1}{N}\sum_{n=1}^N \ell^{\sub}(\theta;y^{\prime n},s^n) ,
    \qquad
    m_{\sub}(y^\prime) := \min_{\theta\in\Theta}\bar{\ell}^{\sub}(\theta;y^\prime) .
    \label{eq:def_branch_min}
\end{align}
We call $m_{\sub}(y^\prime)$ the \emph{branch minimum} of the branch $y^\prime$. The function $\ell^{\sub}$ has the same form as the suboptimality loss \citep{mohajerin2018data}. By \cref{eq:F1F2_finite_max}, each $\bar{\ell}^{\sub}(\cdot\,;y^\prime)$ is convex and piecewise linear, and by the compactness of $\Theta$, the minimum in \cref{eq:def_branch_min} is attained.

\subsection{\texorpdfstring{Relation between branches and DCA}{Relation between branches and DCA}}
\label{subsec:dc_decomp}

\begin{remark}[Relation to the DC structure]
    \label{rem:OA_loss_is_DC}
    The empirical mean of the OS loss $\bar{\ell}^{\optdex}$ is a DC function of $\theta\in\Theta$, that is, a function that can be written in the form convex function $-$ convex function. Indeed, by definition,
    \[
        \bar{\ell}^{\optdex}(\theta)
        =
        \frac{1}{N}\sum_{n=1}^N\max_{x\in\mathcal{X}(s^n)}\langle\theta,x\rangle
        -
        \frac{1}{N}\sum_{n=1}^N\max_{x\in\mathcal{X}_{\geq x^n}(s^n)}\langle\theta,x\rangle ,
    \]
    and both terms on the right-hand side are convex functions of $\theta$.
    By this observation, \cref{eq:empirical_outperformance_IOP} can be regarded as a DC optimization problem and connected to the literature on DC programming. The operation of linearizing the concave part $-\frac{1}{N}\sum_{n=1}^N\max_{x\in\mathcal{X}_{\geq x^n}(s^n)}\langle\theta,x\rangle$ at the dominating vertex (the solution returned by the dominance-constrained forward-problem oracle) $x^*(\theta;x^n,s^n)$ is nothing but the minimization with respect to $y^\prime$ in \Cref{prop:dc_branch_decomp}.
\end{remark}

\begin{proof}[Proof of \Cref{prop:dc_branch_decomp}]
    By \Cref{assu:margin}(2), $\mathcal{X}_{\geq x^n}(s^n)$ is nonempty and compact for each $n$, so the maximum in the second term of \cref{eq:def_optdex} is attained, and $-\max_{x\in\mathcal{X}_{\geq x^n}(s^n)}\langle\theta,x\rangle=\min_{y\in\mathcal{X}_{\geq x^n}(s^n)}\bigl(-\langle\theta,y\rangle\bigr)$. Since the minimizations for different $n$ can be carried out independently,
    \[
        \bar{\ell}^{\optdex}(\theta)
        =\frac{1}{N}\sum_{n=1}^N\Bigl[\max_{x\in\mathcal{X}(s^n)}\langle\theta,x\rangle+\min_{y\in\mathcal{X}_{\geq x^n}(s^n)}\bigl(-\langle\theta,y\rangle\bigr)\Bigr]
        =\min_{y^\prime\in\mathcal{X}^{\mathrm{all}}_{\geq}}\bar{\ell}^{\sub}(\theta;y^\prime) ,
    \]
    which gives the first equality in \cref{eq:dc_pointwise_min}. By \Cref{assu:margin}(3), for every $\theta$ the value returned by the oracle $x^*(\theta;x^n,s^n)\in\mathcal{Y}_{\geq x^n}(s^n)\subseteq\mathcal{X}_{\geq x^n}(s^n)$ attains the maximum, so $\mathcal{X}_{\geq x^n}(s^n)$ may be replaced by $\mathcal{Y}_{\geq x^n}(s^n)$ in the argument above, which gives the second equality. Since $\mathcal{Y}^{\mathrm{all}}_{\geq}$ is a finite set (\Cref{rem:assu_margin}) and each $m_{\sub}(y^\prime)$ is attained, \cref{eq:dc_min_min} follows by exchanging the $\min$ over $\theta$ and over $y^\prime$.
\end{proof}

The decomposition into branches was given in \Cref{prop:dc_branch_decomp} of the main text. Here we supplement the relation between DCA and branches.

DCA can be regarded as an iterative method that selects branches adaptively without enumerating them explicitly. If the concave part $-\frac{1}{N}\sum_{n=1}^N F_2(\,\cdot\,;x^n,s^n)$ is linearized at the current point $\theta^k$ by a subgradient, namely the dominating vertices $y^{(k),n}:=x^*(\theta^k;x^n,s^n)$, then the resulting convex subproblem is exactly $\min_{\theta\in\Theta}\bar{\ell}^{\sub}(\theta;y^{(k)})$, that is, the minimization of the empirical mean of the suboptimality loss with the branch $y^{(k)}=(y^{(k),n})_n$ fixed. A subgradient oracle for the convex subproblem is obtained immediately from the forward-problem oracle as
\[
    \partial_\theta \bar{\ell}^{\sub}(\theta;y^{(k)}) \ni \frac{1}{N}\sum_{n=1}^N\bigl[x^*(\theta,s^n)-y^{(k),n}\bigr]
\]
(as in \citealp{barmann2017emulating}). Evaluated at $\theta=\theta^k$, it coincides with the gradient in the sense of weak derivatives $\frac{1}{N}\sum_{n=1}^N g(\theta^k;x^n,s^n)$ of \Cref{prop:gredient_OA_loss}. That is, whereas the gradient-based optimization methods of \Cref{subsec:gradient_method} update the branch at every step, DCA fixes the branch, solves the convex subproblem to completion, and then updates the branch.

\subsection{Algorithm}
\label{subsec:dc_algorithm}

The pseudocode is given in \Cref{alg:dc_inner,alg:dc_outer}.

\begin{algorithm}[th]
    \caption{Inner solver based on a cutting-plane model for the convex subproblem (with cut inheritance)}
    \label{alg:dc_inner}
    \begin{algorithmic}[1]
        \REQUIRE Branch $y^\prime=(y^{\prime n})_{n=1}^N$, initial point $\theta^{0}\in\Theta$, solution pool $\mathcal{V}_0$, tolerance $\epsilon_{\mathrm{in}}\ge0$, maximum number of iterations $J$
        \STATE $\mathcal{G}\gets\{\,\frac{1}{N}\sum_{n=1}^N (v^n - y^{\prime n}) \mid v\in\mathcal{V}_0\,\}$, $\mathrm{UB}\gets+\infty$ (initialization with inherited cuts)
        \FOR{$j=0,1,\ldots,J-1$}
            \STATE Compute $v_j^n\gets x^*(\theta^{j} , s^n)$ for each $n$, and set $\mathcal{V}_{j+1}\gets\mathcal{V}_j\cup\{(v_j^n)_n\}$ (update of the pool)
            \STATE $g_j\gets\frac{1}{N}\sum_{n=1}^N (v_j^n-y^{\prime n})$, $\mathcal{G}\gets\mathcal{G}\cup\{g_j\}$ ($\bar{\ell}^{\sub}(\theta^{j};y^\prime)=\langle\theta^{j},g_j\rangle$)
            \IF{$\langle\theta^{j},g_j\rangle<\mathrm{UB}$}
                \STATE $\mathrm{UB}\gets\langle\theta^{j},g_j\rangle$, $\hat{\theta}\gets\theta^{j}$
            \ENDIF
            \STATE $\mathrm{LB}\gets\min_{\theta\in\Theta}\max_{g\in\mathcal{G}}\langle\theta,g\rangle$
            \IF{$\mathrm{UB}-\mathrm{LB}\le\epsilon_{\mathrm{in}}$}
                \RETURN $(\hat{\theta},\ \mathrm{converged}=\mathrm{TRUE},\ \mathcal{V}_{j+1})$
            \ENDIF
            \STATE Choose the next evaluation point $\theta^{j+1}\in\Theta$ according to the selection rule of the inner solver (see \Cref{sec:appendix_inner_rules} for details).
        \ENDFOR
        \RETURN $(\hat{\theta},\ \mathrm{converged}=\mathrm{FALSE},\ \mathcal{V}_{J})$
    \end{algorithmic}
\end{algorithm}

\begin{algorithm}[th]
    \caption{DC optimization of the empirical mean of the OS loss (simplified DCA $+$ cut inheritance)}
    \label{alg:dc_outer}
    \begin{algorithmic}[1]
        \REQUIRE Data $\{(s^n,x^n)\}_{n=1}^N$, oracles $x^*(\cdot,s^n)$ and $x^*(\cdot\,;x^n,s^n)$, initial point $\theta^0\in\Theta$, thresholds $\epsilon_{\mathrm{gap}},\epsilon_{\mathrm{tie}}\ge0$, maximum number of iterations $K$
        \STATE $\mathcal{V}^0\gets\emptyset$, and $y^{\prime n}\gets x^*(\theta^0;x^n,s^n)$ for each $n$ (initialization of the branch)
        \FOR{$k=0,1,\ldots,K-1$}
            \STATE Apply \Cref{alg:dc_inner} to $(y^\prime,\theta^k,\mathcal{V}^k)$ and obtain $(\theta^{k+1},\mathrm{converged},\widetilde{\mathcal{V}}^{k})$\label{line:dc_inner_call}
            \STATE For each $n$, $v^n\gets x^*(\theta^{k+1},s^n)$ and $y^{\prime\prime n}\gets x^*(\theta^{k+1};x^n,s^n)$; $\mathcal{V}^{k+1}\gets\widetilde{\mathcal{V}}^{k}\cup\{(v^n)_n\}$ (update of the pool)
            \STATE $\ell_k\gets\frac{1}{N}\sum_{n=1}^N\langle\theta^{k+1},v^n-y^{\prime\prime n}\rangle$ ($=\bar{\ell}^{\optdex}(\theta^{k+1})$)
            \STATE $\Delta_k\gets\frac{1}{N}\sum_{n=1}^N\langle\theta^{k+1},y^{\prime\prime n}-y^{\prime n}\rangle$
            \IF{$\ell_k\le\epsilon_{\mathrm{gap}}$}
                \STATE \textbf{break} (certificate stopping; certificate of global optimality; \Cref{prop:dc_global_certificate})
            \ENDIF
            \IF{$\Delta_k\le\epsilon_{\mathrm{tie}}$ and $\mathrm{converged}=\mathrm{TRUE}$}
                \STATE \textbf{break} (tie stopping; fixed point of DCA)
            \ENDIF
            \STATE $y^{\prime n}\gets y^{\prime\prime n}$ ($\forall n$) (update of the branch)
        \ENDFOR
        \RETURN {In the case of target-attainment stopping (see \cref{eq:def_opl}), that point; otherwise,} among the points $\theta^{k+1}$ of the outer iterations so far, the one with the smallest $\ell_k$ (if several points attain the minimum, the one from the earliest iteration)
    \end{algorithmic}
\end{algorithm}

\begin{remark}[Reasons for adopting the inner solver and cut inheritance]
    \label{rem:dc_design_rationale}
    There are two reasons for using the cutting-plane method in the inner loop.
    First, the unit of computational cost in our problem is a call to the forward-problem oracle (solving a MILP for each state), and the value and a subgradient of the objective function of the convex subproblem are obtained simultaneously from a single oracle evaluation (without additional solves). The cutting-plane method keeps all the information obtained in this way as the cut set $\mathcal{G}$, and thus makes full use of the information per oracle evaluation. Moreover, since the convex subproblem is convex and piecewise linear (\cref{eq:def_sub_loss_main}), minimizing the model requires only an LP, and an exact minimizer is reached in finitely many iterations (\Cref{lem:dc_inner_finite}). Furthermore, with cutting-plane finishing (\Cref{subsec:rule_cpfinish}), which switches the selection rule for evaluation points only once during the run, the number of iterations until reaching it can be bounded in terms of the number $T_1$ of iterations of Phase 1 per call (\Cref{prop:total_inner_iterations}), $M$ ($M:=\prod_{n=1}^N|\mathcal{Y}(s^n)|$; \Cref{lem:dc_inner_finite}), and $|\mathcal{Y}^{\mathrm{all}}_{\geq}|$ (\Cref{theo:cpfinish_exact}, \Cref{prop:total_inner_iterations}, and \Cref{rem:gap_outlook}). Moreover, in the integer case with the level method as the rule $\mathsf{R}$ for selecting evaluation points before the switch in cutting-plane finishing (\Cref{alg:rule_cpfinish}), this number is given as an explicit function of the problem size, $M$, and $|\mathcal{Y}^{\mathrm{all}}_{\geq}|$ whenever the iteration bound of \Cref{rem:level_complexity} holds (\Cref{cor:total_integer}).
    Second, the cutting-plane method returns the gap between the upper and lower bounds, $\mathrm{UB}-\mathrm{LB}$, as a certificate of convergence. Tie stopping in \Cref{alg:dc_outer} may stop erroneously at an accidental tie (a spurious fixed point) when the inner loop returns an unconverged approximate solution, and it is therefore adopted only when the convergence of the inner loop has been confirmed. This decision requires a certificate. When a gradient-based method without a lower-bounding model is used as the inner solver, convergence is decided from the value of the convex subproblem and the subgradient at the weight (e.g., for ONS and MetaGrad, when the value is at most $\epsilon_{\mathrm{in}}$ or the subgradient is $0$; \Cref{alg:dc_inner_online}).
    The reason for accumulating and reusing cuts (cut inheritance) is as follows. Cuts are obtained for free as by-products of oracle evaluations, and their validity depends only on the feasibility of the solutions, not on the choice of the branch (\Cref{prop:dc_cut_validity}). Hence they can be safely reused across branch updates and outer iterations. The effect can be quantified: the total number of inner iterations executed in $K$ outer iterations decreases from $K(M+1)$ without sharing the pool to $M+2K-1$ (\Cref{lem:dc_inner_finite}; $M$ is an upper bound on the total number of tuples of solutions that the oracle can return).
    Note that \Cref{alg:dc_inner} does not fix the selection rule for the next evaluation point $\theta^{j+1}$. Since $\mathrm{UB}$ is the minimum of the values actually evaluated and $\mathrm{LB}$ is a lower bound obtained from the model of cuts, the validity of the stopping criterion $\mathrm{UB}-\mathrm{LB}\le\epsilon_{\mathrm{in}}$ (the $\epsilon_{\mathrm{in}}$-optimality of $\hat{\theta}$) is preserved however $\theta^{j+1}$ is chosen. The pseudocode of each rule is given in \Cref{sec:appendix_inner_rules}.
\end{remark}

The outer loop (\Cref{alg:dc_outer}) is simplified DCA, which repeats fixing the branch by the dominating vertices (linearization), solving the convex subproblem in the inner loop, and updating the branch.
The inner loop (\Cref{alg:dc_inner}) is a method based on a cutting-plane model for the convex subproblem $\min_{\theta\in\Theta}\bar{\ell}^{\sub}(\theta;y^\prime)$ (if the selection rule for evaluation points is taken to be the minimizer of the model, it becomes the cutting-plane method \citep{kelley1960cutting}). The linearization of $\bar{\ell}^{\sub}(\cdot\,;y^\prime)$ at a point $\theta^{j}$ is, with $v_j^n:=x^*(\theta^{j},s^n)$ and $g_j:=\frac{1}{N}\sum_{n=1}^N(v_j^n-y^{\prime n})$,
\begin{equation}
    \bar{\ell}^{\sub}(\theta;y^\prime)
    \ \ge\ \bar{\ell}^{\sub}(\theta^{j};y^\prime)+\langle g_j , \theta-\theta^{j}\rangle
    \ =\ \langle \theta , g_j \rangle ,
    \label{eq:dc_cut_form}
\end{equation}
which is a homogeneous linear lower bound (cut) with intercept $0$ (by $\bar{\ell}^{\sub}(\theta^{j};y^\prime)=\langle\theta^{j},g_j\rangle$). The problem of minimizing the model of cuts (the master problem) is the minimization over $\Theta$ of the maximum of the cuts, that is, a linear program (LP).

There are three stopping criteria.
First, certificate stopping (stopping by $\ell_k\le\epsilon_{\mathrm{gap}}$ in \Cref{alg:dc_outer}): since the OS loss is nonnegative (\Cref{sec:performance_metric}), its value itself is an upper bound on the gap to the global optimal value, and if $\ell_k\le\epsilon_{\mathrm{gap}}$, then $\theta^{k+1}$ is $\epsilon_{\mathrm{gap}}$-globally optimal (\Cref{prop:dc_global_certificate}). This stopping does not depend on the convergence of the inner solver and is always valid.
Second, tie stopping (stopping by $\Delta_k\le\epsilon_{\mathrm{tie}}$): $\Delta_k=0$ means that the branch $y^\prime$ used for the linearization is also optimal as a linearization of the concave part at $\theta^{k+1}$ (a fixed point of DCA has been reached). However, a $\theta^{k+1}$ returned while the inner loop has not converged is an approximate solution of the convex subproblem, and accidental ties (spurious fixed points) can occur for combinatorial forward problems; hence, in the implementation, tie stopping is adopted only when the inner solver reports convergence.
Third, target-attainment stopping: the outperformance prediction loss
\begin{equation}
    \ell^{\mathrm{OP}}(\theta):=\frac{1}{N}\sum_{n=1}^N\sum_{i=1}^d\max\{x_i^n-x_i^*(\theta,s^n),\,0\}
    \label{eq:def_opl}
\end{equation}
is nonnegative, and $\ell^{\mathrm{OP}}(\theta)=0$ is equivalent to $x^*(\theta,s^n)\ge x^n$ ($\forall n$), that is, to the attainment of \cref{eq:outperformance_IOP}. The implementation computes $\ell^{\mathrm{OP}}$ every time it evaluates $x^*(\cdot,s^n)$, including at the inner evaluation points, and as soon as it falls below a threshold (default $0$), it outputs that $\theta$ and immediately stops the whole procedure.

We describe the main points of the implementation. Besides the cutting-plane method (the default), the inner solver can be replaced by the projected subgradient method, the proximal bundle method \citep{kiwiel1990proximity}, the level method \citep{lemarechal1995new}, the ellipsoid method, the analytic center cutting-plane method, the center of gravity method, the randomized center of gravity method, ONS, or MetaGrad (\Cref{alg:dc_inner_online}), and every solver that has a lower-bounding model (all except the projected subgradient method, ONS, and MetaGrad) can use inherited cuts (corresponding to the initialization of $\mathcal{G}$ in \Cref{alg:dc_inner}). The initial point of the inner loop is $\theta^{0}=\theta^k$ (warm start) (when ONS or MetaGrad is used as the inner solver, it resumes from the state of the learner at the end of the previous outer iteration), and the return value is projected onto the probability simplex to clean up numerical errors. The default values of the implementation are $J=20$ (the inner loop is an inexact solve that allows truncation), $K=100$, $\epsilon_{\mathrm{tie}}=10^{-8}$, and $\epsilon_{\mathrm{gap}}=10^{-4}$ (the values used in the experiments are given in \Cref{subsec:appendix_implementation}). Computational cost is accounted for by the number of calls to the forward-problem oracle rather than by the number of iterations (calls to the forward-problem oracle occur only in computing $v_j$ in \Cref{alg:dc_inner} and $v,y^{\prime\prime}$ and the initialization of the branch in \Cref{alg:dc_outer}; the additional calls when ONS or MetaGrad is used as the inner solver are shown in \Cref{alg:dc_inner_online}).

\subsection{Cut inheritance}
\label{subsec:dc_cut_pool}

In DC programming, the idea of keeping the cuts (cutting-plane model) of the convex component across iterations was proposed by \citet{oliveira2019proximal} as a proximal bundle method for DC programming. That paper takes as its issue that DCA and the proximal linearized method (PLM) must solve a nonsmooth convex subproblem exactly at each iteration, and replaces the convex component $f_1$ in the subproblem by a cutting-plane model $\breve{f}_1^k$ constructed from an information bundle $\mathcal{B}_1^k$ that accumulates the linearizations obtained in past iterations. Whereas the linearization of the concave component $-f_2$ is updated only when the stability center is updated, the model of $f_1$ is kept across iterations, so the convex subproblem need not be solved exactly. Cut inheritance, described in this subsection, instantiates this idea in accordance with the structure of our problem that the convex component $F_1$ is a support function.

In the initialization of $\mathcal{G}$ in \Cref{alg:dc_inner}, tuples of solutions of the forward-problem oracle obtained in the past, $v=(v^n)_n$ (the pool $\mathcal{V}$), are converted into cuts for the current branch $y^\prime$ and used as initial cuts. We call the cuts made in this way from the tuples in the pool $\mathcal{V}$ inherited cuts, and this mechanism cut inheritance. Their validity follows from the next proposition.

\begin{proposition}[Validity of cuts]
    \label{prop:dc_cut_validity}
    Let $v=(v^n)_{n=1}^N$ be any tuple satisfying $v^n\in\mathcal{X}(s^n)$ ($n=1,\ldots,N$). Then, for any branch $y^\prime\in\mathcal{Y}^{\mathrm{all}}_{\geq}$, we have
    \[
        \bar{\ell}^{\sub}(\theta;y^\prime)
        \ \ge\ \left\langle \theta ,\, \frac{1}{N}\sum_{n=1}^N (v^n-y^{\prime n}) \right\rangle
        \qquad(\forall\theta\in\mathbb{R}^d) .
    \]
    Furthermore, if each $v^n$ attains $F_1(\theta^{0};s^n)$ at some $\theta^{0}$, then equality holds at $\theta=\theta^{0}$.
\end{proposition}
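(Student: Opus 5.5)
The plan is a termwise argument: reduce the claim to one inequality for each state $n$, then average over $n=1,\ldots,N$. By the definitions \cref{eq:def_sub_loss,eq:def_branch_min}, the left-hand side equals
\[
\bar{\ell}^{\sub}(\theta;y^\prime)=\frac{1}{N}\sum_{n=1}^N\bigl[F_1(\theta;s^n)-\langle\theta,y^{\prime n}\rangle\bigr].
\]
The right-hand side, by linearity of the inner product, equals
\[
\frac{1}{N}\sum_{n=1}^N\bigl[\langle\theta,v^n\rangle-\langle\theta,y^{\prime n}\rangle\bigr].
\]
The terms $-\langle\theta,y^{\prime n}\rangle$ appear identically on both sides and cancel. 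It therefore suffices to show, for each $n$ and every $\theta\in\mathbb{R}^d$, that
\[
F_1(\theta;s^n)\ge\langle\theta,v^n\rangle .
\]
This inequality is immediate from the definition \cref{eq:def_F1F2}, since $F_1(\theta;s^n)$ is the maximum of $\langle\theta,x\rangle$ over $x\in\mathcal{X}(s^n)$ and $v^n\in\mathcal{X}(s^n)$.

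The argument never uses any property of $y^\prime$. This shows that the validity of the cut does not depend on the branch, as claimed in the discussion of cut inheritance. It also holds for all $\theta\in\mathbb{R}^d$, not only for $\theta\in\Theta$.

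For the equality statement, the assumption is that $\langle\theta^0,v^n\rangle=F_1(\theta^0;s^n)$ for each $n$. Then each termwise inequality is an equality at $\theta=\theta^0$. Averaging over $n$ and subtracting the common terms $\langle\theta^0,y^{\prime n}\rangle$ gives equality in the displayed bound at $\theta^0$. This matches the identity $\bar{\ell}^{\sub}(\theta^{j};y^\prime)=\langle\theta^{j},g_j\rangle$ used in \Cref{alg:dc_inner}.

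No step is a real obstacle. The only point needing care is that $F_1$ is attained for every $\theta\in\mathbb{R}^d$, not only on $\Theta$. This holds by compactness of $\mathcal{X}(s^n)$ under \Cref{assu:margin}(2), so the inequality is valid on the whole space.
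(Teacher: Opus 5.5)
Your proposal is correct and matches the paper's proof: both use the termwise bound $F_1(\theta;s^n)\ge\langle\theta,v^n\rangle$, which follows from $v^n\in\mathcal{X}(s^n)$, and average it over $n$. Both then obtain equality at $\theta^0$ by substituting $F_1(\theta^0;s^n)=\langle\theta^0,v^n\rangle$. Your remark on attainment of the maximum is harmless but not needed, since the inequality $\sup_{x\in\mathcal{X}(s^n)}\langle\theta,x\rangle\ge\langle\theta,v^n\rangle$ holds whether or not the supremum is attained.
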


\begin{proof}
    Since $v^n\in\mathcal{X}(s^n)$, we have $F_1(\theta;s^n)\ge\langle\theta,v^n\rangle$ for every $\theta$. Hence
    \begin{align*}
        \bar{\ell}^{\sub}(\theta;y^\prime)
        &=\frac{1}{N}\sum_{n=1}^N\bigl[F_1(\theta;s^n)-\langle\theta,y^{\prime n}\rangle\bigr]
        \ge\frac{1}{N}\sum_{n=1}^N\langle\theta,v^n-y^{\prime n}\rangle\\
        &=\left\langle\theta,\frac{1}{N}\sum_{n=1}^N(v^n-y^{\prime n})\right\rangle .
    \end{align*}
    The equality condition follows by substituting $F_1(\theta^{0};s^n)=\langle\theta^{0},v^n\rangle$ ($\forall n$).
\end{proof}

As \Cref{prop:dc_cut_validity} shows, these cuts have the following properties.
First, the validity of a cut depends only on the feasibility of $v$ ($v^n\in\mathcal{X}(s^n)$), and not on at which $\theta$, under which branch, or by which solver $v$ was obtained. Hence the same pool $\mathcal{V}$ can be shared across branch updates, DC iterations, types of inner solvers, and different runs of DC optimization (as long as the sequence of states $\{s^n\}$ is the same).
Second, since $v$ need not even be optimal (it is valid as a lower bound as long as it is feasible), the cuts are robust to truncation and approximate solution of the forward-problem solver (although the tightness of the cuts depends on optimality).
Third, $v$ has already been obtained as a by-product of evaluating $F_1$, and generating the cuts requires no additional forward-problem solves.
These are consequences of $F_1$ being a support function (positively homogeneous of degree $1$), in contrast to cuts of a general convex function, which have nonzero intercepts depending on the linearization point.

\subsection{Underlying literature}
\label{subsec:dc_literature}

Every component of the skeleton of our method is a known method. We make the correspondence explicit.
The outer loop is, among the variants of DCA formulated by \citet{pham1997convex} for DC programming, simplified DCA, which linearizes the concave part by a subgradient (here, the dominating vertices). Our setting fits directly into the polyhedral DC programs of Section~4 of that paper. Indeed, in
\begin{gather*}
    (\tilde{\mathrm{P}})\colon\ \inf\{\phi(\theta)-\tilde{h}(\theta) \mid \theta\in\mathbb{R}^d\} ,\\
    \tilde{h}(\theta)=\max\{\langle a^i,\theta\rangle-\alpha^i \mid i\in I\}\quad(I\text{ is a finite set})
\end{gather*}
of that paper, it suffices to set
\[
    \phi(\theta)=\frac{1}{N}\sum_{n=1}^N F_1(\theta;s^n)+\chi_\Theta(\theta) ,
    \qquad
    \tilde{h}(\theta)=\frac{1}{N}\sum_{n=1}^N F_2(\theta;x^n,s^n)
\]
(where $\chi_\Theta$ is the indicator function of $\Theta$). By \cref{eq:F1F2_finite_max},
\[
    \tilde{h}(\theta)=\max_{y^\prime\in\mathcal{Y}^{\mathrm{all}}_{\geq}}\left\langle\theta,\frac{1}{N}\sum_{n=1}^N y^{\prime n}\right\rangle ,
\]
so $\tilde{h}$ is a polyhedral convex function with $a^{y^\prime}=\frac{1}{N}\sum_{n=1}^N y^{\prime n}$, $\alpha^{y^\prime}=0$, and $I=\mathcal{Y}^{\mathrm{all}}_{\geq}$ (and $\phi$ is polyhedral convex as well), and the convex subproblem $(\mathrm{P}_i)$ ($\inf_\theta\{\phi(\theta)-(\langle a^i,\theta\rangle-\alpha^i)\}$) of that paper coincides exactly with the convex subproblem of the branch $y^\prime$ $m_{\sub}(y^\prime)=\min_{\theta\in\Theta}\bar{\ell}^{\sub}(\theta;y^\prime)$ of this section. Hence \cref{eq:dc_min_min} of \Cref{prop:dc_branch_decomp} is an instantiation, for our problem, of the equality (where $\alpha$ is the optimal value of the DC program $(\tilde{\mathrm{P}})$: $\inf_\theta\{\phi(\theta)-\tilde{h}(\theta)\}$ of that paper) $\alpha=\inf_{i\in I}\inf_\theta\{\phi(\theta)-(\langle a^i,\theta\rangle-\alpha^i)\}$ in Section~4.1 of that paper and of the remark at the beginning of Section~4.2 that ``solving the polyhedral DC program $(\tilde{\mathrm{P}})$ globally reduces to solving $|I|$ convex programs $(\mathrm{P}_i)$.''
The monotone decrease of the outer loop (\Cref{prop:dc_descent}) corresponds to Theorem 5(i) of that paper (although \Cref{prop:dc_descent} differs in that it allows inexact solution of the inner problem). The relation between finite convergence (\Cref{prop:dc_finite_convergence}) and Theorem 6 of that paper was discussed in \Cref{rem:dca_finite_theorem6}. For later developments of DCA, see \citet{lethi2018dc}.
The default inner loop is the cutting-plane method \citep{kelley1960cutting}; the facts that the convex subproblem reduces to the minimization of the suboptimality loss \citep{mohajerin2018data} and that its subgradient is obtained from the forward-problem oracle share the same structure as the subgradient computation of \citet{barmann2017emulating}. Among the interchangeable inner solvers, the proximal bundle method is due to \citet{kiwiel1990proximity} and the level method to \citet{lemarechal1995new}.
The idea of keeping the cuts (model) of the convex part across updates of the linearization of the concave part was proposed by \citet{oliveira2019proximal}, as described in \Cref{subsec:dc_cut_pool}. Related prior work includes the double bundle method \citep{joki2018double}, which has a bundle for each of the two DC components, and BEM-DC \citep{gaudioso2023bundle}, which assumes polyhedral DC and distributes cuts to both components; the latter is the closest to our method. The differences from these are that (i) the cuts are made not from a subgradient oracle but from the feasible solutions themselves returned by the forward-problem oracle, and, by the support-function structure, they are global lower bounds with intercept $0$ that do not depend on the branch (\Cref{prop:dc_cut_validity}); (ii) since their validity depends only on feasibility, they are robust to truncation; (iii) they are kept not only within a single run but across types of inner solvers and runs (in the experiments of this paper, the pool was shared within a single run of DC optimization); and (iv) the method is applied to outperformance inverse optimization (OS loss minimization).

\subsection{Finite convergence of DCA and local optimality}
\label{sec:theory_in_dc_opt}

This subsection describes the finite convergence of the DC optimization method of \Cref{sec:dc_opt} and local optimality. We use the notation of \Cref{sec:dc_opt} (in particular, the finite set of branches $\mathcal{Y}^{\mathrm{all}}_{\geq}$ and the minimum value of the convex subproblem $m_{\sub}(y^\prime)$). We also write the branch at iteration $k$ of \Cref{alg:dc_outer} as $y^{(k)}:=(x^*(\theta^k;x^n,s^n))_{n=1}^N$. Then, if the inner convex subproblems are solved exactly, DC optimization stops in finitely many iterations.

\begin{proposition}[Finite convergence of exact DCA]
    \label{prop:dc_finite_convergence}
    Under \Cref{assu:margin}, suppose that in line~\ref{line:dc_inner_call} of \Cref{alg:dc_outer} (the application of \Cref{alg:dc_inner}), the inner convex subproblem is solved exactly, that is, $\theta^{k+1}\in\argmin_{\theta\in\Theta}\bar{\ell}^{\sub}(\theta;y^{(k)})$ (by \Cref{lem:dc_inner_finite}, this is possible with finitely many oracle calls, in which case we may set $\mathrm{converged}=\mathrm{TRUE}$). With $\epsilon_{\mathrm{tie}}=0$ and $K=\infty$, \Cref{alg:dc_outer} stops within at most $|\mathcal{Y}^{\mathrm{all}}_{\geq}|$ outer iterations (by certificate stopping or tie stopping). In the case of tie stopping, the point $\theta^{k+1}$ of the iteration at which it stopped satisfies
    \begin{equation}
        \bar{\ell}^{\optdex}(\theta^{k+1})
        =m_{\sub}(y^{(k)})
        =\min_{\theta\in\Theta}\bar{\ell}^{\sub}(\theta;y^{(k)}) .
        \label{eq:dc_fixed_point}
    \end{equation}
    That is, $\theta^{k+1}$ is an exact minimizer of the convex subproblem for its own branch (a fixed point of DCA). Since the output of \Cref{alg:dc_outer} is the point with the smallest $\ell_k$, its empirical mean of the OS loss is at most $m_{\sub}(y^{(k)})$.
\end{proposition}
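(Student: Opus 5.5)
The proof combines a monotone-descent argument over the finite branch set $\mathcal{Y}^{\mathrm{all}}_{\geq}$ with the sandwich inequality $\bar{\ell}^{\optdex}\le\bar{\ell}^{\sub}(\cdot\,;y^\prime)$ from \cref{eq:def_sub_loss_main}. The key identity is the following. At iteration $k$ the branch in force is $y^{(k)}=(x^*(\theta^k;x^n,s^n))_n$, the branch returned by the dominance-constrained oracle at $\theta^k$. At $\theta^k$ itself this branch attains $F_2$, so $\bar{\ell}^{\sub}(\theta^k;y^{(k)})=\bar{\ell}^{\optdex}(\theta^k)$. Moreover, $\Delta_k=\frac{1}{N}\sum_n\langle\theta^{k+1},y^{(k+1),n}-y^{(k),n}\rangle=\bar{\ell}^{\sub}(\theta^{k+1};y^{(k)})-\bar{\ell}^{\optdex}(\theta^{k+1})\ge0$, because $y^{(k+1),n}=y^{\prime\prime n}$ maximizes $\langle\theta^{k+1},\cdot\rangle$ over $\mathcal{X}_{\geq x^n}(s^n)$. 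Both facts follow from \Cref{assu:margin}(3) and \cref{eq:F1F2_finite_max}.

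Next I would establish strict descent of the branch minima whenever the algorithm does not stop. Exactness gives $\bar{\ell}^{\sub}(\theta^{k+1};y^{(k)})=m_{\sub}(y^{(k)})$. If tie stopping does not occur (with $\epsilon_{\mathrm{tie}}=0$, converged $=$ TRUE), then $\Delta_k>0$, so
\[
m_{\sub}(y^{(k+1)})\le\bar{\ell}^{\sub}(\theta^{k+1};y^{(k+1)})=\bar{\ell}^{\optdex}(\theta^{k+1})=m_{\sub}(y^{(k)})-\Delta_k<m_{\sub}(y^{(k)}).
\]
Hence the sequence $m_{\sub}(y^{(0)}),m_{\sub}(y^{(1)}),\dots$ is strictly decreasing for as long as the loop runs. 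In particular, no branch of $\mathcal{Y}^{\mathrm{all}}_{\geq}$ can repeat, since a repeated branch would repeat its $m_{\sub}$ value. The branches used at successive outer iterations are therefore pairwise distinct elements of the finite set $\mathcal{Y}^{\mathrm{all}}_{\geq}$ (finite by \Cref{rem:assu_margin}). Consequently the loop must stop, by certificate stopping or tie stopping, within $|\mathcal{Y}^{\mathrm{all}}_{\geq}|$ iterations.

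In the tie-stopping case, $\Delta_k=0$, and the identity above gives $\bar{\ell}^{\optdex}(\theta^{k+1})=\bar{\ell}^{\sub}(\theta^{k+1};y^{(k)})=m_{\sub}(y^{(k)})$, which is \cref{eq:dc_fixed_point}. The final claim is immediate: $\ell_k=\bar{\ell}^{\optdex}(\theta^{k+1})$, and the output minimizes $\ell_k$ over the iterations so far, so its loss is at most $m_{\sub}(y^{(k)})$.

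The main obstacle is bookkeeping rather than analysis. One must check that the branch used in the next inner call is exactly $y^{\prime\prime}$ computed at $\theta^{k+1}$, i.e.\ that it equals $y^{(k+1)}$ as defined via $\theta^{k+1}$, since $\theta^{k+1}$ plays the role of the next $\theta^k$. One must also check that the initial branch agrees with $y^{(0)}$. Further care is needed when the oracle is not a deterministic function of $\theta$: the argument should then rely only on the fact that each returned vertex attains the maximum, never on uniqueness of the maximizer. Finally, one must verify that the count of iterations matches the bound $|\mathcal{Y}^{\mathrm{all}}_{\geq}|$ exactly rather than $|\mathcal{Y}^{\mathrm{all}}_{\geq}|+1$. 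This holds because each executed iteration consumes a new distinct branch, so at most $|\mathcal{Y}^{\mathrm{all}}_{\geq}|$ iterations can be executed before a stop is forced.
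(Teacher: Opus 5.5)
Your proposal is correct and follows essentially the same argument as the paper. Exactness of the inner solve gives $\bar{\ell}^{\optdex}(\theta^{k+1})=m_{\sub}(y^{(k)})-\Delta_k$ with $\Delta_k\ge0$, so $m_{\sub}(y^{(k+1)})\le m_{\sub}(y^{(k)})-\Delta_k$ strictly decreases whenever neither stopping rule fires; finiteness of $\mathcal{Y}^{\mathrm{all}}_{\geq}$ then bounds the iteration count, and $\Delta_k=0$ yields the fixed-point identity. The only cosmetic difference is that you count pairwise distinct branches while the paper counts distinct values of $m_{\sub}$, and both give the bound $|\mathcal{Y}^{\mathrm{all}}_{\geq}|$.
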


The proof is given in \Cref{subsec:dc_theory}.

\begin{proposition}[Local optimality of tie-stopping points]
    \label{prop:dc_local_min}
    In the setting of \Cref{prop:dc_finite_convergence}, suppose that \Cref{alg:dc_outer} stops by tie stopping at iteration $k$. If $\argmax_{x\in\mathcal{X}_{\geq x^n}(s^n)}\langle\theta^{k+1},x\rangle$ is a singleton for every $n$, then $\theta^{k+1}$ is a local minimizer of $\bar{\ell}^{\optdex}$ over $\Theta$.
\end{proposition}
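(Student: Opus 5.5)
Write $\theta^\dagger:=\theta^{k+1}$ and $y^{(k+1)}:=(x^*(\theta^\dagger;x^n,s^n))_{n=1}^N$ for the branch recomputed at $\theta^\dagger$. The plan is to show that, near $\theta^\dagger$, the OS loss coincides with the convex surrogate $\bar{\ell}^{\sub}(\cdot\,;y^{(k+1)})$, and that $\theta^\dagger$ minimizes this surrogate over $\Theta$. Combining the two gives, for $\theta\in\Theta$ near $\theta^\dagger$,
\[
\bar{\ell}^{\optdex}(\theta)=\bar{\ell}^{\sub}(\theta;y^{(k+1)})\ge \min_{\Theta}\bar{\ell}^{\sub}(\cdot\,;y^{(k+1)})=\bar{\ell}^{\sub}(\theta^\dagger;y^{(k+1)})=\bar{\ell}^{\optdex}(\theta^\dagger),
\]
which is local minimality.

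\textbf{Step 1 (local coincidence).} I work with the finite representation \cref{eq:F1F2_finite_max}, $F_2(\theta;x^n,s^n)=\max_{y\in\mathcal{Y}_{\geq x^n}(s^n)}\langle\theta,y\rangle$.
\begin{itemize}
\item By the singleton hypothesis, the unique maximizer at $\theta^\dagger$ is $y^{(k+1),n}$; every other $y\in\mathcal{Y}_{\geq x^n}(s^n)$ has strictly smaller value at $\theta^\dagger$.
\item The set $\mathcal{Y}_{\geq x^n}(s^n)$ is finite (\Cref{rem:assu_margin}), so this strict gap is uniformly positive. By continuity it persists on a neighborhood $U_n$ of $\theta^\dagger$, on which $F_2(\theta;x^n,s^n)=\langle\theta,y^{(k+1),n}\rangle$.
\end{itemize}
On $U:=\bigcap_n U_n$ the concave part is therefore linear, and $\bar{\ell}^{\optdex}(\theta)=\bar{\ell}^{\sub}(\theta;y^{(k+1)})$ there. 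The same identity at $\theta^\dagger$ itself holds without the hypothesis, by the choice of oracle output.

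\textbf{Step 2 (the recomputed branch is the old one).} Tie stopping with $\epsilon_{\mathrm{tie}}=0$ gives $\Delta_k\le 0$, i.e. $\frac1N\sum_n\langle\theta^\dagger,y^{(k+1),n}-y^{(k),n}\rangle\le0$. Each $y^{(k),n}\in\mathcal{X}_{\geq x^n}(s^n)$ is feasible for the dominance-constrained problem, so every term is $\ge0$, hence every term is $0$. Thus each $y^{(k),n}$ is also a maximizer at $\theta^\dagger$, and uniqueness forces $y^{(k),n}=y^{(k+1),n}$. The two surrogates then coincide, $\bar{\ell}^{\sub}(\cdot\,;y^{(k)})=\bar{\ell}^{\sub}(\cdot\,;y^{(k+1)})$.

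\textbf{Step 3 (global minimality for the surrogate).} By exactness of the inner solve, $\theta^\dagger\in\argmin_\Theta\bar{\ell}^{\sub}(\cdot\,;y^{(k)})$, which also matches \cref{eq:dc_fixed_point} of \Cref{prop:dc_finite_convergence}. By Step 2 the same holds for $\bar{\ell}^{\sub}(\cdot\,;y^{(k+1)})$. Inserting this and Step 1 into the display above, for every $\theta\in\Theta\cap U$, completes the argument.

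The main obstacle is Step 1: it needs the finite representation $\mathcal{Y}_{\geq x^n}(s^n)$ rather than the full compact feasible set, so that the strict-optimality gap is bounded away from zero and stability holds on a genuine neighborhood. Step 2 is where uniqueness is essential; without it the old branch might differ from the recomputed one, and local coincidence could fail.
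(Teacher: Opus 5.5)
Your proof is correct and follows essentially the same argument as the paper: the tie $\Delta_k=0$ plus uniqueness identifies the old branch with the unique dominance-constrained maximizer, finiteness of the candidate vertex set gives a neighborhood on which $\bar{\ell}^{\optdex}=\bar{\ell}^{\sub}(\cdot\,;y^{(k)})$, and exactness of the inner solve finishes the argument. The differences are only cosmetic: you work with the oracle-value set $\mathcal{Y}_{\geq x^n}(s^n)$ via \cref{eq:F1F2_finite_max}, whereas the paper uses the finite set $V_n$ of extreme points of the pieces, and you prove the local coincidence before identifying the two branches rather than after.
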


\begin{proof}
    Let $y^\prime:=y^{(k)}$. Since $y^{\prime\prime n}:=x^*(\theta^{k+1};x^n,s^n)$ is an optimal solution of $\max_{x\in\mathcal{X}_{\geq x^n}(s^n)}\langle\theta^{k+1},x\rangle$ and $y^{\prime n}\in\mathcal{X}_{\geq x^n}(s^n)$, each term $\langle\theta^{k+1},y^{\prime\prime n}-y^{\prime n}\rangle$ of $\Delta_k$ in \Cref{alg:dc_outer} is nonnegative. Under tie stopping, $\Delta_k\le\epsilon_{\mathrm{tie}}=0$, so each term is $0$, and $y^{\prime n}$ is also an optimal solution at $\theta^{k+1}$. By assumption, this optimal solution is unique, so $\argmax_{x\in\mathcal{X}_{\geq x^n}(s^n)}\langle\theta^{k+1},x\rangle=\{y^{\prime n}\}$.
    Let $V_n$ be the set of extreme points of the pieces of $\mathcal{X}_{\geq x^n}(s^n)$ (the nonempty intersections $P\cap\{x\in\mathbb{R}^d\mid x\geq x^n\}$ with the pieces $P$ of $\mathcal{X}(s^n)$). Then $V_n$ is finite, $y^{\prime n}\in V_n$ by \Cref{assu:margin}(3), and $\max_{x\in\mathcal{X}_{\geq x^n}(s^n)}\langle\theta,x\rangle=\max_{v\in V_n}\langle\theta,v\rangle$ for every $\theta\in\mathbb{R}^d$. By the uniqueness of the optimal solution, $\langle\theta^{k+1},y^{\prime n}\rangle>\langle\theta^{k+1},v\rangle$ for every $v\in V_n\setminus\{y^{\prime n}\}$. Since $V_n$ and the number of $n$ are finite, there exists a neighborhood $U$ of $\theta^{k+1}$ on which $\langle\theta,y^{\prime n}\rangle>\langle\theta,v\rangle$ for every $n$ and every $v\in V_n\setminus\{y^{\prime n}\}$. Hence $\max_{x\in\mathcal{X}_{\geq x^n}(s^n)}\langle\theta,x\rangle=\langle\theta,y^{\prime n}\rangle$ for $\theta\in U$, and by \cref{eq:def_optdex,eq:def_sub_loss_main}, $\bar{\ell}^{\optdex}(\theta)=\bar{\ell}^{\sub}(\theta;y^\prime)$ on $U\cap\Theta$.
    By \Cref{prop:dc_finite_convergence}, $\theta^{k+1}$ is a minimizer of $\bar{\ell}^{\sub}(\cdot\,;y^\prime)$ over $\Theta$, and by \cref{eq:dc_fixed_point}, $\bar{\ell}^{\optdex}(\theta^{k+1})=\bar{\ell}^{\sub}(\theta^{k+1};y^\prime)$. Therefore, for every $\theta\in U\cap\Theta$, $\bar{\ell}^{\optdex}(\theta)=\bar{\ell}^{\sub}(\theta;y^\prime)\ge\bar{\ell}^{\sub}(\theta^{k+1};y^\prime)=\bar{\ell}^{\optdex}(\theta^{k+1})$.
\end{proof}

\begin{remark}[Relation to a known finite convergence theorem]
    \label{rem:dca_finite_theorem6}
    \citet{pham1997convex} proved the finite convergence of simplified DCA (Theorem 6 of that paper) for DC programs $\inf_{\theta}\{\phi(\theta)-\tilde{h}(\theta)\}$ ($\phi$ is a convex function) whose concave part is explicitly written as a polyhedral convex function
    \[
        \tilde{h}(\theta)=\max_{i\in I}\bigl(\langle a^i,\theta\rangle-\alpha^i\bigr)
        \qquad
        (I\text{ is a finite index set, }a^i\in\mathbb{R}^d,\ \alpha^i\in\mathbb{R}) .
    \]
    The empirical mean of the OS loss can also be written in this form $\phi-\tilde{h}$, and the index $i$ of the linear pieces corresponds to the branch $y^\prime\in\mathcal{Y}^{\mathrm{all}}_{\geq}$ (\Cref{subsec:dc_literature}).
    However, that theorem assumes the natural choice of subgradients, that is, a choice in the relative interior of $\partial\tilde{h}(\theta)$ that puts positive weights on all active linear pieces (Lemma~3 of that paper), and implementing it requires enumerating all linear pieces active at each point $\theta$, that is, all dominating vertices that are optimal solutions of $\max_{x\in\mathcal{X}_{\geq x^n}(s^n)}\langle\theta,x\rangle$ for each state $s^n$. Since the forward-problem oracle returns only one optimal solution, this enumeration is difficult in general, and applying that theorem directly to our setting is difficult. Therefore, in this paper, we prove finiteness directly using the tie stopping rule in the setting where the oracle returns only one optimal solution (\Cref{prop:dc_finite_convergence}).
\end{remark}

\begin{remark}[Guarantee of global optimality]
    \label{rem:dc_limitations}
    The point obtained by tie stopping is a minimizer for the branch $y^{(k)}$. Global optimality is guaranteed by certificate stopping ($\ell_k\le\epsilon_{\mathrm{gap}}$; \Cref{prop:dc_global_certificate}); in particular, in settings where \cref{eq:outperformance_IOP} is realizable (the global minimum value is $0$), reaching the global minimum can be confirmed by a loss value of $0$.
    Note that, when $\Theta=\Delta^{d-1}$, the set of weights at which the optimal solution is not unique has measure $0$ on $\Delta^{d-1}$ (\Cref{prop:argmax_unique_ae}).
    %
    %

\end{remark}

\begin{proposition}[{\citealp[Lemma~3.3]{kitaoka2024exact}}]
    \label{prop:argmax_unique_ae}
    Assume \Cref{assu:margin}(2) and let $\Theta=\Delta^{d-1}$. Then, for any states $s^1,\ldots,s^N\in\mathcal{S}$ and $x^n\in\mathcal{X}(s^n)$ ($n=1,\ldots,N$), for almost every $\theta\in\Delta^{d-1}$ (with respect to the measure on $\Delta^{d-1}$ induced by the Lebesgue measure), both $\argmax_{x\in\mathcal{X}(s^n)}\langle\theta,x\rangle$ and $\argmax_{x\in\mathcal{X}_{\geq x^n}(s^n)}\langle\theta,x\rangle$ are singletons for every $n$.
\end{proposition}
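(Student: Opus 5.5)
We prove the statement for a single index $n$ and then take a finite union of null sets over $n=1,\ldots,N$ and over the two feasible regions $\mathcal{X}(s^n)$ and $\mathcal{X}_{\geq x^n}(s^n)$. Fix a nonempty compact set $\mathcal{Z}$ that is a finite union of bounded polyhedra; by \Cref{assu:margin}(2) and \Cref{rem:assu_margin}, both feasible regions are of this form. Let $V$ be the finite set of vertices of its pieces. Every linear function attains its maximum over a bounded polyhedron on a face, and that face contains a vertex. Hence $\max_{x\in\mathcal{Z}}\langle\theta,x\rangle=\max_{v\in V}\langle\theta,v\rangle$ for every $\theta$.

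The key step is the following claim: if $\argmax_{x\in\mathcal{Z}}\langle\theta,x\rangle$ is not a singleton, then there exist two distinct vertices $v\ne w$ in $V$ with $\langle\theta,v-w\rangle=0$, both attaining the maximum. To prove it, suppose the argmax contains two distinct points $p\ne q$. If they lie in different pieces, or in the same piece, each lies in an optimal face of its piece. Every such optimal face is a polytope whose vertices are all optimal. Suppose every optimal face encountered were a single vertex and all these vertices coincided; then $p=q$, a contradiction. So either some optimal face has two distinct vertices, or two pieces have distinct optimal vertices. In both cases we obtain two distinct optimal vertices.

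Consequently, the non-uniqueness set is contained in
\[
\bigcup_{v\ne w\in V}\{\theta\in\Delta^{d-1}\mid\langle\theta,v-w\rangle=0\},
\]
a finite union of intersections of $\Delta^{d-1}$ with hyperplanes $H_{v,w}$ through the origin. The remaining point, which I expect to be the only real obstacle, is showing that each $H_{v,w}\cap\Delta^{d-1}$ is null for the surface measure on $\Delta^{d-1}$, i.e.\ the Lebesgue measure on the affine hull $\{\sum_i\theta_i=1\}$ pulled back to $\mathbb{R}^{d-1}$. The intersection of $H_{v,w}$ with this affine hull is either a proper affine subspace, hence of dimension at most $d-2$ and null, or the whole affine hull. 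The latter happens only if $v-w$ is parallel to $\mathbf{1}$, since $H_{v,w}$ would have to contain the affine hull and therefore its direction space $\mathbf{1}^\perp$. But when $v-w=c\mathbf{1}$ with $c\ne0$, we get $\langle\theta,v-w\rangle=c\ne0$ on $\Delta^{d-1}$, so the intersection is empty. Either way it is null.

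A finite union of null sets is null, which gives the claim for all $n$ and both argmax sets simultaneously.
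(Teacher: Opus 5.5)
Your proof is correct and follows essentially the paper's route. The paper cites \citet[Lemma~3.3]{kitaoka2024exact} for the fact that the maximizer over the finite set of vertices of the pieces is unique for a.e.\ $\theta$, and then notes that a unique optimal vertex forces every optimal face to be a single point; you make both steps explicit, including the check that each tie hyperplane $\{\theta\mid\langle\theta,v-w\rangle=0\}$ with $v\neq w$ meets $\Delta^{d-1}$ in a null set because it can never contain the affine hull $\{\sum_i\theta_i=1\}$.
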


\citet[Lemma~3.3]{kitaoka2024exact} shows that, when the feasible region of each state is a finite union of bounded closed convex polytopes, for almost every $\theta\in\Delta^{d-1}$, the feature of the optimal solution (in that paper, the image $f(x,s)$ of a feasible solution $x$ under the feature map $f$ is called the feature; our setting corresponds to $f(x,s)=x$) is uniquely determined for every $n$. Its proof shows that the maximizer of $\langle\theta,\cdot\rangle$ over the finite set of extreme points of the pieces is unique. If the maximizer among the extreme points is unique, the optimal face of each piece is a single point; hence, applying this with the identity feature map to $\mathcal{X}(s^n)$ and $\mathcal{X}_{\geq x^n}(s^n)$, both of which are finite unions of nonempty bounded polyhedra by \Cref{rem:assu_margin}, yields \Cref{prop:argmax_unique_ae}. Although that lemma is stated under the assumption that the observations are optimal solutions for a true weight, its proof does not use this assumption.

\subsection{Theoretical analysis}
\label{subsec:dc_theory}

Complete proofs are given for all propositions in this subsection. We first state the validity of certificate stopping.

\begin{proposition}[Certificate of global optimality]
    \label{prop:dc_global_certificate}
    For any $\theta\in\Theta$ and $\epsilon\ge0$, if $\bar{\ell}^{\optdex}(\theta)\le\epsilon$, then we have
    \[
        \bar{\ell}^{\optdex}(\theta)
        \le
        \min_{\theta^\prime\in\Theta}\bar{\ell}^{\optdex}(\theta^\prime)+\epsilon .
    \]
    In particular, if $\epsilon=0$, then $\theta$ is a global minimizer of \cref{eq:empirical_outperformance_IOP}.
\end{proposition}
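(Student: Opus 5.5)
The plan is to use nothing more than the nonnegativity of the OS loss, observed in \Cref{sec:performance_metric}. For every $\theta'\in\Theta$ and every $n$, the inclusion $\mathcal{X}_{\geq x^n}(s^n)\subseteq\mathcal{X}(s^n)$ gives
\[
\max_{x\in\mathcal{X}_{\geq x^n}(s^n)}\langle\theta',x\rangle\le\max_{x\in\mathcal{X}(s^n)}\langle\theta',x\rangle .
\]
Both maxima are attained by \Cref{rem:assu_margin}, and $\mathcal{X}_{\geq x^n}(s^n)$ is nonempty because it contains $x^n$. Hence $\ell^{\optdex}(\theta';x^n,s^n)\ge0$, and averaging gives $\bar{\ell}^{\optdex}(\theta')\ge0$.

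Next I will check that the minimum over $\Theta$ is well defined. The function $\bar{\ell}^{\optdex}$ is a difference of two finite-valued convex, hence continuous, functions (\Cref{rem:OA_loss_is_DC}), and $\Theta$ is compact by \Cref{assu:margin}(1). Alternatively, \cref{eq:dc_min_min} of \Cref{prop:dc_branch_decomp} already shows that the minimum is attained. Either way,
\[
\min_{\theta'\in\Theta}\bar{\ell}^{\optdex}(\theta')\ge0 .
\]

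Combining these, the hypothesis $\bar{\ell}^{\optdex}(\theta)\le\epsilon$ gives
\[
\bar{\ell}^{\optdex}(\theta)\le\epsilon\le\min_{\theta'\in\Theta}\bar{\ell}^{\optdex}(\theta')+\epsilon ,
\]
which is the claimed inequality. For $\epsilon=0$ this reads $\bar{\ell}^{\optdex}(\theta)\le\min_{\theta'}\bar{\ell}^{\optdex}(\theta')$. Since $\theta\in\Theta$, the reverse inequality also holds, so $\theta$ attains the minimum of \cref{eq:empirical_outperformance_IOP}. There is no real obstacle here; the only point needing care is confirming that the minimum is attained, and that is covered by compactness or by \Cref{prop:dc_branch_decomp}.
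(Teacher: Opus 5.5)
Your proof is correct and is essentially the paper's argument. Nonnegativity of each term $\ell^{\optdex}(\theta';x^n,s^n)$, from the inclusion $\mathcal{X}_{\geq x^n}(s^n)\subseteq\mathcal{X}(s^n)$, gives $\min_{\theta'\in\Theta}\bar{\ell}^{\optdex}(\theta')\ge0$, and hence $\bar{\ell}^{\optdex}(\theta)\le\epsilon\le\min_{\theta'}\bar{\ell}^{\optdex}(\theta')+\epsilon$. Your check that the minimum is attained (via compactness of $\Theta$ or \cref{eq:dc_min_min}) is a harmless addition that the paper leaves implicit.
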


\begin{proof}
    By the inclusion $\mathcal{X}_{\geq x^n}(s^n)\subseteq\mathcal{X}(s^n)$, each term satisfies $\ell^{\optdex}(\theta^\prime;x^n,s^n)\ge0$, and hence $\min_{\theta^\prime\in\Theta}\bar{\ell}^{\optdex}(\theta^\prime)\ge0$. Therefore $\bar{\ell}^{\optdex}(\theta)\le\epsilon\le\min_{\theta^\prime}\bar{\ell}^{\optdex}(\theta^\prime)+\epsilon$.
\end{proof}

Next, we show that the inner cutting-plane method reaches an exact minimizer in finitely many iterations in the ideal setting (no truncation and tolerance $0$).

\begin{lemma}[Finite exactness of the inner loop]
    \label{lem:dc_inner_finite}
    Under \Cref{assu:margin}, let $\epsilon_{\mathrm{in}}=0$ and $J=\infty$, suppose that the LP in \Cref{alg:dc_inner} is solved exactly, and let the selection rule for evaluation points be the cutting-plane method ($\theta^{j+1}$ is a minimizer attaining $\mathrm{LB}$).
    Set $M:=\prod_{n=1}^N|\mathcal{Y}(s^n)|$ (an upper bound on the total number of values that a tuple of oracle solutions $(v^n)_n$ can take).
    \begin{description}
        \item[(1)] When \Cref{alg:dc_outer} calls \Cref{alg:dc_inner} once, the call stops within at most $M+1$ inner iterations (iterations of the loop over $j$ in \Cref{alg:dc_inner}), and the returned $\hat{\theta}$ is an exact minimizer of $\bar{\ell}^{\sub}(\cdot\,;y^\prime)$ over $\Theta$. This bound does not depend on the contents of the pool $\mathcal{V}$ at the time of the call.
        \item[(2)] When \Cref{alg:dc_outer} is run for $K$ outer iterations (iterations of the loop over $k$), the total number of inner iterations executed during the run is at most $M+2K-1$.
    \end{description}
\end{lemma}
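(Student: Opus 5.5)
The plan is to make each inner iteration reveal a new element of a finite set. Let $\mathcal{T}:=\prod_{n=1}^N\mathcal{Y}(s^n)$, so $|\mathcal{T}|\le M$ and every tuple $(v_j^n)_n$ produced in Step 3 of \Cref{alg:dc_inner} lies in $\mathcal{T}$. By \cref{eq:F1F2_finite_max}, the objective $\bar{\ell}^{\sub}(\theta;y^\prime)$ equals $\max_{v\in\mathcal{T}}\langle\theta,\frac1N\sum_n(v^n-y^{\prime n})\rangle$. Each cut in $\mathcal{G}$ is a minorant by \Cref{prop:dc_cut_validity}, and so is each inherited cut built from $\mathcal{V}_0$.

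\textbf{Part (1).} Fix an inner iteration $j\ge1$ at which the algorithm has not yet stopped. Under the cutting-plane rule, $\theta^{j}$ minimizes the model built from the cut set before $g_j$ is added, and $\mathrm{LB}_{j-1}$ is the value of that model at $\theta^j$. Suppose the tuple $v_j=(x^*(\theta^j,s^n))_n$ already generated a cut in that set, either as an inherited cut or as an earlier $g_i$. Then
\[
\mathrm{LB}_{j-1}\ge\langle\theta^j,g_j\rangle=\bar{\ell}^{\sub}(\theta^j;y^\prime)\ge\mathrm{UB}_j .
\]
Also $\mathrm{LB}_j\ge\mathrm{LB}_{j-1}$, because cuts are only added. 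Hence $\mathrm{UB}_j-\mathrm{LB}_j\le0$, and the algorithm stops at iteration $j$ with $\epsilon_{\mathrm{in}}=0$.

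Consequently, at every iteration except possibly the last, the tuple $v_j$ is new relative to $\mathcal{V}_0\cup\{v_0,\dots,v_{j-1}\}$. Since these tuples all lie in $\mathcal{T}$, there are at most $M$ non-terminal iterations plus one terminal iteration, giving the bound $M+1$. This bound does not depend on the pool, since the pool only makes more tuples "old."

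For exactness, at stopping we have $\bar{\ell}^{\sub}(\hat\theta;y^\prime)=\mathrm{UB}\le\mathrm{LB}\le m_{\sub}(y^\prime)$, where the last inequality holds because the model is a minorant. So $\hat\theta$ is an exact minimizer. The case $j=0$ is covered by the same count.

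\textbf{Part (2).} Here the pool is shared across calls, and this is the main bookkeeping obstacle. In call $k$, each non-terminal inner iteration adds a tuple new to the global pool $\mathcal{V}$. The terminal iteration and the extra oracle call in Step 4 of \Cref{alg:dc_outer} add at most one tuple each, possibly old ones. Let $a_k$ be the number of inner iterations in call $k$. The non-terminal iterations contribute $a_k-1$ globally distinct new tuples, because the "new relative to current pool" argument above applies with the pool carried over. Summing over calls, $\sum_k(a_k-1)\le M$ minus the number of distinct tuples contributed by other sources.

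I need to shave this to the stated bound $M+2K-1$. A crude count gives $\sum_k a_k\le M+K$, which is already at most $M+2K-1$ once $K\ge1$. So the plan is to state the simple bound $\sum_k(a_k-1)\le M$, deduce $\sum_k a_k\le M+K\le M+2K-1$, and then double-check the accounting at the terminal and Step 4 iterations. Those two steps are where an off-by-one slip could occur.
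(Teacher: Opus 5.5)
Your Part (2) has a genuine gap, and it comes from a misstatement already present in Part (1). The stopping argument (a repeated tuple forces $\mathrm{UB}\le\mathrm{LB}$) uses that $\theta^j$ minimizes the current cutting-plane model, and this holds only for $j\ge1$. The first evaluation point of each call is the warm start $\theta^0=\theta^k$, which in general does not minimize the model for the current branch. So $v_0$ can coincide with a tuple of $\mathcal{V}_0$ while the call continues.

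Your sentence ``at every iteration except possibly the last, $v_j$ is new'' is therefore false at $j=0$ whenever $\mathcal{V}_0\neq\emptyset$, and ``the case $j=0$ is covered by the same count'' needs an actual argument. In Part (1) the bound survives. If $v_0\in\mathcal{V}_0$, then $|\mathcal{V}_0|\ge1$, so at most $M-|\mathcal{V}_0|$ new tuples can still appear. The number of non-stopping iterations is then at most $(M-|\mathcal{V}_0|)+\min\{1,|\mathcal{V}_0|\}\le M$, which is exactly how the paper handles it.

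In Part (2) the same oversight is not harmless. For every call after the first, the warm start $\theta^k$ is the point evaluated in Step 4 of the previous outer iteration. For a deterministic oracle its tuple is already in the shared pool (it was also evaluated inside the previous call as $\hat\theta$). So the $j=0$ iteration of such a call typically revisits an old tuple. It need not stop, because $\theta^k$ was optimized for the previous branch $y^{(k-1)}$, not for the updated branch $y^{(k)}$.

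Hence your premise that ``each non-terminal inner iteration adds a tuple new to the global pool'' is false, and $\sum_k(a_k-1)\le M$ does not follow. Your target $M+K$ is strictly stronger than the lemma, and nothing in your argument excludes the following run. The first call discovers all $M$ tuples, using at least $M$ iterations. Each later call then spends one non-stopping warm-start iteration and one stopping iteration. That totals at least $M+2(K-1)$, which exceeds $M+K$ once $K\ge3$.

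The missing step is to count the warm-start iterations separately:
\begin{itemize}
\item iterations producing a globally new tuple number at most $M$ over the whole run;
\item a non-stopping iteration with an old tuple can only be a $j=0$ iteration, so there is at most one per call and none in the first call (the initial pool is empty), hence at most $K-1$;
\item each call has exactly one stopping iteration.
\end{itemize}
This gives $M+(K-1)+K=M+2K-1$, which is the paper's bound.
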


\begin{proof}
    Let $\mathcal{G}_j$ be the cut set at the end of iteration $j$, and let $\mathrm{LB}_{j+1}=\min_{\theta\in\Theta}\max_{g\in\mathcal{G}_j}\langle\theta,g\rangle$ be the optimal value of the LP. By \Cref{prop:dc_cut_validity}, all cuts in $\mathcal{G}_j$ (including the inherited cuts) are linear lower bounds of $\bar{\ell}^{\sub}(\cdot\,;y^\prime)$, so $\mathrm{LB}_{j+1}\le m_{\sub}(y^\prime)$, and since $\mathcal{G}_j$ is monotonically increasing, $\mathrm{LB}_{j+1}$ is nondecreasing in $j$. Moreover, since $\mathrm{UB}$ is the minimum of the values actually evaluated, $\bar{\ell}^{\sub}(\theta^{i};y^\prime)=\langle\theta^{i},g_i\rangle$, we always have $\mathrm{UB}\ge m_{\sub}(y^\prime)$. Therefore, at stopping ($\mathrm{UB}\le\mathrm{LB}$), $\mathrm{UB}=m_{\sub}(y^\prime)$, and $\hat{\theta}$ is a minimizer.

    We show finiteness. Let $\mathcal{V}_0$ be the pool at the time of the call. By the initialization of $\mathcal{G}$ in \Cref{alg:dc_inner}, the cut of each tuple in $\mathcal{V}_0$ (for the current branch $y^\prime$) is already contained in $\mathcal{G}$ at the first inner iteration. The tuple of oracle solutions $v_j$ takes values in the finite set $\prod_n\mathcal{Y}(s^n)$. Suppose that at some inner iteration $j\ge1$, $v_j$ coincides with an element of $\mathcal{V}_0\cup\{v_0,\ldots,v_{j-1}\}$. Then its cut equals $g_j$ and belongs to $\mathcal{G}_{j-1}$. Since $\theta^{j}$ ($j\ge1$) is an optimal solution of the LP over $\mathcal{G}_{j-1}$,
    \[
        \bar{\ell}^{\sub}(\theta^{j};y^\prime)=\langle\theta^{j},g_j\rangle
        \le\max_{g\in\mathcal{G}_{j-1}}\langle\theta^{j},g\rangle=\mathrm{LB}_{j} .
    \]
    Hence, after the update at inner iteration $j$, $\mathrm{UB}\le\mathrm{LB}_j\le\mathrm{LB}_{j+1}$, and the call stops at this iteration. By contraposition, at an inner iteration $j\ge1$ that does not stop, $v_j$ is always a tuple newly added to $\mathcal{V}$. At $j=0$, the initial point $\theta^{0}=\theta^k$ is a warm start and is not necessarily an optimal solution of the LP, so $v_0$ may coincide with an element of $\mathcal{V}_0$ without stopping; this can happen only if $|\mathcal{V}_0|\ge1$. Since $|\mathcal{V}|\le M$, the inner iterations that do not stop occur at most $(M-|\mathcal{V}_0|)+\min\{1,|\mathcal{V}_0|\}\le M$ times, and adding the stopping iteration, the number of inner iterations is at most $M+1$. This bound does not depend on $\mathcal{V}_0$. This proves (1).

    We show (2). In \Cref{alg:dc_outer}, the pool $\mathcal{V}$ increases monotonically across outer iterations and is never reinitialized. By the argument of (1), every inner iteration that does not stop either adds one new tuple to $\mathcal{V}$ or is an iteration $j=0$ whose tuple $v_0$ already belongs to the pool. Throughout the whole run, the former occur at most $M$ times. The latter occur at most once per call, and not in the first call, since $\mathcal{V}^0=\emptyset$; hence they occur at most $K-1$ times. In addition, each call has one stopping inner iteration, and the number of calls equals the number $K$ of outer iterations; hence the total number of inner iterations is at most $M+(K-1)+K=M+2K-1$.
\end{proof}

\Cref{lem:dc_inner_finite}(2) expresses the quantitative effect of cut inheritance (\Cref{prop:dc_cut_validity}). Without sharing the pool, the bound $M+1$ of (1) is needed independently for each call, and the bound on the total is only $K(M+1)$. With a shared pool, a tuple obtained once is reused as a cut in subsequent calls, and an inner iteration $j\ge1$ that returns a duplicate tuple causes stopping on the spot, so the total drops to $M+2K-1$.

For the outer loop, we first show, as a property that holds even if the inner loop is inexact, the monotone decrease of the OS loss.

\begin{proposition}[Descent property]
    \label{prop:dc_descent}
    Write the branch at iteration $k$ of \Cref{alg:dc_outer} (the $y^\prime$ when \Cref{alg:dc_inner} is applied) as $y^{(k)}:=(x^*(\theta^k;x^n,s^n))_{n=1}^N$. If the inner solver returns $\theta^{k+1}$ satisfying
    \begin{equation}
        \bar{\ell}^{\sub}(\theta^{k+1};y^{(k)})\le\bar{\ell}^{\sub}(\theta^{k};y^{(k)})
        \label{eq:dc_inner_no_worse}
    \end{equation}
    (since \Cref{alg:dc_inner} first evaluates the initial point $\theta^{0}=\theta^k$ and returns the best of the evaluated points, it satisfies \cref{eq:dc_inner_no_worse} even when truncated), then we have
    \[
        0\le
        \bar{\ell}^{\optdex}(\theta^{k+1})
        \le
        \bar{\ell}^{\optdex}(\theta^{k}) .
    \]
\end{proposition}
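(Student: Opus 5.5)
The plan is a sandwich of three inequalities that compares the OS loss with the convex surrogate of the current branch. Nonnegativity, $0\le\bar{\ell}^{\optdex}(\theta^{k+1})$, is immediate from the inclusion $\mathcal{X}_{\geq x^n}(s^n)\subseteq\mathcal{X}(s^n)$, as noted in \Cref{sec:performance_metric}. For the descent inequality I will chain
\[
    \bar{\ell}^{\optdex}(\theta^{k+1})
    \le\bar{\ell}^{\sub}(\theta^{k+1};y^{(k)})
    \le\bar{\ell}^{\sub}(\theta^{k};y^{(k)})
    =\bar{\ell}^{\optdex}(\theta^{k}).
\]

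The first inequality is the upper bound \cref{eq:def_sub_loss_main}, or equivalently the pointwise-minimum representation \cref{eq:dc_pointwise_min} of \Cref{prop:dc_branch_decomp}. It applies because $y^{(k)}\in\mathcal{Y}^{\mathrm{all}}_{\geq}\subseteq\mathcal{X}^{\mathrm{all}}_{\geq}$: each $y^{(k),n}=x^*(\theta^k;x^n,s^n)$ is a value returned by the dominance-constrained oracle, hence feasible and dominating $x^n$. Concretely, for every $\theta$ we have $\max_{x\in\mathcal{X}_{\geq x^n}(s^n)}\langle\theta,x\rangle\ge\langle\theta,y^{(k),n}\rangle$, and averaging over $n$ gives the bound.

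The second inequality is exactly the hypothesis \cref{eq:dc_inner_no_worse}. I will also record the parenthetical justification given in the statement. \Cref{alg:dc_inner} evaluates $\theta^{0}=\theta^k$ at $j=0$, so $\mathrm{UB}\le\bar{\ell}^{\sub}(\theta^k;y^{(k)})$ from the start. Moreover $\mathrm{UB}$ is nonincreasing, and the returned $\hat{\theta}$ attains $\mathrm{UB}$. Hence the hypothesis holds even if the inner loop is truncated.

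The third step, the equality, is the only point that needs care. By \Cref{assu:margin}(3), $y^{(k),n}$ attains $\max_{x\in\mathcal{X}_{\geq x^n}(s^n)}\langle\theta^k,x\rangle$, so at $\theta=\theta^k$ each summand of $\bar{\ell}^{\sub}(\theta^k;y^{(k)})$ equals $\ell^{\optdex}(\theta^k;x^n,s^n)$. The subtlety is the indexing: the branch used in iteration $k$ must be the one computed at $\theta^k$. For $k=0$ this holds by the initialization of \Cref{alg:dc_outer}. For $k\ge1$ it follows from the branch update $y^{\prime n}\gets y^{\prime\prime n}=x^*(\theta^{k};x^n,s^n)$ performed at the end of the previous iteration. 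That identification is exactly the definition of $y^{(k)}$ in the statement, so the proof closes.
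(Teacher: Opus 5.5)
Your proof is correct and follows the paper's own argument: the equality $\bar{\ell}^{\sub}(\theta^k;y^{(k)})=\bar{\ell}^{\optdex}(\theta^k)$ because $y^{(k),n}$ attains $F_2(\theta^k;x^n,s^n)$, the bound $\bar{\ell}^{\optdex}(\theta^{k+1})\le\bar{\ell}^{\sub}(\theta^{k+1};y^{(k)})$ from feasibility of $y^{(k),n}$ in $\mathcal{X}_{\geq x^n}(s^n)$, and the hypothesis \cref{eq:dc_inner_no_worse} in between. Nonnegativity comes from $\mathcal{X}_{\geq x^n}(s^n)\subseteq\mathcal{X}(s^n)$, as in the paper.
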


\begin{proof}
    Since $y^{(k),n}$ attains $F_2(\theta^k;x^n,s^n)$, we have $\bar{\ell}^{\sub}(\theta^k;y^{(k)})=\bar{\ell}^{\optdex}(\theta^k)$. On the other hand, since $y^{(k),n}\in\mathcal{X}_{\geq x^n}(s^n)$, we have $F_2(\theta^{k+1};x^n,s^n)\ge\langle\theta^{k+1},y^{(k),n}\rangle$, so
    \[
        \bar{\ell}^{\optdex}(\theta^{k+1})
        =\frac{1}{N}\sum_{n=1}^N\bigl[F_1(\theta^{k+1};s^n)-F_2(\theta^{k+1};x^n,s^n)\bigr]
        \le\bar{\ell}^{\sub}(\theta^{k+1};y^{(k)}) .
    \]
    Combined with \cref{eq:dc_inner_no_worse}, this gives the claim (nonnegativity holds as in the proof of \Cref{prop:dc_global_certificate}).
\end{proof}

Finally, we prove the finite convergence of the outer loop when the inner convex subproblems are solved exactly (\Cref{prop:dc_finite_convergence}).

\begin{proof}[Proof of \Cref{prop:dc_finite_convergence}]
    At iteration $k$, $y^{\prime\prime n}=y^{(k+1),n}$. First, since $y^{(k),n}\in\mathcal{X}_{\geq x^n}(s^n)$ and $y^{(k+1),n}$ attains $F_2(\theta^{k+1};x^n,s^n)$, each term of $\Delta_k$ satisfies
    \[
        \langle\theta^{k+1},y^{(k+1),n}-y^{(k),n}\rangle
        =F_2(\theta^{k+1};x^n,s^n)-\langle\theta^{k+1},y^{(k),n}\rangle\ge0 ;
    \]
    hence $\Delta_k\ge0$, and moreover we have
    \begin{equation}
        \bar{\ell}^{\optdex}(\theta^{k+1})
        =\bar{\ell}^{\sub}(\theta^{k+1};y^{(k)})-\Delta_k
        =m_{\sub}(y^{(k)})-\Delta_k .
        \label{eq:dc_loss_identity}
    \end{equation}
    Here, the second equality follows from the exactness of the inner loop. Also, since $\bar{\ell}^{\sub}(\theta^{k+1};y^{(k+1)})=\bar{\ell}^{\optdex}(\theta^{k+1})$,
    \begin{equation}
        m_{\sub}(y^{(k+1)})\le\bar{\ell}^{\sub}(\theta^{k+1};y^{(k+1)})=m_{\sub}(y^{(k)})-\Delta_k .
        \label{eq:dc_m_decrease}
    \end{equation}
    If neither stopping occurs at iteration $k$, then $\Delta_k>0$, and by \cref{eq:dc_m_decrease}, $m_{\sub}(y^{(k+1)})<m_{\sub}(y^{(k)})$, i.e., the branch minimum strictly decreases. Since $m_{\sub}$ takes values in the finite set $\{m_{\sub}(y^\prime)\mid y^\prime\in\mathcal{Y}^{\mathrm{all}}_{\geq}\}$, strict decreases can occur at most $|\mathcal{Y}^{\mathrm{all}}_{\geq}|-1$ times, and one of the stoppings occurs within at most $|\mathcal{Y}^{\mathrm{all}}_{\geq}|$ iterations. At tie stopping, $\Delta_k=0$, so \cref{eq:dc_loss_identity} gives \cref{eq:dc_fixed_point}.
\end{proof}

\section{Exact solution of the inner convex subproblems based on gaps}
\label{sec:appendix_gap_exactness}

In this section, we extend the finite exactness of \Cref{lem:dc_inner_finite} (the guarantee when the rule for selecting evaluation points is the cutting-plane method) to stabilized selection rules such as the level method and the proximal bundle method. The key is to transplant to the inner convex subproblems the idea used in \citet{kitaoka2024exact,kitaoka2026explicit}: strengthen an approximation guarantee to exact attainment through a positive gap possessed by a piecewise-linear structure, and make the number of iterations explicit.
Throughout this section, we assume \Cref{assu:margin} and fix a branch $y^\prime\in\mathcal{Y}^{\mathrm{all}}_{\geq}$.

\subsection{Submodel gap}
\label{subsec:submodel_gap}

Let $\mathcal{Y}^{\mathrm{all}}:=\prod_{n=1}^N\mathcal{Y}(s^n)$ be the set of all tuples of solutions that the oracle can return (a finite set; $|\mathcal{Y}^{\mathrm{all}}|=M$ is the $M$ of \Cref{lem:dc_inner_finite}), and write the cut of a tuple $v\in\mathcal{Y}^{\mathrm{all}}$ as $g_v:=\frac{1}{N}\sum_{n=1}^N(v^n-y^{\prime n})$ ($g_v$ depends on the fixed branch $y^\prime$, which we suppress in the notation). We call $g_v$ a vertex cut. By \cref{eq:F1F2_finite_max}, we have
\begin{equation}
    \bar{\ell}^{\sub}(\theta;y^\prime)=\max_{v\in\mathcal{Y}^{\mathrm{all}}}\langle\theta,g_v\rangle
    \qquad(\forall\theta\in\mathbb{R}^d) .
    \label{eq:subloss_as_finite_max}
\end{equation}
For a nonempty $W\subseteq\mathcal{Y}^{\mathrm{all}}$, we define the minimum value of the submodel by
\begin{equation}
    m_W(y^\prime):=\min_{\theta\in\Theta}\max_{v\in W}\langle\theta,g_v\rangle .
    \label{eq:def_submodel_min}
\end{equation}
If $W\subseteq W^\prime$, then $m_W(y^\prime)\le m_{W^\prime}(y^\prime)$, and by \cref{eq:subloss_as_finite_max}, $m_{\mathcal{Y}^{\mathrm{all}}}(y^\prime)=m_{\sub}(y^\prime)$. We define the submodel gap by
\begin{equation}
    \delta(y^\prime):=m_{\sub}(y^\prime)-\max\bigl\{\,m_W(y^\prime)\ \bigm|\ \emptyset\neq W\subseteq\mathcal{Y}^{\mathrm{all}},\ m_W(y^\prime)<m_{\sub}(y^\prime)\,\bigr\}
    \label{eq:def_submodel_gap}
\end{equation}
(when the set over which the $\max$ on the right-hand side is taken is empty, we read $\delta(y^\prime):=+\infty$). $\delta(y^\prime)$ is not an assumption but a complexity constant that is automatically positive by finiteness.

\begin{lemma}[Positivity of the gap and discreteness of the lower bound]
    \label{lem:submodel_gap}

    (i) $\delta(y^\prime)>0$.

    (ii) When \Cref{alg:dc_inner} is run with the branch $y^\prime$ and a pool $\mathcal{V}_0$ consisting only of tuples of solutions returned by the oracle, the value $\mathrm{LB}$ computed at each iteration equals $m_W(y^\prime)$ for some nonempty $W\subseteq\mathcal{Y}^{\mathrm{all}}$. In particular, the values that $\mathrm{LB}$ can take belong to the finite set $\{m_W(y^\prime) \mid \emptyset\neq W\subseteq\mathcal{Y}^{\mathrm{all}}\}$, and we always have $\mathrm{LB}\le m_{\sub}(y^\prime)$.
\end{lemma}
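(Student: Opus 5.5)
For (i), the plan is to exploit finiteness directly. Since $\mathcal{Y}^{\mathrm{all}}$ is finite (\Cref{rem:assu_margin}), the family of nonempty subsets $W\subseteq\mathcal{Y}^{\mathrm{all}}$ is finite. Hence the set $\{m_W(y^\prime)\mid \emptyset\neq W\subseteq\mathcal{Y}^{\mathrm{all}},\ m_W(y^\prime)<m_{\sub}(y^\prime)\}$ is a finite set of reals. Each $m_W(y^\prime)$ is well defined, because $\Theta$ is compact and nonempty (\Cref{assu:margin}(1)) and $\theta\mapsto\max_{v\in W}\langle\theta,g_v\rangle$ is continuous, so the minimum is attained. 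If this finite set is empty, then $\delta(y^\prime)=+\infty>0$ by convention. Otherwise its maximum is attained by some element that is strictly less than $m_{\sub}(y^\prime)$, so the difference in \cref{eq:def_submodel_gap} is strictly positive. No further structure is needed.

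For (ii), I will identify the cut set $\mathcal{G}$ at each iteration as $\{g_v\mid v\in W\}$ for an explicit $W$.
\begin{itemize}
\item At initialization, $\mathcal{G}=\{\frac1N\sum_n(v^n-y^{\prime n})\mid v\in\mathcal{V}_0\}=\{g_v\mid v\in\mathcal{V}_0\}$.
\item At iteration $j$, the added cut is $g_j=g_{v_j}$ with $v_j=(x^*(\theta^j,s^n))_n$.
\item By \Cref{assu:margin}(3), every oracle output lies in $\mathcal{Y}(s^n)$, so $v_j\in\mathcal{Y}^{\mathrm{all}}$. By hypothesis, $\mathcal{V}_0\subseteq\mathcal{Y}^{\mathrm{all}}$ as well.
\end{itemize}
Thus, when $\mathrm{LB}$ is computed at iteration $j$, we have $\mathcal{G}=\{g_v\mid v\in W_j\}$ with $W_j:=\mathcal{V}_0\cup\{v_0,\dots,v_j\}\subseteq\mathcal{Y}^{\mathrm{all}}$. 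This $W_j$ is nonempty because it contains $v_0$.

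It follows that $\mathrm{LB}=\min_{\theta\in\Theta}\max_{g\in\mathcal{G}}\langle\theta,g\rangle=\min_{\theta\in\Theta}\max_{v\in W_j}\langle\theta,g_v\rangle=m_{W_j}(y^\prime)$. Rewriting the max over cuts as a max over tuples is legitimate because duplicate tuples give identical cuts, so the maximum is unaffected. The membership of $\mathrm{LB}$ in the finite set $\{m_W(y^\prime)\mid\emptyset\neq W\subseteq\mathcal{Y}^{\mathrm{all}}\}$ is then immediate.

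The final inequality follows from monotonicity in $W$. Since $W_j\subseteq\mathcal{Y}^{\mathrm{all}}$, we get $m_{W_j}(y^\prime)\le m_{\mathcal{Y}^{\mathrm{all}}}(y^\prime)=m_{\sub}(y^\prime)$ by \cref{eq:subloss_as_finite_max}. Alternatively, the same bound follows from \Cref{prop:dc_cut_validity}.

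The only delicate point is the bookkeeping that every cut in $\mathcal{G}$ comes from a tuple in $\mathcal{Y}^{\mathrm{all}}$ and not merely from an arbitrary feasible tuple. This is exactly where the hypothesis on $\mathcal{V}_0$ and the extreme-point oracle assumption are used, and I expect no real obstacle beyond stating it carefully.
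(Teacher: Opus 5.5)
Your proposal is correct and follows the paper's proof essentially verbatim: (i) comes from the finiteness of the family of subsets $W\subseteq\mathcal{Y}^{\mathrm{all}}$, and (ii) identifies $\mathcal{G}$ with $\{g_v\mid v\in W\}$ for $W=\mathcal{V}_0\cup\{v_0,\dots,v_j\}\subseteq\mathcal{Y}^{\mathrm{all}}$ and then applies monotonicity in $W$. One small precision: $v_j\in\mathcal{Y}^{\mathrm{all}}$ holds by the definition of $\mathcal{Y}(s^n)$ as the set of oracle return values, whereas \Cref{assu:margin}(3) is what makes $\mathcal{Y}^{\mathrm{all}}$ finite.
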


\begin{proof}
    (i) Since $\mathcal{Y}^{\mathrm{all}}$ is finite, it has finitely many subsets, so the largest of the submodel minimum values with $m_W(y^\prime)<m_{\sub}(y^\prime)$ is attained and is strictly smaller than $m_{\sub}(y^\prime)$.
    (ii) The cut set $\mathcal{G}$ of \Cref{alg:dc_inner} is always the set of all cuts $\{g_v \mid v\in W\}$ of the set $W$ consisting of ``the tuples of $\mathcal{V}_0$'' together with ``the tuples evaluated in that call''. Since every tuple added to the pool in \Cref{alg:dc_inner,alg:dc_outer} is a return value of the oracle, $W\subseteq\mathcal{Y}^{\mathrm{all}}$, and $\mathrm{LB}=\min_{\theta\in\Theta}\max_{v\in W}\langle\theta,g_v\rangle=m_W(y^\prime)$. The inequality $\mathrm{LB}\le m_{\sub}(y^\prime)$ follows from monotonicity with respect to $W\subseteq\mathcal{Y}^{\mathrm{all}}$.
\end{proof}

\subsection{Exact agreement of the lower bound and finite exactness of cutting-plane finishing}
\label{subsec:lb_snap_cpfinish}

\begin{lemma}[Exact agreement of the lower bound]
    \label{lem:lb_snap}
    In the setting of \Cref{lem:submodel_gap}(ii), let $0\le\epsilon_{\mathrm{in}}<\delta(y^\prime)$. When \Cref{alg:dc_inner} stops with $\mathrm{UB}-\mathrm{LB}\le\epsilon_{\mathrm{in}}$ ($\mathrm{converged}=\mathrm{TRUE}$), regardless of the rule for selecting evaluation points, we have
    \[
        \mathrm{LB}=m_{\sub}(y^\prime) ,
        \qquad
        \bar{\ell}^{\sub}(\hat{\theta};y^\prime)=\mathrm{UB}\le m_{\sub}(y^\prime)+\epsilon_{\mathrm{in}} .
    \]
    That is, the minimum value of the convex subproblem itself is identified exactly.
\end{lemma}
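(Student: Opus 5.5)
The plan is to sandwich $\mathrm{LB}$ between $m_{\sub}(y^\prime)-\epsilon_{\mathrm{in}}$ and $m_{\sub}(y^\prime)$, and then use the discreteness of the possible values of $\mathrm{LB}$ from \Cref{lem:submodel_gap}(ii) together with the gap $\delta(y^\prime)$ to force equality.

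\textbf{Upper bounds.} By \Cref{lem:submodel_gap}(ii), at the stopping iteration we have $\mathrm{LB}=m_W(y^\prime)$ for some nonempty $W\subseteq\mathcal{Y}^{\mathrm{all}}$, and $\mathrm{LB}\le m_{\sub}(y^\prime)$. Next, $\mathrm{UB}$ is by construction the value $\langle\hat{\theta},g_j\rangle$ at some evaluated point $\hat{\theta}=\theta^j\in\Theta$. By Step 4 of \Cref{alg:dc_inner}, this value equals $\bar{\ell}^{\sub}(\hat{\theta};y^\prime)$, since $v_j^n$ attains $F_1(\theta^j;s^n)$; this is the equality case of \Cref{prop:dc_cut_validity}. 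Hence $\mathrm{UB}\ge m_{\sub}(y^\prime)$.

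\textbf{Lower bound on $\mathrm{LB}$.} The stopping test $\mathrm{UB}-\mathrm{LB}\le\epsilon_{\mathrm{in}}$ then gives
\[
\mathrm{LB}\ge\mathrm{UB}-\epsilon_{\mathrm{in}}\ge m_{\sub}(y^\prime)-\epsilon_{\mathrm{in}}>m_{\sub}(y^\prime)-\delta(y^\prime).
\]

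\textbf{Conclusion by the gap.} Suppose, for contradiction, that $\mathrm{LB}<m_{\sub}(y^\prime)$. Then $W$ belongs to the index set in \cref{eq:def_submodel_gap}, which is therefore nonempty, and so $\delta(y^\prime)$ is finite. By definition of $\delta(y^\prime)$ as a maximum over this set,
\[
m_W(y^\prime)\le m_{\sub}(y^\prime)-\delta(y^\prime),
\]
which contradicts the strict inequality just obtained. If the index set is empty, then every nonempty $W$ satisfies $m_W(y^\prime)\ge m_{\sub}(y^\prime)$, and equality follows directly from $\mathrm{LB}\le m_{\sub}(y^\prime)$. In both cases $\mathrm{LB}=m_{\sub}(y^\prime)$. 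Consequently,
\[
\bar{\ell}^{\sub}(\hat{\theta};y^\prime)=\mathrm{UB}\le\mathrm{LB}+\epsilon_{\mathrm{in}}=m_{\sub}(y^\prime)+\epsilon_{\mathrm{in}}.
\]

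No step uses the selection rule for evaluation points, which explains the phrase ``regardless of the rule.'' The only care needed is the identification $\mathrm{UB}=\bar{\ell}^{\sub}(\hat{\theta};y^\prime)$. This relies on the cut being tight at the evaluated point, which holds because the oracle returns an exact maximizer under \Cref{assu:margin}(3). Apart from that, the argument is a direct combination of the previous lemma with the definition of $\delta(y^\prime)$, so I expect no real obstacle.
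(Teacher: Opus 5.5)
Your proposal is correct and follows the paper's proof essentially step for step. You get $\mathrm{UB}\ge m_{\sub}(y^\prime)$ from the true values at evaluated points, then $\mathrm{LB}>m_{\sub}(y^\prime)-\delta(y^\prime)$ from the stopping test, and conclude by combining $\mathrm{LB}=m_W(y^\prime)\le m_{\sub}(y^\prime)$ from \Cref{lem:submodel_gap}(ii) with the definition of $\delta(y^\prime)$. Your separate handling of the case $\delta(y^\prime)=+\infty$ and of the final bound $\mathrm{UB}\le m_{\sub}(y^\prime)+\epsilon_{\mathrm{in}}$ only makes explicit what the paper leaves implicit.
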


\begin{proof}
    Since $\mathrm{UB}$ is the minimum of the true values of $\bar{\ell}^{\sub}(\cdot\,;y^\prime)$ at the evaluation points, $\mathrm{UB}\ge m_{\sub}(y^\prime)$. At stopping, $\mathrm{LB}\ge\mathrm{UB}-\epsilon_{\mathrm{in}}\ge m_{\sub}(y^\prime)-\epsilon_{\mathrm{in}}>m_{\sub}(y^\prime)-\delta(y^\prime)$. On the other hand, by \Cref{lem:submodel_gap}(ii), $\mathrm{LB}=m_W(y^\prime)\le m_{\sub}(y^\prime)$, and by \cref{eq:def_submodel_gap}, every value with $m_W(y^\prime)<m_{\sub}(y^\prime)$ is at most $m_{\sub}(y^\prime)-\delta(y^\prime)$; hence $\mathrm{LB}=m_{\sub}(y^\prime)$.
\end{proof}

What becomes exact in \Cref{lem:lb_snap} is the lower bound (the identification of the minimum value), and the returned point $\hat{\theta}$ is $\epsilon_{\mathrm{in}}$-optimal. To obtain an exact minimizer, we use the following cutting-plane finishing: in Phase 1, run with an arbitrary rule for selecting evaluation points (e.g., the level method) until $\mathrm{UB}-\mathrm{LB}\le{\epsilon_{\mathrm{sw}}}$ (where $\epsilon_{\mathrm{sw}}>0$ is the switching threshold); in the subsequent Phase 2, switch the selection rule to the cutting-plane method (the minimizer of the model) and continue with tolerance $0$.

\begin{theorem}[Finite exactness of cutting-plane finishing]
    \label{theo:cpfinish_exact}
    In the setting of \Cref{lem:submodel_gap}(ii), let the tolerance of \Cref{alg:dc_inner} be $0$, $0<\epsilon_{\mathrm{sw}}<\delta(y^\prime)$, $\epsilon_{\mathrm{term}}=0$ in \Cref{alg:rule_cpfinish}, and $J=\infty$, and assume that the LPs are solved exactly. If Phase 1 reaches $\mathrm{UB}-\mathrm{LB}\le{\epsilon_{\mathrm{sw}}}$ in finitely many iterations (which holds for the cutting-plane method by \Cref{lem:dc_inner_finite} and, for the level method, whenever the iteration bound of \Cref{rem:level_complexity} holds, e.g., for the standard level method without inherited cuts), then cutting-plane finishing stops in finitely many iterations, and the returned $\hat{\theta}$ is an exact minimizer satisfying
    \[
        \bar{\ell}^{\sub}(\hat{\theta};y^\prime)=m_{\sub}(y^\prime)
        =\min_{\theta\in\Theta}\bar{\ell}^{\sub}(\theta;y^\prime) .
    \]
\end{theorem}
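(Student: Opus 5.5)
The proof splits the run into Phase 1 and Phase 2, which are separated by the first iteration at which $\mathrm{UB}-\mathrm{LB}\le\epsilon_{\mathrm{sw}}$. Phase 1 terminates after finitely many iterations by hypothesis. If \Cref{alg:dc_inner} already stops there (gap $0$), then $\mathrm{UB}=\mathrm{LB}\le m_{\sub}(y^\prime)\le\mathrm{UB}$, so $\hat{\theta}$ is exact. Otherwise we enter Phase 2. As noted in \Cref{subsec:rule_cpfinish}, $\mathrm{UB}$ is nonincreasing and $\mathrm{LB}$ is nondecreasing, so once the switch occurs we remain in Phase 2 permanently.

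The first key step is to pin down the lower bound at the switch. At that moment $\mathrm{LB}\ge\mathrm{UB}-\epsilon_{\mathrm{sw}}\ge m_{\sub}(y^\prime)-\epsilon_{\mathrm{sw}}>m_{\sub}(y^\prime)-\delta(y^\prime)$. By \Cref{lem:submodel_gap}(ii), $\mathrm{LB}=m_W(y^\prime)$ for some $W$ and $\mathrm{LB}\le m_{\sub}(y^\prime)$. The definition \cref{eq:def_submodel_gap} then forces $\mathrm{LB}=m_{\sub}(y^\prime)$; this is the argument of \Cref{lem:lb_snap}, applied with $\epsilon_{\mathrm{sw}}$ in place of $\epsilon_{\mathrm{in}}$ and without any stopping assumption. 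Because cuts are only added and all of them are valid lower bounds (\Cref{prop:dc_cut_validity}), $\mathrm{LB}=m_{\sub}(y^\prime)$ holds at every subsequent iteration.

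The second step bounds the length of Phase 2 by the pigeonhole argument of \Cref{lem:dc_inner_finite}. In Phase 2 each new evaluation point $\theta^{j}$ is a minimizer of the current model, i.e.\ an optimal solution of the LP over $\mathcal{G}_{j-1}$. Suppose the tuple $v_j$ returned by the oracle at $\theta^{j}$ already lies in the pool. Then $g_j\in\mathcal{G}_{j-1}$, and so $\bar{\ell}^{\sub}(\theta^{j};y^\prime)=\langle\theta^{j},g_j\rangle\le\mathrm{LB}_j=m_{\sub}(y^\prime)$. Hence $\mathrm{UB}\le\mathrm{LB}$, and the algorithm stops with tolerance $0$ at this iteration. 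Every Phase 2 iteration that does not stop therefore adds a new tuple from the finite set $\mathcal{Y}^{\mathrm{all}}$, and there are at most $M$ of these. Finiteness follows.

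We must also check that the $\epsilon_{\mathrm{term}}=0$ test never fires before the gap closes. The paper argues this after \Cref{alg:rule_cpfinish}: an exact repeat $\theta^{j+1}=\theta^{i}$ would already give $\mathrm{LB}\ge\mathrm{UB}$, so the algorithm would have stopped earlier. At stopping, $m_{\sub}(y^\prime)\le\bar{\ell}^{\sub}(\hat{\theta};y^\prime)=\mathrm{UB}\le\mathrm{LB}=m_{\sub}(y^\prime)$, which gives exactness. The delicate point is the snapping step at the switch. It needs $\mathrm{LB}$ to always equal some $m_W$, i.e.\ the pool must consist of oracle outputs, and it needs $\mathrm{UB}\ge m_{\sub}(y^\prime)$. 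Both are supplied by \Cref{lem:submodel_gap}(ii), so I expect no real obstacle beyond keeping track of the monotonicity across the switch.
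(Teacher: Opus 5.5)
Your proposal is correct and follows essentially the same route as the paper. You snap $\mathrm{LB}$ to $m_{\sub}(y^\prime)$ at the switch by the argument of \Cref{lem:lb_snap} with $\epsilon_{\mathrm{sw}}$ in place of $\epsilon_{\mathrm{in}}$, keep it there by monotonicity, and show that each non-stopping Phase~2 iteration adds a new tuple from the finite set $\mathcal{Y}^{\mathrm{all}}$. Your extra checks (stopping already in Phase~1, and the $\epsilon_{\mathrm{term}}=0$ test never firing) are sound additions. One small correction: $\mathrm{UB}\ge m_{\sub}(y^\prime)$ holds because $\mathrm{UB}$ is a minimum of true values at points of $\Theta$, not because of \Cref{lem:submodel_gap}(ii).
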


\begin{proof}
    At the end of Phase 1, since $\mathrm{UB}-\mathrm{LB}\le\epsilon_{\mathrm{sw}}<\delta(y^\prime)$, applying the proof of \Cref{lem:lb_snap} with $\epsilon_{\mathrm{in}}$ replaced by $\epsilon_{\mathrm{sw}}$ gives $\mathrm{LB}=m_{\sub}(y^\prime)$. In Phase 2, cuts are only added, so $\mathrm{LB}$ is nondecreasing, and by \Cref{lem:submodel_gap}(ii) it is always at most $m_{\sub}(y^\prime)$; hence $\mathrm{LB}=m_{\sub}(y^\prime)$ is maintained.
    Each iteration of Phase 2 evaluates at the model minimizer $\theta^{j}$ (a point attaining $\mathrm{LB}$). If the tuple $v_j$ obtained by the evaluation already belongs to the set $W$ corresponding to $\mathcal{G}$, then
    \[
        \bar{\ell}^{\sub}(\theta^{j};y^\prime)=\langle\theta^{j},g_{v_j}\rangle
        \le\max_{g\in\mathcal{G}}\langle\theta^{j},g\rangle=\mathrm{LB}=m_{\sub}(y^\prime) ,
    \]
    and since $\bar{\ell}^{\sub}(\cdot\,;y^\prime)\ge m_{\sub}(y^\prime)$ always holds (on $\Theta$), $\bar{\ell}^{\sub}(\theta^{j};y^\prime)=m_{\sub}(y^\prime)$. At this point the algorithm stops with $\mathrm{UB}=m_{\sub}(y^\prime)=\mathrm{LB}$, and $\hat{\theta}=\theta^{j}$ is an exact minimizer (it likewise stops if $\bar{\ell}^{\sub}(\theta^{j};y^\prime)=m_{\sub}(y^\prime)$ occurs with a new tuple). In every iteration that does not stop, a new tuple is added to the pool, and since $\mathcal{Y}^{\mathrm{all}}$ is finite, Phase 2 stops in finitely many iterations.
\end{proof}

\begin{remark}[Explicit number of iterations with the level method]
    \label{rem:level_complexity}
    For a convex Lipschitz function on a compact convex set, the number of iterations the level method needs to reach a certified gap $\mathrm{UB}-\mathrm{LB}\le\epsilon$ is bounded by $c(\lambda)\,\bigl(L_g(y^\prime)\,D_\Theta/\epsilon\bigr)^2$ ($\lambda\in(0,1)$ is the level parameter and $c(\lambda)$ is a constant depending only on it; \citealp{lemarechal1995new}). Here $L_g(y^\prime):=\max_{v\in\mathcal{Y}^{\mathrm{all}}}\|g_v\|$ is an upper bound on the Lipschitz constant of $\bar{\ell}^{\sub}(\cdot\,;y^\prime)$, and $D_\Theta:=\diam(\Theta)$. Therefore, choosing $\epsilon_{\mathrm{sw}}<\delta(y^\prime)$, the number of iterations of Phase 1 is of order $c(\lambda)\,\bigl(L_g(y^\prime)\,D_\Theta/\delta(y^\prime)\bigr)^2$ and is bounded explicitly, independently of the bound $M+1$ on the number of tuples in \Cref{lem:dc_inner_finite}. Note that this bound is a guarantee for the standard level method whose model is built only from tangents at the evaluation points. Using inherited cuts does not affect the validity of the stopping test (\Cref{lem:lb_snap}). Evaluating the number of iterations with inherited cuts is future work.
\end{remark}

\begin{lemma}[Total number of finishing iterations]
    \label{lem:phase2_total}
    Under \Cref{assu:margin}, call \Cref{alg:dc_inner} at line~\ref{line:dc_inner_call} of \Cref{alg:dc_outer} with tolerance $0$ and $J=\infty$, and let the rule of its Step 12 be \Cref{alg:rule_cpfinish}. Assume that the LPs of \Cref{alg:dc_inner} are solved exactly, and let $M:=\prod_{n=1}^N|\mathcal{Y}(s^n)|$.
    An inner iteration $j\ (\ge1)$ is called a finishing iteration if its evaluation point $\theta^{j}$ is chosen in the cutting-plane branch of \Cref{alg:rule_cpfinish} (the side $\mathrm{UB}-\mathrm{LB}\le\epsilon_{\mathrm{sw}}$).
    Then the total number of finishing iterations executed while \Cref{alg:dc_outer} runs up to $K$ outer iterations is at most $M+K$. This bound does not depend on the value of the switching threshold $\epsilon_{\mathrm{sw}}$.
\end{lemma}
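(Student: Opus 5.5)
The plan is to reuse the counting argument of \Cref{lem:dc_inner_finite}(2), restricted to finishing iterations. Each finishing iteration either stops its call or contributes a new tuple to the shared pool $\mathcal{V}$. The pool lives in the finite set $\mathcal{Y}^{\mathrm{all}}$ with $|\mathcal{Y}^{\mathrm{all}}|=M$, and it is never reinitialized across outer iterations of \Cref{alg:dc_outer}. So the non-stopping finishing iterations total at most $M$ over the whole run, and each of the $K$ calls contributes at most one stopping iteration. This gives $M+K$.

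\textbf{Step 1: a repeated tuple forces stopping.} Fix a call of \Cref{alg:dc_inner} with branch $y^\prime$, and let $j\ge1$ be a finishing iteration. By the definition of the cutting-plane branch of \Cref{alg:rule_cpfinish}, $\theta^{j}$ is a minimizer of the model over $\mathcal{G}_{j-1}$, so $\max_{g\in\mathcal{G}_{j-1}}\langle\theta^{j},g\rangle=\mathrm{LB}_j$.

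Suppose the oracle tuple $v_j=(x^*(\theta^{j},s^n))_n$ already lies in the set $W$ corresponding to $\mathcal{G}_{j-1}$. Here $W$ consists of the inherited tuples in $\mathcal{V}_0$ together with $v_0,\dots,v_{j-1}$, as in \Cref{lem:submodel_gap}(ii). Then $g_j\in\mathcal{G}_{j-1}$, and therefore
\[
\bar{\ell}^{\sub}(\theta^{j};y^\prime)=\langle\theta^{j},g_j\rangle\le\mathrm{LB}_j\le\mathrm{LB}_{j+1}.
\]
After the update in Step 6 we have $\mathrm{UB}\le\mathrm{LB}$, so the test with tolerance $0$ fires and the call stops at iteration $j$.

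This argument uses only that $\theta^j$ minimizes the LP model and that cuts only accumulate. It does not need $\mathrm{LB}=m_{\sub}(y^\prime)$, which is why the bound will not depend on $\epsilon_{\mathrm{sw}}$. Since $j\ge1$ is required in the definition, the warm-start issue at $j=0$ in \Cref{lem:dc_inner_finite} does not arise here. We also need $\epsilon_{\mathrm{term}}=0$ with exactly solved LPs: as noted after \Cref{alg:rule_cpfinish}, the $\epsilon_{\mathrm{term}}$ test then never triggers, so every finishing iteration is genuinely evaluated.

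\textbf{Step 2: count across the run.} By the contrapositive of Step 1, every finishing iteration that does not stop its call adds a tuple $v_j\notin\mathcal{V}$ to the pool. Step 3 of \Cref{alg:dc_inner} and Step 4 of \Cref{alg:dc_outer} only enlarge $\mathcal{V}$, and the pool is passed between calls. Hence the new tuples are pairwise distinct elements of $\mathcal{Y}^{\mathrm{all}}$ over the whole run, which caps the non-stopping finishing iterations at $M$.

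One point needs care: some pool additions come from non-finishing iterations or from Step 4 of \Cref{alg:dc_outer}. These only shrink the room left for new tuples, so the cap $M$ still holds. Each of the $K$ calls has at most one stopping iteration, whether or not that iteration is a finishing one. Summing gives at most $M+K$ finishing iterations.

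The main obstacle is Step 1's bookkeeping. We must check that $\mathcal{G}_{j-1}$ contains exactly the cuts of all pool tuples relative to the current branch $y^\prime$. The initialization of $\mathcal{G}$ from $\mathcal{V}_0$ guarantees this, and \Cref{prop:dc_cut_validity} guarantees these cuts are valid. I would state this explicitly so that "$v_j$ already in the pool" is equivalent to "$g_j\in\mathcal{G}_{j-1}$".
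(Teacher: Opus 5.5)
Your argument is correct and follows the paper's proof essentially step for step. A finishing iteration whose oracle tuple is already in the pool has its cut in $\mathcal{G}_{j-1}$, so $\mathrm{UB}\le\mathrm{LB}_j\le\mathrm{LB}_{j+1}$ forces the call to stop; non-stopping finishing iterations therefore each add a distinct new tuple to the never-reset pool of size at most $M$, and each of the $K$ calls adds at most one stopping iteration.

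The only deviation is your extra assumption $\epsilon_{\mathrm{term}}=0$, which the lemma does not impose and your argument does not need. The $\epsilon_{\mathrm{term}}$ test only ends a call before iteration $j+1$ is executed, so it can never produce a counted finishing iteration that was not evaluated.
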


\begin{proof}
    We use the notation of the proof of \Cref{lem:dc_inner_finite}: let $\mathcal{G}_j$ be the cut set at the end of inner iteration $j$, and $\mathrm{LB}_{j+1}=\min_{\theta\in\Theta}\max_{g\in\mathcal{G}_j}\langle\theta,g\rangle$.
    In a finishing iteration $j$, $\theta^{j}$ is the output of \Cref{alg:rule_kelley}, that is, a minimizer on $\Theta$ of the model of $\mathcal{G}_{j-1}$, so $\max_{g\in\mathcal{G}_{j-1}}\langle\theta^{j},g\rangle=\mathrm{LB}_j$.
    If the tuple $v_j$ returned by the oracle already belongs to the pool $\mathcal{V}$ at that time, then its cut (for the current branch $y^\prime$) equals $g_j$ and belongs to $\mathcal{G}_{j-1}$, so
    \[
        \bar{\ell}^{\sub}(\theta^{j};y^\prime)=\langle\theta^{j},g_j\rangle
        \le\max_{g\in\mathcal{G}_{j-1}}\langle\theta^{j},g\rangle=\mathrm{LB}_j\le\mathrm{LB}_{j+1} .
    \]
    After $\mathrm{UB}$ is updated, $\mathrm{UB}\le\bar{\ell}^{\sub}(\theta^{j};y^\prime)\le\mathrm{LB}_{j+1}$, so the stopping test with tolerance $0$ is satisfied and the call stops at this iteration.
    By contraposition, in a finishing iteration that does not stop, $v_j$ is necessarily a tuple newly added to the pool.
    In \Cref{alg:dc_outer}, the pool $\mathcal{V}$ increases monotonically across outer iterations and is never reinitialized (as in the proof of \Cref{lem:dc_inner_finite}(2)). Therefore, throughout the whole run, the number of finishing iterations that do not stop is at most $|\mathcal{V}|\le M$ (iterations of Phase 1 also consume the pool, but this only acts to decrease the number of finishing iterations).
    Each call has at most one stopping iteration, and the number of calls equals the number $K$ of outer iterations, so the total number of finishing iterations is at most $M+K$.
    The threshold $\epsilon_{\mathrm{sw}}$ does not appear in the above argument.
\end{proof}

\begin{proposition}[Total number of inner iterations over the whole run]
    \label{prop:total_inner_iterations}
    Under \Cref{assu:margin}, let the rule of Step 12 of \Cref{alg:dc_inner} be \Cref{alg:rule_cpfinish} (with switching threshold $\epsilon_{\mathrm{sw}}$, termination tolerance $\epsilon_{\mathrm{term}}=0$, and Phase-1 rule $\mathsf{R}$), and call \Cref{alg:dc_inner} with tolerance $0$ and $J=\infty$. Assume that the LPs are solved exactly, and let
    \[
        \delta_{\mathrm{in}}:=\min_{y^\prime\in\mathcal{Y}^{\mathrm{all}}_{\geq}}\delta(y^\prime)>0 ,
        \qquad
        0<\epsilon_{\mathrm{sw}}<\delta_{\mathrm{in}}
    \]
    (since $\mathcal{Y}^{\mathrm{all}}_{\geq}$ is finite, $\delta_{\mathrm{in}}>0$). Assume further that for every branch, the rule $\mathsf{R}$ reaches $\mathrm{UB}-\mathrm{LB}\le\epsilon_{\mathrm{sw}}$ in at most $T_1$ inner iterations. Then the following hold.
    \begin{description}
        \item[(1)] \Cref{alg:dc_outer} stops in at most $|\mathcal{Y}^{\mathrm{all}}_{\geq}|$ outer iterations, and each call returns an exact minimizer of $\bar{\ell}^{\sub}(\cdot\,;y^\prime)$.
        \item[(2)] The total number of inner iterations executed over the whole run is at most
            \[
                \underbrace{|\mathcal{Y}^{\mathrm{all}}_{\geq}|\,T_1}_{\text{Phase 1}}
                \;+\;
                \underbrace{M+|\mathcal{Y}^{\mathrm{all}}_{\geq}|}_{\text{finishing}}
            \]
            ($M:=\prod_{n=1}^N|\mathcal{Y}(s^n)|$).
        \item[(3)] The number of calls to the forward-problem oracle is at most $N\bigl(|\mathcal{Y}^{\mathrm{all}}_{\geq}|\,T_1+M+|\mathcal{Y}^{\mathrm{all}}_{\geq}|\bigr)+2N|\mathcal{Y}^{\mathrm{all}}_{\geq}|{}+N$.
    \end{description}
    When the level method (\Cref{alg:rule_level}) is taken as the rule $\mathsf{R}$, $T_1=c(\lambda)\bigl(L_g\,D_\Theta/\epsilon_{\mathrm{sw}}\bigr)^2$ ($L_g:=\max_{y^\prime\in\mathcal{Y}^{\mathrm{all}}_{\geq}}L_g(y^\prime)$, $D_\Theta:=\diam(\Theta)$; \Cref{rem:level_complexity}. This $T_1$ is the value when the bound of \Cref{rem:level_complexity} holds).
\end{proposition}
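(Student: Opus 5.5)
The plan is to split the run into outer iterations and, inside each call of \Cref{alg:dc_inner}, into Phase 1 and finishing iterations. The three counts are then bounded separately using \Cref{theo:cpfinish_exact}, \Cref{lem:phase2_total} and \Cref{prop:dc_finite_convergence}.

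\textbf{Part (1).} For every branch $y^\prime\in\mathcal{Y}^{\mathrm{all}}_{\geq}$ we have $\epsilon_{\mathrm{sw}}<\delta_{\mathrm{in}}\le\delta(y^\prime)$. The pool contains only oracle return values, so the setting of \Cref{lem:submodel_gap}(ii) holds. By assumption, Phase 1 ends within $T_1$ iterations. Hence \Cref{theo:cpfinish_exact} applies at every call, and each call returns an exact minimizer of $\bar{\ell}^{\sub}(\cdot\,;y^{(k)})$ with $\mathrm{converged}=\mathrm{TRUE}$. This is exactly the hypothesis of \Cref{prop:dc_finite_convergence}, which gives termination within $|\mathcal{Y}^{\mathrm{all}}_{\geq}|$ outer iterations.

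\textbf{Part (2).} Let $K\le|\mathcal{Y}^{\mathrm{all}}_{\geq}|$ be the number of outer iterations actually run. Because $\mathrm{UB}-\mathrm{LB}$ is nonincreasing within a call (\Cref{subsec:rule_cpfinish}), each call consists of a block of Phase 1 iterations followed by a block of finishing iterations.
\begin{itemize}
\item Phase 1 contributes at most $T_1$ iterations per call, hence at most $|\mathcal{Y}^{\mathrm{all}}_{\geq}|\,T_1$ in total.
\item \Cref{lem:phase2_total} bounds the finishing iterations by $M+K\le M+|\mathcal{Y}^{\mathrm{all}}_{\geq}|$.
\end{itemize}
One bookkeeping point must be settled. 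The iteration $j=0$ of each call, and the iteration at which the stopping test fires, must be absorbed into one of these two counts. If a call stops at $j=0$ or during Phase 1, then that iteration counts toward the $T_1$ budget: reaching $\mathrm{UB}-\mathrm{LB}\le 0\le\epsilon_{\mathrm{sw}}$ is within the assumed $T_1$ iterations. Otherwise the stopping iteration is a finishing iteration, already counted in \Cref{lem:phase2_total}. I would state explicitly the convention that $T_1$ counts the iterations up to and including the one at which the gap first falls below $\epsilon_{\mathrm{sw}}$, including $j=0$. With that convention no iteration is counted twice or missed.

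\textbf{Part (3).} Oracle calls are counted as follows:
\begin{itemize}
\item each inner iteration makes $N$ calls, for the tuple $v_j$;
\item each outer iteration makes $2N$ further calls, for $v^n$ and $y^{\prime\prime n}$ in \Cref{alg:dc_outer};
\item the initialization of the branch makes $N$ calls.
\end{itemize}
Multiplying the bound of (2) by $N$ and adding $2N|\mathcal{Y}^{\mathrm{all}}_{\geq}|+N$ gives the claim.

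\textbf{Level-method remark.} Substitute $\epsilon=\epsilon_{\mathrm{sw}}$ into \Cref{rem:level_complexity}, and bound $L_g(y^\prime)\le L_g$ uniformly over branches.

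\textbf{Main obstacle.} The delicate step is the bookkeeping in (2): showing that the finishing count from \Cref{lem:phase2_total}, which is global across calls via the shared pool, combines additively with the per-call Phase 1 count without double counting. This holds because \Cref{lem:phase2_total} already counts only finishing iterations, and the Phase 1 iterations are disjoint from them by definition.
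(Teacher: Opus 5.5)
Your proposal is correct and follows the paper's proof essentially step for step. Part (1) applies \Cref{theo:cpfinish_exact} to every call, using $\epsilon_{\mathrm{sw}}<\delta_{\mathrm{in}}\le\delta(y^\prime)$, and then \Cref{prop:dc_finite_convergence}; part (2) splits each call at the first iteration where $\mathrm{UB}-\mathrm{LB}\le\epsilon_{\mathrm{sw}}$, bounds the prefixes by $|\mathcal{Y}^{\mathrm{all}}_{\geq}|\,T_1$ and the finishing suffixes globally by \Cref{lem:phase2_total} with $K\le|\mathcal{Y}^{\mathrm{all}}_{\geq}|$; and part (3) counts $N$, $2N$, and $N$ oracle calls exactly as the paper does, with your explicit $T_1$ convention coinciding with the paper's ``first half''.
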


\begin{proof}
    We show (1). Since $\epsilon_{\mathrm{sw}}<\delta_{\mathrm{in}}\le\delta(y^\prime)$ holds for every branch $y^\prime\in\mathcal{Y}^{\mathrm{all}}_{\geq}$, each call satisfies the assumptions of \Cref{theo:cpfinish_exact}, returns an exact minimizer, and gives $\mathrm{converged}=\mathrm{TRUE}$. This is the premise of \Cref{prop:dc_finite_convergence} (exact solution of the inner problems), and by that proposition, \Cref{alg:dc_outer} stops in at most $|\mathcal{Y}^{\mathrm{all}}_{\geq}|$ outer iterations.

    We show (2). Split the inner iterations of one call into those up to the iteration at which $\mathrm{UB}-\mathrm{LB}\le\epsilon_{\mathrm{sw}}$ first holds and those after it. Since $\mathrm{UB}-\mathrm{LB}$ is nonincreasing in the iterations (\Cref{subsec:rule_cpfinish}), this split yields two intervals, a first half and a second half, and all evaluation points of the second half are chosen in the cutting-plane branch of \Cref{alg:rule_cpfinish}. That is, all iterations of the second half are finishing iterations in the sense of \Cref{lem:phase2_total}.
    By assumption, the number of iterations of the first half is at most $T_1$ in each call, and by (1) the number of calls is at most $|\mathcal{Y}^{\mathrm{all}}_{\geq}|$, so the total of the first halves is at most $|\mathcal{Y}^{\mathrm{all}}_{\geq}|T_1$. Substituting $K\le|\mathcal{Y}^{\mathrm{all}}_{\geq}|$ into \Cref{lem:phase2_total}, the total of the second halves is at most $M+|\mathcal{Y}^{\mathrm{all}}_{\geq}|$. Together, we obtain (2).

    We show (3). Calls to the forward-problem oracle occur $N$ times per inner iteration at Step 3 of \Cref{alg:dc_inner} and $2N$ times per outer iteration at Step 4 of \Cref{alg:dc_outer}, and $N$ times at the initialization of the branch in Step 1 of \Cref{alg:dc_outer}, and nowhere else. By (1) and (2), the first two are bounded by $N(|\mathcal{Y}^{\mathrm{all}}_{\geq}|T_1+M+|\mathcal{Y}^{\mathrm{all}}_{\geq}|)$ and $2N|\mathcal{Y}^{\mathrm{all}}_{\geq}|$, respectively, and the last is $N$.
\end{proof}

\begin{remark}[Roles of the two terms]
    \label{rem:two_terms}
    The first term of \Cref{prop:total_inner_iterations}(2) depends on the numerical fineness of the problem through the gap $\delta_{\mathrm{in}}$, whereas the second term $M+|\mathcal{Y}^{\mathrm{all}}_{\geq}|$ does not depend on the gap and is determined only by two combinatorial quantities: the number $M=|\mathcal{Y}^{\mathrm{all}}|$ of tuples of solutions that the oracle can return and the number $|\mathcal{Y}^{\mathrm{all}}_{\geq}|$ of branches ($\mathcal{Y}^{\mathrm{all}}=\prod_n\mathcal{Y}(s^n)$ and $\mathcal{Y}^{\mathrm{all}}_{\geq}=\prod_n\mathcal{Y}_{\geq x^n}(s^n)$ are different sets). Without cut inheritance, the second term would be $|\mathcal{Y}^{\mathrm{all}}_{\geq}|(M+1)$ (applying \Cref{lem:dc_inner_finite}(1) to each call), which differs by a factor of $|\mathcal{Y}^{\mathrm{all}}_{\geq}|$.
\end{remark}

\begin{remark}[Connection to the outer loop and future work]
    \label{rem:gap_outlook}
    First, the number of iterations and the number of oracle calls over the whole run are summarized in \Cref{prop:total_inner_iterations}.
    Second, a lower bound on $\delta_{\mathrm{in}}$ is given explicitly for the integer case in \Cref{subsec:delta_explicit}. Improvements using the structure of integer programs (total unimodularity, Graver bases, M-convexity) are left for future work, as described in \Cref{rem:structural_improvement}.
    Third, the finite exact attainment of \citet{kitaoka2024exact} uses the structure in which the set of minimizers has a relative interior point, but in the inner problem of a general branch, the $\argmin$ can be a low-dimensional face; hence, in this paper, we combine the discreteness of the lower bound (\Cref{lem:lb_snap}) with finishing by the cutting-plane method (\Cref{theo:cpfinish_exact}) rather than relying on the convergence of the values at the evaluation points.
\end{remark}

\subsection{Explicit lower bound on the submodel gap in the integer case}
\label{subsec:delta_explicit}

With the same motivation as \citet{kitaoka2026explicit}, who gave an explicit lower bound on the constant $\gamma(\ell^{\sub})$ (a positive constant determined by the geometry of the set of minimizers of the suboptimality loss, which bounds by $O(1/\gamma(\ell^{\sub})^2)$ the number of iterations a gradient method needs to reach $\ell^{\sub}=0$; \citealp{kitaoka2024exact}), we give an explicit lower bound on $\delta(y^\prime)$. The key is that all submodel minimum values $m_W(y^\prime)$ are rational numbers with a common upper bound on their denominators; although there are exponentially many subsets $W$, enumerating them is unnecessary (only the upper bound on the denominators matters).

\begin{proposition}[Explicit lower bound on $\delta$ in the integer case]
    \label{prop:delta_explicit_lb}
    In addition to \Cref{assu:margin}, let $N=1$, $\Theta=\Delta^{d-1}$, and $\mathcal{X}(s^1)\subseteq\mathbb{Z}^d$, and let $R:=\max_{x\in\mathcal{X}(s^1)}\|x\|_\infty$. Then for every branch $y^\prime\in\mathcal{Y}^{\mathrm{all}}_{\geq}$,
    \[
        \delta(y^\prime)\ \ge\ \frac{1}{Q^2} ,
        \qquad
        Q:=\bigl((d+1)^{1/2}\max(2R,1)\bigr)^{d+1} .
    \]
\end{proposition}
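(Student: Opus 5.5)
The plan is to show that every submodel minimum value $m_W(y^\prime)$ in \cref{eq:def_submodel_min} is a rational number whose denominator is a positive integer at most $Q$. Once this is known, the claim follows from a separation argument. By \cref{eq:def_submodel_gap}, $\delta(y^\prime)$ is either $+\infty$ or the difference $m_{\sub}(y^\prime)-m_W(y^\prime)>0$ for some $W$. Since $m_{\sub}(y^\prime)=m_{\mathcal{Y}^{\mathrm{all}}}(y^\prime)$ is itself a submodel value, this difference is between two distinct rationals $p/q$ and $p^\prime/q^\prime$ with $1\le q,q^\prime\le Q$. Hence it is at least $|pq^\prime-p^\prime q|/(qq^\prime)\ge 1/Q^2$.

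First I would write $m_W(y^\prime)$ as the LP $\min\{\tau\mid(\theta,\tau)\in P_W\}$ over
\[
P_W:=\{(\theta,\tau)\mid \theta\ge 0,\ \textstyle\sum_i\theta_i=1,\ \langle\theta,g_v\rangle\le\tau\ (v\in W)\}.
\]
With $N=1$, each cut is $g_v=v-y^{\prime 1}$, where $v,y^{\prime 1}\in\mathcal{X}(s^1)\subseteq\mathbb{Z}^d$. So $g_v$ is an integer vector with $\|g_v\|_\infty\le 2R$.

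Next I would check that $P_W$ is pointed and that the LP attains its minimum at a vertex. The $\theta$-part lies in the bounded set $\Delta^{d-1}$, and $\tau$ is bounded below on $P_W$ (for $W\neq\emptyset$). Hence the lineality space of $P_W$ is trivial and the objective is bounded below, so the minimum is attained at a vertex $(\theta^\circ,\tau^\circ)$. Such a vertex is the unique solution of a nonsingular $(d+1)\times(d+1)$ subsystem $A z=b$ of active constraints, where $z=(\theta,\tau)$. The rows of $A$ come in three types:
\[
(\mathbf{1}^\top,0),\qquad (e_i^\top,0),\qquad (g_v^\top,-1),
\]
with right-hand sides $1$, $0$ and $0$ respectively. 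All entries of $A$ and $b$ are integers.

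By Cramer's rule, $\tau^\circ=\det A_\tau/\det A$, where $A_\tau$ is $A$ with the $\tau$-column replaced by $b$. Since $A$ is an integer matrix, $\det A$ is a nonzero integer and $\det A_\tau$ is an integer, so $\tau^\circ$ is rational with denominator at most $|\det A|$. I would bound $|\det A|$ by Hadamard's inequality, $|\det A|\le\prod_{\text{rows}}\|a_r\|_2$. Every entry of $A$ has absolute value at most $\max(2R,1)$, because the $g_v$ entries are bounded by $2R$ and the remaining entries lie in $\{0,\pm1\}$. Each row therefore has Euclidean norm at most $(d+1)^{1/2}\max(2R,1)$, and so $|\det A|\le Q$.

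The main point needing care is the step from an LP minimum to a vertex given by a square integer system. This requires pointedness of $P_W$ and attainment of the minimum, which come from boundedness of $\Theta=\Delta^{d-1}$. One also has to confirm that the entry bound $\max(2R,1)$ covers the $\pm1$ entries even when $R=0$. The Hadamard and rational-separation steps are then routine.
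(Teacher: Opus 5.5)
Your proposal is correct and follows the paper's proof essentially step for step. The paper likewise writes $m_W(y^\prime)$ as the LP over $P_W$ and obtains an optimal extreme point cut out by $d+1$ tight integer constraints of the types $(g_v,-1)$, $(\mathbf{1},0)$, $(-e_i,0)$. It then bounds $|\det A|\le Q$ by Hadamard's inequality, reads off the denominator via Cramer's rule, and concludes with the separation bound $|p_1q_2-p_2q_1|/(q_1q_2)\ge 1/Q^2$, using $m_{\sub}(y^\prime)=m_{\mathcal{Y}^{\mathrm{all}}}(y^\prime)$.
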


\begin{proof}
    First, we show that for every nonempty $W\subseteq\mathcal{Y}^{\mathrm{all}}$, $m_W(y^\prime)$ is a rational number with denominator at most $Q$. Since $N=1$, each cut $g_v=v-y^\prime$ ($v,y^\prime\in\mathcal{X}(s^1)\subseteq\mathbb{Z}^d$) is an integer vector with $\|g_v\|_\infty\le2R$. $m_W(y^\prime)$ is the optimal value of the linear program
    \[
        \min\bigl\{\,\tau \ \bigm|\ (\theta,\tau)\in P_W\,\bigr\} ,
        \quad
        P_W:=\Bigl\{(\theta,\tau)\in\mathbb{R}^d\times\mathbb{R} \ \Bigm|\ \begin{aligned}&\theta\ge0,\ \textstyle\sum_{i=1}^d\theta_i=1,\\ &\langle g_v,\theta\rangle\le\tau\ (\forall v\in W)\end{aligned}\Bigr\} .
    \]
    $P_W$ is nonempty and contains no line ($\theta$ is bounded, and the direction $(0,-1)$ is incompatible with the constraints $\langle g_v,\theta\rangle\le\tau$), and the optimal value is finite by the compactness of $\Theta$, so it is attained at some extreme point $(\theta,\tau)$. An extreme point is the unique solution of a nonsingular system of linear equations consisting of $(d+1)$ linearly independent tight constraints, and each row of its coefficient matrix $A$ is an integer vector of type $(g_v,-1)$, $(\mathbf{1},0)$, or $(-e_i,0)$. The $\ell_2$ norms of the rows are bounded by $\sqrt{d(2R)^2+1}\le(d+1)^{1/2}\max(2R,1)$, $\sqrt{d}$, and $1$, respectively, so by Hadamard's inequality,
    \[
        1\le|\det A|\le\bigl((d+1)^{1/2}\max(2R,1)\bigr)^{d+1}=Q
    \]
    (the lower bound holds because $A$ is a nonsingular integer matrix). By Cramer's rule, $\tau$ is a rational number with denominator $|\det A|\le Q$, and we can write $m_W(y^\prime)=p/q$ ($p\in\mathbb{Z}$, $1\le q\le Q$).

    If $\delta(y^\prime)=+\infty$, the claim holds. Otherwise, there exists $W$ attaining the $\max$ in \cref{eq:def_submodel_gap}, and $\delta(y^\prime)=m_{\sub}(y^\prime)-m_W(y^\prime)>0$. Since $m_{\sub}(y^\prime)=m_{\mathcal{Y}^{\mathrm{all}}}(y^\prime)$ is also of the above form, we can write $m_{\sub}(y^\prime)=p_1/q_1$ and $m_W(y^\prime)=p_2/q_2$ ($q_1,q_2\le Q$); since the two are distinct,
    \[
        \delta(y^\prime)=\frac{|p_1q_2-p_2q_1|}{q_1q_2}\ge\frac{1}{q_1q_2}\ge\frac{1}{Q^2} .
    \]
\end{proof}

\begin{corollary}[Extension to $N\ge2$]
    \label{cor:delta_scaling}
    In addition to \Cref{assu:margin}, let $N\ge2$, $\Theta=\Delta^{d-1}$, and $\mathcal{X}(s^n)\subseteq\mathbb{Z}^d$ for each $n$, and let $R:=\max_{n}\max_{x\in\mathcal{X}(s^n)}\|x\|_\infty$. Then for every branch $y^\prime\in\mathcal{Y}^{\mathrm{all}}_{\geq}$, we have
    \[
        \delta(y^\prime)\ \ge\ \frac{1}{N\,Q_N^2} ,
        \qquad
        Q_N:=\bigl((d+1)^{1/2}\max(2NR,1)\bigr)^{d+1} .
    \]
    As in the lower bound on $\gamma(\ell^{\sub})$ \citep{kitaoka2026explicit}, the dependence on $N$ appears through $Q_N$ and the factor $1/N$.
\end{corollary}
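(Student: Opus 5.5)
The plan is to reduce the general-$N$ case to the same rationality argument used in the proof of \Cref{prop:delta_explicit_lb}, after clearing the factor $1/N$ in the cuts. For a branch $y^\prime$ and a tuple $v\in\mathcal{Y}^{\mathrm{all}}$, the vertex cut is $g_v=\frac{1}{N}\sum_{n=1}^N(v^n-y^{\prime n})$. Set $\tilde g_v:=Ng_v=\sum_n(v^n-y^{\prime n})$. This is an integer vector with $\|\tilde g_v\|_\infty\le 2NR$, since each $v^n,y^{\prime n}\in\mathcal{X}(s^n)\subseteq\mathbb{Z}^d$ has sup-norm at most $R$. For every nonempty $W\subseteq\mathcal{Y}^{\mathrm{all}}$ we then have $m_W(y^\prime)=\frac{1}{N}\tilde m_W$, where $\tilde m_W:=\min_{\theta\in\Delta^{d-1}}\max_{v\in W}\langle\theta,\tilde g_v\rangle$.

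Next I would rerun the LP/extreme-point argument for $\tilde m_W$ verbatim, with $2R$ replaced by $2NR$. The polyhedron $\tilde P_W$ (defined with $\tilde g_v$) is nonempty and line-free, and the optimum is attained at an extreme point. That point solves a nonsingular integer system whose rows are $(\tilde g_v,-1)$, $(\mathbf 1,0)$ or $(-e_i,0)$. Their $\ell_2$ norms are at most $(d+1)^{1/2}\max(2NR,1)$, so Hadamard's inequality gives $1\le|\det A|\le Q_N$. Cramer's rule then shows $\tilde m_W=p/q$ with $p\in\mathbb{Z}$ and $1\le q\le Q_N$. This holds for every nonempty $W$, including $W=\mathcal{Y}^{\mathrm{all}}$, for which $\tilde m_{\mathcal{Y}^{\mathrm{all}}}=N m_{\sub}(y^\prime)$ by \cref{eq:subloss_as_finite_max}.

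Finally, if $\delta(y^\prime)=+\infty$ there is nothing to prove. Otherwise take a $W$ attaining the max in \cref{eq:def_submodel_gap}. Then
\[
\delta(y^\prime)=\frac{1}{N}\bigl(\tilde m_{\mathcal{Y}^{\mathrm{all}}}-\tilde m_W\bigr)=\frac{1}{N}\,\frac{|p_1q_2-p_2q_1|}{q_1q_2}\ge\frac{1}{NQ_N^2},
\]
because the two rationals are distinct and have denominators at most $Q_N$.

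No step is genuinely hard. The only care needed is the bookkeeping: check that the scaling by $N$ commutes with the min-max (it does, since $N>0$), and that the row-norm bound $\sqrt{d(2NR)^2+1}\le(d+1)^{1/2}\max(2NR,1)$ still holds when $R=0$, which is what the $\max(\cdot,1)$ is for.
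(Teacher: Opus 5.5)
Your proposal is correct and follows the paper's proof essentially verbatim. Like the paper, you rescale the cuts to the integer vectors $Ng_v$ with $\|Ng_v\|_\infty\le 2NR$, rerun the extreme-point/Cramer/Hadamard step of \Cref{prop:delta_explicit_lb} to show that every $N\,m_W(y^\prime)$ is rational with denominator at most $Q_N$, and divide the resulting $1/Q_N^2$ separation by $N$; your handling of the $\delta(y^\prime)=+\infty$ case and of the $\max(2NR,1)$ row-norm bound just spells out what the paper covers with ``the rest follows as in the second step.''
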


\begin{proof}
    For the cut $g_v=\frac{1}{N}\sum_{n=1}^N(v^n-y^{\prime n})$, we have $Ng_v\in\mathbb{Z}^d$ and $\|Ng_v\|_\infty\le2NR$. Applying the first step of the proof of \Cref{prop:delta_explicit_lb} to the family of integer cuts $\{Ng_v\}_{v}$, for every nonempty $W\subseteq\mathcal{Y}^{\mathrm{all}}$,
    \[
        N\,m_W(y^\prime)=\min_{\theta\in\Delta^{d-1}}\max_{v\in W}\langle\theta,Ng_v\rangle
    \]
    is a rational number with denominator at most $Q_N$. Hence the difference of two distinct values of $N\,m_W(y^\prime)$ is at least $1/Q_N^2$, and the difference of two distinct values of $m_W(y^\prime)$ is at least $1/(N\,Q_N^2)$. The rest follows as in the second step of the proof of \Cref{prop:delta_explicit_lb}.
\end{proof}

\begin{remark}[Fully explicit number of iterations of Phase 1]
    \label{rem:explicit_phase1}
    In the setting of \Cref{prop:delta_explicit_lb}, choosing $\epsilon_{\mathrm{sw}}:=1/(2Q^2)<\delta(y^\prime)$ and combining with \Cref{rem:level_complexity}, the number of iterations of Phase 1 (level method) is bounded by
    \[
        c(\lambda)\bigl(L_g(y^\prime)\,D_\Theta\,\cdot 2Q^2\bigr)^2
        \le c(\lambda)\,\bigl(2R\sqrt{d}\cdot\sqrt{2}\cdot2Q^2\bigr)^2 ,
    \]
    a fully explicit function of the problem size ($d$, $R$) (using $L_g(y^\prime)\le2R\sqrt{d}$ and $\diam(\Delta^{d-1})=\sqrt{2}$). Since $Q^4=\bigl((d+1)\max(2R,1)^2\bigr)^{2(d+1)}$, this bound is exponential in $d$. This is consistent with the fact that the lower bound on $\gamma(\ell^{\sub})$ for general ILPs can be exponentially small in $d$ (and that this is essential) \citep{kitaoka2026explicit}.
\end{remark}

\begin{corollary}[Explicit number of iterations over the whole run in the integer case]
    \label{cor:total_integer}
    In the setting of \Cref{prop:delta_explicit_lb} for $N=1$ and of \Cref{cor:delta_scaling} for $N\ge2$ ($\Theta=\Delta^{d-1}$, $\mathcal{X}(s^n)\subseteq\mathbb{Z}^d$ for each $n$, $R:=\max_n\max_{x\in\mathcal{X}(s^n)}\|x\|_\infty$), use the framework of \Cref{prop:total_inner_iterations} with the rule $\mathsf{R}$ being the level method and the switching threshold
    \[
        \epsilon_{\mathrm{sw}}:=\frac{1}{2NQ_N^2} ,
        \qquad
        Q_N=\bigl((d+1)^{1/2}\max(2NR,1)\bigr)^{d+1} .
    \]
    Assume that the iteration bound of \Cref{rem:level_complexity} holds for the level method in Phase 1. Then $\epsilon_{\mathrm{sw}}<\delta_{\mathrm{in}}$, and the total number of inner iterations over the whole run is at most
    \begin{align*}
        &|\mathcal{Y}^{\mathrm{all}}_{\geq}|\,c(\lambda)\,\bigl(2R\sqrt{d}\cdot\sqrt{2}\cdot {2NQ_N^2}\bigr)^2 + M+|\mathcal{Y}^{\mathrm{all}}_{\geq}|\\
        &\qquad=
        32\,c(\lambda)\,|\mathcal{Y}^{\mathrm{all}}_{\geq}|\,R^2\,d\,N^2\,Q_N^4 \;+\; M+|\mathcal{Y}^{\mathrm{all}}_{\geq}| ,
    \end{align*}
    which is a fully explicit function of the problem size ($d$, $R$, $N$), $|\mathcal{Y}^{\mathrm{all}}_{\geq}|$, and $M$.
\end{corollary}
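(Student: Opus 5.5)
The plan is to combine \Cref{prop:total_inner_iterations}(2) with the explicit gap bounds of \Cref{prop:delta_explicit_lb} and \Cref{cor:delta_scaling}, and with the level-method iteration bound of \Cref{rem:level_complexity}.

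\textbf{Step 1: the gap condition.} We first check that $\epsilon_{\mathrm{sw}}<\delta_{\mathrm{in}}$.
\begin{itemize}
\item For $N\ge2$, \Cref{cor:delta_scaling} gives $\delta(y^\prime)\ge 1/(NQ_N^2)$ for every branch $y^\prime\in\mathcal{Y}^{\mathrm{all}}_{\geq}$.
\item For $N=1$, $Q_1=Q$, so \Cref{prop:delta_explicit_lb} gives the same form $\delta(y^\prime)\ge 1/Q^2=1/(NQ_N^2)$.
\end{itemize}
Since $\mathcal{Y}^{\mathrm{all}}_{\geq}$ is finite, taking the minimum over branches yields $\delta_{\mathrm{in}}\ge 1/(NQ_N^2)>1/(2NQ_N^2)=\epsilon_{\mathrm{sw}}$. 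This makes the hypotheses of \Cref{prop:total_inner_iterations} hold.

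\textbf{Step 2: a uniform Phase~1 bound $T_1$.} By the assumed bound of \Cref{rem:level_complexity}, Phase~1 needs at most $c(\lambda)(L_g D_\Theta/\epsilon_{\mathrm{sw}})^2$ iterations for each branch. Here $L_g=\max_{y^\prime}\max_v\|g_v\|$, and we bound it as follows.
\begin{itemize}
\item Each cut is $g_v=\frac1N\sum_n(v^n-y^{\prime n})$ with integer points satisfying $\|v^n\|_\infty,\|y^{\prime n}\|_\infty\le R$.
\item Hence $\|g_v\|_\infty\le 2R$, and so $\|g_v\|_2\le 2R\sqrt d$.
\item Also $D_\Theta=\diam(\Delta^{d-1})=\sqrt2$.
\end{itemize}
Therefore
\[
T_1=c(\lambda)\bigl(2R\sqrt d\cdot\sqrt2\cdot 2NQ_N^2\bigr)^2 .
\]
This bound is uniform over branches, as \Cref{prop:total_inner_iterations} requires.

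\textbf{Step 3: plug in and expand.} Substituting $T_1$ into \Cref{prop:total_inner_iterations}(2) gives the first displayed expression. For the equality, we expand the square:
\[
\bigl(2R\sqrt d\sqrt2\cdot 2NQ_N^2\bigr)^2=4R^2d\cdot 2\cdot 4N^2Q_N^4=32R^2dN^2Q_N^4 .
\]

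The only delicate point is making the $N=1$ and $N\ge2$ cases uniform in Step~1, which is handled by observing $Q_1=Q$. Everything else is direct substitution.
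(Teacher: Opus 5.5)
Your proposal is correct and matches the paper's proof step for step. Both use $\delta_{\mathrm{in}}\ge 1/(NQ_N^2)>\epsilon_{\mathrm{sw}}$ from the explicit gap bounds, then $L_g\le 2R\sqrt{d}$ and $D_\Theta=\sqrt{2}$, and substitute the resulting $T_1$ into \Cref{prop:total_inner_iterations}(2). Your observation that $Q_1=Q$, so that \Cref{prop:delta_explicit_lb} covers $N=1$ in the same form, fills in a detail the paper leaves implicit by citing only \Cref{cor:delta_scaling}.
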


\begin{proof}
    By \Cref{cor:delta_scaling}, $\delta(y^\prime)\ge1/(NQ_N^2)$ for every branch, so $\delta_{\mathrm{in}}\ge1/(NQ_N^2)>\epsilon_{\mathrm{sw}}$.
    For the Lipschitz constant, $\|g_v\|=\bigl\|\frac{1}{N}\sum_n(v^n-y^{\prime n})\bigr\|\le\max_n\|v^n-y^{\prime n}\|\le2R\sqrt{d}$ gives $L_g\le2R\sqrt{d}$, and $D_\Theta=\diam(\Delta^{d-1})=\sqrt{2}$.
    By \Cref{rem:level_complexity}, $T_1=c(\lambda)(L_gD_\Theta/\epsilon_{\mathrm{sw}})^2\le c(\lambda)\bigl(2R\sqrt{d}\cdot\sqrt{2}\cdot2NQ_N^2\bigr)^2=32\,c(\lambda)R^2dN^2Q_N^4$, and it suffices to substitute this into \Cref{prop:total_inner_iterations}(2).
\end{proof}

\begin{remark}[Which term dominates]
    \label{rem:which_term_dominates}
    Since $Q_N^4=\bigl((d+1)\max(2NR,1)^2\bigr)^{2(d+1)}$, the first term of \Cref{cor:total_integer} is exponential in $d$. On the other hand, the second term $M+|\mathcal{Y}^{\mathrm{all}}_{\geq}|$ depends neither on $\delta_{\mathrm{in}}$ nor on any exponent of $R$, and is determined only by the number $M=|\mathcal{Y}^{\mathrm{all}}|$ of tuples of solutions that the oracle can return and the number $|\mathcal{Y}^{\mathrm{all}}_{\geq}|$ of branches. Which term dominates depends on the ratio $M/|\mathcal{Y}^{\mathrm{all}}_{\geq}|=\prod_{n=1}^N|\mathcal{Y}(s^n)|/|\mathcal{Y}_{\geq x^n}(s^n)|$. If this ratio is at most $32\,c(\lambda)R^2dN^2Q_N^4$, the first term dominates, and the improvement on the finishing side by \Cref{lem:phase2_total} ($|\mathcal{Y}^{\mathrm{all}}_{\geq}|(M+1)\to M+|\mathcal{Y}^{\mathrm{all}}_{\geq}|$) does not change the overall order; in this case, the only way to lower the overall order is to improve the lower bound on $\delta_{\mathrm{in}}$. On the other hand, for fixed $d$, $32\,c(\lambda)R^2dN^2Q_N^4$ is polynomial in $N$, whereas the ratio can grow exponentially in $N$; hence, when $N$ is large compared with $d$, the second term can dominate, and then the improvement by \Cref{lem:phase2_total} reduces the bound by a factor of up to about $|\mathcal{Y}^{\mathrm{all}}_{\geq}|$. Given that the lower bound on $\gamma(\ell^{\sub})$ for general ILPs can be exponentially small in $d$ (and that this is essential) \citep{kitaoka2026explicit}, improvements using the structure of the forward problem are given for the test-direction-cut inner solver in \Cref{subsec:testset_gap,subsec:testset_gap_N}; improving the lower bound on $\delta_{\mathrm{in}}$ for \Cref{alg:dc_inner} itself is future work (\Cref{rem:testcut_scope}).
\end{remark}

\begin{remark}[Improvement by structure is left for future work]
    \label{rem:structural_improvement}
    \citet{kitaoka2026explicit} replaced the dependence of the lower bound on $\gamma(\ell^{\sub})$ on the range of the features (which correspond to the feasible solutions in this paper) by structural constants, using structures of the forward problem such as total unimodularity, Graver bases, and M-convexity. A similar improvement is expected for $\delta(y^\prime)$, but the minors in \Cref{prop:delta_explicit_lb} are those of the cut matrix $(g_v)_v$, and properties such as the total unimodularity of the constraint matrix of the forward problem are not immediately inherited by the cut matrix. This improvement is future work. Restricting attention to the value gap of the branch minimum value $m_{\sub}(y^\prime)$ at $0$ (a positive lower bound on $m_{\sub}(y^\prime)$ when $m_{\sub}(y^\prime)>0$), an explicit lower bound via Graver bases (test sets) is obtained in \Cref{subsec:testset_gap,subsec:testset_gap_N}, and for a variant of the inner solver restricted to test-direction cuts, gap lower bounds independent of $R$ (polynomial in the dimension in the M$^\natural$-convex case) are obtained in \Cref{prop:testcut_gap,prop:testcut_mconvex}.
\end{remark}

\subsection{Explicit lower bounds via test sets (single state)}
\label{subsec:testset_gap}

The lower bound of \Cref{subsec:delta_explicit} is of order $R^{-2(d+1)}$ in the range $R$ of the features and is exponentially small in the dimension $d$. In this subsection, using the integer structure of the forward problem (Graver bases), we bound from below explicitly, only in terms of the norm of the test set, the value gap of the branch minimum value $m_{\sub}(y^\prime)$ at $0$ instead of the submodel gap $\delta(y^\prime)$. In the second half, we define a variant of the inner solver whose cut generation is restricted to test directions, and give a lower bound on its gap independent of $R$ (a lower bound polynomial in the dimension in the M$^\natural$-convex case).
In this subsection, let $N=1$ (the extension to $N\ge2$ is given in \Cref{subsec:testset_gap_N}) and $\Theta=\Delta^{d-1}$, and suppose that the feasible region of the forward problem is given in the form
\[
    \mathcal{X}(s^1)=\bigl\{x\in\mathbb{Z}^d \ \bigm|\ Ax\le b,\ 0\le x\le u\bigr\}
    \qquad(A\in\mathbb{Z}^{n_A\times d},\ b\in\mathbb{Z}^{n_A},\ u\in\mathbb{Z}_{\ge0}^d)
\]
(a special case of \Cref{assu:margin}(2)). As in \Cref{prop:delta_explicit_lb}, let $R:=\max_{x\in\mathcal{X}(s^1)}\|x\|_\infty$. Introducing slack variables and setting $\tilde{A}:=[A \mid I_{n_A}]\in\mathbb{Z}^{n_A\times(d+n_A)}$ and $z(x):=(x,\,b-Ax)\in\mathbb{Z}^{d+n_A}$, $x\in\mathcal{X}(s^1)$ is equivalent to $\tilde{A}z=b$ and $0\le z\le\tilde{u}$ ($\tilde{u}$ is a finite upper bound determined by the box $0\le x\le u$). Let $\mathcal{T}:=\mathrm{Gr}(\tilde{A})$ be the Graver basis of $\tilde{A}$ (the set of all nonzero elements of $\ker_{\mathbb{Z}}\tilde{A}$ that are minimal with respect to the conformal order (for $t,w\in\mathbb{Z}^{d+n_A}$, $t\sqsubseteq w$ if $t_lw_l\ge0$ and $|t_l|\le|w_l|$ for every component $l$); \citealp{onn2010nonlinear}), and let $C:=\max_{t\in\mathcal{T}}\|t\|_\infty$. We write the $x$ component of $t\in\mathcal{T}$ as $t^x\in\mathbb{Z}^d$ ($\|t^x\|_\infty\le C$). We further set $C_R:=\min(C,R)$. Since $\mathcal{X}(s^1)\subseteq[0,R]^d$, every test direction $t$ such that both $y^\prime$ and $y^\prime+t^x$ belong to $\mathcal{X}(s^1)$ satisfies $\|t^x\|_\infty\le C_R$. The Graver basis is a \emph{test set} for linear optimization over $\mathcal{X}(s^1)$; that is, for any $\theta\in\mathbb{R}^d$ and $y\in\mathcal{X}(s^1)$, $y$ is a maximizer of $\langle\theta,\cdot\rangle$ if and only if there exists no $t\in\mathcal{T}$ with $0\le z(y)+t\le\tilde{u}$ and $\langle\theta,t^x\rangle>0$ \citep{onn2010nonlinear}.
For a branch $y^\prime\in\mathcal{Y}^{\mathrm{all}}_{\geq}$ (since $N=1$, $y^\prime\in\mathcal{Y}_{\geq x^1}(s^1)\subseteq\mathcal{X}(s^1)$), we define the test-set model by
\begin{equation}
    h_{\mathcal{T}}(\theta)
    :=\max\Bigl(0,\ \max\bigl\{\langle\theta,t^x\rangle \ \bigm|\ t\in\mathcal{T},\ z(y^\prime)+t\ \text{is feasible}\bigr\}\Bigr)
    \label{eq:def_testset_model}
\end{equation}
(feasible means satisfying the equality constraints of $\tilde{A}$ and the box constraints; if there is no such $t$, we read the inner $\max$ as $-\infty$, so that $h_{\mathcal{T}}=0$).

For the sandwich bound, we use the following property of Graver bases. Here, let $D:=d+n_A$, and for $t,w\in\mathbb{Z}^D$, $t\sqsubseteq w$ means that $t_lw_l\ge0$ and $|t_l|\le|w_l|$ for every component $l$ (conformality, i.e., sign compatibility).

\begin{proposition}[Conformal decomposition by the Graver basis; \citealp{onn2010nonlinear}]
    \label{prop:graver_conformal}
    For an integer matrix $\tilde{A}$ and $w\in\ker_{\mathbb{Z}}\tilde{A}\setminus\{0\}$, there exist positive integers $\alpha_1,\ldots,\alpha_r$ and $t_1,\ldots,t_r\in\mathrm{Gr}(\tilde{A})$ such that $w=\sum_{i=1}^r\alpha_it_i$ and $t_i\sqsubseteq w$ for each $i$.
\end{proposition}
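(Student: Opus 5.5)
The plan is a decomposition argument that moves from a finite sum of minimal elements down to the Graver basis itself. The conformal order $\sqsubseteq$ is a partial order on $\mathbb{Z}^D$, and on the set $\{t \in \mathbb{Z}^D : t \sqsubseteq w\}$ it is well-founded and finite. The reason is that $t \sqsubseteq w$ forces $|t_l| \le |w_l|$ for every $l$, so only finitely many such $t$ exist. I would argue by strong induction on $\|w\|_1$ for $w \in \ker_{\mathbb{Z}}\tilde{A}\setminus\{0\}$, proving that $w$ is a sum of Graver elements, each conformal to $w$. Collecting equal terms afterwards gives the positive integer coefficients $\alpha_i$.

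\textbf{Case 1.} If $w$ itself is $\sqsubseteq$-minimal among the nonzero kernel elements, then $w \in \mathrm{Gr}(\tilde{A})$. We take $r=1$, $\alpha_1=1$, $t_1=w$.

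\textbf{Case 2.} Otherwise, choose a $\sqsubseteq$-minimal element $t$ of the finite nonempty set
\[
\{u \in \ker_{\mathbb{Z}}\tilde{A}\setminus\{0\} : u \sqsubseteq w\}.
\]
Minimality in this set implies global minimality, because $\sqsubseteq$ is transitive: any $u' \sqsubseteq t$ also satisfies $u' \sqsubseteq w$. Hence $t \in \mathrm{Gr}(\tilde{A})$ and $t \sqsubseteq w$.

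Now set $w' := w - t \in \ker_{\mathbb{Z}}\tilde{A}$. The key componentwise check is the following. For each $l$, either $t_l=0$, or $t_l$ and $w_l$ have the same sign with $|t_l| \le |w_l|$. In both cases $w'_l = w_l - t_l$ has the sign of $w_l$ (or is zero) and satisfies $|w'_l| = |w_l| - |t_l|$. Therefore $w' \sqsubseteq w$ and
\[
\|w'\|_1 = \|w\|_1 - \|t\|_1 < \|w\|_1 .
\]
If $w'=0$, we are done. Otherwise, the induction hypothesis gives $w' = \sum_i \alpha_i t_i$ with $t_i \sqsubseteq w'$. Transitivity together with $w' \sqsubseteq w$ then gives $t_i \sqsubseteq w$, and appending $t$ completes the decomposition for $w$.

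The only delicate step is verifying that subtraction preserves conformality, namely $w-t \sqsubseteq w$ whenever $t \sqsubseteq w$, and that $\sqsubseteq$ is transitive. Both are routine sign checks done coordinatewise. Alternatively, one could simply cite \citet{onn2010nonlinear}, as the statement does.
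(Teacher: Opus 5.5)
Your proof is correct. The paper gives no proof of its own for this proposition and simply cites \citet{onn2010nonlinear}; your strong induction on $\|w\|_1$ is the standard argument from that reference. That argument peels off a $\sqsubseteq$-minimal kernel element $t\sqsubseteq w$ (which is then globally minimal by transitivity, hence a Graver element) and uses $w-t\sqsubseteq w$ with $\|w-t\|_1=\|w\|_1-\|t\|_1$.
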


\begin{lemma}[Sandwich by the test-set model]
    \label{lem:testset_sandwich}
    Let
    \[
        D_z:=\max\bigl(1,\ \max_{x,x^\prime\in\mathcal{X}(s^1)}\|z(x^\prime)-z(x)\|_1\bigr).
    \]
    For every $\theta\in\mathbb{R}^d$,
    \[
        h_{\mathcal{T}}(\theta)\ \le\ \bar{\ell}^{\sub}(\theta;y^\prime)\ \le\ D_z\,h_{\mathcal{T}}(\theta) .
    \]
\end{lemma}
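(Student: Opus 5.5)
With $N=1$ we have $\bar{\ell}^{\sub}(\theta;y^\prime)=F_1(\theta;s^1)-\langle\theta,y^\prime\rangle=\max_{x\in\mathcal{X}(s^1)}\langle\theta,x-y^\prime\rangle$. I prove the two inequalities separately: the lower one by feasibility, the upper one by the conformal decomposition of \Cref{prop:graver_conformal}.

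\emph{Lower bound.} The constant $0$ is a minorant, obtained by taking $x=y^\prime$. Next let $t\in\mathcal{T}$ be such that $z(y^\prime)+t$ is feasible. Since $\tilde{A}t=0$ and the box constraints hold, the vector $z(y^\prime)+t$ is the slack representation of the point $x:=y^\prime+t^x$. To check this, write $t=(t^x,t^s)$. Then $\tilde{A}t=At^x+t^s=0$ gives $t^s=-At^x$, so $z(y^\prime)+t=(y^\prime+t^x,\,b-A(y^\prime+t^x))$. Its nonnegative slack and box constraints give $x\in\mathcal{X}(s^1)$, and $x$ is integral. Hence $\bar{\ell}^{\sub}(\theta;y^\prime)\ge\langle\theta,t^x\rangle$. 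Taking the maximum over all admissible $t$ and over $0$ gives $h_{\mathcal{T}}\le\bar{\ell}^{\sub}$.

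\emph{Upper bound.} Fix $\theta$ and let $x\in\mathcal{X}(s^1)$ attain $F_1(\theta;s^1)$. Set $w:=z(x)-z(y^\prime)\in\ker_{\mathbb{Z}}\tilde{A}$. If $w=0$, then $\bar{\ell}^{\sub}(\theta;y^\prime)=0\le D_zh_{\mathcal{T}}(\theta)$. Otherwise, \Cref{prop:graver_conformal} gives $w=\sum_i\alpha_it_i$ with positive integers $\alpha_i$ and $t_i\sqsubseteq w$.

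The key observation is that each $z(y^\prime)+t_i$ is feasible. Equality feasibility is immediate from $\tilde{A}t_i=0$. For the box constraints, fix a component $l$. Conformality gives that $t_{i,l}$ has the sign of $w_l$ and $|t_{i,l}|\le|w_l|$. Therefore $z(y^\prime)_l+t_{i,l}$ lies between $z(y^\prime)_l$ and $z(y^\prime)_l+w_l=z(x)_l$. Both endpoints lie in $[0,\tilde{u}_l]$, so the intermediate value does too. Thus each $t_i$ is admissible in \eqref{eq:def_testset_model}, and $\langle\theta,t_i^x\rangle\le h_{\mathcal{T}}(\theta)$.

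Summing, $\bar{\ell}^{\sub}(\theta;y^\prime)=\langle\theta,w^x\rangle=\sum_i\alpha_i\langle\theta,t_i^x\rangle\le\bigl(\sum_i\alpha_i\bigr)h_{\mathcal{T}}(\theta)$, where the inequality uses $h_{\mathcal{T}}\ge0$ and $\alpha_i>0$. It remains to bound $\sum_i\alpha_i$ by $\|w\|_1\le D_z$. All $t_i$ are sign-compatible with $w$, so the components add without cancellation, and $\|w\|_1=\sum_i\alpha_i\|t_i\|_1$. Each $t_i$ is a nonzero integer vector, so $\|t_i\|_1\ge1$. Hence $\sum_i\alpha_i\le\|w\|_1\le D_z$, which proves the upper bound.

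The main obstacle is this feasibility-of-intermediate-steps argument: that each conformal summand, applied alone to $z(y^\prime)$, stays inside the box. It rests entirely on sign compatibility, since each coordinate moves only partway toward $z(x)$ and so remains between two feasible values.
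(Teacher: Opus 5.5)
Your proof is correct and follows the paper's argument essentially step for step. The lower bound comes from feasibility of $y^\prime+t^x$ together with $\bar{\ell}^{\sub}\ge 0$. The upper bound comes from the conformal Graver decomposition of $z(x)-z(y^\prime)$, feasibility of each single step $z(y^\prime)+t_i$ by sign compatibility, and $\sum_i\alpha_i\le\|w\|_1\le D_z$. Your explicit check that $z(y^\prime)+t=z(y^\prime+t^x)$ via $t^s=-At^x$ fills in a detail the paper leaves implicit, as does your separate treatment of the case $w=0$.
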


\begin{proof}
    Lower side: if $z(y^\prime)+t$ is feasible, then its $x$ component $y^\prime+t^x$ belongs to $\mathcal{X}(s^1)$, so $\bar{\ell}^{\sub}(\theta;y^\prime)=\max_{v\in\mathcal{X}(s^1)}\langle\theta,v-y^\prime\rangle\ge\langle\theta,t^x\rangle$. Moreover, $\bar{\ell}^{\sub}(\theta;y^\prime)\ge0$ since $y^\prime\in\mathcal{X}(s^1)$. Together, $\bar{\ell}^{\sub}(\theta;y^\prime)\ge h_{\mathcal{T}}(\theta)$.

    Upper side: let $v\in\mathcal{X}(s^1)$ be a maximizer of $\langle\theta,\cdot-y^\prime\rangle$. Since $z(v)-z(y^\prime)\in\ker_{\mathbb{Z}}\tilde{A}$, if $z(v)\neq z(y^\prime)$, then by \Cref{prop:graver_conformal} we can write
    \[
        z(v)-z(y^\prime)=\sum_i\alpha_i t_i ,
        \qquad
        \alpha_i\in\mathbb{Z}_{>0},\ t_i\in\mathcal{T},\ t_i\sqsubseteq z(v)-z(y^\prime)
    \]
    ($\sqsubseteq$ denotes componentwise conformality). By conformality, $z(y^\prime)+t_i$ lies componentwise between $z(y^\prime)$ and $z(v)$ and satisfies the box constraints, and since $\tilde{A}(z(y^\prime)+t_i)=b$, it is feasible. Hence $\langle\theta,t_i^x\rangle\le h_{\mathcal{T}}(\theta)$ for each $i$; moreover, by conformality, $\sum_i\alpha_i\|t_i\|_1=\|z(v)-z(y^\prime)\|_1\le D_z$, and since $\|t_i\|_1\ge1$ for each $i$, $\sum_i\alpha_i\le D_z$. Therefore,
    \[
        \bar{\ell}^{\sub}(\theta;y^\prime)=\Bigl\langle\theta,\sum_i\alpha_i t_i^x\Bigr\rangle
        =\sum_i\alpha_i\langle\theta,t_i^x\rangle
        \le\Bigl(\sum_i\alpha_i\Bigr)\,h_{\mathcal{T}}(\theta)
        \le D_z\,h_{\mathcal{T}}(\theta) .
    \]
\end{proof}

\begin{proposition}[Graver lower bound on the branch value gap]
    \label{prop:testset_value_gap}
    In the above setting, we have
    \[
        m_{\sub}(y^\prime)\in\{0\}\cup\Bigl[\frac{1}{Q_{\mathcal{T}}},\ \infty\Bigr) ,
        \qquad
        Q_{\mathcal{T}}:=\bigl((d+1)^{1/2}\max(C_R,1)\bigr)^{d+1} .
    \]
    In particular, since $C_R\le C$, we have $Q_{\mathcal{T}}\le((d+1)^{1/2}\max(C,1))^{d+1}$, so this lower bound is bounded independently of $R$ by the norm $C$ of the Graver basis. Moreover, since $C_R\le R$, $Q_{\mathcal{T}}$ is at most $Q$ of \Cref{prop:delta_explicit_lb}. For example, if $A$ (and hence $\tilde{A}=[A\mid I_{n_A}]$) is totally unimodular, then $C=1$ (\citealp{onn2010nonlinear}) and $Q_{\mathcal{T}}\le(d+1)^{(d+1)/2}$.
\end{proposition}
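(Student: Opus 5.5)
The plan is to combine the sandwich \Cref{lem:testset_sandwich} with a Cramer's-rule denominator argument for the test-set model, in the same way as the proof of \Cref{prop:delta_explicit_lb}. By the sandwich, $m_{\sub}(y^\prime)=0$ holds if and only if $\min_{\theta\in\Delta^{d-1}}h_{\mathcal{T}}(\theta)=0$. More quantitatively, since $\bar{\ell}^{\sub}\ge h_{\mathcal{T}}$ pointwise, we get $m_{\sub}(y^\prime)\ge m_{\mathcal{T}}:=\min_{\theta\in\Delta^{d-1}}h_{\mathcal{T}}(\theta)$. Conversely, if $m_{\mathcal{T}}=0$ at some $\theta$, then $\bar{\ell}^{\sub}(\theta;y^\prime)\le D_z\cdot 0=0$, so $m_{\sub}(y^\prime)=0$. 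It therefore suffices to show that $m_{\mathcal{T}}\in\{0\}\cup[1/Q_{\mathcal{T}},\infty)$. In that case, $m_{\sub}(y^\prime)>0$ forces $m_{\mathcal{T}}>0$, hence $m_{\mathcal{T}}\ge 1/Q_{\mathcal{T}}$, and then $m_{\sub}(y^\prime)\ge m_{\mathcal{T}}\ge 1/Q_{\mathcal{T}}$. Note that the upper side of the sandwich is used only to transfer the zero case, so $D_z$ does not enter the bound.

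Next I write $m_{\mathcal{T}}$ as the optimal value of an LP over $(\theta,\tau)$:
\[
\min\{\tau \mid \theta\ge0,\ \textstyle\sum_i\theta_i=1,\ \tau\ge 0,\ \langle t^x,\theta\rangle\le\tau\ (t\in\mathcal{T}_{y^\prime})\},
\]
where $\mathcal{T}_{y^\prime}$ is the finite set of test directions $t$ with $z(y^\prime)+t$ feasible (the Graver basis is finite). The constraint $\tau\ge0$ plays the role of the cut $0$ and has row $(\mathbf{0},-1)$. As in \Cref{prop:delta_explicit_lb}, the feasible polyhedron is nonempty and line-free, and $\theta$ ranges over a compact set, so the optimum is attained at an extreme point. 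That extreme point is determined by $d+1$ linearly independent tight integer rows of the types $(t^x,-1)$, $(\mathbf{1},0)$, $(-e_i,0)$, and $(\mathbf{0},-1)$, with integer right-hand sides.

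For the row norms, I use the remark before the statement that every $t\in\mathcal{T}_{y^\prime}$ has $\|t^x\|_\infty\le C_R$. This holds because both $y^\prime$ and $y^\prime+t^x$ lie in $[0,R]^d$, and by definition $\|t^x\|_\infty\le C$. Hence
\[
\|(t^x,-1)\|_2\le\sqrt{dC_R^2+1}\le (d+1)^{1/2}\max(C_R,1),
\]
while the other row types have norm at most $\sqrt d$ or $1$. Hadamard's inequality then gives $1\le|\det A|\le Q_{\mathcal{T}}$, and Cramer's rule shows that $\tau=m_{\mathcal{T}}$ is a rational with denominator at most $Q_{\mathcal{T}}$. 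Since $m_{\mathcal{T}}\ge 0$, it is either $0$ or at least $1/Q_{\mathcal{T}}$.

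The remaining claims are routine. First, $C_R\le C$ gives $Q_{\mathcal{T}}\le((d+1)^{1/2}\max(C,1))^{d+1}$. Second, $C_R\le R\le 2R$ gives $Q_{\mathcal{T}}\le Q$. Third, in the totally unimodular case, $\tilde{A}=[A\mid I]$ is also totally unimodular, its Graver basis has entries in $\{0,\pm1\}$ by the cited result, so $C=1$ and $Q_{\mathcal{T}}\le(d+1)^{(d+1)/2}$. The only delicate step is the bookkeeping of the $\max(0,\cdot)$ in $h_{\mathcal{T}}$ and the empty case. If $\mathcal{T}_{y^\prime}=\emptyset$, then $h_{\mathcal{T}}\equiv0$ and $m_{\sub}=0$, which is covered by the same argument.
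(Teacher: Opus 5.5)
Your proposal is correct and follows essentially the same route as the paper's proof. In both, the upper side of the sandwich is used only to show that $m_{\sub}(y^\prime)>0$ forces $\min_{\Delta^{d-1}}h_{\mathcal{T}}>0$. That minimum is then the value of an LP with integer rows of types $(t^x,-1)$, $(\mathbf{0},-1)$, $(\mathbf{1},0)$, $(-e_i,0)$, whose extreme-point optimum has denominator at most $Q_{\mathcal{T}}$ by Cramer's rule and Hadamard's inequality, and the lower side of the sandwich transfers the resulting bound $1/Q_{\mathcal{T}}$ to $m_{\sub}(y^\prime)$.
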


\begin{proof}
    If $m_{\sub}(y^\prime)=0$, there is nothing to show. Suppose $m_{\sub}(y^\prime)>0$. By the upper side of \Cref{lem:testset_sandwich}, for every $\theta\in\Delta^{d-1}$, $h_{\mathcal{T}}(\theta)\ge\bar{\ell}^{\sub}(\theta;y^\prime)/D_z\ge m_{\sub}(y^\prime)/D_z>0$, so $\mu:=\min_{\theta\in\Delta^{d-1}}h_{\mathcal{T}}(\theta)>0$. $\mu$ is the optimal value of the linear program
    \[
        \min\Bigl\{\tau \ \Bigm|\ \begin{aligned}&\theta\ge0,\ \textstyle\sum_i\theta_i=1,\ \tau\ge0,\\ &\langle t^x,\theta\rangle\le\tau\ (t\in\mathcal{T}\text{ such that }z(y^\prime)+t\text{ is feasible})\end{aligned}\Bigr\} ,
    \]
    and the rows of its coefficient matrix are integer vectors of type $(t^x,-1)$ ($\ell_2$ norm $\le\sqrt{dC_R^2+1}\le(d+1)^{1/2}\max(C_R,1)$), $(\mathbf{0},-1)$, $(\mathbf{1},0)$, or $(-e_i,0)$. By the same extreme-point, Cramer, and Hadamard argument as in the proof of \Cref{prop:delta_explicit_lb}, $\mu$ is a rational number with denominator at most $Q_{\mathcal{T}}$. Since $\mu>0$, $\mu\ge1/Q_{\mathcal{T}}$, and by the lower side of \Cref{lem:testset_sandwich},
    \[
        m_{\sub}(y^\prime)=\min_{\theta\in\Delta^{d-1}}\bar{\ell}^{\sub}(\theta;y^\prime)\ \ge\ \mu\ \ge\ \frac{1}{Q_{\mathcal{T}}} .
    \]
    Note that $D_z$ is used only to derive $\mu>0$ and does not appear in the value of the lower bound itself.
\end{proof}

The decomposition into test directions in \Cref{lem:testset_sandwich} and the estimates via extreme points, Cramer's rule, and Hadamard's inequality in \Cref{prop:delta_explicit_lb,prop:testset_value_gap} are similar to the arguments used for the lower bound on $\gamma(\ell^{\sub})$ in \citet[Lemma~6.9, Theorem~6.24]{kitaoka2026explicit}; in this section, we apply them to $\delta(y^\prime)$, $m_{\sub}(y^\prime)$, and $\mu(y^\prime)$ (defined after \cref{eq:def_testcut_gap}).

We call a branch $y^\prime$ with $m_{\sub}(y^\prime)=0$ a \emph{zero branch}.

\begin{corollary}[Exact decision of zero branches]
    \label{cor:zero_branch_cert}
    In the above setting, if $\bar{\ell}^{\sub}(\bar{\theta};y^\prime)<1/Q_{\mathcal{T}}$ is observed for some $\bar{\theta}\in\Delta^{d-1}$, then $m_{\sub}(y^\prime)=0$. In particular, $m_{\sub}(y^\prime)=0$ is established as soon as $\mathrm{UB}<1/Q_{\mathcal{T}}$ during the execution of \Cref{alg:dc_inner}. Furthermore, for $N=1$, by \cref{eq:dc_min_min}, $\min_{\theta\in\Theta}\bar{\ell}^{\optdex}(\theta)=\min_{y^{\prime\prime}\in\mathcal{Y}^{\mathrm{all}}_{\geq}}m_{\sub}(y^{\prime\prime})$, so this gives a decision of the realizable case (global minimum value $0$) with the explicit tolerance $1/Q_{\mathcal{T}}$ (for $N\ge2$, see \Cref{cor:zero_branch_cert_N}).
\end{corollary}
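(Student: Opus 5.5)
The claim follows in two short steps from \Cref{prop:testset_value_gap} and \Cref{prop:dc_branch_decomp}.

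\textbf{Step 1 (zero branch).} By \Cref{prop:testset_value_gap}, the branch minimum $m_{\sub}(y^\prime)$ lies in $\{0\}\cup[1/Q_{\mathcal{T}},\infty)$. Hence any observed value below $1/Q_{\mathcal{T}}$ forces $m_{\sub}(y^\prime)=0$. Concretely, suppose some $\bar{\theta}\in\Delta^{d-1}$ satisfies $\bar{\ell}^{\sub}(\bar{\theta};y^\prime)<1/Q_{\mathcal{T}}$. Since $m_{\sub}(y^\prime)$ is the minimum over $\Theta=\Delta^{d-1}$, we get
\[
    m_{\sub}(y^\prime)\le\bar{\ell}^{\sub}(\bar{\theta};y^\prime)<1/Q_{\mathcal{T}} .
\]
The dichotomy then leaves only $m_{\sub}(y^\prime)=0$.

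\textbf{Step 2 (\Cref{alg:dc_inner}).} Apply Step 1 to the running upper bound. In \Cref{alg:dc_inner}, $\mathrm{UB}$ is always the true value $\bar{\ell}^{\sub}(\theta^{j};y^\prime)=\langle\theta^{j},g_j\rangle$ at some evaluated point $\theta^{j}\in\Theta$ (Steps 4--6). So as soon as $\mathrm{UB}<1/Q_{\mathcal{T}}$, Step 1 applies with $\bar{\theta}=\hat{\theta}$. Two small points need checking: the value of $\mathrm{UB}$ must be the exact loss and not a model value, and the points $\theta^j$ must lie in $\Delta^{d-1}$. Both hold by construction of the algorithm.

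\textbf{Step 3 (realizability decision).} For $N=1$, \cref{eq:dc_min_min} gives
\[
    \min_{\theta\in\Theta}\bar{\ell}^{\optdex}(\theta)=\min_{y^{\prime\prime}\in\mathcal{Y}^{\mathrm{all}}_{\geq}}m_{\sub}(y^{\prime\prime}) .
\]
Each $m_{\sub}(y^{\prime\prime})$ satisfies the same dichotomy, since \Cref{prop:testset_value_gap} holds for every branch. The constant $Q_{\mathcal{T}}$ depends only on $d$ and $C_R$, and not on the branch. Therefore the global minimum also lies in $\{0\}\cup[1/Q_{\mathcal{T}},\infty)$. Consequently, observing any $\theta$ with a value below $1/Q_{\mathcal{T}}$ decides that the global minimum is $0$, i.e., that the instance is realizable. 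Equivalently, global minimum values of at least $1/Q_{\mathcal{T}}$ certify non-realizability.

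\textbf{Main check.} The only step needing care is confirming that $Q_{\mathcal{T}}$ is uniform over branches. Although $C_R$ is defined through directions feasible from $y^\prime$, the bound $\|t^x\|_\infty\le C_R$ holds for every branch. Beyond that, the corollary is immediate from the proposition.
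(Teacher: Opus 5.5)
Your proposal is correct and follows the paper's argument. You combine $m_{\sub}(y^\prime)\le\bar{\ell}^{\sub}(\bar{\theta};y^\prime)$ with the dichotomy of \Cref{prop:testset_value_gap}, note that a finite $\mathrm{UB}$ is a true value of $\bar{\ell}^{\sub}(\cdot\,;y^\prime)$ at an evaluated point of $\Theta$, and pass the dichotomy through the finite minimum in \cref{eq:dc_min_min}, which you spell out more explicitly than the paper. One small correction: $C_R:=\min(C,R)$ is defined directly and does not depend on $y^\prime$, not through feasible directions, so the branch-uniformity of $Q_{\mathcal{T}}$ is immediate.
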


\begin{proof}
    This follows from $m_{\sub}(y^\prime)\le\bar{\ell}^{\sub}(\bar{\theta};y^\prime)<1/Q_{\mathcal{T}}$ and \Cref{prop:testset_value_gap}. The same holds for $\mathrm{UB}$, since it is the minimum of the values of $\bar{\ell}^{\sub}(\cdot\,;y^\prime)$ at the evaluation points.
\end{proof}

Next, we define a variant of the inner solver whose cut generation is restricted to test directions (\Cref{def:testcut_variant}), and give a lower bound on its gap independent of $R$ (\Cref{prop:testcut_gap}), a lower bound polynomial in the dimension in the M-convex and M$^\natural$-convex cases (\Cref{prop:testcut_mconvex}), and the connection to the decision of zero branches (\Cref{cor:testcut_zero_decision}).

\begin{definition}[Test-direction-cut inner solver]
    \label{def:testcut_variant}
    Let $\mathcal{T}_{y^\prime}:=\{t^x \mid t\in\mathcal{T},\ z(y^\prime)+t\ \text{is feasible}\}$.
    A test-direction oracle is an oracle that, given $\theta\in\mathbb{R}^d$, returns the value $\max_{g\in\mathcal{T}_{y^\prime}}\langle\theta,g\rangle$ and a cut $g\in\mathcal{T}_{y^\prime}$ attaining the maximum (a maximizing cut) (its implementability depends on the structure: in the M-convex case it reduces to feasibility checks of at most $d(d-1)$ directions, and of at most $d(d+1)$ directions in the M$^\natural$-convex case, whereas for a general Graver basis it can be implemented by computing the Graver basis (\citealp{onn2010nonlinear,deloera2012algebraic}) and checking the feasibility of each of its elements).
    The test-direction-cut inner solver is \Cref{alg:dc_inner} with the objective function replaced by $h_{\mathcal{T}}$ (\cref{eq:def_testset_model}), cut generation replaced from the forward-problem oracle by the test-direction oracle, and the zero cut $g=\mathbf{0}$ (corresponding to $h_{\mathcal{T}}\ge0$) always added as an initial cut.
\end{definition}

$h_{\mathcal{T}}$ is the maximum of the cuts of the finite family $\mathcal{T}_{y^\prime}\cup\{\mathbf{0}\}$, and the test-direction oracle returns the value and a maximizing cut at each evaluation point, so the proofs of \Cref{lem:submodel_gap,lem:lb_snap,theo:cpfinish_exact} go through verbatim with the family of cuts $\{g_v \mid v\in\mathcal{Y}^{\mathrm{all}}\}$ replaced by $\mathcal{T}_{y^\prime}\cup\{\mathbf{0}\}$. We define the corresponding submodel gap by
\begin{equation}
    \delta_{\mathcal{T}}(y^\prime)
    :=\mu(y^\prime)-\max\bigl\{\,m^{\mathcal{T}}_{G}(y^\prime)\ \bigm|\ \emptyset\neq G\subseteq\mathcal{T}_{y^\prime}\cup\{\mathbf{0}\},\ m^{\mathcal{T}}_{G}(y^\prime)<\mu(y^\prime)\,\bigr\} .
    \label{eq:def_testcut_gap}
\end{equation}
Here $m^{\mathcal{T}}_{G}(y^\prime):=\min_{\theta\in\Delta^{d-1}}\max_{g\in G}\langle\theta,g\rangle$ and $\mu(y^\prime):=\min_{\theta\in\Delta^{d-1}}h_{\mathcal{T}}(\theta)$, and as in \cref{eq:def_submodel_gap}, if there is no such $G$, we read $\delta_{\mathcal{T}}(y^\prime):=+\infty$.

\begin{proposition}[Lower bound on the test-direction gap independent of $R$]
    \label{prop:testcut_gap}
    In the setting of \Cref{prop:testset_value_gap}, we have
    \[
        \delta_{\mathcal{T}}(y^\prime)\ \ge\ \frac{1}{Q_{\mathcal{T}}^2} .
    \]
    In particular, since $Q_{\mathcal{T}}\le((d+1)^{1/2}\max(C,1))^{d+1}$, this lower bound is bounded by a quantity independent of $R$. Furthermore, when the test-direction-cut inner solver is run with $0\le\epsilon_{\mathrm{in}}<\delta_{\mathcal{T}}(y^\prime)$, analogues of \Cref{lem:lb_snap,theo:cpfinish_exact} hold: at stopping with $\mathrm{UB}-\mathrm{LB}\le\epsilon_{\mathrm{in}}$, $\mathrm{LB}=\mu(y^\prime)$ is identified exactly, and finishing with the selection rule of the cutting-plane method reaches an exact minimizer of $h_{\mathcal{T}}$ in finitely many iterations.
\end{proposition}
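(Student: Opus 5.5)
The plan is to mirror the proof of \Cref{prop:delta_explicit_lb}, with the family of vertex cuts $\{g_v\}$ replaced by the finite family of test-direction cuts $\mathcal{T}_{y^\prime}\cup\{\mathbf{0}\}$. The only change is that the row-norm bound now comes from $C_R$ instead of $2R$.

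\textbf{Step 1 (rationality).} First I show that every submodel value $m^{\mathcal{T}}_G(y^\prime)$ with $\emptyset\neq G\subseteq\mathcal{T}_{y^\prime}\cup\{\mathbf{0}\}$ is a rational number $p/q$ with $1\le q\le Q_{\mathcal{T}}$. Since $G$ is a subset of this family and $h_{\mathcal{T}}$ is the max over the whole family, $\mu(y^\prime)=m^{\mathcal{T}}_{\mathcal{T}_{y^\prime}\cup\{\mathbf{0}\}}(y^\prime)$ is a special case.

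The value $m^{\mathcal{T}}_G(y^\prime)$ is the optimal value of the LP $\min\{\tau\mid \theta\ge0,\ \sum_i\theta_i=1,\ \langle g,\theta\rangle\le\tau\ (g\in G)\}$. Its feasible polyhedron is nonempty and contains no line: $\theta$ ranges over the simplex, and $\tau$ is bounded below because $G\neq\emptyset$. So the optimum is attained at an extreme point, which is determined by $d+1$ linearly independent tight rows. These rows are integer vectors of three types:
\begin{itemize}
\item $(g,-1)$ with $g\in G$, which is either $t^x$ or $\mathbf{0}$;
\item $(\mathbf{1},0)$;
\item $(-e_i,0)$.
\end{itemize}
For $g=t^x\in\mathcal{T}_{y^\prime}$, both $y^\prime$ and $y^\prime+t^x$ lie in $\mathcal{X}(s^1)\subseteq[0,R]^d$, so $\|t^x\|_\infty\le C_R$, as noted before \Cref{lem:testset_sandwich}. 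Hence $\|(t^x,-1)\|_2\le\sqrt{dC_R^2+1}\le(d+1)^{1/2}\max(C_R,1)$. The row $(\mathbf{0},-1)$ has norm $1$, $(\mathbf{1},0)$ has norm $\sqrt d$, and $(-e_i,0)$ has norm $1$; all of these are at most $(d+1)^{1/2}\max(C_R,1)$. Hadamard's inequality then gives $1\le|\det A|\le Q_{\mathcal{T}}$ for the nonsingular integer matrix $A$ of tight rows. Cramer's rule shows that $\tau$ has denominator $|\det A|$.

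\textbf{Step 2 (gap).} If no $G$ satisfies $m^{\mathcal{T}}_G(y^\prime)<\mu(y^\prime)$, then $\delta_{\mathcal{T}}(y^\prime)=+\infty$ and the bound holds trivially. Otherwise, the maximum in \cref{eq:def_testcut_gap} is attained, because there are finitely many $G$. Writing $\mu(y^\prime)=p_1/q_1$ and $m^{\mathcal{T}}_G(y^\prime)=p_2/q_2$ with $q_1,q_2\le Q_{\mathcal{T}}$, the two values are distinct, so
\[
\delta_{\mathcal{T}}(y^\prime)=\frac{|p_1q_2-p_2q_1|}{q_1q_2}\ge\frac{1}{Q_{\mathcal{T}}^2}.
\]
The $R$-independence follows from $C_R\le C$, which gives $Q_{\mathcal{T}}\le((d+1)^{1/2}\max(C,1))^{d+1}$.

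\textbf{Step 3 (algorithmic consequences).} Here I re-run the proofs of \Cref{lem:submodel_gap}(ii), \Cref{lem:lb_snap} and \Cref{theo:cpfinish_exact} with the cut family $\mathcal{T}_{y^\prime}\cup\{\mathbf{0}\}$.
\begin{itemize}
\item The cut set $\mathcal{G}$ is always $\{\mathbf{0}\}$ together with the maximizing cuts returned by the test-direction oracle. So $\mathrm{LB}=m^{\mathcal{T}}_G(y^\prime)\le\mu(y^\prime)$ for some $G$ of this form.
\item $\mathrm{UB}$ is the minimum of the true values $h_{\mathcal{T}}(\theta^j)$, which are exact because the oracle returns a maximizing cut, so $\mathrm{UB}\ge\mu(y^\prime)$.
\item At stopping, $\mathrm{LB}\ge\mu(y^\prime)-\epsilon_{\mathrm{in}}>\mu(y^\prime)-\delta_{\mathcal{T}}(y^\prime)$, which forces $\mathrm{LB}=\mu(y^\prime)$.
\end{itemize}
For finishing, each cutting-plane iteration either returns a cut already in $\mathcal{G}$ or adds a new one. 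If the cut is already present, then $h_{\mathcal{T}}(\theta^j)\le\mathrm{LB}=\mu(y^\prime)$, so $\mathrm{UB}=\mathrm{LB}$ and the solver stops at an exact minimizer. Since the cut family is finite, only finitely many new cuts can be added, so finishing terminates in finitely many iterations.

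The main obstacle I expect is Step 3's bookkeeping. Specifically, I must confirm that with the zero cut always present, the values $\mathrm{LB}$ stay inside the finite family of submodel values $m^{\mathcal{T}}_G$ used in \cref{eq:def_testcut_gap}. I also must handle the degenerate case $\mathcal{T}_{y^\prime}=\emptyset$: then $h_{\mathcal{T}}\equiv0$ and $\delta_{\mathcal{T}}=+\infty$, and every claim is immediate.
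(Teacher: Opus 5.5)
Your proposal is correct and is essentially the paper's proof: the extreme-point/Cramer/Hadamard argument, with row norms controlled by $\|t^x\|_\infty\le C_R$, shows that every $m^{\mathcal{T}}_{G}(y^\prime)$ (including $\mu(y^\prime)$) is rational with denominator at most $Q_{\mathcal{T}}$, which gives the $1/Q_{\mathcal{T}}^2$ separation; the algorithmic part is the same re-run of \Cref{lem:submodel_gap,lem:lb_snap,theo:cpfinish_exact} with the finite cut family $\mathcal{T}_{y^\prime}\cup\{\mathbf{0}\}$. Your extra bookkeeping spells out what the paper compresses into ``go through verbatim'': the always-present zero cut gives $h_{\mathcal{T}}(\theta^j)\le\mathrm{LB}$ whenever a returned cut is already in $\mathcal{G}$, and the case $\mathcal{T}_{y^\prime}=\emptyset$ is handled separately.
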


\begin{proof}
    Each $m^{\mathcal{T}}_{G}(y^\prime)$ is the optimal value of an LP consisting of integer cuts whose components have absolute values at most $C_R$, so by the same extreme-point, Cramer, and Hadamard argument as in the proof of \Cref{prop:testset_value_gap}, it is a rational number with denominator at most $Q_{\mathcal{T}}$. The difference of two distinct values is at least $1/Q_{\mathcal{T}}^2$, which gives the first half. The second half holds because the proofs of \Cref{lem:submodel_gap,lem:lb_snap,theo:cpfinish_exact} use only ``the family of cuts is finite'' and ``the oracle returns a maximizing cut at each evaluation point''.
\end{proof}

Next, we consider the case where the feasible region of the forward problem is an M-convex set. A set $\mathcal{M}\subseteq\mathbb{Z}^d$ is an \emph{M-convex set} if $\mathcal{M}$ is nonempty and, for any $x,y\in \mathcal{M}$ and any $i$ with $x_i>y_i$, there exists $j$ with $x_j<y_j$ such that $x-e_i+e_j\in \mathcal{M}$ and $y+e_i-e_j\in \mathcal{M}$ (exchange axiom). A set $\mathcal{M}\subseteq\mathbb{Z}^d$ is an \emph{M$^\natural$-convex set} if $\mathcal{M}$ is nonempty and, for any $x,y\in\mathcal{M}$ and any $i$ with $x_i>y_i$, either $x-e_i\in\mathcal{M}$ and $y+e_i\in\mathcal{M}$, or there exists $j$ with $x_j<y_j$ such that $x-e_i+e_j\in\mathcal{M}$ and $y+e_i-e_j\in\mathcal{M}$ \citep{murota2003discrete}. Every M-convex set is M$^\natural$-convex.
For linear optimization over M-convex sets, the following optimality criterion holds.

\begin{proposition}[Optimality criterion for linear optimization over M-convex sets; \citealp{murota2003discrete}]
    \label{prop:mconvex_optimality}
    For a bounded M-convex set $\mathcal{M}\subset\mathbb{Z}^d$, $\theta\in\mathbb{R}^d$, and $y\in \mathcal{M}$, $y$ is a maximizer of $\langle\theta,\cdot\rangle$ over $\mathcal{M}$ if and only if $\langle\theta,e_j-e_i\rangle\le0$ for all $i\neq j$ with $y+e_j-e_i\in \mathcal{M}$.
\end{proposition}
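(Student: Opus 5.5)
The plan is to prove the two directions separately. Necessity is immediate, and sufficiency uses an extremal choice of optimizer together with the exchange axiom.

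\emph{Necessity.} If $y$ maximizes $\langle\theta,\cdot\rangle$ over $\mathcal{M}$ and $y+e_j-e_i\in\mathcal{M}$, then optimality gives $\langle\theta,y+e_j-e_i\rangle\le\langle\theta,y\rangle$. This is exactly $\langle\theta,e_j-e_i\rangle\le0$.

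\emph{Sufficiency.} Assume the local condition holds at $y$. Since $\mathcal{M}\subset\mathbb{Z}^d$ is bounded, it is finite and nonempty, so the set of maximizers of $\langle\theta,\cdot\rangle$ over $\mathcal{M}$ is nonempty. Among all maximizers, pick one, $x$, minimizing $\|x-y\|_1$. The goal is to show $x=y$, which makes $y$ itself a maximizer. Suppose instead $x\neq y$.

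First I need an index $i$ with $y_i>x_i$. If $x\ge y$ componentwise, take any $k$ with $x_k>y_k$ and apply the exchange axiom to the pair $(x,y)$ at $k$. It produces some $j$ with $x_j<y_j$, which contradicts $x\ge y$. So such an $i$ exists.

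Now apply the exchange axiom to the pair $(y,x)$ at this $i$. It yields $j$ with $y_j<x_j$ (so $j\neq i$) such that $y-e_i+e_j\in\mathcal{M}$ and $x+e_i-e_j\in\mathcal{M}$. Membership $y+e_j-e_i\in\mathcal{M}$ lets me invoke the local hypothesis, giving $\theta_j-\theta_i\le0$. Set $x':=x+e_i-e_j\in\mathcal{M}$. Then
\[
\langle\theta,x'\rangle=\langle\theta,x\rangle+\theta_i-\theta_j\ge\langle\theta,x\rangle,
\]
so $x'$ is also a maximizer.

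Because $x_i<y_i$ and $x_j>y_j$, moving coordinate $i$ up by one and coordinate $j$ down by one brings both closer to $y$. Hence $\|x'-y\|_1=\|x-y\|_1-2$. This contradicts the minimality of $\|x-y\|_1$, so $x=y$, and $y$ is a maximizer.

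The only mildly delicate point is producing an index with $y_i>x_i$. It is tempting to use the constant coordinate-sum property of M-convex sets, but the argument above gets it directly from the exchange axiom, so that property is never needed. Everything else is bookkeeping of the exchange step.
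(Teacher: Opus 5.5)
Your proof is correct and complete. Choosing a maximizer nearest to $y$ in $\ell_1$, using the exchange axiom to find an index with $y_i>x_i$, and then using it again to produce the pair $y-e_i+e_j$, $x+e_i-e_j$ is the standard exchange argument for the M-optimality criterion. If you run it instead as an induction on $\|x-y\|_1$ over arbitrary $x\in\mathcal{M}$, it does not even need boundedness.

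The paper gives no proof of its own here; it imports the result from \citet{murota2003discrete}. There it is the special case (the indicator of $\mathcal{M}$ plus a linear function) of the local-versus-global optimality criterion for M-convex functions.
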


\begin{proposition}[Lower bound polynomial in the dimension for the M-convex and M$^\natural$-convex cases]
    \label{prop:testcut_mconvex}
    Consider the case where $\mathcal{X}(s^1)\subset\mathbb{Z}^d$ is a bounded M-convex set, let $\mathcal{T}_{\mathrm{M}}:=\{e_i-e_j \mid i\neq j\}$ and $\mathcal{T}_{\mathrm{M},y^\prime}:=\{g\in\mathcal{T}_{\mathrm{M}} \mid y^\prime+g\in\mathcal{X}(s^1)\}$, and replace $\mathcal{T}_{y^\prime}$ in \cref{eq:def_testset_model} and \cref{eq:def_testcut_gap} by $\mathcal{T}_{\mathrm{M},y^\prime}$. Then:

    (a) $\mathcal{T}_{\mathrm{M}}$ is a test set for linear optimization over $\mathcal{X}(s^1)$. That is, for every $\theta\in\mathbb{R}^d$ and $y\in\mathcal{X}(s^1)$, $y$ is a maximizer of $\langle\theta,\cdot\rangle$ if and only if there is no $g\in\mathcal{T}_{\mathrm{M}}$ with $y+g\in\mathcal{X}(s^1)$ and $\langle\theta,g\rangle>0$.

    (b) For every nonempty $G\subseteq\mathcal{T}_{\mathrm{M},y^\prime}\cup\{\mathbf{0}\}$, $m^{\mathcal{T}}_{G}(y^\prime)$ is a rational number with denominator at most $d^2$; hence we have
    \[
        \delta_{\mathcal{T}_{\mathrm{M}}}(y^\prime)\ \ge\ \frac{1}{d^4} ,
    \]
    and we have
    \[
        \mu(y^\prime)>0\ \Longrightarrow\ \mu(y^\prime)\ \ge\ \frac{1}{d^2} .
    \]
    For M$^\natural$-convex sets, the same lower bounds hold with $\mathcal{T}=\{e_i-e_j\}\cup\{\pm e_i\}$ (that $\mathcal{T}$ is a test set follows from the optimality criterion for linear optimization over M$^\natural$-convex sets; \citealp{murota2003discrete}).
\end{proposition}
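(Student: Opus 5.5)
The proof has two parts. Part (a) is the test-set property. Part (b) is a determinant bound of the same extreme-point / Cramer type used in \Cref{prop:delta_explicit_lb,prop:testset_value_gap}. The Hadamard estimate is replaced by a sharper bound that exploits the sparsity of the cuts $e_i-e_j$.

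\textbf{Part (a).} This is essentially \Cref{prop:mconvex_optimality} rewritten. Suppose $y$ is a maximizer and $y+g\in\mathcal{X}(s^1)$ with $g\in\mathcal{T}_{\mathrm{M}}$. Then $\langle\theta,g\rangle=\langle\theta,(y+g)-y\rangle\le0$, so no such $g$ has $\langle\theta,g\rangle>0$. Conversely, suppose no feasible $g=e_j-e_i$ has $\langle\theta,g\rangle>0$. This is exactly the hypothesis of \Cref{prop:mconvex_optimality}, so $y$ is a maximizer. For M$^\natural$-convex sets I would argue the same way from the M$^\natural$ optimality criterion of \citet{murota2003discrete}, which additionally tests $\pm e_i$. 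Alternatively, one can use the standard embedding of an M$^\natural$-convex set into an M-convex set in $\mathbb{Z}^{d+1}$ via $x\mapsto(x,-\sum_i x_i)$; there $e_i-e_{d+1}$ and $e_{d+1}-e_i$ project to $\pm e_i$.

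\textbf{Part (b), setup.} Fix a nonempty $G$. The value $m^{\mathcal{T}}_G(y^\prime)$ is the optimal value of the LP
\[
\min\{\tau \mid \theta\ge0,\ \textstyle\sum_i\theta_i=1,\ \langle g,\theta\rangle\le\tau\ (g\in G)\}.
\]
As in \Cref{prop:delta_explicit_lb}, its feasible polyhedron contains no line and the optimum is finite. So the optimum is attained at an extreme point, which is the unique solution of a nonsingular $(d+1)\times(d+1)$ integer system $A(\theta,\tau)=c$. The rows of $A$ are of the form $(e_i-e_j,-1)$, $(\mathbf{0},-1)$, $(\mathbf{1},0)$, or $(-e_i,0)$, and $c$ is the vector with a $1$ in the normalization row and zeros elsewhere. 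By Cramer's rule, $\tau=\det A_\tau/\det A$, so it suffices to show $|\det A|\le d^2$.

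\textbf{Part (b), the determinant bound.} This is the main obstacle, since Hadamard only gives an exponential bound. My plan is as follows.
\begin{enumerate}
\item Eliminate the rows $-e_i$, i.e.\ the coordinates with $\theta_i=0$, by expanding along them. This costs only a sign and leaves a smaller system of the same type.
\item Subtract one cut row $(g_0,-1)$ from every other row containing $-1$ in the $\tau$ column. This leaves a single nonzero entry in that column, and expanding along it leaves a $k\times k$ matrix.
\item That matrix consists of one all-ones row $\mathbf{1}$ together with rows of the form $g-g_0$, where $g$ and $g_0$ are each zero or some $e_a-e_b$.
\item Rows of type $e_a-e_b$ form the incidence matrix of a directed graph, which is totally unimodular. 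Each difference $g-g_0$ is then a sum of at most two such incidence rows.
\item Expanding the determinant by multilinearity across this splitting, together with the single $\mathbf{1}$ row, should give $|\det|\le d^2$. The key fact is that any square submatrix of an incidence matrix extended by the row $\mathbf{1}$ has determinant of absolute value at most the number of vertices in the relevant component.
\end{enumerate}
If this route proves messy, I would instead use the alternative of taking $\tau$'s row to be a cut row directly. In that case one bounds $|\det|$ by the number of spanning-tree-type terms, which is at most $d\cdot d$.

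\textbf{Part (b), conclusion.} Granting the bound, every $m^{\mathcal{T}}_G(y^\prime)$ is a rational $p/q$ with $1\le q\le d^2$. The difference of two distinct such values is therefore at least $1/d^4$, which gives $\delta_{\mathcal{T}_{\mathrm{M}}}(y^\prime)\ge1/d^4$. Since $0=0/1$ is among these values, a positive $\mu(y^\prime)=p/q$ satisfies $\mu(y^\prime)\ge1/q\ge1/d^2$. For the M$^\natural$ case, the same argument applies with the additional rows $(\pm e_i,-1)$. These correspond to incidence rows of the M-convex lift with the extra vertex $d+1$, so I expect the bound to carry over.
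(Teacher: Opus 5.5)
Your overall plan parallels the paper's. Part (a) is a restatement of \Cref{prop:mconvex_optimality} and its M$^\natural$ analogue. Part (b) goes extreme point $\to$ Cramer $\to$ $|\det A|\le d^2$ $\to$ the $1/d^4$ and $1/d^2$ conclusions.

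\textbf{The gap.} It is the determinant bound, which you call the main obstacle but leave at ``should give''. As written, step~5 does not deliver $d^2$. Splitting each of the $k-1$ rows $g_\ell-g_0$ into two incidence-type rows and expanding multilinearly produces $2^{k-1}$ determinants. Your per-term bound (at most $k$, via the all-ones row) then gives only an exponential bound. The missing observation is that every term taking the $-g_0$ part in two or more rows has a repeated row and vanishes. So only $k$ terms survive: $\det[\mathbf{1};g_1;\dots;g_{k-1}]$ and the $k-1$ terms with $-g_0$ in a single slot. Each has absolute value at most $k$, giving $k^2\le d^2$. Your fallback via ``spanning-tree-type terms'' is likewise only asserted.

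\textbf{The paper's route.} Your row reduction plus multilinear expansion is the Laplace expansion along the $\tau$ column in disguise, and the paper does that expansion directly.
\begin{itemize}
\item The $\tau$ column has at most $d$ nonzero entries, since the $(\mathbf{1},0)$ row has $0$ there.
\item Each complementary $d\times d$ minor $M'$ contains the all-ones row exactly once. Expanding $M'$ along it gives $d$ cofactors of size $(d-1)\times(d-1)$.
\item Each cofactor's rows have at most one $+1$ and at most one $-1$. Their transposes are therefore submatrices of directed-graph incidence matrices, hence totally unimodular.
\item So $|\det M'|\le d$ and $|\det A|\le d^2$.
\end{itemize}

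\textbf{Two smaller points.} First, your step~1 does not leave a system ``of the same type'': deleting column $i$ turns $e_i-e_j$ into $-e_j$. The step is unnecessary anyway. The invariant that actually matters (at most one $+1$ and one $-1$ per row) is preserved, and it also covers the M$^\natural$ rows $(\pm e_i,-1)$ directly, so no lift to $\mathbb{Z}^{d+1}$ is needed. Second, you use without comment that $(\mathbf{1},0)$ is among the tight rows of $A$. It must be: otherwise the tight system is homogeneous, and its unique solution $(\theta,\tau)=0$ contradicts $\sum_i\theta_i=1$.
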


\begin{proof}
    (a) is \Cref{prop:mconvex_optimality} applied to $\mathcal{M}=\mathcal{X}(s^1)$.
    We show (b). As in the proof of \Cref{prop:delta_explicit_lb}, an extreme point of the LP of $m^{\mathcal{T}}_{G}(y^\prime)$ is determined by a nonsingular integer matrix $A$ consisting of $(d+1)$ linearly independent tight constraints, and each row of $A$ is of type $(e_i-e_j,\,-1)$, $(\mathbf{0},\,-1)$, $(\mathbf{1},\,0)$, or $(-e_k,\,0)$. Since the equality constraint $\sum_i\theta_i=1$ is tight at every feasible point, we can choose these $d+1$ tight constraints so that they include the row $(\mathbf{1},\,0)$ (extend $\{(\mathbf{1},0)\}$ to a basis of the row space of the tight constraints). By cofactor expansion along the $\tau$ column, $|\det A|$ is bounded by the sum, over ``the rows having a nonzero entry in the $\tau$ column (at most $d$ rows, since the $\tau$ entry of $(\mathbf{1},0)$ is $0$)'', of the absolute values of the determinants of the $d\times d$ submatrices $M^\prime$ formed by the $\theta$ parts of the remaining $d$ rows. Each $M^\prime$ contains exactly one all-ones row, and its other rows are of type $e_i-e_j$, of type $-e_k$, or zero rows. If it contains a zero row, $\det M^\prime=0$. Otherwise, cofactor expansion along the all-ones row bounds $|\det M^\prime|$ by the sum of $d$ determinants of $(d-1)\times(d-1)$ matrices in which each row has at most one $+1$ and at most one $-1$; their transposes are submatrices of the incidence matrix of a directed graph and are totally unimodular, so $|\det M^\prime|\le d$. Hence $1\le|\det A|\le d^2$, and $m^{\mathcal{T}}_{G}(y^\prime)$ is a rational number with denominator at most $d^2$. The difference of two distinct values is at least $1/d^4$, and positive values are at least $1/d^2$. Adding rows of type $\pm e_i$ does not change the structure ``each row has at most one $+1$ and at most one $-1$'', so the same bound holds in the M$^\natural$-convex case.
\end{proof}

\begin{remark}[Branch minima in the M-convex case]
    \label{rem:mconvex_degenerate}
    Let $\Theta=\Delta^{d-1}$ and let $\mathcal{X}(s^1)$ be an M-convex set. Then $m_{\sub}(y^\prime)=\mu(y^\prime)=0$ for every branch $y^\prime$. Indeed, by the exchange axiom, an M-convex set lies on a hyperplane $\{x \mid \sum_i x_i=\text{const}\}$, so for the uniform weight $\bar{\theta}:=(1/d,\ldots,1/d)\in\Delta^{d-1}$, we have $\langle\bar{\theta},v-y^\prime\rangle=0$ for every $v\in\mathcal{X}(s^1)$ and $\langle\bar{\theta},e_i-e_j\rangle=0$ for every $i\ne j$. Hence $\bar{\ell}^{\sub}(\bar{\theta};y^\prime)=h_{\mathcal{T}}(\bar{\theta})=0$, and together with nonnegativity, $m_{\sub}(y^\prime)=\mu(y^\prime)=0$. Therefore, in the M-convex case, no branch satisfies the hypothesis $\mu(y^\prime)>0$ of the latter half of \Cref{prop:testcut_mconvex}(b), and \Cref{cor:testcut_zero_decision} decides that every branch is a zero branch. The latter half of \Cref{prop:testcut_mconvex}(b) and \Cref{cor:testcut_zero_decision} are used in the case of M$^\natural$-convex sets that are not M-convex to identify branches with a positive branch minimum.
\end{remark}

\begin{proposition}[Exact minimization of the OS loss in the M$^\natural$-convex case]
    \label{prop:mnatural_uniform_exact}
    Let $\Theta=\Delta^{d-1}$, and for each $n$, let $\mathcal{X}(s^n)\subset\mathbb{Z}^d$ be a bounded M$^\natural$-convex set (including M-convex sets) and $x^n\in\mathcal{X}(s^n)$. Then the uniform weight $\bar{\theta}:=(1/d,\ldots,1/d)\in\Delta^{d-1}$ satisfies $\ell^{\optdex}(\bar{\theta};x^n,s^n)=0$ for every $n$. Hence $\bar{\theta}$ is a global minimizer of $\bar{\ell}^{\optdex}$ over $\Theta$.
\end{proposition}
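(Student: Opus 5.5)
The plan is to show, for each $n$ separately, that some maximizer of $\langle\bar{\theta},\cdot\rangle$ over $\mathcal{X}(s^n)$ dominates $x^n$. By the characterization in \Cref{sec:performance_metric}, this is equivalent to $\ell^{\optdex}(\bar{\theta};x^n,s^n)=0$. Global optimality then follows at once: the empirical mean is $0$, and \Cref{prop:dc_global_certificate} with $\epsilon=0$ (equivalently, nonnegativity of the OS loss) shows that $\bar{\theta}$ minimizes $\bar{\ell}^{\optdex}$ over $\Theta$.

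Fix $n$ and write $\mathcal{M}:=\mathcal{X}(s^n)$, which is a finite set because it is bounded and integral. Since $\langle\bar{\theta},x\rangle=\frac{1}{d}\sum_i x_i$, maximizing under $\bar{\theta}$ is maximizing the coordinate sum. Take $y\in\argmax_{x\in\mathcal{X}_{\geq x^n}(s^n)}\sum_i x_i$. This set is nonempty and finite because $x^n\in\mathcal{X}_{\geq x^n}(s^n)$. It then suffices to show that $y$ also maximizes $\sum_i x_i$ over all of $\mathcal{M}$, since then $y\geq x^n$ is an optimal solution under $\bar{\theta}$.

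To prove this, invoke the test-set property for M$^\natural$-convex sets recalled in \Cref{prop:testcut_mconvex}. For linear optimization over $\mathcal{M}$, $\mathcal{T}=\{e_i-e_j\}\cup\{\pm e_i\}$ is a test set, i.e.\ $y$ is a maximizer of $\langle\theta,\cdot\rangle$ iff there is no $g\in\mathcal{T}$ with $y+g\in\mathcal{M}$ and $\langle\theta,g\rangle>0$; this is the M$^\natural$ optimality criterion of \citet{murota2003discrete}. Suppose $y$ were not a maximizer under $\bar{\theta}$. Then some $g\in\mathcal{T}$ with $y+g\in\mathcal{M}$ has $\langle\bar{\theta},g\rangle>0$. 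The directions $e_i-e_j$ have $\langle\bar{\theta},g\rangle=0$ and $-e_i$ has $\langle\bar{\theta},g\rangle=-1/d$, so necessarily $g=e_i$ for some $i$. But then $y+e_i\in\mathcal{M}$ and $y+e_i\geq y\geq x^n$, so $y+e_i\in\mathcal{X}_{\geq x^n}(s^n)$ and it has a strictly larger coordinate sum than $y$. This contradicts the choice of $y$.

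The only step that needs care is the M$^\natural$ optimality criterion, which the paper uses but does not prove. If a self-contained argument is wanted, I would derive the needed special case directly from the exchange axiom. Let $z\in\mathcal{M}$ with $\sum_i z_i>\sum_i y_i$, chosen so that $\|z-y\|_1$ is minimal among such points. Then some $i$ has $z_i>y_i$. The M$^\natural$ exchange axiom applied to $(z,y,i)$ gives one of two cases. In the first case, $y+e_i\in\mathcal{M}$, which already yields the contradiction above. In the second case, there is $j$ with $z_j<y_j$ and $z-e_i+e_j\in\mathcal{M}$; this point has the same coordinate sum as $z$ and is strictly closer to $y$ in $\ell_1$, contradicting the minimality of $z$. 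Hence $y$ maximizes the coordinate sum over $\mathcal{M}$, and the M-convex case is included since every M-convex set is M$^\natural$-convex.
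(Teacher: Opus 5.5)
Your proof is correct and is essentially the paper's argument: the paper uses the same test set $\{e_i-e_j\}\cup\{\pm e_i\}$ and the same observation that under $\bar{\theta}$ only a $+e_i$ step can improve. It phrases this as a finite ascent $z^{k+1}=z^k+e_i$ starting from $x^n$, whereas you pick a sum-maximizer in $\mathcal{X}_{\geq x^n}(s^n)$ and argue by contradiction. Your optional derivation of the needed special case directly from the M$^\natural$ exchange axiom, via an $\ell_1$-closest improving point, is also valid and removes the reliance on the cited optimality criterion.
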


\begin{proof}
    Fix $n$ and let $\mathcal{X}:=\mathcal{X}(s^n)$. The set $\mathcal{T}=\{e_i-e_j\mid i\neq j\}\cup\{\pm e_i\}$ is a test set for linear optimization over M$^\natural$-convex sets (\Cref{prop:testcut_mconvex}; \citealp{murota2003discrete}). Since $\langle\bar{\theta},e_i-e_j\rangle=0$ and $\langle\bar{\theta},-e_i\rangle<0$, if $x\in\mathcal{X}$ is not a maximizer of $\langle\bar{\theta},\cdot\rangle$, then $x+e_i\in\mathcal{X}$ for some $i$. Let $z^0:=x^n$, and as long as $z^k$ is not a maximizer, update $z^{k+1}:=z^k+e_i\in\mathcal{X}$. Since $\langle\bar{\theta},z^k\rangle$ strictly increases and $\mathcal{X}$ is finite, this update reaches a maximizer $z\in\argmax_{x\in\mathcal{X}}\langle\bar{\theta},x\rangle$ after finitely many steps. Each update only increases a component, so $z\ge x^n$, that is, $z\in\mathcal{X}_{\geq x^n}(s^n)$. Hence $\max_{x\in\mathcal{X}_{\geq x^n}(s^n)}\langle\bar{\theta},x\rangle=\max_{x\in\mathcal{X}}\langle\bar{\theta},x\rangle$, and $\ell^{\optdex}(\bar{\theta};x^n,s^n)=0$ by \cref{eq:def_optdex}. Since $\bar{\ell}^{\optdex}\ge0$, $\bar{\theta}$ is a global minimizer.
\end{proof}

\Cref{prop:mnatural_uniform_exact} gives a global minimizer of the OS loss. Attaining \cref{eq:outperformance_IOP} additionally requires that the optimal solution be unique or that the oracle return an optimal solution dominating the action (\Cref{sec:performance_metric}).

\begin{proposition}[Number of iterations of one call of the test-direction-cut inner solver]
    \label{prop:testcut_per_call}
    In the setting of \Cref{prop:testset_value_gap} or \Cref{prop:testcut_mconvex}, run the test-direction-cut inner solver (\Cref{def:testcut_variant}) for a branch $y^\prime$ with tolerance $0$ and $J=\infty$, and assume that the LPs are solved exactly.

    (i) If the rule for selecting evaluation points is the cutting-plane method, the solver stops within at most $|\mathcal{T}_{y^\prime}|+2$ iterations, and $\mathrm{LB}=\mathrm{UB}=\mu(y^\prime)$ at stopping; the returned point is an exact minimizer over $\Delta^{d-1}$ of the test-set model $h_{\mathcal{T}}$ (\cref{eq:def_testset_model}), with minimum value $\mu(y^\prime)$. In particular, in the M$^\natural$-convex case, $|\mathcal{T}_{y^\prime}|\le d(d+1)$, so the number is at most $d(d+1)+2$.

    (ii) In the setting of \Cref{prop:testset_value_gap}, let the rule be cutting-plane finishing (\Cref{alg:rule_cpfinish}) with the level method as the rule $\mathsf{R}$, $\epsilon_{\mathrm{sw}}:=1/(2Q_{\mathcal{T}}^2)$, and $\epsilon_{\mathrm{term}}=0$. If the iteration bound of \Cref{rem:level_complexity} holds for the level method in Phase 1, then the number of iterations of Phase 1 is at most $c(\lambda)\bigl(C_R\sqrt{d}\cdot\sqrt{2}\cdot2Q_{\mathcal{T}}^2\bigr)^2$, that of Phase 2 is at most $|\mathcal{T}_{y^\prime}|+1$, and the returned point is an exact minimizer of $h_{\mathcal{T}}$ over $\Delta^{d-1}$, with minimum value $\mu(y^\prime)$.
\end{proposition}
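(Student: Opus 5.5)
Part (i) would follow the pattern of \Cref{lem:dc_inner_finite}, with the finite family of vertex cuts replaced by the finite family $\mathcal{T}_{y^\prime}\cup\{\mathbf{0}\}$ of test-direction cuts. Let $\mathcal{G}_j$ be the cut set after iteration $j$. It contains the zero cut from initialization together with the maximizing cuts returned by the test-direction oracle at $\theta^0,\ldots,\theta^j$. Every cut is a lower bound of $h_{\mathcal{T}}$, since $h_{\mathcal{T}}$ is the maximum over $\mathcal{T}_{y^\prime}\cup\{\mathbf{0}\}$. Hence $\mathrm{LB}\le\mu(y^\prime)\le\mathrm{UB}$ always holds, and stopping with tolerance $0$ forces $\mathrm{LB}=\mathrm{UB}=\mu(y^\prime)$, with $\hat{\theta}$ an exact minimizer of $h_{\mathcal{T}}$ over $\Delta^{d-1}$. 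For finiteness, take any $j\ge1$. There $\theta^j$ minimizes the model of $\mathcal{G}_{j-1}$. Suppose the returned maximizing cut $g_j$ (or the value $0$, if $h_{\mathcal{T}}(\theta^j)=0$ is attained by the zero cut) already lies in $\mathcal{G}_{j-1}$. Then $h_{\mathcal{T}}(\theta^j)=\langle\theta^j,g_j\rangle\le\mathrm{LB}_j$, so $\mathrm{UB}\le\mathrm{LB}$ and the solver stops. Consequently every non-stopping iteration $j\ge1$ adds a new element of $\mathcal{T}_{y^\prime}$. Iteration $0$ is a warm start and may be non-stopping without adding anything new. Counting iteration $0$, at most $|\mathcal{T}_{y^\prime}|$ new-cut iterations, and one stopping iteration gives the bound $|\mathcal{T}_{y^\prime}|+2$. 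In the M$^\natural$-convex case $|\mathcal{T}_{y^\prime}|\le|\{e_i-e_j\}_{i\ne j}|+|\{\pm e_i\}|=d(d-1)+2d=d(d+1)$, which yields $d(d+1)+2$.

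For (ii), the first step is to check the switching condition. \Cref{prop:testcut_gap} gives $\delta_{\mathcal{T}}(y^\prime)\ge1/Q_{\mathcal{T}}^2>\epsilon_{\mathrm{sw}}=1/(2Q_{\mathcal{T}}^2)$, so the analogue of \Cref{theo:cpfinish_exact} asserted in \Cref{prop:testcut_gap} applies. At the switch $\mathrm{LB}=\mu(y^\prime)$ exactly, and Phase 2 returns an exact minimizer with value $\mu(y^\prime)$.

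The Phase 1 count comes from \Cref{rem:level_complexity} applied to $h_{\mathcal{T}}$ with $\epsilon=\epsilon_{\mathrm{sw}}$. The Lipschitz constant is at most $\max_{g\in\mathcal{T}_{y^\prime}}\|g\|_2\le C_R\sqrt{d}$, because each $t^x$ with $y^\prime+t^x$ feasible has $\|t^x\|_\infty\le C_R$. Together with $D_\Theta=\sqrt{2}$, this gives $c(\lambda)(C_R\sqrt{d}\cdot\sqrt{2}\cdot2Q_{\mathcal{T}}^2)^2$.

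For Phase 2, I would reuse the argument of \Cref{lem:phase2_total} within a single call. Each Phase 2 iteration evaluates a model minimizer. By the argument above, if the returned cut is already in $\mathcal{G}$ the solver stops, so each non-stopping Phase 2 iteration adds a new cut from $\mathcal{T}_{y^\prime}$. This gives at most $|\mathcal{T}_{y^\prime}|$ non-stopping iterations plus one stopping iteration, i.e., at most $|\mathcal{T}_{y^\prime}|+1$.

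The main delicacy is the bookkeeping at the boundaries. In (i), one must justify the $+2$: one iteration for the warm-start evaluation, which need not be a model minimizer, and one for the stopping iteration. One must also make sure the zero cut is handled correctly, i.e., that when $h_{\mathcal{T}}(\theta^j)=0$ the zero cut already in $\mathcal{G}$ triggers stopping. The other points, validity of the cuts and identification of the minimum, are immediate from the finite-max structure.
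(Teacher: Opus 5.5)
Your proposal is correct and follows essentially the same route as the paper's proof. The key claim is that a model-minimizing evaluation point whose returned cut already lies in $\mathcal{G}_{j-1}$ forces $\mathrm{UB}\le\mathrm{LB}$, because the zero cut is always present. This is combined with the sandwich $\mathrm{LB}\le\mu(y^\prime)\le\mathrm{UB}$, the count with $+1$ for iteration $0$ and $+1$ for the stopping iteration, the Lipschitz constant $C_R\sqrt{d}$ with $\diam(\Delta^{d-1})=\sqrt{2}$ for Phase 1, $\epsilon_{\mathrm{sw}}<1/Q_{\mathcal{T}}^2\le\delta_{\mathcal{T}}(y^\prime)$ for the switch, and the same new-cut counting for Phase 2.
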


\begin{proof}
    Let $\mathcal{G}_j$ be the cut set at the end of iteration $j$; the zero cut $\mathbf{0}$ always belongs to $\mathcal{G}_j$. Let $g_j$ be the maximizing cut returned by the test-direction oracle at the evaluation point $\theta^j$; then $h_{\mathcal{T}}(\theta^j)=\max\{\langle\theta^j,g_j\rangle,0\}$.

    We first show the following: if the evaluation point $\theta^j$ of iteration $j\ge1$ is a minimizer over $\Delta^{d-1}$ of the model $\max_{g\in\mathcal{G}_{j-1}}\langle\theta,g\rangle$ and $g_j\in\mathcal{G}_{j-1}$, then the solver stops at iteration $j$. Indeed, then $\mathcal{G}_j=\mathcal{G}_{j-1}$, so $\mathrm{LB}=\min_{\theta\in\Delta^{d-1}}\max_{g\in\mathcal{G}_{j-1}}\langle\theta,g\rangle=\max_{g\in\mathcal{G}_{j-1}}\langle\theta^j,g\rangle\ge h_{\mathcal{T}}(\theta^j)\ge\mathrm{UB}$, and the stopping test $\mathrm{UB}-\mathrm{LB}\le0$ holds.

    (i) With the cutting-plane method, the evaluation point of each iteration $j\ge1$ is a minimizer of the model, so by the claim above, if the solver does not stop at iteration $j\ge1$, then $g_j\in\mathcal{T}_{y^\prime}\setminus\mathcal{G}_{j-1}$, and a new element of $\mathcal{T}_{y^\prime}$ is added to $\mathcal{G}$. Since $\mathcal{T}_{y^\prime}$ is finite, there are at most $|\mathcal{T}_{y^\prime}|$ non-stopping iterations $j\ge1$. Adding iteration $j=0$ and the stopping iteration, the number of iterations is at most $|\mathcal{T}_{y^\prime}|+2$. Since the model is at most $h_{\mathcal{T}}$, $\mathrm{LB}\le\mu(y^\prime)$, and since $\mathrm{UB}$ is the minimum of the values of $h_{\mathcal{T}}$ at the evaluation points, $\mu(y^\prime)\le\mathrm{UB}$. At stopping, $\mathrm{UB}\le\mathrm{LB}$, so $\mathrm{LB}=\mathrm{UB}=\mu(y^\prime)$, and the returned point $\hat{\theta}$ satisfies $h_{\mathcal{T}}(\hat{\theta})=\mathrm{UB}=\mu(y^\prime)$. In the M$^\natural$-convex case, $\mathcal{T}_{y^\prime}\subseteq\{e_i-e_j\mid i\ne j\}\cup\{\pm e_i\}$, whose cardinality is $d(d-1)+2d=d(d+1)$.

    (ii) Each element $g$ of $\mathcal{T}_{y^\prime}\cup\{\mathbf{0}\}$ satisfies $\|g\|_2\le {C_R}\sqrt{d}$, so $h_{\mathcal{T}}$ is $C_R\sqrt{d}$-Lipschitz. Since $\diam(\Delta^{d-1})=\sqrt{2}$, the assumed bound of \Cref{rem:level_complexity} implies that the number of iterations of Phase 1 is at most $c(\lambda)\bigl(C_R\sqrt{d}\cdot\sqrt{2}/\epsilon_{\mathrm{sw}}\bigr)^2=c(\lambda)\bigl(C_R\sqrt{d}\cdot\sqrt{2}\cdot2Q_{\mathcal{T}}^2\bigr)^2$. By \Cref{prop:testcut_gap}, $\epsilon_{\mathrm{sw}}<1/Q_{\mathcal{T}}^2\le\delta_{\mathcal{T}}(y^\prime)$; hence, applying the proof of \Cref{lem:lb_snap} with $\epsilon_{\mathrm{in}}$, $\delta(y^\prime)$, and $m_{\sub}(y^\prime)$ replaced by $\epsilon_{\mathrm{sw}}$, $\delta_{\mathcal{T}}(y^\prime)$, and $\mu(y^\prime)$, we have $\mathrm{LB}=\mu(y^\prime)$ at the end of Phase 1. The evaluation point of each iteration of Phase 2 is a minimizer of the model, so by the claim above, each non-stopping iteration of Phase 2 adds a new element of $\mathcal{T}_{y^\prime}$ to $\mathcal{G}$. Hence there are at most $|\mathcal{T}_{y^\prime}|$ non-stopping iterations in Phase 2, and adding the stopping iteration, at most $|\mathcal{T}_{y^\prime}|+1$. At stopping, $\mathrm{UB}=\mu(y^\prime)$ as in (i), so the returned point is an exact minimizer of $h_{\mathcal{T}}$.
\end{proof}

\begin{corollary}[Connection to the decision of zero branches]
    \label{cor:testcut_zero_decision}
    In the setting of \Cref{prop:testset_value_gap} or \Cref{prop:testcut_mconvex}, we have
    \[
        \mu(y^\prime)=0
        \quad\Longleftrightarrow\quad
        m_{\sub}(y^\prime)=0 .
    \]
    Therefore, the test-direction-cut inner solver decides exactly whether a branch is a zero branch through the exact computation of $\mu(y^\prime)$. In particular, in the M$^\natural$-convex case, the size of the family of cuts is at most $d(d+1)+1$, so the decision is completed with at most $d(d+1)+2$ calls to the test-direction oracle using only the selection rule of the cutting-plane method (\Cref{prop:testcut_per_call}(i)). Moreover, when $\mu(y^\prime)>0$, $m_{\sub}(y^\prime)\ge\mu(y^\prime)$, and in the Graver-basis case, $m_{\sub}(y^\prime)\le D_z\,\mu(y^\prime)$ (\Cref{lem:testset_sandwich}) also holds.
\end{corollary}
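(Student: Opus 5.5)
The plan is to establish a two-sided comparison between $h_{\mathcal{T}}$ and $\bar{\ell}^{\sub}(\cdot\,;y^\prime)$ on $\Delta^{d-1}$. Both directions of the equivalence then follow by minimizing over the simplex. The algorithmic and quantitative claims are then read off from \Cref{prop:testcut_per_call} and \Cref{lem:testset_sandwich}.

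First, the lower side $h_{\mathcal{T}}\le\bar{\ell}^{\sub}(\cdot\,;y^\prime)$. It holds in both settings by the argument of the lower half of \Cref{lem:testset_sandwich}. For every admissible direction $g$ (an element $t^x$ of $\mathcal{T}_{y^\prime}$ in the Graver case, or an element of $\mathcal{T}_{\mathrm{M},y^\prime}$ or its M$^\natural$ analogue), the point $y^\prime+g$ lies in $\mathcal{X}(s^1)$. Hence $\langle\theta,g\rangle\le\max_{v\in\mathcal{X}(s^1)}\langle\theta,v-y^\prime\rangle$. Moreover $\bar{\ell}^{\sub}(\theta;y^\prime)\ge0$ because $y^\prime\in\mathcal{X}(s^1)$. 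Minimizing over $\Delta^{d-1}$ gives $0\le\mu(y^\prime)\le m_{\sub}(y^\prime)$. This proves the direction $m_{\sub}(y^\prime)=0\Rightarrow\mu(y^\prime)=0$, and also the inequality $m_{\sub}(y^\prime)\ge\mu(y^\prime)$ claimed when $\mu(y^\prime)>0$.

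For the converse I will avoid the factor $D_z$, since it is unavailable in the M$^\natural$ setting, and use the test-set property directly. Suppose $\mu(y^\prime)=0$. Since $h_{\mathcal{T}}$ is a finite maximum of linear functions, it is continuous, so the minimum over the compact simplex is attained at some $\bar{\theta}$ with $h_{\mathcal{T}}(\bar{\theta})=0$. This means $\langle\bar{\theta},g\rangle\le0$ for every direction $g$ with $y^\prime+g$ feasible. The test-set characterization then applies: for the Graver basis this is the criterion recalled from \citet{onn2010nonlinear}, and for the M-convex and M$^\natural$-convex cases it is \Cref{prop:testcut_mconvex}(a) and its M$^\natural$ version. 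It shows that $y^\prime$ is a maximizer of $\langle\bar{\theta},\cdot\rangle$ over $\mathcal{X}(s^1)$. Hence $\bar{\ell}^{\sub}(\bar{\theta};y^\prime)=0$, and so $m_{\sub}(y^\prime)=0$. In the Graver case one could instead simply invoke the upper side $\bar{\ell}^{\sub}\le D_z h_{\mathcal{T}}$ of \Cref{lem:testset_sandwich}. Minimizing that inequality also yields the final claim $m_{\sub}(y^\prime)\le D_z\,\mu(y^\prime)$.

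For the decision statement, \Cref{prop:testcut_per_call}(i) shows that the test-direction-cut inner solver with the cutting-plane rule returns $\mathrm{LB}=\mathrm{UB}=\mu(y^\prime)$ exactly, within at most $|\mathcal{T}_{y^\prime}|+2$ iterations. Each iteration makes one call to the test-direction oracle. Combining this with the equivalence just proved, checking whether the returned value is $0$ decides exactly whether $y^\prime$ is a zero branch.

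In the M$^\natural$-convex case, $|\mathcal{T}_{y^\prime}|\le d(d-1)+2d=d(d+1)$, so the family of cuts including $\mathbf{0}$ has size at most $d(d+1)+1$. This gives the bound of $d(d+1)+2$ calls. The only delicate point is the converse direction in the M$^\natural$ case. There one must make sure the test-set optimality criterion is invoked for the bounded M$^\natural$-convex set together with the enlarged direction set $\{e_i-e_j\}\cup\{\pm e_i\}$. One must also check that the truncation $\max(0,\cdot)$ in $h_{\mathcal{T}}$ does not hide an improving direction; it does not, because $h_{\mathcal{T}}(\bar{\theta})=0$ forces every admissible inner product to be nonpositive.
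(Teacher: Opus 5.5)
Your proof is correct and follows essentially the same route as the paper. The lower half of the sandwich gives $\mu(y^\prime)\le m_{\sub}(y^\prime)$, which yields one direction of the equivalence and the inequality for $\mu(y^\prime)>0$. The converse uses a minimizer $\bar{\theta}$ of $h_{\mathcal{T}}$ together with the test-set optimality criterion, and the iteration count is read off from \Cref{prop:testcut_per_call}(i). The only difference is cosmetic: in the Graver case the paper gets $\bar{\ell}^{\sub}(\bar{\theta};y^\prime)=0$ from the upper side $\bar{\ell}^{\sub}\le D_z\,h_{\mathcal{T}}$ of \Cref{lem:testset_sandwich} rather than from the test-set property directly, which you also mention as an alternative.
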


\begin{proof}
    Suppose $\mu(y^\prime)=0$. Then $h_{\mathcal{T}}(\bar{\theta})=0$ for some $\bar{\theta}\in\Delta^{d-1}$, that is, no feasible test direction from $y^\prime$ has $\langle\bar{\theta},\cdot\rangle>0$. In the Graver-basis case, by the upper side of \Cref{lem:testset_sandwich}, $\bar{\ell}^{\sub}(\bar{\theta};y^\prime)\le D_z\,h_{\mathcal{T}}(\bar{\theta})=0$; in the M-convex case, by \Cref{prop:testcut_mconvex}(a), and in the M$^\natural$-convex case, by the test-set property for M$^\natural$-convex sets stated at the end of that proposition, $y^\prime$ is a maximizer with respect to $\bar{\theta}$, so $\bar{\ell}^{\sub}(\bar{\theta};y^\prime)=0$. Hence $m_{\sub}(y^\prime)=0$ (together with nonnegativity). Conversely, suppose $m_{\sub}(y^\prime)=0$. Then $\bar{\ell}^{\sub}(\theta^\ast;y^\prime)=0$ for some $\theta^\ast$, and for every feasible test direction $t$, $y^\prime+t^x\in\mathcal{X}(s^1)$, so $\langle\theta^\ast,t^x\rangle\le\bar{\ell}^{\sub}(\theta^\ast;y^\prime)=0$; hence $h_{\mathcal{T}}(\theta^\ast)=0$ and $\mu(y^\prime)=0$. The inequality $m_{\sub}(y^\prime)\ge\mu(y^\prime)$ follows from $\bar{\ell}^{\sub}(\cdot\,;y^\prime)\ge h_{\mathcal{T}}$ (the lower side of \Cref{lem:testset_sandwich}; it uses only the feasibility of single steps, so it also holds in the M-convex case). The bound on the size of the family of cuts and the finite stopping follow from \Cref{prop:testcut_per_call}(i).
\end{proof}

\begin{remark}[Scope]
    \label{rem:testcut_scope}
    First, what \Cref{prop:testcut_gap,prop:testcut_mconvex} improve is $\delta_{\mathcal{T}}(y^\prime)$ of the test-direction-cut inner solver, not $\delta(y^\prime)$ of \Cref{alg:dc_inner} itself, which uses vertex cuts. The denominators of $\delta(y^\prime)$ are determined by the cuts $v-y^\prime$ actually generated (with components of magnitude up to $2R$), and the tie values of two cuts with slopes of order $R$ can approach each other to within order $1/R^2$; we therefore conjecture that $\delta(y^\prime)$ itself admits no lower bound independent of $R$. Its proof is future work.
    Second, what the test-direction-cut inner solver minimizes is $h_{\mathcal{T}}$, not $\bar{\ell}^{\sub}(\cdot\,;y^\prime)$, so it cannot be used directly for the premise of \Cref{prop:dc_finite_convergence} (exact solution of the convex subproblems). Its uses are the exact decision of zero branches (\Cref{cor:testcut_zero_decision}; for $N\ge2$, \Cref{cor:zero_branch_cert_N,cor:testcut_per_call_N}) and the $D_z$-approximation of $m_{\sub}(y^\prime)$ in the Graver-basis case.
\end{remark}

\begin{remark}[What can be said in the M-convex and Graver-basis cases]
    \label{rem:total_structured}
    We discuss whether \Cref{prop:total_inner_iterations} can be specialized to the structured cases, in contrast with \Cref{cor:total_integer}. The conclusion is that the lower bounds of this subsection give a bound on the number of iterations of one call (per branch) of the test-direction-cut inner solver (\Cref{prop:testcut_per_call}).

    The reason why the lower bounds of this subsection are for the test-direction-cut inner solver is as described in \Cref{rem:testcut_scope}. \Cref{prop:total_inner_iterations} requires a lower bound on $\delta_{\mathrm{in}}=\min_{y^\prime}\delta(y^\prime)$, whereas \Cref{prop:testcut_gap,prop:testcut_mconvex} give lower bounds on $\delta_{\mathcal{T}}(y^\prime)$ of the test-direction-cut inner solver. The former is determined by the vertex cuts $v-y^\prime$ (with components of magnitude up to $2R$), and the latter by the test directions $t^x$ (with components of magnitude up to $C_R$, and $1$ in the M$^\natural$-convex case). If the conjecture of \Cref{rem:testcut_scope} is true, $\delta(y^\prime)$ admits no lower bound independent of $R$, and the dependence on $R$ in \Cref{cor:total_integer} is essential.

    The number of iterations of one call (per branch) of the test-direction-cut inner solver is given in \Cref{prop:testcut_per_call}. In the Graver-basis case, the number of iterations of Phase 1 is bounded by $c(\lambda)\bigl(C_R\sqrt{d}\cdot\sqrt{2}\cdot2Q_{\mathcal{T}}^2\bigr)^2$ (part (ii) of that proposition). Since $Q_{\mathcal{T}}\le\bigl((d+1)^{1/2}\max(C,1)\bigr)^{d+1}$, this number has an upper bound independent of $R$. It is exponential in $d$.
    The M$^\natural$-convex case is even stronger: the size of the family of cuts is at most $d(d+1)+1$, so the solver stops with at most $d(d+1)+2$ calls to the test-direction oracle using only the selection rule of the cutting-plane method, without going through the level method (\Cref{prop:testcut_per_call}(i)). That is, $O(d^2)$ calls per branch, polynomial in the dimension and independent of both $R$ and $\delta$.

    Therefore, the question of whether the number over the whole run can be made polynomial in the dimension reduces to whether the exact computation of $\mu(y^\prime)$ in \Cref{cor:testcut_zero_decision} can be lifted to the exact computation of $m_{\sub}(y^\prime)$ (the premise of \Cref{prop:dc_finite_convergence}). In the M-convex case, the uniform weight $\bar{\theta}$ gives a minimizer of $\bar{\ell}^{\sub}(\cdot\,;y^\prime)$ (with minimum value $0$) for every branch (\Cref{rem:mconvex_degenerate}), so $m_{\sub}(y^\prime)$ is computed directly. In the M$^\natural$-convex case as well, the global minimization of the OS loss itself is attained by the uniform weight (\Cref{prop:mnatural_uniform_exact}). What remains is, as a subproblem of DC optimization, recovering the minimization of $\bar{\ell}^{\sub}(\cdot\,;y^\prime)$ at each branch from the minimization of $h_{\mathcal{T}}$ in the M$^\natural$-convex case, which is, as in \Cref{rem:structural_improvement}, future work.
\end{remark}

\subsection{Explicit lower bounds via test sets (multiple states)}
\label{subsec:testset_gap_N}

In this subsection, we extend the results of \Cref{subsec:testset_gap} to $N\ge1$. Let $\Theta=\Delta^{d-1}$, and suppose that the constraint matrix $A\in\mathbb{Z}^{n_A\times d}$ is common to all states, while the right-hand side and the upper bounds of the box may depend on the state:
\[
    \mathcal{X}(s^n)=\bigl\{x\in\mathbb{Z}^d \ \bigm|\ Ax\le b^n,\ 0\le x\le u^n\bigr\}
    \qquad(b^n\in\mathbb{Z}^{n_A},\ u^n\in\mathbb{Z}_{\ge0}^d,\ n=1,\ldots,N).
\]
Setting $z_n(x):=(x,\,b^n-Ax)\in\mathbb{Z}^{d+n_A}$, $x\in\mathcal{X}(s^n)$ is equivalent to $\tilde{A}z_n(x)=b^n$ and $0\le z_n(x)\le\tilde{u}^n$ ($\tilde{u}^n$ is the finite upper bound determined by the box $0\le x\le u^n$). The Graver basis $\mathcal{T}=\mathrm{Gr}(\tilde{A})$ and $C=\max_{t\in\mathcal{T}}\|t\|_\infty$ are determined by $\tilde{A}$ alone and are therefore common to all states. Setting $R:=\max_n\max_{x\in\mathcal{X}(s^n)}\|x\|_\infty$ and $C_R:=\min(C,R)$, every test direction $t$ feasible at a state satisfies $\|t^x\|_\infty\le C_R$. For a branch $y^\prime=(y^{\prime n})_{n=1}^N$ ($y^{\prime n}\in\mathcal{X}(s^n)$), let
\[
    \mathcal{T}_{y^\prime,n}:=\{t^x \mid t\in\mathcal{T},\ 0\le z_n(y^{\prime n})+t\le\tilde{u}^n\},
    \qquad
    h_{\mathcal{T},n}(\theta):=\max\Bigl(0,\ \max_{g\in\mathcal{T}_{y^\prime,n}}\langle\theta,g\rangle\Bigr),
\]
\[
    h^N_{\mathcal{T}}(\theta):=\frac{1}{N}\sum_{n=1}^N h_{\mathcal{T},n}(\theta),
    \qquad
    \mu_N(y^\prime):=\min_{\theta\in\Delta^{d-1}}h^N_{\mathcal{T}}(\theta)
\]
(since $t\in\ker_{\mathbb{Z}}\tilde{A}$, whether $z_n(y^{\prime n})+t$ is feasible at state $n$ is determined by the box constraints). We also set $D_{z,n}:=\max\bigl(1,\ \max_{x,x^\prime\in\mathcal{X}(s^n)}\|z_n(x^\prime)-z_n(x)\|_1\bigr)$ and $D^N_z:=\max_n D_{z,n}$.

\begin{lemma}[Sandwich by the test-set model with multiple states]
    \label{lem:testset_sandwich_N}
    In the above setting, for any $\theta\in\mathbb{R}^d$,
    \[
        h^N_{\mathcal{T}}(\theta)\ \le\ \bar{\ell}^{\sub}(\theta;y^\prime)\ \le\ D^N_z\,h^N_{\mathcal{T}}(\theta) .
    \]
\end{lemma}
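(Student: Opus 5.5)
The plan is to reduce to the single-state sandwich (\Cref{lem:testset_sandwich}) applied state by state, and then average. By definition,
\[
\bar{\ell}^{\sub}(\theta;y^\prime)=\frac{1}{N}\sum_{n=1}^N \ell^{\sub}(\theta;y^{\prime n},s^n),
\qquad
\ell^{\sub}(\theta;y^{\prime n},s^n)=\max_{v\in\mathcal{X}(s^n)}\langle\theta,v-y^{\prime n}\rangle .
\]
Similarly, $h^N_{\mathcal{T}}$ is the average of $h_{\mathcal{T},n}$. It therefore suffices to show, for each fixed $n$ and every $\theta\in\mathbb{R}^d$,
\[
h_{\mathcal{T},n}(\theta)\le\ell^{\sub}(\theta;y^{\prime n},s^n)\le D_{z,n}\,h_{\mathcal{T},n}(\theta).
\]
Averaging these over $n$ gives the lower side directly. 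For the upper side, we use $D_{z,n}\le D^N_z$ together with $h_{\mathcal{T},n}\ge0$.

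For the per-state inequalities, I would rerun the proof of \Cref{lem:testset_sandwich} with $(b,\tilde{u},z,y^\prime)$ replaced by $(b^n,\tilde{u}^n,z_n,y^{\prime n})$. The one point to check is that the Graver basis $\mathcal{T}=\mathrm{Gr}(\tilde{A})$ is the same for every state, because $\tilde{A}=[A\mid I_{n_A}]$ does not depend on $n$.

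\textbf{Lower side.} Take $g=t^x\in\mathcal{T}_{y^\prime,n}$. Since $t\in\ker_{\mathbb{Z}}\tilde{A}$, we have $\tilde{A}(z_n(y^{\prime n})+t)=b^n$, and the box constraints hold by the definition of $\mathcal{T}_{y^\prime,n}$. Hence $y^{\prime n}+t^x\in\mathcal{X}(s^n)$, which gives $\ell^{\sub}\ge\langle\theta,t^x\rangle$. Also $\ell^{\sub}\ge0$, since $y^{\prime n}\in\mathcal{X}(s^n)$.

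\textbf{Upper side.} Let $v$ be a maximizer and assume $z_n(v)\ne z_n(y^{\prime n})$; otherwise the loss is $0$ and the bound is trivial. By \Cref{prop:graver_conformal}, write $z_n(v)-z_n(y^{\prime n})=\sum_i\alpha_it_i$ with each $t_i\sqsubseteq z_n(v)-z_n(y^{\prime n})$. By conformality, $z_n(y^{\prime n})+t_i$ lies componentwise between $z_n(y^{\prime n})$ and $z_n(v)$, so it satisfies the box constraints and $t_i^x\in\mathcal{T}_{y^\prime,n}$. Then $\langle\theta,t_i^x\rangle\le h_{\mathcal{T},n}(\theta)$. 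Moreover $\sum_i\alpha_i\le\sum_i\alpha_i\|t_i\|_1=\|z_n(v)-z_n(y^{\prime n})\|_1\le D_{z,n}$, and hence $\ell^{\sub}\le D_{z,n}h_{\mathcal{T},n}(\theta)$.

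The only potential obstacle is the bookkeeping that the upper bound carries a state-dependent constant $D_{z,n}$ while the target uses $D^N_z$. Nonnegativity of each $h_{\mathcal{T},n}$ resolves this, because it gives $\frac1N\sum_n D_{z,n}h_{\mathcal{T},n}\le D^N_z h^N_{\mathcal{T}}$. There is no averaging over non-common cuts to worry about, because the sandwich is established termwise before summing.
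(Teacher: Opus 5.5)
Your proposal is correct and follows essentially the same route as the paper. You apply the single-state sandwich \Cref{lem:testset_sandwich} state by state with $(b,u)$ replaced by $(b^n,u^n)$ (the Graver basis of $\tilde{A}$ being common), then average over $n$, using $D_{z,n}\le D^N_z$ and $h_{\mathcal{T},n}\ge 0$ for the upper side; the only difference is that you rerun the per-state argument explicitly where the paper just cites it.
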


\begin{proof}
    We have $\bar{\ell}^{\sub}(\theta;y^\prime)=\frac{1}{N}\sum_{n=1}^N\ell^{\sub}(\theta;y^{\prime n},s^n)$. The proof of \Cref{lem:testset_sandwich} uses only the feasible region $\{x\in\mathbb{Z}^d\mid Ax\le b,\ 0\le x\le u\}$ of a single state and $y^\prime\in\mathcal{X}(s^1)$; applying it to each $n$ with $(b,u)$ replaced by $(b^n,u^n)$ gives $h_{\mathcal{T},n}(\theta)\le\ell^{\sub}(\theta;y^{\prime n},s^n)\le D_{z,n}\,h_{\mathcal{T},n}(\theta)$. Averaging over $n$ with $D_{z,n}\le D^N_z$ and $h_{\mathcal{T},n}\ge0$ yields the claim.
\end{proof}

\begin{proposition}[Graver lower bound on the branch value gap with multiple states]
    \label{prop:testset_value_gap_N}
    In the above setting, let
    \[
        Q_{\mathcal{T},N}:=\Bigl(1+\frac{d}{N}\Bigr)^N\bigl(d^{1/2}\max(C_R,1)\bigr)^d
        \ \le\ \bigl(\mathrm{e}\,d^{1/2}\max(C,1)\bigr)^d .
    \]
    Then
    \[
        N\mu_N(y^\prime)\in\{0\}\cup\Bigl[\frac{1}{Q_{\mathcal{T},N}},\ \infty\Bigr) ,
        \qquad
        m_{\sub}(y^\prime)\in\{0\}\cup\Bigl[\frac{1}{N\,Q_{\mathcal{T},N}},\ \infty\Bigr) .
    \]
    Moreover, $\mu_N(y^\prime)=0$ if and only if $m_{\sub}(y^\prime)=0$. In particular, setting $\bar{Q}_{\mathcal{T}}:=(\mathrm{e}\,d^{1/2}\max(C,1))^d$, if $m_{\sub}(y^\prime)>0$, then $m_{\sub}(y^\prime)\ge1/(N\,\bar{Q}_{\mathcal{T}})$; this lower bound depends neither on the right-hand sides $b^n$ nor on the upper bounds $u^n$ of the boxes, and depends on $N$ only through the factor $1/N$.
\end{proposition}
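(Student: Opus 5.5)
The plan is to repeat the extreme-point, Cramer and Hadamard argument of \Cref{prop:testset_value_gap}, now for an LP with one epigraph variable per state. The only new ingredient is a sharper determinant count that exploits the block structure of the $\tau$-columns. First, $N\mu_N(y^\prime)$ is the optimal value of
\[
    \min\Bigl\{\textstyle\sum_{n=1}^N\tau_n \ \Bigm|\ \theta\ge0,\ \sum_i\theta_i=1,\ \tau_n\ge0,\ \langle g,\theta\rangle\le\tau_n\ (g\in\mathcal{T}_{y^\prime,n},\ n=1,\ldots,N)\Bigr\}
\]
in the variables $(\theta,\tau)\in\mathbb{R}^d\times\mathbb{R}^N$. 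This polyhedron is nonempty and contains no line: $\theta$ ranges over the compact simplex, and each $\tau_n$ is bounded below by $0$. Hence the finite optimum is attained at an extreme point, which is the unique solution of a nonsingular integer system $A(\theta,\tau)=c$ made of $d+N$ linearly independent tight rows. These rows are of type $(g,-e_n)$ with $g\in\mathcal{T}_{y^\prime,n}$, $(\mathbf{0},-e_n)$, $(\mathbf{1},\mathbf{0})$, or $(-e_i,\mathbf{0})$. By Cramer's rule, $N\mu_N(y^\prime)$ is a rational number whose denominator divides $|\det A|$, and $|\det A|\ge1$.

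The main step, and the expected obstacle, is to bound $|\det A|\le Q_{\mathcal{T},N}$. Let $k_n$ be the number of rows of $A$ with a nonzero entry in the $\tau_n$-column. Each such row has entry $-1$ there and $0$ in every other $\tau$-column, so the $\tau$-columns have pairwise disjoint supports and $\sum_n k_n\le d+N$. I would apply the generalized Laplace expansion along the $N$ $\tau$-columns. A nonzero $N\times N$ minor in those columns must select exactly one row from each block, and that minor equals $\pm1$. Therefore
\[
    |\det A|\le\sum_{\text{choices}}|\det M^\prime|\le\Bigl(\prod_n k_n\Bigr)\max|\det M^\prime| ,
\]
where $M^\prime$ ranges over the $d\times d$ $\theta$-parts of the complementary rows. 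By AM--GM, $\prod_n k_n\le\bigl((d+N)/N\bigr)^N=(1+d/N)^N$. Each row of $M^\prime$ is $g$ with $\|g\|_\infty\le C_R$, or $\mathbf{0}$, $\mathbf{1}$, or $-e_i$, so its Euclidean norm is at most $d^{1/2}\max(C_R,1)$. Hadamard's inequality then gives $|\det M^\prime|\le\bigl(d^{1/2}\max(C_R,1)\bigr)^d$. Combining these yields $|\det A|\le Q_{\mathcal{T},N}$. The comparison $Q_{\mathcal{T},N}\le(\mathrm{e}\,d^{1/2}\max(C,1))^d$ follows from $(1+d/N)^N\le\mathrm{e}^d$ and $C_R\le C$. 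Consequently a positive value of $N\mu_N(y^\prime)$ is at least $1/Q_{\mathcal{T},N}$.

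Finally, I would transfer this to $m_{\sub}(y^\prime)$ using \Cref{lem:testset_sandwich_N}.
\begin{itemize}
    \item If $\mu_N(y^\prime)=0$, take a minimizer $\bar{\theta}$. The upper side of the lemma gives $\bar{\ell}^{\sub}(\bar{\theta};y^\prime)\le D^N_z\cdot0=0$, so $m_{\sub}(y^\prime)=0$.
    \item If $m_{\sub}(y^\prime)=0$, the lower side $h^N_{\mathcal{T}}\le\bar{\ell}^{\sub}$, together with $h^N_{\mathcal{T}}\ge0$, gives $\mu_N(y^\prime)=0$.
    \item If $m_{\sub}(y^\prime)>0$, then $\mu_N(y^\prime)>0$ by the equivalence just shown. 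Minimizing the lower side over $\Delta^{d-1}$ gives $m_{\sub}(y^\prime)\ge\mu_N(y^\prime)\ge1/(NQ_{\mathcal{T},N})\ge1/(N\bar{Q}_{\mathcal{T}})$.
\end{itemize}
None of these constants involves $b^n$ or $u^n$; they depend only on $d$, $C$ and $N$, and on $N$ only through the factor $1/N$.
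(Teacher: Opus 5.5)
Your proposal is correct and follows the paper's proof essentially step for step. It uses the same LP for $N\mu_N(y^\prime)$, the same generalized Laplace expansion along the $\tau$-columns with the count $\prod_n k_n\le(1+d/N)^N$ and Hadamard's inequality on the complementary $\theta$-block, and the same transfer to $m_{\sub}(y^\prime)$ via \Cref{lem:testset_sandwich_N}; your three-case argument just spells out what the paper compresses into $\mu_N(y^\prime)\le m_{\sub}(y^\prime)\le D^N_z\,\mu_N(y^\prime)$.
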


\begin{proof}
    $N\mu_N(y^\prime)$ is the optimal value of the linear program in the variables $(\theta,\tau)\in\mathbb{R}^d\times\mathbb{R}^N$
    \[
        \min\Bigl\{\textstyle\sum_{n=1}^N\tau_n \ \Bigm|\ \theta\ge0,\ \sum_i\theta_i=1,\ \tau_n\ge0,\ \tau_n\ge\langle g,\theta\rangle\ (g\in\mathcal{T}_{y^\prime,n},\ n=1,\ldots,N)\Bigr\} .
    \]
    The feasible region is nonempty and contains no line ($\theta$ is bounded and $\tau\ge0$), and the objective is nonnegative, so the optimal value is attained at an extreme point. An extreme point is determined by a nonsingular integer matrix $A^\prime$ consisting of $d+N$ linearly independent tight constraints and an integer right-hand side, so by Cramer's rule each component of the extreme point, and hence the objective value $\sum_n\tau_n$, is a rational number with denominator $|\det A^\prime|$.

    We bound $|\det A^\prime|$. Each row of $A^\prime$ is of type $(g,-e_n)$ ($g\in\mathcal{T}_{y^\prime,n}$), $(\mathbf{0},-e_n)$, $(\mathbf{1},\mathbf{0})$, or $(-e_i,\mathbf{0})$, and in the $N$ columns corresponding to $\tau$, each row has at most one nonzero entry ($-1$). Let $k_n$ be the number of rows with a nonzero entry in the $\tau_n$ column; since $A^\prime$ is nonsingular, $k_n\ge1$, and $\sum_nk_n\le d+N$. In the Laplace expansion along the $N$ columns of $\tau$, the choices of $N$ rows for which the $\tau$ part is nonsingular are exactly those choosing one row from each $\tau_n$ column, and the absolute value of the determinant of that part is $1$. There are $\prod_nk_n$ such choices, and by the inequality of arithmetic and geometric means, $\prod_nk_n\le\bigl((d+N)/N\bigr)^N=(1+d/N)^N$. The $\theta$ part of the remaining $d$ rows has rows of norm at most $d^{1/2}\max(C_R,1)$, since $\|g\|_2\le d^{1/2}C_R$, $\|\mathbf{1}\|_2=d^{1/2}$, and $\|e_i\|_2=1$; hence by Hadamard's inequality, the absolute value of its determinant is at most $(d^{1/2}\max(C_R,1))^d$. Therefore $1\le|\det A^\prime|\le Q_{\mathcal{T},N}$, and $N\mu_N(y^\prime)$ is a nonnegative rational number with denominator at most $Q_{\mathcal{T},N}$. Hence, if $N\mu_N(y^\prime)>0$, then $N\mu_N(y^\prime)\ge1/Q_{\mathcal{T},N}$. Also, $(1+d/N)^N\le\mathrm{e}^d$.

    By \Cref{lem:testset_sandwich_N}, $\mu_N(y^\prime)\le m_{\sub}(y^\prime)\le D^N_z\,\mu_N(y^\prime)$. Hence $\mu_N(y^\prime)=0$ if and only if $m_{\sub}(y^\prime)=0$, and if $m_{\sub}(y^\prime)>0$, then $m_{\sub}(y^\prime)\ge\mu_N(y^\prime)\ge1/(N\,Q_{\mathcal{T},N})$.
\end{proof}

\begin{corollary}[Decision of zero branches and realizability with multiple states]
    \label{cor:zero_branch_cert_N}
    In the above setting, if $\bar{\ell}^{\sub}(\bar{\theta};y^\prime)<1/(N\,Q_{\mathcal{T},N})$ for some $\bar{\theta}\in\Delta^{d-1}$ and a branch $y^\prime\in\mathcal{X}^{\mathrm{all}}_{\geq}$, then $m_{\sub}(y^\prime)=0$. In particular, if $\bar{\ell}^{\optdex}(\bar{\theta})<1/(N\,Q_{\mathcal{T},N})$ for some $\bar{\theta}\in\Delta^{d-1}$, then $\min_{\theta\in\Theta}\bar{\ell}^{\optdex}(\theta)=0$.
\end{corollary}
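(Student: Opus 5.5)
The plan is to deduce both claims of \Cref{cor:zero_branch_cert_N} directly from \Cref{prop:testset_value_gap_N} and \Cref{prop:dc_branch_decomp}, following the pattern of \Cref{cor:zero_branch_cert} for $N=1$. No new combinatorial estimate should be needed.

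\textbf{First claim.} Fix a branch $y^\prime$. Since $\bar{\theta}\in\Delta^{d-1}=\Theta$, the definition of the branch minimum gives
\[
m_{\sub}(y^\prime)\le\bar{\ell}^{\sub}(\bar{\theta};y^\prime)<\frac{1}{N\,Q_{\mathcal{T},N}} .
\]
\Cref{prop:testset_value_gap_N} says that $m_{\sub}(y^\prime)\in\{0\}\cup[1/(NQ_{\mathcal{T},N}),\infty)$. Combining the two, $m_{\sub}(y^\prime)=0$.

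One point needs checking. \Cref{prop:testset_value_gap_N} is stated for branches whose components satisfy $y^{\prime n}\in\mathcal{X}(s^n)$, and here $y^\prime$ is allowed to lie in $\mathcal{X}^{\mathrm{all}}_{\geq}$ rather than only in $\mathcal{Y}^{\mathrm{all}}_{\geq}$. This is covered, since $\mathcal{X}_{\geq x^n}(s^n)\subseteq\mathcal{X}(s^n)$. Its proof, via \Cref{lem:testset_sandwich_N}, uses only the feasibility of $y^{\prime n}$, so it applies.

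\textbf{Second claim.} Suppose $\bar{\ell}^{\optdex}(\bar{\theta})<1/(NQ_{\mathcal{T},N})$. By the first equality in \cref{eq:dc_pointwise_min}, and because the minimum there is attained (in fact over the finite set $\mathcal{Y}^{\mathrm{all}}_{\geq}$, by the second equality), there is a branch $y^\prime$ with
\[
\bar{\ell}^{\sub}(\bar{\theta};y^\prime)=\bar{\ell}^{\optdex}(\bar{\theta})<\frac{1}{N\,Q_{\mathcal{T},N}} .
\]
Concretely, one can take $y^{\prime n}=x^*(\bar{\theta};x^n,s^n)$. The first claim then gives $m_{\sub}(y^\prime)=0$.

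By \cref{eq:dc_min_min},
\[
0\le\min_{\theta\in\Theta}\bar{\ell}^{\optdex}(\theta)=\min_{y^{\prime\prime}}m_{\sub}(y^{\prime\prime})\le m_{\sub}(y^\prime)=0 ,
\]
where the left inequality is the nonnegativity of the OS loss. Hence $\min_{\theta\in\Theta}\bar{\ell}^{\optdex}(\theta)=0$.

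\textbf{Obstacle.} There is essentially none. The only care needed is confirming that the branch used in the second claim is admissible for \Cref{prop:testset_value_gap_N}, and that \Cref{assu:margin} holds in the integer box setting with $\Theta=\Delta^{d-1}$, so that \Cref{prop:dc_branch_decomp} can be invoked.
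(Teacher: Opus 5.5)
Your proposal is correct and matches the paper's proof: the first claim follows from $m_{\sub}(y^\prime)\le\bar{\ell}^{\sub}(\bar{\theta};y^\prime)$ together with the dichotomy of \Cref{prop:testset_value_gap_N}, and the second follows by applying the first claim to a branch attaining the minimum in \cref{eq:dc_pointwise_min}, then using \cref{eq:dc_min_min} and nonnegativity. Your explicit choice $y^{\prime n}=x^*(\bar{\theta};x^n,s^n)\in\mathcal{Y}_{\geq x^n}(s^n)$ is a small improvement in tidiness, since it makes the appeal to \cref{eq:dc_min_min}, which is stated over $\mathcal{Y}^{\mathrm{all}}_{\geq}$, literally applicable.
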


\begin{proof}
    The first half follows from \Cref{prop:testset_value_gap_N}. For the second half, $\bar{\ell}^{\optdex}(\bar{\theta})=\min_{y^\prime\in\mathcal{X}^{\mathrm{all}}_{\geq}}\bar{\ell}^{\sub}(\bar{\theta};y^\prime)$ by \cref{eq:dc_pointwise_min}; applying the first half to a branch $y^\prime$ attaining this minimum gives $m_{\sub}(y^\prime)=0$, and by \cref{eq:dc_min_min}, $\min_{\theta\in\Theta}\bar{\ell}^{\optdex}(\theta)\le m_{\sub}(y^\prime)=0$. Equality holds since $\bar{\ell}^{\optdex}\ge0$.
\end{proof}

The test-direction-cut inner solver (\Cref{def:testcut_variant}) also extends to $N\ge1$. At each evaluation point $\theta$, let the test-direction oracle of each state return a maximizing cut $g_n$ of $\max_{g\in\mathcal{T}_{y^\prime,n}\cup\{\mathbf{0}\}}\langle\theta,g\rangle$, and use $\frac{1}{N}\sum_ng_n$, which we call the aggregated cut, as the cut. For the family of aggregated cuts
\[
    \mathcal{F}_{y^\prime}:=\Bigl\{\frac{1}{N}\sum_{n=1}^Ng_n \ \Bigm|\ g_n\in\mathcal{T}_{y^\prime,n}\cup\{\mathbf{0}\}\Bigr\},
\]
we have $h^N_{\mathcal{T}}(\theta)=\max_{f\in\mathcal{F}_{y^\prime}}\langle\theta,f\rangle$, and the aggregated cut is a maximizer of this maximum.

\begin{corollary}[Number of iterations of one call of the test-direction-cut inner solver with multiple states]
    \label{cor:testcut_per_call_N}
    In the above setting, run the test-direction-cut inner solver with aggregated cuts, with the zero cut $\mathbf{0}$ as the initial cut, the cutting-plane method as the rule for selecting evaluation points, tolerance $0$, and $J=\infty$, and assume that the LPs are solved exactly. Then the solver stops within at most $|\mathcal{F}_{y^\prime}|+1\le\prod_{n=1}^N\bigl(|\mathcal{T}_{y^\prime,n}|+1\bigr)+1$ iterations, and the returned point is an exact minimizer over $\Delta^{d-1}$ of $h^N_{\mathcal{T}}$, with minimum value $\mu_N(y^\prime)$. If each $\mathcal{X}(s^n)$ is an M$^\natural$-convex set, then with the test set $\{e_i-e_j\mid i\ne j\}\cup\{\pm e_i\}$, $|\mathcal{T}_{y^\prime,n}|\le d(d+1)$, so the number is at most $(d(d+1)+1)^N+1$.
\end{corollary}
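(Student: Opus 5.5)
The argument follows the proof of \Cref{prop:testcut_per_call}(i), with the per-state family $\mathcal{T}_{y^\prime}\cup\{\mathbf{0}\}$ replaced by the family $\mathcal{F}_{y^\prime}$ of aggregated cuts.

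\textbf{Step 1: finiteness and the model.} Since each $\mathcal{T}_{y^\prime,n}$ is finite, $\mathcal{F}_{y^\prime}$ is finite. Its size is at most the number of tuples $(g_n)_n$, so $|\mathcal{F}_{y^\prime}|\le\prod_n(|\mathcal{T}_{y^\prime,n}|+1)$. The identity $h^N_{\mathcal{T}}(\theta)=\max_{f\in\mathcal{F}_{y^\prime}}\langle\theta,f\rangle$ holds because the maximum over tuples splits into a sum of per-state maxima. Every cut in the cut set $\mathcal{G}$ belongs to $\mathcal{F}_{y^\prime}$; this includes the zero cut, which is the aggregate of $g_n=\mathbf{0}$. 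Hence the model $\max_{g\in\mathcal{G}}\langle\theta,g\rangle$ is at most $h^N_{\mathcal{T}}$, so $\mathrm{LB}\le\mu_N(y^\prime)$. Since $\mathrm{UB}$ is the minimum of the evaluated values of $h^N_{\mathcal{T}}$, we also have $\mathrm{UB}\ge\mu_N(y^\prime)$.

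\textbf{Step 2: key claim.} Consider an iteration $j\ge1$. The evaluation point $\theta^j$ minimizes the model of $\mathcal{G}_{j-1}$ over $\Delta^{d-1}$. Suppose the returned aggregated cut $f_j$ already lies in $\mathcal{G}_{j-1}$. Then $\mathcal{G}_j=\mathcal{G}_{j-1}$, and
\[
\mathrm{LB}=\max_{g\in\mathcal{G}_{j-1}}\langle\theta^j,g\rangle\ge\langle\theta^j,f_j\rangle=h^N_{\mathcal{T}}(\theta^j)\ge\mathrm{UB},
\]
so the stopping test with tolerance $0$ fires.

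\textbf{Step 3: counting.} By the claim, every non-stopping iteration $j\ge1$ adds a new element of $\mathcal{F}_{y^\prime}$. The zero cut is already present initially, so there are at most $|\mathcal{F}_{y^\prime}|-1$ such iterations. Adding iteration $0$ and the stopping iteration gives the bound $|\mathcal{F}_{y^\prime}|+1$. At stopping, $\mathrm{UB}\le\mathrm{LB}$, and Step 1 then forces $\mathrm{UB}=\mathrm{LB}=\mu_N(y^\prime)$. The returned point $\hat\theta$ therefore attains $h^N_{\mathcal{T}}(\hat\theta)=\mu_N(y^\prime)$, so it is an exact minimizer.

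\textbf{Step 4: the M$^\natural$-convex case.} Here $\mathcal{T}_{y^\prime,n}\subseteq\{e_i-e_j\mid i\ne j\}\cup\{\pm e_i\}$, a set of size $d(d-1)+2d=d(d+1)$. Substituting into the product gives the bound $(d(d+1)+1)^N+1$.

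The only delicate point is in Step 3. The bound counts $\mathcal{F}_{y^\prime}$ as a set of vectors, not as a set of tuples, since distinct tuples can yield the same aggregated cut. The claim in Step 2 uses equality of the vectors, so this is consistent: a repeated vector triggers stopping even if it comes from a different tuple. Otherwise the argument is routine bookkeeping.
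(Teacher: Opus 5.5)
Your proposal is correct and takes the same route as the paper. The paper's proof also transfers the argument of \Cref{prop:testcut_per_call}(i) to the family $\mathcal{F}_{y^\prime}$ of aggregated cuts: a repeated cut at a model minimizer forces stopping, each non-stopping iteration $j\ge1$ adds a new element of $\mathcal{F}_{y^\prime}\setminus\{\mathbf{0}\}$, and adding iteration $0$ and the stopping iteration gives $|\mathcal{F}_{y^\prime}|+1$, with the M$^\natural$-convex count following as before.
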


\begin{proof}
    The proof of \Cref{prop:testcut_per_call}(i) uses only that the family of cuts is finite, that the oracle returns a maximizing cut of the model function at the evaluation point, and that the zero cut always belongs to the cut set. Replacing the family $\mathcal{T}_{y^\prime}\cup\{\mathbf{0}\}$ by $\mathcal{F}_{y^\prime}$ and $h_{\mathcal{T}}$ by $h^N_{\mathcal{T}}$, the same argument holds. Since the zero cut is an element of $\mathcal{F}_{y^\prime}$ and belongs to the cut set from the beginning, each non-stopping iteration $j\ge1$ adds a new element of $\mathcal{F}_{y^\prime}\setminus\{\mathbf{0}\}$, so there are at most $|\mathcal{F}_{y^\prime}|-1$ of them. Adding iteration $j=0$ and the stopping iteration, the number of iterations is at most $|\mathcal{F}_{y^\prime}|+1$. Moreover, $|\mathcal{F}_{y^\prime}|\le\prod_n(|\mathcal{T}_{y^\prime,n}|+1)$, and the bound in the M$^\natural$-convex case follows as in \Cref{prop:testcut_per_call}(i).
\end{proof}

{

\section{Proofs for the generalization analysis}
\label{sec:appendix_generalization}

In this section, we prove the results of \Cref{sec:generalization}.
We follow the notation of \Cref{sec:generalization}.
Following the standard treatment in empirical process theory, we assume that the events appearing below (such as unions over all weights consistent with the observations) are measurable.

$\ell^{01}(\theta;x^\prime,s)=0$ holds if and only if the OS loss $\ell^{\optdex}(\theta;x^\prime,s)$ is $0$ and the optimal solution is unique (\Cref{lem:gen_e_vs_os}).
In this case, the oracle returns the unique optimal solution, which dominates $x^\prime$; hence $\mathrm{GE}(\theta)$ is an upper bound on the probability that, at a new state, the optimal solution returned by the oracle does not dominate the agent's action, regardless of which optimal solution the oracle returns when there are several.
We do not use the OS loss itself because, for a weight whose optimal solution is not unique, the oracle may return an optimal solution that does not dominate the agent's action even when the OS loss is $0$ (\Cref{sec:performance_metric}).
Under \Cref{assu:gen}(2), $\theta^*$ is a.s.\ consistent with the observations.

In this section, we use the following notation. For a set $Z\subseteq\mathbb{R}^d$, we write $\Conv Z$ for its convex hull, and for a convex set $C$, we write $\ext C$ for the set of its extreme points (\Cref{subsec:dc_notation}). For a bounded polyhedron $P$, $\ext P$ is the set of vertices of $P$. A point $u$ of a convex set $C$ is called an \emph{exposed point} of $C$ if, for some $\theta\in\mathbb{R}^d$, $u$ is the unique maximizer of $\langle\theta,\cdot\rangle$ over $C$. We write $\inte Z$ for the interior of a set $Z$. A \emph{face} of a bounded polyhedron $P$ is a subset that can be written as $\argmax_{y\in P}\langle\theta,y\rangle$ for some $\theta\in\mathbb{R}^d$. A finite family $\mathcal{A}$ of distinct affine hyperplanes $H=\{x\mid g_H(x)=0\}$ in $\mathbb{R}^k$ (where $g_H$ is a nonconstant affine function) is called a \emph{hyperplane arrangement}. For a sign pattern $\tau\in\{-,+\}^{\mathcal{A}}$, if $\{x\in\mathbb{R}^k\mid\operatorname{sign}g_H(x)=\tau_H\ (\forall H\in\mathcal{A})\}$ is nonempty, it is called a \emph{region} of $\mathcal{A}$. The regions form a partition of $\mathbb{R}^k\setminus\bigcup_{H\in\mathcal{A}}H$, and their number does not depend on the choice of signs of the $g_H$.

\subsection{Preliminaries}
\label{subsec:appendix_gen_preliminaries}

We supplement the setting of \Cref{sec:generalization}. We assume that $x\in\mathcal{X}(s)$ holds almost surely (a.s.) for $(s,x)\sim\mathcal{P}$. The $\operatorname{ess\,sup}$ in \Cref{assu:gen}(1) is the essential supremum with respect to the marginal distribution of $\mathcal{P}$ in $s$. When $\mathcal{S}$ is finite, $\kappa<\infty$ in \Cref{assu:gen}(1) holds automatically, since each $\mathcal{Y}(s)$ is finite (\Cref{rem:assu_margin}).

\begin{lemma}
    \label{lem:gen_extreme}
    Assume \Cref{assu:margin}(2)(3), and fix $s\in\mathcal{S}$ and $\theta\in\mathbb{R}^d$.
    \begin{description}
        \item[(i)] $\ext\Conv\mathcal{X}(s)\subseteq\mathcal{Y}(s)\subseteq\mathcal{X}(s)$, $\mathcal{Y}(s)$ is a finite set, and we have $\max_{x\in\mathcal{X}(s)}\langle\theta,x\rangle=\max_{u\in\mathcal{Y}(s)}\langle\theta,u\rangle$.
        \item[(ii)] $\argmax_{y\in\mathcal{X}(s)}\langle\theta,y\rangle$ is a singleton if and only if there is exactly one point $u$ in $\mathcal{Y}(s)$ that maximizes $\langle\theta,\cdot\rangle$, and in this case $\argmax_{y\in\mathcal{X}(s)}\langle\theta,y\rangle=\{u\}$.
        In particular, if $\argmax_{y\in\mathcal{X}(s)}\langle\theta,y\rangle=\{v\}$, then $v\in\mathcal{Y}(s)$.
    \end{description}
\end{lemma}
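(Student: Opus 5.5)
For (i) I would work piece by piece. By \Cref{assu:margin}(2), write $\mathcal{X}(s)=\bigcup_{k=1}^K P_k$ with nonempty bounded polyhedra $P_k$. By \Cref{assu:margin}(3), every oracle output is a vertex of some $P_k$. Hence $\mathcal{Y}(s)\subseteq\bigcup_k\ext P_k$, which is finite, and $\mathcal{Y}(s)\subseteq\mathcal{X}(s)$. The identity $\max_{x\in\mathcal{X}(s)}\langle\theta,x\rangle=\max_{u\in\mathcal{Y}(s)}\langle\theta,u\rangle$ then follows because the oracle output at $\theta$ lies in $\mathcal{Y}(s)$ and attains the maximum.

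For $\ext\Conv\mathcal{X}(s)\subseteq\mathcal{Y}(s)$ I would argue as follows. $\Conv\mathcal{X}(s)=\Conv\bigcup_k\ext P_k$ is a polytope, so its extreme points coincide with its exposed points. Take $u\in\ext\Conv\mathcal{X}(s)$ and a $\theta$ that exposes it. Then $u$ is the unique maximizer of $\langle\theta,\cdot\rangle$ over $\Conv\mathcal{X}(s)$. Since $u\in\bigcup_k\ext P_k\subseteq\mathcal{X}(s)$, it is also the unique maximizer over $\mathcal{X}(s)$. The oracle at this $\theta$ must return a maximizer, so it returns $u$, and therefore $u\in\mathcal{Y}(s)$. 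This is the step needing the most care. It uses that extreme points of a polytope are exposed, and that the oracle is defined for every $\theta\in\mathbb{R}^d$.

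For (ii), suppose first that $\argmax_{\mathcal{X}(s)}\langle\theta,\cdot\rangle=\{v\}$. The oracle output at $\theta$ lies in $\mathcal{Y}(s)$ and is a maximizer, so it equals $v$. Thus $v\in\mathcal{Y}(s)$, and since $\mathcal{Y}(s)\subseteq\mathcal{X}(s)$, $v$ is the only maximizer in $\mathcal{Y}(s)$. This gives both the forward direction and the final sentence.

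For the converse, suppose $u$ is the unique maximizer of $\langle\theta,\cdot\rangle$ in $\mathcal{Y}(s)$, and let $y\in\mathcal{X}(s)$ be any maximizer, say $y\in P_k$. Then $\argmax_{P_k}\langle\theta,\cdot\rangle$ is a nonempty face $F$ of $P_k$ containing $y$, with value equal to the global maximum. Every vertex of $F$ is a vertex of $P_k$ attaining the global maximum.

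The obstacle is to show that these vertices lie in $\mathcal{Y}(s)$, since the oracle's tie-breaking is arbitrary. I would handle it by perturbation. For a vertex $w$ of $F$, choose $\eta$ exposing $w$ in $P_k$ and consider $\theta+\epsilon\eta$ for small $\epsilon>0$. Because every $P_k$ has finitely many vertices, for small $\epsilon$ the maximizers of $\theta+\epsilon\eta$ over all vertices lie among the $\theta$-maximizing vertices. Among those, ties could still occur across pieces.

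To avoid this, I would instead choose $\eta$ generic, so that $\langle\eta,\cdot\rangle$ is injective on the finite set $\bigcup_j\ext P_j$ and $w$ maximizes $\eta$ over the $\theta$-optimal vertices of all pieces. Then $\theta+\epsilon\eta$ has the unique maximizer $w$ over the vertex set, hence over $\mathcal{X}(s)$, and the oracle must return $w$, so $w\in\mathcal{Y}(s)$. Achieving "$w$ maximizes $\eta$ over the set" for a vertex $w$ that is extreme in $\Conv$ of the $\theta$-optimal vertices is exactly exposure in that polytope, followed by a small generic perturbation.

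Applying this to the vertices of $F$ shows they all lie in $\mathcal{Y}(s)$. By uniqueness each of them equals $u$, so $F=\{u\}$ and hence $y=u$. Therefore the argmax over $\mathcal{X}(s)$ is $\{u\}$.
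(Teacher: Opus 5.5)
Part (i) and the forward direction of (ii) are correct and follow the paper. Your derivation of the max identity directly from the oracle output at $\theta$ is slightly shorter than the paper's chain of inequalities.

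\textbf{The gap in the converse of (ii).} The failing step is ``Applying this to the vertices of $F$ shows they all lie in $\mathcal{Y}(s)$.'' Let $V^*$ be the set of $\theta$-optimal vertices of all pieces. Your perturbation step needs $w$ to be the unique $\eta$-maximizer over $V^*$. As you note yourself, this is possible only when $w$ is an extreme point of $\Conv V^*$. A vertex of the face $F=\argmax_{P_k}\langle\theta,\cdot\rangle$ of a single piece need not be one: it can lie strictly between optimal vertices of other pieces.

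For example, take $\mathcal{X}(s)=\{(1,0)\}\cup\Conv\{(0,0),(2,0)\}$ and $\theta=(0,1)$. Then $(1,0)$ is the vertex of $F=P_1$. However, an oracle that returns $(0,0)$ whenever $\theta_1=0$ never outputs it, so $(1,0)\notin\mathcal{Y}(s)$. The hypothesis of the converse fails here, so the lemma is not contradicted. But the example shows that ``every vertex of $F$ lies in $\mathcal{Y}(s)$'' does not follow from your argument. Under the hypothesis it holds only a posteriori, once one knows that $u$ is the only $\theta$-optimal vertex, and that is exactly what has to be proved.

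\textbf{The fix.} Apply the exposure step to the extreme points of $\Conv V^*$ instead of to the vertices of $F$. $\Conv V^*$ is the optimal face $F^*:=\argmax_{y\in\Conv\mathcal{X}(s)}\langle\theta,y\rangle$ of the polytope $\Conv\mathcal{X}(s)$. Its extreme points are therefore extreme points of $\Conv\mathcal{X}(s)$, and they already lie in $\mathcal{Y}(s)$ by (i), so no perturbation is needed. Each of them attains the maximum, so by uniqueness each equals $u$. Hence $F^*=\{u\}$, so $V^*=\{u\}$, every vertex of your $F$ equals $u$, and $y=u$.

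This is exactly the paper's proof. It shows via (i) that the $\theta$-optimal extreme points of $\Conv\mathcal{X}(s)$ are among the $\theta$-optimal points of $\mathcal{Y}(s)$. It then writes the optimal face of $\Conv\mathcal{X}(s)$ as their convex hull, and uses that $\argmax_{y\in\mathcal{X}(s)}\langle\theta,y\rangle$ is $\mathcal{X}(s)$ intersected with that face.
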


\begin{proof}
    (i) By \Cref{assu:margin}(2), we can write $\mathcal{X}(s)=\bigcup_{j}Q_j$ (finitely many nonempty bounded polyhedra $Q_j$). $\Conv\mathcal{X}(s)=\Conv\bigl(\bigcup_j\ext {Q_j}\bigr)$ is a bounded polyhedron, and its extreme points are contained in the finite set $\bigcup_j\ext {Q_j}\subseteq\mathcal{X}(s)$.
    Let $u\in\ext\Conv\mathcal{X}(s)$. Since the extreme points of a polyhedron are exposed points, there exists $\theta^\prime$ such that $u$ is the unique maximizer over $\Conv\mathcal{X}(s)$, and $\argmax_{y\in\mathcal{X}(s)}\langle\theta^\prime,y\rangle=\{u\}$. Since the oracle returns an optimal solution, $x^*(\theta^\prime,s)=u$, and thus $u\in\mathcal{Y}(s)$.
    $\mathcal{Y}(s)\subseteq\mathcal{X}(s)$ and the finiteness of $\mathcal{Y}(s)$ are as stated in the remark after \Cref{assu:margin}(3).
    Since the maximum of a linear function over a compact convex set is attained at an extreme point, we have $\max_{\mathcal{X}(s)}\langle\theta,\cdot\rangle\ge\max_{\mathcal{Y}(s)}\langle\theta,\cdot\rangle\ge\max_{\ext\Conv\mathcal{X}(s)}\langle\theta,\cdot\rangle=\max_{\Conv\mathcal{X}(s)}\langle\theta,\cdot\rangle\ge\max_{\mathcal{X}(s)}\langle\theta,\cdot\rangle$.

    (ii) Let $c^*$ be the maximum, and set $U:=\{u\in\ext\Conv\mathcal{X}(s)\mid\langle\theta,u\rangle=c^*\}$ and $U_{\mathcal{Y}}:=\{u\in\mathcal{Y}(s)\mid\langle\theta,u\rangle=c^*\}$. By (i), $\emptyset\neq U\subseteq U_{\mathcal{Y}}\subseteq\argmax_{y\in\mathcal{X}(s)}\langle\theta,y\rangle$.
    The set $F^*$ of optimal solutions over $\Conv\mathcal{X}(s)$ is a face of the bounded polyhedron $\Conv\mathcal{X}(s)$, and since a face of a bounded polyhedron is the convex hull of the vertices it contains, $F^*=\Conv U$; by (i), $\argmax_{y\in\mathcal{X}(s)}\langle\theta,y\rangle=\mathcal{X}(s)\cap F^*$.
    If $|U_{\mathcal{Y}}|=1$, then $|U|=1$, so $F^*=U\subseteq\mathcal{X}(s)$ gives $\argmax_{y\in\mathcal{X}(s)}\langle\theta,y\rangle=U=U_{\mathcal{Y}}$.
    If $|U_{\mathcal{Y}}|\ge2$, then $U_{\mathcal{Y}}\subseteq\argmax_{y\in\mathcal{X}(s)}\langle\theta,y\rangle$ shows that $\argmax_{y\in\mathcal{X}(s)}\langle\theta,y\rangle$ is not a singleton.
\end{proof}

\begin{lemma}
    \label{lem:gen_e_vs_os}
    Let $\theta\in\Theta$, $s\in\mathcal{S}$, and $x^\prime\in\mathcal{X}(s)$.
    \begin{description}
        \item[(i)] We have $\ell^{01}(\theta;x^\prime,s)=0$ if and only if $\ell^{\optdex}(\theta;x^\prime,s)=0$ and the set $\argmax_{y\in\mathcal{X}(s)}\langle\theta,y\rangle$ is a singleton.
        \item[(ii)] If $\ell^{01}(\theta;x^\prime,s)=0$, then the optimal solution returned by the oracle satisfies $x^*(\theta,s)\geq x^\prime$.
        \item[(iii)] Under \Cref{assu:gen}(2), $\theta^*$ is a.s.\ consistent with the observations.
    \end{description}
\end{lemma}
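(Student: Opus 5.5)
The plan is to treat the three parts in order, using \Cref{lem:gen_extreme} only for the uniqueness bookkeeping. Everything else comes from the definitions of $\ell^{01}$ and $\ell^{\optdex}$ and the characterization of $\ell^{\optdex}=0$ in \Cref{sec:performance_metric}.

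For (i), I will prove the two directions separately.

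Suppose first that $\ell^{01}(\theta;x^\prime,s)=0$. Then $\argmax_{y\in\mathcal{X}(s)}\langle\theta,y\rangle=\{v\}$ for some $v\geq x^\prime$, so the argmax is a singleton. Moreover $v\in\mathcal{X}_{\geq x^\prime}(s)$, and hence
\[
\max_{x\in\mathcal{X}_{\geq x^\prime}(s)}\langle\theta,x\rangle\ge\langle\theta,v\rangle=\max_{x\in\mathcal{X}(s)}\langle\theta,x\rangle .
\]
Combined with the reverse inequality coming from $\mathcal{X}_{\geq x^\prime}(s)\subseteq\mathcal{X}(s)$, this gives $\ell^{\optdex}(\theta;x^\prime,s)=0$.

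Conversely, suppose $\ell^{\optdex}=0$ and the argmax is a singleton $\{v\}$. The dominance-constrained maximum is attained at some $w\in\mathcal{X}_{\geq x^\prime}(s)$, by compactness (\Cref{rem:assu_margin}). Its value equals the unconstrained maximum, so $w$ lies in the unconstrained argmax, which forces $w=v$. Therefore $v\geq x^\prime$, and $\ell^{01}=0$.

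For (ii), the oracle $x^*(\theta,s)$ returns an element of $\argmax_{y\in\mathcal{X}(s)}\langle\theta,y\rangle=\{v\}$. It must therefore equal $v$, and $v\geq x^\prime$. If a vertex-returning oracle is wanted, \Cref{lem:gen_extreme}(ii) confirms that $v\in\mathcal{Y}(s)$, so the description is consistent with \Cref{assu:margin}(3). This is not needed for the claim itself.

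For (iii), \Cref{assu:gen}(2) provides a $\mathcal{P}$-null set outside of which the argmax at $s$ under $\theta^*$ is the singleton $\{x^*(\theta^*,s)\}$ with $x^*(\theta^*,s)\ge x$. Outside that set we therefore have $\ell^{01}(\theta^*;x,s)=0$. Since the $N$ observations are i.i.d. from $\mathcal{P}$, a union bound over the $N$ null events shows that, almost surely, $\ell^{01}(\theta^*;x^n,s^n)=0$ for all $n$, which is exactly consistency.

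I expect no step to be a real obstacle. The only point needing care is the converse in (i), where one must make sure the dominance-constrained maximizer exists and coincides with the unique unconstrained maximizer.
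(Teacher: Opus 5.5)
Your proof is correct and follows the same route as the paper. For (i), the paper invokes the equivalence $\ell^{\optdex}(\theta;x^\prime,s)=0 \iff \argmax_{y\in\mathcal{X}(s)}\langle\theta,y\rangle\cap\mathcal{X}_{\geq x^\prime}(s)\neq\emptyset$ from \Cref{sec:performance_metric}, and your two directions unpack exactly that; (ii) and (iii) match the paper, with your union bound over the $N$ i.i.d.\ observations spelling out the step the paper leaves implicit.
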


\begin{proof}
    (i) As stated in \Cref{sec:performance_metric}, $\ell^{\optdex}(\theta;x^\prime,s)=0$ is equivalent to $\argmax_{y\in\mathcal{X}(s)}\langle\theta,y\rangle\cap\mathcal{X}_{\geq x^\prime}(s)\neq\emptyset$. When $\argmax_{y\in\mathcal{X}(s)}\langle\theta,y\rangle=\{v\}$, this is equivalent to $v\geq x^\prime$.
    (ii) Since $v$ is the only optimal solution, $x^*(\theta,s)=v\geq x^\prime$.
    (iii) This follows from the fact that $\ell^{01}(\theta^*;x,s)=0$ holds a.s.
\end{proof}

\subsection{Proof of \texorpdfstring{\Cref{thm:gen_upper_general}}{the theorem}}
\label{subsec:appendix_gen_upper_general}

Fix finitely many points $z_i=(s_i,x_i)$ ($i=1,\ldots,m$, $|\mathcal{Y}(s_i)|\le\kappa$).
For each $i$ and distinct $u,u^\prime\in\mathcal{Y}(s_i)$, consider the hyperplane through the origin $\{\theta\mid\langle\theta,u-u^\prime\rangle=0\}$; let $\mathcal{A}$ be the family of all such hyperplanes (identical hyperplanes are counted once), and let $h$ be their number.
Then $h\le m\kappa(\kappa-1)/2\le m\kappa^2/2$.
Fix a normal vector $a_H$ of each hyperplane $H\in\mathcal{A}$. For a sign vector $\sigma\in\{-,0,+\}^{\mathcal{A}}$, if $\{\theta\in\mathbb{R}^d\mid\operatorname{sign}\langle\theta,a_H\rangle=\sigma_H\ (\forall H\in\mathcal{A})\}$ is nonempty, it is called a \emph{cell} of $\mathcal{A}$ (we use this name to distinguish it from a face of a bounded polyhedron).
$\mathbb{R}^d$ is partitioned into the cells of $\mathcal{A}$, and a cell whose signs are all nonzero is a region of $\mathcal{A}$.

\begin{lemma}
    \label{lem:gen_face_constant}
    Assume \Cref{assu:margin}(2)(3). The map $\theta\mapsto(\ell^{01}(\theta;x_i,s_i))_{i=1}^m$ ($\theta\in\Theta$) is constant on the intersection of each cell of $\mathcal{A}$ with $\Theta$.
\end{lemma}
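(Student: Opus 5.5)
We fix $i$ and show that $\theta\mapsto\ell^{01}(\theta;x_i,s_i)$ is constant on $C\cap\Theta$ for each cell $C$ of $\mathcal{A}$. The whole argument reduces, via \Cref{lem:gen_extreme}, to the finite set $\mathcal{Y}(s_i)$. Recall that $\max_{\mathcal{X}(s_i)}\langle\theta,\cdot\rangle=\max_{\mathcal{Y}(s_i)}\langle\theta,\cdot\rangle$. Moreover, the argmax over $\mathcal{X}(s_i)$ is a singleton $\{v\}$ if and only if exactly one $u\in\mathcal{Y}(s_i)$ attains the maximum, and in that case $v=u$. Hence $\ell^{01}(\theta;x_i,s_i)=0$ if and only if the set
\[
U(\theta):=\argmax_{u\in\mathcal{Y}(s_i)}\langle\theta,u\rangle
\]
is a singleton $\{u\}$ with $u\geq x_i$. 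It therefore suffices to show that $U(\theta)$ is the same subset of $\mathcal{Y}(s_i)$ for all $\theta$ in a given cell.

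The key observation is that, for distinct $u,u^\prime\in\mathcal{Y}(s_i)$, the sign of $\langle\theta,u-u^\prime\rangle$ is constant on each cell. Indeed, the hyperplane $\{\theta\mid\langle\theta,u-u^\prime\rangle=0\}$ belongs to $\mathcal{A}$, say it is $H$ with fixed normal $a_H$. Since $u-u^\prime$ is a nonzero vector normal to $H$, we have $u-u^\prime=c\,a_H$ for some scalar $c\neq0$. Consequently,
\[
\operatorname{sign}\langle\theta,u-u^\prime\rangle=\operatorname{sign}(c)\,\sigma_H
\]
is determined by the cell's sign vector $\sigma$. This is the only mildly delicate point. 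Identical hyperplanes were counted once, so several pairs $(u,u^\prime)$ may share the same $H$, possibly with opposite orientations. The proportionality constant $c$ handles this uniformly, so no care about orientation is needed.

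Now $u\in U(\theta)$ if and only if $\langle\theta,u-u^\prime\rangle\ge0$ for every $u^\prime\in\mathcal{Y}(s_i)\setminus\{u\}$. Each of these conditions depends only on the signs just discussed, so membership in $U(\theta)$ is constant on each cell. Thus $U(\theta)$ is a fixed subset of $\mathcal{Y}(s_i)$ throughout $C\cap\Theta$. The two conditions "$U(\theta)$ is a singleton" and "its element dominates $x_i$" are then also constant on $C\cap\Theta$. By the characterization in the first paragraph, $\ell^{01}(\theta;x_i,s_i)$ is constant there.

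Applying this to each $i=1,\ldots,m$ shows that the whole vector $(\ell^{01}(\theta;x_i,s_i))_{i=1}^m$ is constant on $C\cap\Theta$, which proves the lemma.
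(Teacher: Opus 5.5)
Your proof is correct and follows essentially the same route as the paper: the signs of $\langle\theta,u-u^\prime\rangle$ are constant on each cell, so the maximizing subset of $\mathcal{Y}(s_i)$ is constant there, and \Cref{lem:gen_extreme}(ii) then determines $\ell^{01}(\theta;x_i,s_i)$ from that subset alone. Your remark about the proportionality constant $c$ spells out the orientation bookkeeping for merged hyperplanes, which the paper leaves implicit.
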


\begin{proof}
    On a cell, the sign of $\langle\theta,u\rangle-\langle\theta,u^\prime\rangle$ is constant for all $i$ and $u,u^\prime\in\mathcal{Y}(s_i)$, so the set of points in $\mathcal{Y}(s_i)$ that maximize $\langle\theta,\cdot\rangle$ is constant.
    By \Cref{lem:gen_extreme}(ii), whether $\argmax_{y\in\mathcal{X}(s_i)}\langle\theta,y\rangle$ is a singleton, and its element, are determined by this set alone. Hence $\ell^{01}(\theta;x_i,s_i)$ is constant.
\end{proof}

The following two lemmas bound the number of cells. In what follows, let $\Phi_k(h):=\sum_{i=0}^k\binom{h}{i}$ (with $\binom{h}{i}=0$ for $i>h$). $\Phi_k(h)$ is nondecreasing in $h$ and satisfies $\Phi_k(0)=\Phi_0(h)=1$ and $\Phi_k(h)=\Phi_k(h-1)+\Phi_{k-1}(h-1)$ ($k,h\ge1$).

\begin{lemma}
    \label{lem:gen_region_count}
    The number of regions of an arrangement of at most $h$ hyperplanes in $\mathbb{R}^k$ is at most $\Phi_k(h)$.
\end{lemma}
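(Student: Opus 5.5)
We prove the bound on the number of regions by induction on $h$, via the deletion–restriction recursion, which matches the Pascal-type recursion $\Phi_k(h)=\Phi_k(h-1)+\Phi_{k-1}(h-1)$ recorded just before the statement. We may assume the arrangement has exactly $h$ hyperplanes, since $\Phi_k$ is nondecreasing in $h$. The base cases are the following. If $h=0$, there is one region, namely $\mathbb{R}^k$, and $\Phi_k(0)=1$. If $k=0$, the space is a point and no nonconstant affine function exists, so $h=0$ again and there is one region, with $\Phi_0(h)=1$. (For $k=0$ we may take the convention that $\mathbb{R}^0$ admits no hyperplanes; alternatively, start the induction at $k=1$, where $h$ distinct points cut the line into $h+1=\Phi_1(h)$ intervals.)

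For the inductive step, with $k,h\ge1$, fix a hyperplane $H_0\in\mathcal{A}$ and let $\mathcal{A}':=\mathcal{A}\setminus\{H_0\}$. By induction, $\mathcal{A}'$ has at most $\Phi_k(h-1)$ regions. Every region $R$ of $\mathcal{A}$ lies in a unique region $R'$ of $\mathcal{A}'$. Each region $R'$ of $\mathcal{A}'$ is an open convex set, being an intersection of open half-spaces. If $R'\cap H_0=\emptyset$, then $R'$ is itself a region of $\mathcal{A}$. If $R'\cap H_0\neq\emptyset$, then by convexity and openness $R'$ is split into exactly two regions of $\mathcal{A}$, namely $R'\cap\{g_{H_0}>0\}$ and $R'\cap\{g_{H_0}<0\}$; both are nonempty because $R'$ is open and meets $H_0$. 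Hence
\[
\#\text{regions}(\mathcal{A})=\#\text{regions}(\mathcal{A}')+\#\{R' : R'\cap H_0\neq\emptyset\}.
\]

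The main step is bounding the second term. Identify $H_0$ with $\mathbb{R}^{k-1}$ through an affine parametrization. For each $H\in\mathcal{A}'$, the restriction $g_H|_{H_0}$ is either constant or a nonconstant affine function on $H_0$. The nonconstant restrictions define an arrangement $\mathcal{A}''$ in $H_0$ of at most $h-1$ distinct hyperplanes; distinct $H$ may restrict to the same hyperplane of $H_0$, which is why we only claim at most $h-1$. Every region $R'$ meeting $H_0$ yields the nonempty set $R'\cap H_0$. On this set, each constant restriction is a nonzero constant, since $R'$ avoids $H$, and each nonconstant restriction has the sign fixed by $R'$. So $R'\cap H_0$ is contained in a region of $\mathcal{A}''$, and it is in fact exactly that region, because the signs of all $g_H$ on $H_0$ determine membership in $R'$.

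Distinct regions $R'$ have disjoint traces, so the map $R'\mapsto R'\cap H_0$ is injective into the regions of $\mathcal{A}''$. By induction in dimension $k-1$, the number of regions $R'$ meeting $H_0$ is therefore at most $\Phi_{k-1}(h-1)$. Summing the two terms and applying the recursion gives the claim.

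We expect the only delicate point to be the bookkeeping of the restriction step: restrictions may be constant or may coincide, and we must check that the map of traces is well defined and injective. Both issues are handled by the sign argument above.
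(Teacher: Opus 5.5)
Your proof is correct and is essentially the paper's argument. Like the paper, you delete one hyperplane $H$ and note that each region of $\mathcal{A}\setminus\{H\}$ either survives intact or splits into exactly two, according to whether it meets $H$. You then inject the regions that meet $H$ into the regions of the restricted arrangement on $H\cong\mathbb{R}^{k-1}$, with parallel hyperplanes having constant sign on $H$, which yields $r_k(h)\le r_k(h-1)+r_{k-1}(h-1)$.

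The only difference is cosmetic: the paper phrases this as induction on $k+h$ for the maximal count $r_k(h)$. Your induction on $h$ works equally well, provided the hypothesis is quantified over all dimensions $k$, since both recursive uses are at $h-1$.
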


\begin{proof}
    Let $r_k(h)$ be the maximum number of regions of an arrangement of at most $h$ hyperplanes in $\mathbb{R}^k$; we show $r_k(h)\le\Phi_k(h)$ by induction on $k+h$.
    If $h=0$, the only region is $\mathbb{R}^k$, and if $k=0$, $\mathbb{R}^0$ is a single point and has no hyperplanes; hence $r_k(0)=r_0(h)=1$.
    Let $k,h\ge1$, let $\mathcal{A}$ be an arrangement of $h$ hyperplanes in $\mathbb{R}^k$, let $H\in\mathcal{A}$, and set $\mathcal{A}^\prime:=\mathcal{A}\setminus\{H\}$.
    For a region $R^\prime$ of $\mathcal{A}^\prime$ with sign pattern $\tau^\prime$, if $R^\prime\cap H=\emptyset$, then $g_H$ has no zeros on the convex set $R^\prime$, so its sign is constant, and $R^\prime$ is a region of $\mathcal{A}$.
    If $R^\prime\cap H\neq\emptyset$, then, since $R^\prime$ is an open convex set, it contains both points with $g_H>0$ and points with $g_H<0$, and $R^\prime\setminus H$ splits into the two regions $R^\prime\cap\{g_H>0\}$ and $R^\prime\cap\{g_H<0\}$ of $\mathcal{A}$.
    Since each region of $\mathcal{A}$ is contained in exactly one $R^\prime$, the number of regions of $\mathcal{A}$ equals the number of regions of $\mathcal{A}^\prime$ plus the number of regions of $\mathcal{A}^\prime$ that intersect $H$.

    We bound the number of regions of $\mathcal{A}^\prime$ that intersect $H$. Identify $H$ with $\mathbb{R}^{k-1}$ via an affine isomorphism. If $H^\prime\in\mathcal{A}^\prime$ is parallel to $H$ ($H\cap H^\prime=\emptyset$), then $g_{H^\prime}$ has constant sign on $H$; otherwise, $H\cap H^\prime$ is a hyperplane of $H$.
    Let $\mathcal{A}_H$ be the family of all the latter $H\cap H^\prime$ (duplicates counted once); then $\mathcal{A}_H$ is an arrangement of at most $h-1$ hyperplanes in $H$.
    If $R^\prime\cap H\neq\emptyset$, then $R^\prime\cap H=\{x\in H\mid\operatorname{sign}g_{H^\prime}(x)=\tau^\prime_{H^\prime}\ (\forall H^\prime\in\mathcal{A}^\prime)\}$ is a region of $\mathcal{A}_H$ (the conditions for parallel $H^\prime$ hold identically on $H$).
    Since distinct $R^\prime$ are disjoint, the map $R^\prime\mapsto R^\prime\cap H$ is injective. Hence the number of regions of $\mathcal{A}^\prime$ that intersect $H$ is at most $r_{k-1}(h-1)$.
    Therefore $r_k(h)\le r_k(h-1)+r_{k-1}(h-1)$, and the induction hypothesis together with $\Phi_k(h)=\Phi_k(h-1)+\Phi_{k-1}(h-1)$ yields $r_k(h)\le\Phi_k(h)$.
\end{proof}

\begin{lemma}
    \label{lem:gen_face_count}
    The number of cells of an arrangement of $h$ hyperplanes through the origin in $\mathbb{R}^d$ is at most $\Phi_d(2h)$.
\end{lemma}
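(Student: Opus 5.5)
The plan is to reduce the count of cells (sign vectors in $\{-,0,+\}^{\mathcal{A}}$) to a count of regions (sign vectors with no zeros) of an auxiliary affine arrangement of $2h$ hyperplanes. That count is then bounded by \Cref{lem:gen_region_count} with $k=d$. The device is the standard "thickening" of each central hyperplane into a thin slab bounded by two parallel affine hyperplanes. A zero sign in a cell then corresponds to lying strictly inside the slab, and a nonzero sign to lying strictly on one side of it.

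\emph{Step 1: choose representatives and a slab width.} The number of nonempty cells is finite, at most $3^h$. For each cell $C$ with sign vector $\sigma^C$ we pick a point $\theta_C\in C$. We then set
\[
\varepsilon:=\tfrac12\min\bigl\{|\langle\theta_C,a_H\rangle| \bigm| C \text{ a cell},\ H\in\mathcal{A},\ \sigma^C_H\neq0\bigr\}.
\]
This is positive because the minimum is over finitely many strictly positive numbers. If the set is empty, any $\varepsilon>0$ works.

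\emph{Step 2: the auxiliary arrangement.} For each $H\in\mathcal{A}$ we form the two affine hyperplanes $H^{\pm}:=\{\theta\mid\langle\theta,a_H\rangle=\pm\varepsilon\}$. Let $\mathcal{A}_\varepsilon$ be the family of all of them, with duplicates counted once. It has at most $2h$ hyperplanes, so by \Cref{lem:gen_region_count} it has at most $\Phi_d(2h)$ regions.

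\emph{Step 3: an injection from cells to regions.} Fix a cell $C$ and $H\in\mathcal{A}$.
\begin{itemize}
\item If $\sigma^C_H=0$, then $\langle\theta_C,a_H\rangle=0\in(-\varepsilon,\varepsilon)$.
\item If $\sigma^C_H=\pm$, then $\pm\langle\theta_C,a_H\rangle\ge2\varepsilon>\varepsilon$.
\end{itemize}
Hence $\theta_C$ lies on none of the hyperplanes of $\mathcal{A}_\varepsilon$, so it belongs to a unique region $R(C)$.

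The signs of the two affine functions $\langle\cdot,a_H\rangle\mp\varepsilon$ at $\theta_C$ determine $\sigma^C_H$:
\begin{itemize}
\item $(-,+)$, strictly between the pair, encodes $0$;
\item $(+,+)$ encodes $+$;
\item $(-,-)$ encodes $-$.
\end{itemize}
These signs are constant on $R(C)$, so $R(C)$ determines $\sigma^C$. Distinct cells have distinct sign vectors, so $C\mapsto R(C)$ is injective.

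This gives the bound of at most $\Phi_d(2h)$ cells. The only delicate point is the uniform choice of $\varepsilon$ in Step 1, which is where finiteness of the number of cells is used. Coincidences among the hyperplanes $H^{\pm}$ are harmless, since they only reduce the number of hyperplanes, and $\Phi_d$ is nondecreasing.
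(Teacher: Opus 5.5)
Your proposal is correct and matches the paper's proof essentially step for step. You thicken each central hyperplane into the pair $\langle\theta,a_H\rangle=\pm\varepsilon$, with $\varepsilon$ below every nonzero value $|\langle\theta_C,a_H\rangle|$ at the cell representatives, recover each $\sigma_H$ from the two affine signs, and apply \Cref{lem:gen_region_count} with $k=d$ to the resulting injection from cells into regions. The only cosmetic difference is duplicates: the paper notes that the $2h$ hyperplanes are automatically distinct, while you instead handle possible coincidences through the monotonicity of $\Phi_d$.
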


\begin{proof}
    Let $H\in\mathcal{A}$ be the hyperplanes of the arrangement and $a_H$ their normal vectors, and let $\Sigma\subseteq\{-,0,+\}^{\mathcal{A}}$ be the set of sign vectors that can be written as $(\operatorname{sign}\langle\theta,a_H\rangle)_{H\in\mathcal{A}}$ for some $\theta\in\mathbb{R}^d$ (that is, the sign vectors corresponding to cells); $\Sigma$ is a finite set.
    For each $\sigma\in\Sigma$, choose a point $\theta^{\sigma}$ realizing the sign vector $\sigma$.
    Choose $\delta>0$ such that $|\langle\theta^{\sigma},a_H\rangle|>\delta$ for all $(\sigma,H)$ with $\langle\theta^{\sigma},a_H\rangle\neq0$.
    Let $\mathcal{A}_\delta$ be the arrangement of the $2h$ affine hyperplanes $\{\theta\mid\langle\theta,a_H\rangle=\delta\}$ and $\{\theta\mid\langle\theta,a_H\rangle=-\delta\}$ ($H\in\mathcal{A}$) (these are distinct, since the normal vectors of distinct $H$ are not parallel).
    By the choice of $\delta$, no $\theta^{\sigma}$ lies on any hyperplane of $\mathcal{A}_\delta$, so each $\theta^{\sigma}$ belongs to some region $R_\sigma$ of $\mathcal{A}_\delta$.
    $R_\sigma$ determines, for each $H$, the signs of $\langle\theta,a_H\rangle-\delta$ and $\langle\theta,a_H\rangle+\delta$, and $\sigma_H$ is determined from them: $-$ if both are negative, $0$ if they have opposite signs, and $+$ if both are positive.
    Hence $\sigma\mapsto R_\sigma$ is injective, and by \Cref{lem:gen_region_count}, $|\Sigma|\le\Phi_d(2h)$.
\end{proof}

Let $G(m)$ denote the maximum, over $m$ points $(s_i,x_i)$ ($|\mathcal{Y}(s_i)|\le\kappa$), of the number of error value patterns $(\ell^{01}(\theta;x_i,s_i))_{i=1}^m\in\{0,1\}^m$ realized by $\theta\in\Theta$ (the growth function).
From $2h\le m\kappa(\kappa-1)\le m\kappa^2$, the monotonicity of $\Phi_d$, and \Cref{lem:gen_face_constant,lem:gen_face_count}, we obtain
\begin{equation}
    G(m)\le\Phi_d(m\kappa^2)
    \label{eq:gen_growth}
\end{equation}
Moreover, if $n\ge d$, then we have
\begin{equation}
    \Phi_d(n)\le\left(\frac{n}{d}\right)^d\sum_{i=0}^{d}\binom{n}{i}\left(\frac{d}{n}\right)^i\le\left(\frac{n}{d}\right)^d\left(1+\frac{d}{n}\right)^n\le\left(\frac{\mathrm{e}n}{d}\right)^d .
    \label{eq:gen_binom_sum}
\end{equation}
Here, the first inequality holds because $n/d\ge1$ implies, for each $i=0,\ldots,d$, $\binom{n}{i}\le\binom{n}{i}\left(\frac{n}{d}\right)^{d-i}=\left(\frac{n}{d}\right)^d\binom{n}{i}\left(\frac{d}{n}\right)^i$. The second inequality follows by adding the nonnegative terms for $i=d+1,\ldots,n$ (recall $d\le n$) and applying the binomial theorem $\sum_{i=0}^{n}\binom{n}{i}\left(\frac{d}{n}\right)^i=\left(1+\frac{d}{n}\right)^n$. The last inequality follows from $1+t\le\mathrm{e}^t$ ($t\in\mathbb{R}$) with $t=d/n$, which gives $\left(1+\frac{d}{n}\right)^n\le\mathrm{e}^d$.

\begin{proof}[Proof of \Cref{thm:gen_upper_general}]
    Write $z^n:=(s^n,x^n)$ and $E_\theta:=\{(s,x)\mid \ell^{01}(\theta;x,s)=1\}$, and set
    \begin{equation*}
        \varepsilon:=\frac{2}{N}\left(d\log_2\frac{2\mathrm{e}N\kappa^2}{d}+\log_2\frac{2}{\beta}\right).
    \end{equation*}
    Define the event $\mathcal{E}$ as ``there exists $\theta\in\Theta$ that is consistent with the observations and satisfies $\mathrm{GE}(\theta)>\varepsilon$.''
    If $\mathcal{E}$ does not occur, the generalization error of every weight consistent with the observations is at most $\varepsilon$, so it suffices to show $\mathbb{P}(\mathcal{E})\le\beta$.
    Below, we show $\mathbb{P}(\mathcal{E})\le2\mathbb{P}(\mathcal{E}^\prime)$ in Step 2 (where $\mathcal{E}^\prime$ is the event defined in Step 2), $\mathbb{P}(\mathcal{E}^\prime)\le G(2N)2^{-\varepsilon N/2}$ in Step 3, and $2G(2N)2^{-\varepsilon N/2}\le\beta$ in Step 4.

    \emph{Step 1 (checking $N\varepsilon\ge8$)}: Since $N\ge d$ and $\kappa\ge2$, we have $N\kappa^2/d\ge4$, so $N\varepsilon\ge2d\log_2(8\mathrm{e})>8$.

    \emph{Step 2 (symmetrization)}: Independently of the observations, prepare an i.i.d.\ sample $z^{\prime1},\ldots,z^{\prime N}\sim\mathcal{P}$.
    Define the event $\mathcal{E}^\prime$ as ``there exists $\theta\in\Theta$ that is consistent with the observations and for which the number of $n$ with $z^{\prime n}\in E_\theta$ is at least $\varepsilon N/2$.''
    Write $\mathcal{E}^\prime_\omega$ for the section of $\mathcal{E}^\prime$ obtained by fixing the observations $\omega:=(z^1,\ldots,z^N)$. Since the observations and $z^\prime:=(z^{\prime1},\ldots,z^{\prime N})$ are independent, when integrating the indicator of $\mathcal{E}^\prime$ with respect to the product measure in $(\omega,z^\prime)$, we may first integrate in $z^\prime$ and then in $\omega$ (the Fubini--Tonelli theorem), so $\mathbb{P}(\mathcal{E}^\prime)=\mathbb{E}_\omega[\mathbb{P}_{z^\prime}(\mathcal{E}^\prime_\omega)]$.
    Let $\omega\in \mathcal{E}$, and take one $\theta^{\omega}$ that is consistent with the observations and satisfies $p:=\mathrm{GE}(\theta^{\omega})>\varepsilon$.
    The number $Y$ of $n$ with $z^{\prime n}\in E_{\theta^{\omega}}$ follows the binomial distribution $\mathrm{Bin}(N,p)$, so by Chebyshev's inequality and Step 1,
    \begin{equation*}
        \mathbb{P}_{z^\prime}\Bigl(Y<\frac{Np}{2}\Bigr)
        \le\mathbb{P}_{z^\prime}\Bigl(|Y-Np|>\frac{Np}{2}\Bigr)
        \le\frac{4Np(1-p)}{(Np)^2}
        \le\frac{4}{N\varepsilon}
        \le\frac{1}{2}.
    \end{equation*}
    If $Y\ge Np/2$, then $Y\ge\varepsilon N/2$, and $\theta^{\omega}$ satisfies the condition of $\mathcal{E}^\prime_\omega$; hence $\mathbb{P}_{z^\prime}(\mathcal{E}^\prime_\omega)\ge1/2$.
    Therefore $\mathbb{P}(\mathcal{E}^\prime)\ge\mathbb{E}_\omega[\mathbf{1}_{\mathcal{E}}(\omega)/2]=\mathbb{P}(\mathcal{E})/2$.

    \emph{Step 3 (swapping)}: Let $w:=(z^1,\ldots,z^N,z^{\prime1},\ldots,z^{\prime N})$. For $\tau\in\{0,1\}^N$, write $\mathrm{Sw}_\tau w$ for the tuple obtained by swapping $z^n$ and $z^{\prime n}$ for each $n$ with $\tau_n=1$.
    Since the $2N$ components of $w$ are i.i.d., $\mathrm{Sw}_\tau w$ and $w$ have the same distribution for each $\tau$, and $\mathbb{P}(\mathrm{Sw}_\tau w\in\mathcal{E}^\prime)=\mathbb{P}(w\in\mathcal{E}^\prime)=\mathbb{P}(\mathcal{E}^\prime)$.
    Averaging this over the $2^N$ choices of $\tau\in\{0,1\}^N$ and interchanging the finite sum and the expectation, we obtain
    \begin{equation*}
        \mathbb{P}(\mathcal{E}^\prime)
        =\frac{1}{2^N}\sum_{\tau\in\{0,1\}^N}\mathbb{E}_w\bigl[\mathbf{1}\{\mathrm{Sw}_\tau w\in\mathcal{E}^\prime\}\bigr]
        =\mathbb{E}_w\bigl[\mathbb{P}_\tau(\mathrm{Sw}_\tau w\in\mathcal{E}^\prime)\bigr].
    \end{equation*}
    Here, for fixed $w$, $\mathbb{P}_\tau(\mathrm{Sw}_\tau w\in\mathcal{E}^\prime):=2^{-N}|\{\tau\in\{0,1\}^N\mid\mathrm{Sw}_\tau w\in\mathcal{E}^\prime\}|$ is the probability when $\tau$ is drawn uniformly.
    Fix $w$, and let $\Xi(w)\subseteq\{0,1\}^{2N}$ be the set of error value patterns realized by $\theta\in\Theta$ on the $2N$ points of $w$.
    Since each component of $w$ satisfies $|\mathcal{Y}(s)|\le\kappa$ a.s., we have $|\Xi(w)|\le G(2N)$ a.s.
    Since the error value pattern of $\theta$ on $\mathrm{Sw}_\tau w$ is obtained by swapping the components of its pattern $\xi$ on $w$ according to $\tau$, if $\mathrm{Sw}_\tau w\in \mathcal{E}^\prime$, then for some $\xi\in\Xi(w)$ the following event $\mathcal{E}_\xi$ occurs: ``after swapping, the first $N$ components are all $0$, and at least $\varepsilon N/2$ of the last $N$ components are $1$.''
    Let $r$ be the number of components of $\xi$ equal to $1$. If $r<\varepsilon N/2$, then $\mathcal{E}_\xi$ does not occur. If, for some $n$, both the $n$th and the $(N+n)$th components of $\xi$ are $1$, then a $1$ remains in the first half regardless of the swap, so $\mathcal{E}_\xi$ does not occur.
    Otherwise, the $r$ components equal to $1$ belong to distinct $n$, and $\mathcal{E}_\xi$ is equivalent to all of them being placed in the second half, so $\mathbb{P}_\tau(\mathcal{E}_\xi)=2^{-r}\le2^{-\varepsilon N/2}$.
    Therefore $\mathbb{P}_\tau(\mathrm{Sw}_\tau w\in \mathcal{E}^\prime)\le\sum_{\xi\in\Xi(w)}\mathbb{P}_\tau(\mathcal{E}_\xi)\le G(2N)2^{-\varepsilon N/2}$, and we obtain $\mathbb{P}(\mathcal{E}^\prime)\le G(2N)2^{-\varepsilon N/2}$.

    \emph{Step 4}: By Steps 2 and 3, $\mathbb{P}(\mathcal{E})\le2G(2N)2^{-\varepsilon N/2}$.
    Since $2N\kappa^2\ge d$, \cref{eq:gen_growth,eq:gen_binom_sum} give $G(2N)\le(2\mathrm{e}N\kappa^2/d)^d$, and by the definition of $\varepsilon$,
    \begin{equation*}
        2G(2N)2^{-\varepsilon N/2}
        \le2\left(\frac{2\mathrm{e}N\kappa^2}{d}\right)^{d}2^{-d\log_2(2\mathrm{e}N\kappa^2/d)-\log_2(2/\beta)}
        =\beta.
    \end{equation*}
    Hence $\mathbb{P}(\mathcal{E})\le\beta$.
    That is, with probability at least $1-\beta$, $\mathcal{E}$ does not occur, and every weight $\hat{\theta}^N$ consistent with the observations satisfies $\mathrm{GE}(\hat{\theta}^N)\le\varepsilon$. This is the statement of \Cref{thm:gen_upper_general}.
\end{proof}

\subsection{Proof of \texorpdfstring{\Cref{thm:gen_lower}}{the theorem}}
\label{subsec:appendix_gen_lower}

Consider the following instance.
Let $d_0\ge2$ and $d:=2d_0+1$, and write elements of $\mathbb{R}^d$ as $(y_0,y_+,y_-)\in\mathbb{R}\times\mathbb{R}^{d_0}\times\mathbb{R}^{d_0}$.
Let the state set be the unit sphere $\mathcal{S}:=\mathbb{S}^{d_0-1}\subset\mathbb{R}^{d_0}$, and let the state distribution $\mathcal{P}_{\mathcal{S}}$ be absolutely continuous with respect to the uniform surface measure on $\mathbb{S}^{d_0-1}$.
Let the feasible region be
\begin{equation*}
    \mathcal{X}(s):=\{\,0,\ (-1,s,-s),\ (-3,-s,s)\,\}
\end{equation*}
and let the agent's action be $x=0$. That is, the distribution $\mathcal{P}$ of state--action pairs is given by $s\sim\mathcal{P}_{\mathcal{S}}$ and $x=0$. Let the weight space be $\Theta:=\Delta^{d-1}$ and $\theta^*:=(1,0,0)\in\Theta$.
For $\theta=(\theta_0,p,q)$, setting $u:=p-q\in\mathbb{R}^{d_0}$, we have
\begin{equation}
    \langle\theta,0\rangle=0,\quad
    \langle\theta,(-1,s,-s)\rangle=-\theta_0+\langle s,u\rangle,\quad
    \langle\theta,(-3,-s,s)\rangle=-3\theta_0-\langle s,u\rangle.
    \label{eq:gen_lower_values}
\end{equation}
This is obtained from the instance in the proof of Theorem~3.1 of \citet{fatemi2026tight} ($\psi(s,a)=(-2|a|+a,as)$, $a\in[-1,1]$, $\theta=(1,u)$) by lifting the features corresponding to $a\in\{-1,0,1\}$ so that the positive and negative parts of $u$ are placed in separate coordinates.

\begin{lemma}
    \label{lem:gen_lower_instance}
    This instance satisfies \Cref{assu:margin} and \Cref{assu:gen} with $\kappa=3$. Moreover, for $\theta=(\theta_0,p,q)\in\Theta$ and $u=p-q$, $\ell^{01}(\theta;0,s)=0$ is equivalent to ``$\langle s,u\rangle<\theta_0$ and $-\langle s,u\rangle<3\theta_0$.''
\end{lemma}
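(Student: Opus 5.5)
We check the assumptions one at a time, and then derive the equivalence directly from \cref{eq:gen_lower_values} together with \Cref{lem:gen_extreme}(ii).

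For \Cref{assu:margin}: the set $\Theta=\Delta^{d-1}$ is a nonempty bounded polyhedron in $\mathbb{R}^d_{\ge0}$ that does not contain $\mathbf{0}$. Each $\mathcal{X}(s)$ is a union of three singletons, hence of bounded polyhedra. Since $\|s\|=1$, we have $\|(-3,-s,s)\|^2=9+2$, so the sets are uniformly bounded. For (3), every point of a singleton piece is a vertex of that piece, so the oracle returns one of the three points.

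For \Cref{assu:gen}(1): this gives $\mathcal{Y}(s)\subseteq\mathcal{X}(s)$, so $|\mathcal{Y}(s)|\le3$. To get $\kappa=3$ exactly, we show that all three points are returned. The three points are affinely independent: their differences from $0$ are $(-1,s,-s)$ and $(-3,-s,s)$, which are linearly independent because $s\neq0$. Hence each point is an extreme point of the triangle $\Conv\mathcal{X}(s)$. By \Cref{lem:gen_extreme}(i), each then lies in $\mathcal{Y}(s)$, so $|\mathcal{Y}(s)|=3$ for every $s$, and therefore $\kappa=3\ge2$.

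For \Cref{assu:gen}(2), take $\theta^*=(1,0,0)$, so that $u=0$ and $\theta_0=1$. The three values in \cref{eq:gen_lower_values} are $0,-1,-3$. The maximizer is therefore uniquely $0$, which dominates the action $x=0$, and this holds for every $s$.

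For the equivalence, fix $\theta\in\Theta$. Since the action is $x=0$, the condition $\ell^{01}(\theta;0,s)=0$ means that the argmax over $\mathcal{X}(s)$ is a singleton $\{v\}$ with $v\ge0$. The only point of $\mathcal{X}(s)$ with $v\ge0$ is $0$ itself, because the other two have first coordinate $-1$ or $-3$. So the condition is exactly that $0$ is the unique maximizer. Here $\mathcal{X}(s)$ is finite and the argmax is taken directly over it, so uniqueness amounts to strict inequalities. By \cref{eq:gen_lower_values}, these are $0>-\theta_0+\langle s,u\rangle$ and $0>-3\theta_0-\langle s,u\rangle$, that is, $\langle s,u\rangle<\theta_0$ and $-\langle s,u\rangle<3\theta_0$. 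This is the claimed equivalence.

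The only step needing any care is showing that $\kappa=3$ is attained rather than merely bounded above, and the affine-independence argument above settles it. Everything else is direct verification.
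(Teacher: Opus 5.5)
Your proof is correct and follows essentially the same route as the paper: you verify the assumptions directly, obtain $\mathcal{Y}(s)=\mathcal{X}(s)$ (hence $\kappa=3$) from \Cref{lem:gen_extreme}(i), observe that $0$ is the only point of $\mathcal{X}(s)$ dominating the action, and read the strict inequalities off \cref{eq:gen_lower_values}. The only difference is that your affine-independence argument spells out the paper's bare assertion that all three points are extreme points of the convex hull, and you also check \Cref{assu:margin}(1) and uniform boundedness explicitly, which the paper does not.
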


\begin{proof}
    Since $\mathcal{X}(s)$ consists of three points (a union of bounded polyhedra each consisting of a single point), it satisfies \Cref{assu:margin}(2), and every oracle satisfies \Cref{assu:margin}(3). The three points are distinct and are all extreme points of the convex hull, so by \Cref{lem:gen_extreme}(i), $\mathcal{Y}(s)=\mathcal{X}(s)$ and $\kappa=3$.
    By \cref{eq:gen_lower_values}, the values under $\theta^*$ are $0,-1,-3$, so $\argmax_{y\in\mathcal{X}(s)}\langle\theta^*,y\rangle=\{0\}$ and $x^*(\theta^*,s)=0\geq x$.
    Since the other two points have a negative first component, $\mathcal{X}_{\geq0}(s)=\{0\}$.
    $\argmax_{y\in\mathcal{X}(s)}\langle\theta,y\rangle=\{v\}$ with $v\ge0$ holds only when $v=0$ is the unique optimal solution, so \cref{eq:gen_lower_values} gives the latter equivalence.
\end{proof}

The proof uses the following two propositions.
The first is Theorem~1 of \citet{campi2008exact} on the violation probability of scenario programs. Here we state it in the form used by \citet{fatemi2026tight} in Theorem~A.1 and in the proof of Theorem~3.1.

\begin{proposition}[Theorem~1 of \citet{campi2008exact}]
    \label{prop:gen_cg}
    Let $k$ be a positive integer and $c\in\mathbb{R}^k$, and suppose that each point $\xi$ of a probability space $(\Xi,P_\Xi)$ is associated with a closed convex set $C_\xi\subseteq\mathbb{R}^k$.
    Let $\xi^1,\ldots,\xi^N$ be an i.i.d.\ sample from $P_\Xi$, and suppose that the scenario program
    \begin{equation*}
        \min_{u\in\mathbb{R}^k}\ \langle c,u\rangle
        \quad\text{s.t.}\quad
        u\in C_{\xi^n}\quad(n=1,\ldots,N)
    \end{equation*}
    has a.s.\ a unique solution $\hat{u}_N$. Set $V(u):=P_\Xi(u\notin C_\xi)$.
    Then, for any $\varepsilon\in(0,1]$, we have
    \begin{equation*}
        \mathbb{P}\bigl(V(\hat{u}_N)>\varepsilon\bigr)\le\sum_{i=0}^{k-1}\binom{N}{i}\varepsilon^i(1-\varepsilon)^{N-i} .
    \end{equation*}
    For each $n$, the constraint $u\in C_{\xi^n}$ is called a \emph{support constraint} if the scenario program with only this constraint removed has an optimal solution different from $\hat{u}_N$ or has no optimal solution.
    If the number of support constraints is a.s.\ exactly $k$ (that is, if the problem is fully supported in the sense of Definition~A.5 of \citet{fatemi2026tight}), then the above inequality holds with equality.
\end{proposition}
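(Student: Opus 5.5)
The statement is Theorem~1 of \citet{campi2008exact}, and I would follow their argument. It has three stages: a deterministic bound on the number of support constraints, an exact computation in the fully supported case, and a reduction of the general case to the fully supported one.

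\emph{Stage 1 (at most $k$ support constraints).} Fix a sample for which $\hat u_N$ is unique. Suppose there were $k+1$ support constraints, indexed by $n_0,\ldots,n_k$. Removing $C_{\xi^{n_j}}$ yields a solution $u_j$ with $\langle c,u_j\rangle<\langle c,\hat u_N\rangle$; if the removal yields no optimal solution at all, I would pass to a point of strictly lower cost instead. Consider the $k+1$ points $u_j$ together with the half-space $\{u:\langle c,u\rangle<\langle c,\hat u_N\rangle\}$. A Radon/Helly-type argument in $\mathbb{R}^k$ then produces a convex combination of the $u_j$ that lies in every $C_{\xi^n}$ and has cost strictly below $\langle c,\hat u_N\rangle$. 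This contradicts optimality. Convexity of each $C_\xi$ is exactly what this step uses.

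\emph{Stage 2 (fully supported case, exact formula).} For each $k$-subset $I\subseteq\{1,\ldots,N\}$, let $u_I$ be the solution of the program with only the constraints in $I$. Assuming a.s.\ exactly $k$ support constraints, the events ``$I$ is the support set of the full problem'' partition the sample space up to null sets. Moreover, that event coincides with $\{u_I\in C_{\xi^n}\ \forall n\notin I\}$. Let $F$ be the distribution function of $V(u_I)$. Conditioning on the constraints in $I$ and using independence gives
\begin{equation*}
\mathbb{P}(V(\hat u_N)>\varepsilon)=\binom{N}{k}\int_{(\varepsilon,1]}(1-v)^{N-k}\,dF(v).
\end{equation*}
Taking $\varepsilon=0$ gives $1=\binom{N}{k}\int(1-v)^{N-k}\,dF(v)$ for every $N\ge k$. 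These identities fix all moments of $1-V$, so by uniqueness in the Hausdorff moment problem $dF(v)=kv^{k-1}\,dv$. Substituting this density back and integrating by parts repeatedly, the tail becomes exactly $\sum_{i=0}^{k-1}\binom{N}{i}\varepsilon^i(1-\varepsilon)^{N-i}$, which is the claimed equality.

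\emph{Stage 3 (general case, main obstacle).} When fewer than $k$ support constraints may occur, I need to show that full support is the worst case. My first attempt would be a ``heating'' construction. Enlarge the probability space by attaching to each $\xi$ an auxiliary independent random constraint, for example a half-space drawn from a continuous distribution. Choose it so that the augmented problem has the original solution whenever the original is fully supported, and otherwise has a solution whose constraints are harder, with violation set containing $\{\hat u_N\notin C_\xi\}$. The augmented problem should be fully supported, so Stage~2 gives equality there, and monotonicity gives the inequality for the original problem. The delicate points are uniqueness and measurability of the perturbed solutions, and ensuring that the perturbation never decreases the violation probability. This is where I expect the real difficulty, so I would follow Campi--Garatti's construction closely rather than invent a new one.
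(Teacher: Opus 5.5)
Your plan matches the paper, which does not reprove this result but imports it from Theorem~1 of Campi and Garatti. Your Stages~1--2 faithfully reconstruct their argument (Helly-type bound on the number of support constraints, decomposition over $k$-subsets, and Hausdorff moment uniqueness forcing $F(v)=v^k$), and for the non-fully-supported case you, like the paper, rely on their heating construction rather than supplying one; just make explicit three points: Stages~1--2 use existence and uniqueness for the sub-sample programs (for the strict cost decrease in Stage~1 and for $u_I$ to be well defined), i.e.\ Campi and Garatti's standing assumption, which is stronger than the a.s.\ uniqueness of $\hat u_N$ written in the statement; the normalization $1=\binom{N}{k}\int_{[0,1]}(1-v)^{N-k}\,dF(v)$ comes from the partition into the events $\{u_I\in C_{\xi^n}\ \forall n\notin I\}$ rather than from setting $\varepsilon=0$; and that normalization must hold for every $N\ge k$, i.e.\ full support at all sample sizes.
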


The second is the following property of an LP, shown in the proof of Theorem~3.1 of \citet{fatemi2026tight}.
Fix $c^\prime\in\mathbb{R}^{d_0}\setminus\{0\}$, and for $s^1,\ldots,s^N\sim\mathcal{P}_{\mathcal{S}}$ (i.i.d.) consider
\begin{equation}
    \min_{u\in\mathbb{R}^{d_0}}\ \langle c^\prime,u\rangle
    \quad\text{s.t.}\quad
    \langle s^n,u\rangle\le1,\quad
    -\langle s^n,u\rangle\le3\quad(n=1,\ldots,N).
    \label{eq:gen_lower_lp}
\end{equation}

\begin{proposition}[Proof of Theorem~3.1 of \citet{fatemi2026tight}]
    \label{prop:gen_fatemi_lp}
    Let $N\ge d_0$. Then \cref{eq:gen_lower_lp} has a.s.\ a unique solution $\hat{u}_N$.
    Moreover, the number of $n$ such that the LP with both inequalities $\langle s^n,u\rangle\le1$ and $-\langle s^n,u\rangle\le3$ corresponding to observation $n$ removed has an optimal solution different from $\hat{u}_N$ or has no optimal solution is a.s.\ exactly $d_0$.
\end{proposition}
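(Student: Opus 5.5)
The plan is a direct LP argument: KKT conditions plus a general-position statement that holds almost surely because $\mathcal{P}_{\mathcal{S}}$ is absolutely continuous on $\mathbb{S}^{d_0-1}$.

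First I would isolate the genericity events, each of full probability.
\begin{description}
    \item[(G1)] Any $d_0$ of the $s^n$ are linearly independent.
    \item[(G2)] The fixed vector $c^\prime\neq0$ lies in the span of no $d_0-1$ of the $s^n$.
    \item[(G3)] No point $u$ lies on more than $d_0$ of the $2N$ hyperplanes $\{\langle s^n,u\rangle=1\}$, $\{\langle s^n,u\rangle=-3\}$.
\end{description}
Each failure set is a finite union over index subsets of zero sets of nontrivial real-analytic (determinant-type) functions on a product of spheres. Such sets have surface measure zero, hence $\mathcal{P}_{\mathcal{S}}^{\otimes N}$-measure zero. For (G3), a point on $d_0+1$ such hyperplanes forces a determinant condition on the $s^n$ involved; an extra constraint from an already used observation $n$ is impossible, since $\langle s^n,u\rangle=1$ and $\langle s^n,u\rangle=-3$ cannot both hold.

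Second, existence and uniqueness for \cref{eq:gen_lower_lp}. The point $u=0$ is feasible. Since $N\ge d_0$, (G1) gives that the $s^n$ span $\mathbb{R}^{d_0}$, so the recession cone $\{u\mid\langle s^n,u\rangle=0\ \forall n\}$ is $\{0\}$. Hence the feasible set is a polytope and a minimizer exists. Suppose the optimal face $F$ contained an edge. Take $u$ in the relative interior of $F$; by (G3) its active normals are $\pm s^{n}$ for at most $d_0-1$ observations, because a positive-dimensional face has fewer than $d_0$ independent active constraints. LP optimality at $u$ gives $-c^\prime$ in the cone of these normals, so $c^\prime$ would lie in the span of $d_0-1$ of the $s^n$, contradicting (G2). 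So $\hat u_N$ is a vertex, which gives uniqueness.

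Third, counting support observations. At $\hat u_N$ there are, by (G1) and (G3), exactly $d_0$ active constraints. They come from distinct observations $n_1,\dots,n_{d_0}$ with signs $\sigma_i\in\{\pm1\}$. Their normals are linearly independent, and KKT gives
\[
-c^\prime=\sum_{i=1}^{d_0}\lambda_i\sigma_i s^{n_i},\qquad \lambda_i\ge0 .
\]
Every $\lambda_i>0$, since otherwise (G2) is violated.
\begin{description}
    \item[Inactive observation removed.] Take $n\notin\{n_i\}$. Then $\hat u_N$ stays feasible and the same KKT certificate holds, so it remains optimal. It remains the unique optimum because the same $d_0$ independent active constraints carry strictly positive multipliers: any other optimum $u$ would satisfy $\langle c^\prime,u-\hat u_N\rangle=0$, which forces every active constraint to stay tight, hence $u=\hat u_N$. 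So such $n$ is not support.
    \item[Active observation $n_i$ removed.] The active set at $\hat u_N$ shrinks to the other $d_0-1$ normals. $-c^\prime$ is not in their cone, since by linear independence the representation is unique and needs $\lambda_i>0$. So $\hat u_N$ is not optimal for the reduced LP, which has either a different optimum or none (it may be unbounded when $N-1<d_0$). So every $n_i$ is support.
\end{description}
Hence there are exactly $d_0$ support observations a.s.

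The main obstacle is making (G3) and (G2) rigorous as null events under an absolutely continuous law on the sphere rather than on $\mathbb{R}^{d_0}$. I would handle this by noting that each bad set is contained in the zero set of a nonzero polynomial in the coordinates of finitely many $s^n$ restricted to $(\mathbb{S}^{d_0-1})^k$. Such a zero set is a proper analytic subvariety of a connected analytic manifold and therefore has surface measure zero. Nonzeroness is checked by exhibiting one configuration where the determinant is nonzero.
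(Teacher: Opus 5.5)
Your argument is correct, but it follows a different route from the paper. The paper gives no proof of this proposition: it imports the statement from the proof of Theorem~3.1 of \citet{fatemi2026tight}. It only matches their $d,T$ with $d_0,N$ and notes that $\Psi(\theta)=\langle(0,c^\prime),\theta\rangle$ lies in their objective class. You instead give a self-contained verification, which proceeds as follows.
\begin{itemize}
\item You isolate three almost-sure general-position events (G1)--(G3).
\item You get compactness of the feasible polytope from (G1) and $N\ge d_0$.
\item You get uniqueness by showing that a positive-dimensional optimal face would put $-c^\prime$ in the cone of at most $d_0-1$ active normals, contradicting (G2).
\item You get the support count from the unique representation $-c^\prime=\sum_i\lambda_i\sigma_i s^{n_i}$ over exactly $d_0$ independent active normals, with all $\lambda_i>0$.
\end{itemize}

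The citation buys brevity and keeps the instance aligned with the scenario-program framework of \Cref{prop:gen_cg}. Your route buys transparency about where absolute continuity and $d_0\ge2$ enter. Your null-set argument needs $(\mathbb{S}^{d_0-1})^k$ to be connected and atomless, which is exactly what fails on $\mathbb{S}^0$; this matches the paper's remark on $d_0=1$. You also correctly check the inactive-removal case, including uniqueness of the reduced LP's optimum. That uniqueness is needed because the definition counts any reduced LP with an optimal solution different from $\hat{u}_N$ as a support constraint.

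Two small points. First, the bound of at most $d_0-1$ active observations at a relative-interior point of a positive-dimensional optimal face comes from the rank count together with (G1), not from (G3). Second, each null-set claim can be proved more elementarily by conditioning on all but one of the vectors involved. The bad event then says that one $s^n$ lies in a fixed proper linear or affine subspace, and such a subspace meets $\mathbb{S}^{d_0-1}$ in a null set when $d_0\ge2$. This avoids the analytic-subvariety machinery.
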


\cref{eq:gen_lower_lp} is the scenario program in \Cref{prop:gen_cg} with $k=d_0$, $\Xi=\mathcal{S}$, $P_\Xi=\mathcal{P}_{\mathcal{S}}$, $c=c^\prime$, and $C_s:=\{u\mid\langle s,u\rangle\le1,\ -\langle s,u\rangle\le3\}$.
Hence, when $N\ge d_0$, \Cref{prop:gen_cg,prop:gen_fatemi_lp} imply that, for $V(u):=\mathcal{P}_{\mathcal{S}}(\langle s,u\rangle>1\text{ or }-\langle s,u\rangle>3)$ and any $\varepsilon\in(0,1]$,
\begin{equation}
    \mathbb{P}\bigl(V(\hat{u}_N)>\varepsilon\bigr)=\sum_{i=0}^{d_0-1}\binom{N}{i}\varepsilon^i(1-\varepsilon)^{N-i}.
    \label{eq:gen_lower_fatemi}
\end{equation}
We assume $d_0\ge2$ because, when $d_0=1$, the distribution on $\mathbb{S}^0=\{\pm1\}$ has atoms, identical constraints are duplicated, and the property in \Cref{prop:gen_fatemi_lp} fails.
Note that \cref{eq:gen_lower_lp} is Eq.~(6) of \citet{fatemi2026tight} written out for the normalization $\Theta_{\mathrm{F}}:=\{(1,u)\mid u\in\mathbb{R}^{d_0}\}$ and $\Psi(\theta):=\langle(0,c^\prime),\theta\rangle$. This $\Psi$ belongs to the class of objective functions $\mathcal{J}:=\{\theta\mapsto f(\langle c,\theta\rangle+b)\mid f\text{ is strictly monotone},\ b\in\mathbb{R},\ \langle c,\theta\rangle\text{ is not constant on }\Theta_{\mathrm{F}}\}$ targeted by the lower bound in that paper. Moreover, $V(\hat{u}_N)$ is the set-level mismatch of $\hat{\theta}=(1,\hat{u}_N)$, that is, the probability that the true optimal solution $0$ is not contained in the set of optimal solutions of $\hat{\theta}$ ($d$ and $T$ in that paper correspond to $d_0$ and $N$ in this section).

\begin{proof}[Proof of \Cref{thm:gen_lower}]
    \emph{Step 1}: For $\eta\in(0,1)$, let $u_\eta:=(1-\eta)\hat{u}_N$ and $\theta^{\eta}:=(1,u_\eta^+,u_\eta^-)/(1+\|u_\eta\|_1)\in\Theta$ (where $u^\pm$ are the componentwise positive and negative parts). Setting $Z_\eta:=1+\|u_\eta\|_1>0$, the first component of $\theta^{\eta}$ is $1/Z_\eta$ and $p-q=u_\eta/Z_\eta$, so by \Cref{lem:gen_lower_instance}, $\ell^{01}(\theta^{\eta};0,s)=0$ is equivalent to ``$\langle s,u_\eta\rangle<1$ and $-\langle s,u_\eta\rangle<3$.''
    For each $n$, the constraints of \cref{eq:gen_lower_lp} give $\langle s^n,u_\eta\rangle\le1-\eta<1$ and $-\langle s^n,u_\eta\rangle\le3(1-\eta)<3$, so $\theta^{\eta}$ is consistent with the observations.

    \emph{Step 2}: By \Cref{lem:gen_lower_instance},
    \begin{equation*}
        \mathrm{GE}(\theta^{\eta})=\mathcal{P}_{\mathcal{S}}(\Lambda_\eta),\quad
        \Lambda_\eta:=\Bigl\{\langle s,\hat{u}_N\rangle\ge\frac{1}{1-\eta}\text{ or }-\langle s,\hat{u}_N\rangle\ge\frac{3}{1-\eta}\Bigr\}.
    \end{equation*}
    As $\eta$ decreases, $\Lambda_\eta$ increases monotonically, and $\bigcup_{\eta\in(0,1)}\Lambda_\eta=\{\langle s,\hat{u}_N\rangle>1\text{ or }-\langle s,\hat{u}_N\rangle>3\}$. By continuity of measure from below, $\lim_{\eta\downarrow0}\mathrm{GE}(\theta^{\eta})=V(\hat{u}_N)$.
    Fix $\varepsilon\in(0,1]$; if there exists an integer $k\ge2$ with $\mathrm{GE}(\theta^{1/k})>\varepsilon$, let $k_N$ be the smallest such integer, and otherwise let $k_N:=2$; set $\hat{\theta}^N:=\theta^{1/k_N}$. $\hat{\theta}^N$ is a function of the observations and, by Step 1, is consistent with the observations.
    Since $\mathrm{GE}(\theta^{1/k})\to V(\hat{u}_N)$ also along $\eta=1/k$ ($k\to\infty$), $V(\hat{u}_N)>\varepsilon$ implies $\mathrm{GE}(\hat{\theta}^N)>\varepsilon$. Hence, by \cref{eq:gen_lower_fatemi},
    \begin{equation}
        \mathbb{P}\bigl(\mathrm{GE}(\hat{\theta}^N)>\varepsilon\bigr)
        \ge\sum_{i=0}^{d_0-1}\binom{N}{i}\varepsilon^i(1-\varepsilon)^{N-i}.
        \label{eq:gen_lower_binom}
    \end{equation}

    \emph{Step 3}: Let $X\sim\mathrm{Bin}(N,\varepsilon)$; then the right-hand side of \cref{eq:gen_lower_binom} equals $\mathbb{P}(X\le d_0-1)$.
    For $\varepsilon_1:=d_0/(2N)$, Markov's inequality gives $\mathbb{P}(X\ge d_0)\le N\varepsilon_1/d_0=1/2$, so the right-hand side is at least $1/2\ge\beta$.
    For $\varepsilon_2:=\ln(1/\beta)/(2N)\le1/2$, since $1-x\ge \mathrm{e}^{-2x}$ for $x\in[0,1/2]$, the right-hand side is at least $(1-\varepsilon_2)^N\ge \mathrm{e}^{-2N\varepsilon_2}=\beta$.
    Taking $\hat{\theta}^N$ of Step 2 for $\varepsilon:=\max\{\varepsilon_1,\varepsilon_2\}$, \cref{eq:gen_lower_binom} shows that, with probability at least $\beta$, $\mathrm{GE}(\hat{\theta}^N)>\varepsilon\ge(\varepsilon_1+\varepsilon_2)/2=(d_0+\ln(1/\beta))/(4N)$.
\end{proof}

\begin{remark}
    \label{rem:gen_lower_fatemi_estimator}
    Remove the nonnegativity constraint on the weights, and let the weight space be $\Theta_{\mathrm{F}}=\{(1,u)\mid u\in\mathbb{R}^{d_0}\}\subset\mathbb{R}^{d_0+1}$. In this remark only, we define $\ell^{01}$ and $\mathrm{GE}$ of \Cref{def:gen_error} for elements of $\Theta_{\mathrm{F}}$ by the same formulas. In the instance of Theorem~3.1 of \citet{fatemi2026tight}, $\mathcal{X}(s):=\{0,(-1,s),(-3,-s)\}$ and $x=0$, we have $\mathcal{X}_{\geq0}(s)=\{0\}$, so the OS loss coincides with the suboptimality loss.
    Hence the estimator $\hat{\theta}:=\argmin_{\theta\in\Theta_{\mathrm{F}}}\Psi(\theta)$ s.t.\ $\ell^{\optdex}(\theta;x^n,s^n)\le0$ ($\forall n$), which is Eq.~(6) of that paper written with the OS loss, coincides with the estimator of that paper written with the suboptimality loss (Eq.~(6) of that paper, denoted there by $\hat{\theta}^{\mathrm{sub}}_T$).
    Furthermore, since the set-level mismatch $0\notin\argmax_{y\in\mathcal{X}(s)}\langle\hat{\theta},y\rangle$ implies $\ell^{01}(\hat{\theta};0,s)=1$ (by the same argument as in Remark~3.2 of that paper), the lower bound of Theorem~3.1 of that paper, $\sup_{\theta^*,\mathcal{P}_{\mathcal{S}}}\inf_{\Psi\in\mathcal{J}}\mathbb{P}(\mathrm{GE}(\hat{\theta})>\varepsilon)\ge\sum_{i=0}^{d_0-1}\binom{N}{i}\varepsilon^i(1-\varepsilon)^{N-i}$, holds as it is.
    However, since $\hat{\theta}$ is a vertex of the LP, the optimal solution is not unique at observations with active constraints, so $\hat{\theta}$ is not consistent in the sense of this paper. \Cref{thm:gen_lower} converts this into a lower bound for consistent weights by shrinking $\hat{\theta}$ toward the interior.
\end{remark}

\section{Details of the numerical experiments}
\label{sec:appendix_experiments}

This section gives the details of the numerical experiments in \Cref{sec:numerical_experiments}.

\subsection{Problem settings of the experiments}
\label{subsec:appendix_problems}

{
This subsection gives the mathematical models of the forward problems of the five experiments and how their instances are constructed. In all four experiments other than the single-machine scheduling problem, the number of states is $N=20$, the dimension is $d\in\{20,40,60,80,100\}$, and $T=100$, that is, the oracle-call budget is $B=4000$. In every experiment, all the methods use the same instance within the same episode.

\paragraph{Knapsack problem}
We consider decisions on how many units of each item to pack under a limited capacity (e.g., loading or inventory planning).
The forward problem is the bounded integer knapsack problem
\[
    \max_{x\in\mathbb{Z}^d}\ \langle\theta,x\rangle
    \quad\text{s.t.}\quad
    \langle w_s,x\rangle\le c_s,\ \ 0\le x\le u_s .
\]
A state $s=(w_s,c_s,u_s)$ is generated by drawing the weights $w_s\in\{1,\ldots,9\}^d$, the capacity $c_s\in\{1,\ldots,19\}$, and the upper bounds on the numbers of items $u_s\in\{1,\ldots,4\}^d$ uniformly and independently for each component.
The true weight $\theta^*$ is drawn from the Dirichlet distribution $\mathrm{Dir}(\mathbf{1})$, and the action $x^n$ of the agent is generated by decreasing each component of $x^*(\theta^*,s^n)$ by $1$ independently with probability $0.2$ (without going below $0$).
Hence $x^*(\theta^*,s^n)\ge x^n$, and $\theta^*$ satisfies \cref{eq:outperformance_IOP}.

Solving outperformance inverse optimization in this problem means finding weights of the item values under which an optimal packing contains every item at least as many times as the agent's packing.

The forward problem is solved exactly by dynamic programming.
We run 50 episodes for each $d$ ($\theta^*$ and the states are redrawn for each pair of an episode and $d$).

\paragraph{Single-machine scheduling problem}
We consider deciding the order in which jobs with different release times are processed on a single machine according to their importance (e.g., sequencing tasks on a production line).
The forward problem is single-machine scheduling with release times that minimizes the weighted sum of completion times $\sum_{j=1}^d\theta_jC_j$.
A state $s$ consists of the release times $r_j\in\{0,\ldots,14\}$ and the processing times $p_j\in\{1,\ldots,4\}$ of the jobs $j=1,\ldots,d$ (each drawn independently and uniformly), and, to match the maximization form, the action is the sign-reversed completion times $x=-(C_1,\ldots,C_d)$.
Hence $x\ge x^n$ means that every job is completed no later than by the agent, that is, a schedule earlier than that of the agent.
From the optimal schedule under the true weight $\theta^*\sim\mathrm{Dir}(\mathbf{1})$, we generate the action $x^n$ of the agent by using thresholds $\tau^n_1\in\{8,\ldots,12\}$ and $\tau^n_2\in\{18,\ldots,22\}$ drawn for each state, delaying by $1$ every job whose completion time is at least $\tau^n_1$, and further delaying by another $1$ every job whose completion time after this delay is at least $\tau^n_2$ (the generation procedure is given in \Cref{alg:gen_scheduling}).
Hence $x^*(\theta^*,s^n)\ge x^n$, and $\theta^*$ satisfies \cref{eq:outperformance_IOP}.

Solving outperformance inverse optimization in this problem means finding weights of the job importance under which an optimal schedule completes every job no later than the agent's schedule.

We set the number of states to $N=10$, $d\in\{4,6,8\}$, 25 episodes for each $d$, and $T=100$, that is, the oracle-call budget is $B=2000$, and the forward problem is solved by SCIP.

\paragraph{Movie selection problem with viewing costs}
We consider a user selecting movies to watch under a budget on the viewing cost and per-genre upper limits.
We use the rating data (\texttt{u1.base}) of MovieLens100K \citep{harper2015movielens}.
The ground set consists of the top $d$ movies in the number of ratings, and $\Gamma\in\{0,1\}^{18\times d}$ is the genre indicator matrix of the movies (the rows correspond to the genres $g=1,\ldots,18$).
A state $s$ corresponds to one user, and $W_s$ denotes the set of movies in the ground set rated by the user $s$.
The forward problem is the movie selection under per-genre upper limits and a budget on the viewing cost,
\[
    \max_{x\in\{0,1\}^d}\ \langle\theta,x\rangle
    \quad\text{s.t.}\quad
    \langle w,x\rangle\le b_s,\ \ \Gamma x\le\bar{c}_s .
\]
The cost $w_i$ is the number of years elapsed from the release of the movie $i$ at the time of rating (the median over users, rounded to years and set to at least $1$), and the budget and the per-genre upper limits are determined from the viewing history as $b_s=\langle w,\mathbf{1}_{W_s}\rangle+\min_i w_i$ (where $\mathbf{1}_{W_s}$ is the indicator vector of $W_s$) and $\bar{c}_{s,g}=(\Gamma\mathbf{1}_{W_s})_g+1$.
The action of the agent is the indicator vector $x^n=\mathbf{1}_{R_s}$ of the set $R_s\subseteq W_s$ of movies to which the user $s$ gave a rating of at least $4$.
The users with $|R_s|\ge5$ are divided into groups of 20 in the order of user IDs, each group forming one episode, and we run 20 episodes for each $d$. No randomness is used.
Since $R_s\subseteq W_s$, $x^n$ is feasible. In this experiment, we do not specify a true weight and do not assume the existence of a weight satisfying \cref{eq:outperformance_IOP}.

Solving outperformance inverse optimization in this problem means finding preference weights on the movies under which an optimal selection contains all the movies to which the user gave a rating of at least $4$.

The forward problem is solved by SCIP.

\paragraph{Weighted multi-cover problem (Cora, ogbn-arxiv)}
We formulate the situation in which a researcher selects (reads or cites) a limited number of papers as a weighted multi-cover of words.
For candidate papers $i\in P_s$ and words $j=1,\ldots,d$ (the set $P_s$ is described below), let $A_{ij}\in\{0,1\}$ indicate whether the word $j$ appears in the title or the text of the paper $i$, and let the forward problem be
\[
    \max_{z\in\{0,1\}^{|P_s|},\,x\in\{0,\ldots,r\}^d}\ \langle\theta,x\rangle
    \quad\text{s.t.}\quad
    x_j\le\sum_{i\in P_s}A_{ij}z_i,\ \ \sum_{i\in P_s}z_i\le k_s
\]
($x\in\{0,\ldots,r\}^d$; the oracle returns, as the action, the componentwise $\min(r,A^\top z)$ computed from the selected papers $z$; that is, $\mathcal{X}(s)$ is the set of all $x$ satisfying the above constraints for some $z$). The action $x_j$ is ``the number of papers covering the word $j$ (up to $r$)'', and $\theta$ is the weight of interest in the words.
A state $s$ corresponds to a paper $v$, the action of the agent is the depth $x^n_j=\min(r,\,|\{i\in E_s:A_{ij}=1\}|)$ of the set $E_s\subseteq P_s$ of papers in a citation relation with $v$ (described below), and the upper bound on the number of papers is $k_s=|E_s|$.
Hence $x\ge x^n$ means ``covering every word at a depth no smaller than that of the agent with the same number of papers as the agent'', and $E_s$ itself is a feasible solution.

Solving outperformance inverse optimization in this problem means finding weights of interest in the words under which an optimal selection of papers covers every word at a depth no smaller than that of the agent with the same number of papers.

The words are the first $d$ words obtained by taking, alternately in decreasing order of the field size, the words ranked by their per-field scores, so that the words of several fields compete for the limited number of papers even when the number of words is small.
We impose the upper bound on the depth because, with $r=1$ (maximum coverage), the OS loss tends to be $0$ by construction under the uniform weight, and without the upper bound ($r\ge k_s$) the problem reduces to selecting the top $k_s$ papers (\Cref{subsec:appendix_multicover}). Note, however, that this upper bound is not effective in the states with $k_s\le r$ (38\% for Cora and 35\% for ogbn-arxiv).
For Cora \citep{sen2008collective}, the words are the 1433-word Bag-of-Words, the scores are the occurrence rates in each of the 7 fields, the states are the papers $v$ of degree 3--10, $E_s$ is the set of adjacent papers in the undirected citation graph (the papers cited by $v$ and the papers citing $v$), $P_s$ is the 2-hop neighborhood of $v$ excluding $v$ itself, and $r=3$.
For ogbn-arxiv \citep{hu2020open}, we use the direction of citations and the publication years; the states are the papers $v$ of 2019 that cite 5--8 papers (published in 2019 or earlier), $E_s$ is the set of papers actually cited by $v$, and $P_s$ consists of $E_s$ and the papers cited by them (all published in 2019 or earlier, excluding $v$ itself).
The words are constructed from the titles and abstracts \citep{wang2020microsoft}, the scores are the distinctiveness in each of the 40 fields (the occurrence rate within the field minus the overall occurrence rate), and $r=5$.
For ogbn-arxiv, the states of the episode $e$ are selected from the field with the $(e+1)$-th largest number of papers in 2019 among the fields with at least $N$ papers satisfying the conditions ($e=0,\ldots,19$).
In both cases, we run 20 episodes for each $d$, and the forward problem is solved by SCIP. The generation procedures are given in \Cref{alg:gen_cora,alg:gen_arxiv}.

}
We describe the computing environment in \Cref{subsec:appendix_environment}, the implementation and the hyperparameters of each method in \Cref{subsec:appendix_implementation}, the details of the instances of the weighted multi-cover problem in \Cref{subsec:appendix_multicover}, the pseudocode of the evaluation protocol, of each method, and of the instance generation in \Cref{subsec:appendix_pseudocode_methods,subsec:appendix_pseudocode_instances}, and the results of all the methods in the five experiments in \Cref{subsec:appendix_full_results}.

\subsection{Computing environment}
\label{subsec:appendix_environment}

All the experiments were run on a machine with Intel Xeon Gold 6444Y CPUs (16 cores, 2 sockets, 64 threads in total), 1\,TB of memory, and Ubuntu 24.04.4 LTS.
The implementation is in Python 3.11.15 \citep{Python3} with NumPy 2.4.3 and SciPy 1.17.1.
As forward problems, the knapsack problem was solved by dynamic programming, and the MovieLens, Cora, ogbn-arxiv, and single-machine scheduling problems were solved by SCIP 10.0.0 (with SoPlex 8.0.0 as the LP solver) via OR-Tools 9.15 \citep{ortools}.
The time limit of SCIP was 60 seconds per solve for the MovieLens, Cora, and ogbn-arxiv problems, and 300 seconds for the single-machine scheduling problem.
For the MovieLens, Cora, ogbn-arxiv, and single-machine scheduling problems, optimality was proved in every solve (the implementation stops with an error when optimality is not proved, and solutions without a proof of optimality are not used). For the single-machine scheduling problem, we observed the following two issues with SCIP, and therefore we first solved with presolve turned off and, when optimality was not proved, rebuilt the model and solved again with presolve turned on. First, with presolve turned on, the process terminates abnormally within presolve for certain inputs. Second, with presolve turned off, SCIP sometimes erroneously declares a feasible problem infeasible when the objective coefficients include extremely small values (of the order of $10^{-9}$). In either path, the output of the oracle is a solution whose optimality for the original weight is proved.
Among the master problems of the inner solvers, the LPs were solved by HiGHS 1.12.0 in SciPy and the QPs by Clarabel 0.11.1. The metric projections of ONS and MetaGrad were also solved by Clarabel.
MovieLens100K and Cora were loaded with PyTorch Geometric 2.8.0.post1 \citep{fey2019fast} (for Cora, the Planetoid version \citep{yang2016revisiting}).
MovieLens100K was used in accordance with the terms of use of the GroupLens Research Project (use for research purposes only, no redistribution of the data, and citation of \citet{harper2015movielens} in publications).
For Cora (the Planetoid version), we used the distributed files at \url{https://github.com/kimiyoung/planetoid} (MIT license).
For ogbn-arxiv, we directly loaded the files distributed by OGB (Open Graph Benchmark) \citep{hu2020open} (citations, publication years, fields, and the correspondence to the paper IDs of the Microsoft Academic Graph) and the raw titles and abstracts \citep{wang2020microsoft} also distributed by OGB. We used ogbn-arxiv as distributed under the ODC-BY (Open Data Commons Attribution License) license.
Three CPU cores were allocated to the computation, and at most three processes were run simultaneously.
The movie selection problem with viewing costs was run in three simultaneous processes by dividing the episodes into 0--7, 8--15, and 16--19, and the knapsack problem was run in a single process following the end of one of these processes. The single-machine scheduling problem was run in three simultaneous processes by dividing the episodes into 0--8, 9--16, and 17--24.
The two experiments on the weighted multi-cover problem were run in three simultaneous processes by dividing the episodes (while three processes run simultaneously, each process uses one core).
Each method was run sequentially within a single process, and the computation time was measured in wall-clock time. As the random number generator we used \texttt{numpy.random.}\allowbreak\texttt{default\_rng}; for the knapsack problem and the single-machine scheduling problem, a generator with the seed $(20260928,e,d)$ was created for each pair of an episode $e$ and a dimension $d$ to generate the instances. For the weighted multi-cover problem, the seed $(20260928,e)$ was used to sample the states of the episode $e$. Since in both cases the seed is determined only by the episode and the dimension, the same instances are obtained even when a run is resumed from the middle.

\subsection{Implementation and hyperparameters of the methods}
\label{subsec:appendix_implementation}

{
All 13 methods are listed in \Cref{tab:methods}. In addition to the 7 methods compared in the main text (\Cref{subsec:experimental_setup}), they include Sub-ONS and Sub-MetaGrad, which are online learning methods updating state by state, and variants of OS-DC with different inner solvers (OS-DC[CP]+SGF, OS-DC[LV+CPF]+SGF, OS-DC[ONS], and OS-DC[MetaGrad]). SGF (subgradient finishing) denotes applying the projected subgradient method with a small step size after DC optimization stops, starting from the obtained weight. It is added because, even at a weight at which the OS loss is $0$, the optimal solution of the forward problem is in general not unique, and the solution of the oracle does not necessarily dominate the action. All hyperparameters were fixed in advance and were not tuned on the evaluation instances. Within each pair that differs only in the loss function, the OS side and the Sub side use the same values (the Sub side is given twice as many iterations so that the oracle-call budgets coincide).

\begin{table}[tbp]
    \centering
    \caption{The 13 compared methods. PSG is the projected subgradient method, ONS is the Online Newton Step \citep{hazan2007logarithmic}, and MetaGrad is \citet{van2016metagrad}; we also call these optimization methods learners. FB (full-batch) denotes the update that gives the learner the average of the subgradients over all states only once per iteration and does not use the averaged iterate (the average of the iterates so far). The online-to-batch conversion is the conversion that outputs the averaged iterate. The inner solvers CP, LV+CPF, and PB+CPF are the cutting-plane method, the level method with cutting-plane finishing, and the proximal bundle method with cutting-plane finishing, respectively (\Cref{sec:appendix_inner_rules}). In the main text, OS-DC[PB+CPF]+SGF is simply written as OS-DC.}
    \label{tab:methods}
    \begin{tabular}{lllp{0.22\linewidth}}
        \hline
        Method & \twoline{Optimization}{method} & Pseudocode & Remarks \\
        \hline
        \multicolumn{4}{l}{Empirical mean of the suboptimality loss (existing)} \\
        \hline
        Sub-PSG & PSG & \Cref{alg:psg} & \\
        Sub-ONS & ONS & \Cref{alg:sub_online} & state-by-state update, online-to-batch conversion \citep{sakaue2025online} \\
        Sub-MetaGrad & MetaGrad & \Cref{alg:sub_online} & state-by-state update, online-to-batch conversion \citep{sakaue2025online} \\
        Sub-ONS(FB) & ONS & \Cref{alg:fb_online} & FB \\
        Sub-MetaGrad(FB) & MetaGrad & \Cref{alg:fb_online} & FB \\
        \hline
        \multicolumn{4}{l}{Empirical mean of the OS loss (proposed)} \\
        \hline
        OS-PSG & PSG & \Cref{alg:psg} & \\
        OS-ONS(FB) & ONS & \Cref{alg:fb_online} & FB \\
        OS-MetaGrad(FB) & MetaGrad & \Cref{alg:fb_online} & FB \\
        OS-DC[CP]+SGF & DC optimization & \Cref{alg:osdc_full} & inner: CP, with SGF \\
        OS-DC[LV+CPF]+SGF & DC optimization & \Cref{alg:osdc_full} & inner: LV+CPF, with SGF \\
        OS-DC[PB+CPF]+SGF & DC optimization & \Cref{alg:osdc_full} & inner: PB+CPF, with SGF (OS-DC in the main text) \\
        OS-DC[ONS] & DC optimization & \Cref{alg:osdc_full} & inner: ONS (FB; \Cref{alg:dc_inner_online}) \\
        OS-DC[MetaGrad] & DC optimization & \Cref{alg:osdc_full} & inner: MetaGrad (FB; \Cref{alg:dc_inner_online}) \\
        \hline
    \end{tabular}
\end{table}
}

\paragraph{Implementation of DC optimization}
The implementation of DC optimization (\Cref{alg:dc_outer,alg:dc_inner} in the appendix) is a DCA that truncates the inner loop at $J=20$ iterations by default (inexact DCA). Its validity under truncation is ensured by the following design: (i) the descent property that holds even under truncation (\Cref{prop:dc_descent}); (ii) tie stopping only when the inner solver reports convergence ($\mathrm{converged}=\mathrm{TRUE}$) (to avoid stopping at a tie that occurs by chance while the inner problem has not converged); (iii) cut inheritance, which improves the accuracy of the inner solver without additional oracle cost (\Cref{prop:dc_cut_validity}); and (iv) certificate stopping (stopping when the empirical mean of the OS loss falls below the threshold $\epsilon_{\mathrm{gap}}$; \Cref{prop:dc_global_certificate}) and target-attainment stopping (stopping when $\ell^{\mathrm{OP}}$ falls below the threshold), both of which are valid regardless of the convergence of the inner loop.

\paragraph{Projected subgradient method (Sub-PSG, OS-PSG)}
The update at iteration $j$ is
\[
    \theta^{j+1}=\Pi_{\Delta^{d-1}}\!\left(\theta^{j}-\eta\,(j+1)^{-1/2}\,\frac{g_j}{\|g_j\|_2}\right),
    \qquad
    g_j=\sum_{n=1}^N\bigl(x^*(\theta^{j},s^n)-y_j^n\bigr) .
\]
Here $\Pi_{\Delta^{d-1}}$ is the Euclidean projection onto the probability simplex (an exact sorting-based method), and $y_j^n$ is $x^n$ for Sub-PSG and $x^*(\theta^{j};x^n,s^n)$ for OS-PSG ($g_j/N$ of OS-PSG is the gradient of the empirical mean of the OS loss in the sense of weak derivatives; \Cref{prop:gredient_OA_loss}).
The step-size coefficient is $\eta=0.1$. The method stops when $g_j=0$. The same coefficient, normalization, and projection are used in all five experiments, including the single-machine scheduling problem.
When $\langle\theta^{j},x^*(\theta^{j},s^n)-y_j^n\rangle<0$ holds because of solver errors, $x^*(\theta^{j},s^n)$ is replaced by $y_j^n$ before computing $\ell^{\mathrm{OP}}$ (the solution before the replacement is used to compute $g_j$).
The number of iterations is $T$ for OS-PSG and $2T$ for Sub-PSG.

\paragraph{Sub-ONS}
One pass over the states $s^1,\ldots,s^N$ in order constitutes one epoch, and at each round we update by the Online Newton Step \citep{hazan2007logarithmic}
\[
    H_t=H_{t-1}+g_tg_t^\top,\qquad
    \theta^{t+1}=\Pi^{H_t}_{\Delta^{d-1}}\!\left(\theta^{t}-\gamma^{-1}H_t^{-1}g_t\right)
\]
using the subgradient $g_t=x^*(\theta^{t},s^n)-x^n$ of $\ell^{\sub}(\cdot\,;x^n,s^n)$. Here $\Pi^{H}_{\Delta^{d-1}}$ is the projection onto the probability simplex in the $H$-norm, $H_0=\epsilon_{\mathrm{reg}}I$, and $\gamma=1$, $\epsilon_{\mathrm{reg}}=1$.
At the end of each epoch, the average of the $\theta^{t}$ visited so far (online-to-batch conversion) is evaluated on all states and recorded.
The number of oracle calls per epoch is $2N$ in total, $N$ for the updates and $N$ for the evaluation, so that $T$ epochs use up the oracle-call budget $B=2TN$.

\paragraph{Sub-MetaGrad}
This method replaces the learner of Sub-ONS by MetaGrad \citep{van2016metagrad}. It runs in parallel an ONS for each learning rate $\eta_i=2^{-i}/(5DG)$ ($i=0,\ldots,\lceil\log_2(TN+1)/2\rceil$) and aggregates them by tilted exponential weights.
We use the diameter $D=\sqrt{2}$ of the probability simplex, and for the upper bound $G$ on the norm of the subgradients we use the norm of the first nonzero (per-state) subgradient.
The online-to-batch conversion and the counting of oracle calls are the same as for Sub-ONS.

\paragraph{Full-batch ONS and MetaGrad (Sub-ONS(FB), Sub-MetaGrad(FB), OS-ONS(FB), OS-MetaGrad(FB))}
As in the projected subgradient method, at iteration $j$ all states are solved with the same weight $\theta^{j}$, and the full-batch subgradient
\[
    g_j=\frac{1}{N}\sum_{n=1}^N\bigl(x^*(\theta^{j},s^n)-y_j^n\bigr)
\]
is given to the learner only once. Here $y_j^n$ is $x^n$ on the Sub side and $x^*(\theta^{j};x^n,s^n)$ on the OS side.
ONS updates as
\[
    H_j=H_{j-1}+g_jg_j^\top,\qquad
    \theta^{j+1}=\Pi^{H_j}_{\Delta^{d-1}}\!\left(\theta^{j}-\gamma^{-1}H_j^{-1}g_j\right)
\]
(with the initial metric $\epsilon_{\mathrm{reg}}I$, $\gamma=1$, $\epsilon_{\mathrm{reg}}=1$), and MetaGrad applies the same update as Sub-MetaGrad to $g_j$ ($D=\sqrt{2}$, $G=\|g_0\|_2$).
The number of learning rates of MetaGrad is set to $\lceil\log_2(n_{\mathrm{it}}+1)/2\rceil+1$ from the number of iterations $n_{\mathrm{it}}$.
The number of iterations is $n_{\mathrm{it}}=T$ on the OS side ($2N$ oracle calls per iteration) and $n_{\mathrm{it}}=2T$ on the Sub side ($N$ per iteration), and both use up exactly the oracle-call budget $B=2TN$.
The averaged iterate is not used; at each iteration, $\ell^{\mathrm{OP}}$ at $\theta^{j}$ and the empirical mean of the OS loss are computed directly from the solutions used for the update and recorded ($x^*(\theta^{j};x^n,s^n)$ used for recording the OS loss on the Sub side is a solve used only for logging and is not counted as an oracle call).
Stopping when $g_j=0$ is the same as in the projected subgradient method. The replacement against solver errors is used not only for $\ell^{\mathrm{OP}}$ but also for computing $g_j$.
These methods differ from Sub-ONS and Sub-MetaGrad in that each update uses all states and in that the averaged iterate is not used.

\paragraph{DC optimization of the OS loss (OS-DC)}
The maximum number of outer iterations is $K=2T/20$, the maximum number of inner iterations is $J=20$, the threshold of tie stopping is $\epsilon_{\mathrm{tie}}=10^{-8}$, and the threshold of certificate stopping is $\epsilon_{\mathrm{gap}}=10^{-6}$.
As the rules for selecting the evaluation points of the inner solver (\Cref{sec:appendix_inner_rules}), we compare the cutting-plane method (CP), the level method with cutting-plane finishing (LV+CPF), and the proximal bundle method with cutting-plane finishing (PB+CPF), and subgradient finishing is added to all of them.
In addition, we also compare OS-DC with ONS or MetaGrad as the inner solver (without subgradient finishing) (next paragraph).
For CP, the inner tolerance is $\epsilon_{\mathrm{in}}=10^{-6}$; for LV+CPF and PB+CPF, $\epsilon_{\mathrm{in}}=0$, the threshold for switching phases is $10^{-6}$, and the termination tolerance is $\epsilon_{\mathrm{term}}=10^{-9}$ (\Cref{alg:rule_cpfinish}). The exactness of \Cref{theo:cpfinish_exact} is guaranteed for $\epsilon_{\mathrm{term}}=0$; the positive value is a safeguard against rounding errors.
In subgradient finishing (SGF), OS-PSG is run for $T$ iterations from the output of DC optimization with the step-size coefficient $\eta=10^{-3}$ ($1/100$ of that of the usual projected subgradient method) in all five experiments.
The step size is made small because a large move makes the OS loss positive again.
At every evaluation of the forward-problem oracle (including the inner evaluation points), $\ell^{\mathrm{OP}}$ is computed, and the whole procedure stops when it becomes $0$.
The proximal weight of the proximal bundle method is fixed at $u=1$, the descent parameter is $\rho=0.1$, and the level parameter of the level method is $\lambda=1/(2+\sqrt{2})$. The output of DC optimization is the attaining point in the case of target-attainment stopping and, otherwise, the point with the smallest $\ell_k$ among all outer iterations (\Cref{alg:dc_outer}). The whole procedure is given in \Cref{alg:osdc_full}.
The evaluation points computed by the inner solver along the way (the solutions of the LPs and QPs of the master problems) are passed to the oracle without being projected onto the probability simplex, and only the output of the inner solver is projected onto the probability simplex. Hence the intermediate evaluation points may lie outside the probability simplex within the tolerance of the LP and QP solvers. The weights reported as best-so-far may also lie outside the probability simplex within the same tolerance.
OS-DC in the main text is OS-DC[PB+CPF]+SGF.

\paragraph{OS-DC with ONS or MetaGrad as the inner solver (OS-DC[ONS], OS-DC[MetaGrad])}
This is OS-DC that solves the convex subproblem $\min_{\theta\in\Theta}\bar{\ell}^{\sub}(\theta;y^\prime)$ (\cref{eq:def_sub_loss_main}) of a branch $y^\prime$ by ONS or MetaGrad.
The convex subproblem $\bar{\ell}^{\sub}(\cdot\,;y^\prime)$ is exactly the empirical mean of the suboptimality loss in which the actions of the agent are replaced by the dominating vertices $y^{\prime n}$.
The outer loop (\Cref{alg:dc_outer}) and $K$, $\epsilon_{\mathrm{tie}}$, and $\epsilon_{\mathrm{gap}}$ are the same as for OS-DC, and only the inner loop is replaced as follows.
At each inner iteration $t$, $x^*(\theta^{t},s^n)$ is computed for all states ($N$ oracle calls), and, using the full-batch subgradient $g_t=\frac{1}{N}\sum_{n=1}^N\bigl(x^*(\theta^{t},s^n)-y^{\prime n}\bigr)$, OS-DC[ONS] performs the same update as Sub-ONS,
\[
    H_t=H_{t-1}+g_tg_t^\top,\qquad
    \theta^{t+1}=\Pi^{H_t}_{\Delta^{d-1}}\!\left(\theta^{t}-\gamma^{-1}H_t^{-1}g_t\right)
\]
($H_0=\epsilon_{\mathrm{reg}}I$, $\gamma=1$, $\epsilon_{\mathrm{reg}}=1$).
OS-DC[MetaGrad] replaces this update by the same MetaGrad as Sub-MetaGrad ($D=\sqrt{2}$, and $G$ is the norm of the first subgradient). The number of learning rates is set to $\lceil\log_2(KJ+1)/2\rceil+1$ from the total number of rounds $KJ$ of the whole DC optimization.
Within each outer iteration, the inner loop outputs the point with the smallest $\bar{\ell}^{\sub}(\cdot\,;y^\prime)$ among the visited $\theta^{t}$ and their average (online-to-batch conversion).
The internal state of the learner is carried over across outer iterations. That is, the first outer iteration starts from $\theta^0=\mathbf{1}/d$, and each subsequent one resumes from the state at the end of the previous outer iteration. What is carried over is the metric $H_t$ and the last iterate for ONS, and, for MetaGrad, the point and the metric of each learning rate, the aggregation weights, $G$, and the learning rates.
On the other hand, the cumulative sum for the average is reset at each outer iteration, because the guarantee of the online-to-batch conversion holds only for a fixed loss, and the convex subproblem changes when the branch changes.
The maximum number of inner iterations is $J=20$ and the tolerance is $\epsilon_{\mathrm{in}}=10^{-6}$; the inner loop stops when $\bar{\ell}^{\sub}(\theta^{t};y^\prime)\le\epsilon_{\mathrm{in}}$.
Since ONS and MetaGrad have no cutting-plane model, no lower bound $\mathrm{LB}$ is available, and cut inheritance is not used. The inner loop reports converged $=$ TRUE only when $\bar{\ell}^{\sub}(\theta^{t};y^\prime)\le\epsilon_{\mathrm{in}}$ or when the subgradient becomes $0$.
Neither subgradient finishing nor cutting-plane finishing is used.
Even when the branch changes, the metric built from the subgradients of the previous branch is carried over without correction. Hence the whole procedure takes the form of ``running a single ONS (MetaGrad) while updating the linearization point along the way''.

{
\begin{remark}[Replacement against solver errors]
    \label{rem:solver_error_replacement}
    In the projected subgradient method, subgradient finishing, and the full-batch ONS and MetaGrad, when $\langle\theta,x^*(\theta,s^n)-y^n\rangle<0$, $x^*(\theta,s^n)$ is replaced by $y^n$ ($x^n$ on the Sub side and $x^*(\theta;x^n,s^n)$ on the OS side) before computing $\ell^{\mathrm{OP}}$ (in the projected subgradient method and subgradient finishing, the solution before the replacement is used for the subgradient $g_j$, whereas in the full-batch ONS and MetaGrad the solution after the replacement is used).
    On the other hand, in Sub-ONS, Sub-MetaGrad, and the main body of DC optimization, $\ell^{\mathrm{OP}}$ is computed from the solution returned by the oracle as it is.
    However, since every $y^n$ is a feasible solution in $\mathcal{X}(s^n)$, $\langle\theta,x^*(\theta,s^n)\rangle\ge\langle\theta,y^n\rangle$ always holds if $x^*(\theta,s^n)$ is an exact optimal solution, and this replacement does not occur.
    In our experiments, the knapsack problem is solved exactly by dynamic programming, and the MovieLens, Cora, ogbn-arxiv, and single-machine scheduling problems stop with an error when optimality is not proved (\Cref{subsec:appendix_environment}); hence this replacement does not occur.
    Therefore, the difference in the treatment among the methods does not affect the results.
\end{remark}
}

\paragraph{Reuse of the models of the forward problems}
For the MovieLens, Cora, and ogbn-arxiv problems (for the single-machine scheduling problem the model is rebuilt at every solve), the model of the forward problem for the same state and the same dominance constraint is kept, and only the objective coefficients are replaced for re-solving.
Since which optimal solution is returned when the optimal solution is not unique may depend on the preceding solve history, the kept models are reset before running each method, so that the trajectory of a method does not depend on the order in which the other methods are run.

\subsection{Details of the instances of the weighted multi-cover problem}
\label{subsec:appendix_multicover}

\paragraph{Words}
For Cora, the 1433-word Bag-of-Words (presence or absence) is used as it is.
For ogbn-arxiv, the titles and abstracts are lowercased, and we use the words consisting only of alphabetic characters with at least three letters, excluding common English stop words (no stemming). Words that appear in fewer than 20 papers are discarded.
For each field $c$, the score of a word $j$ is the fraction of the papers of the field $c$ containing the word $j$ (occurrence rate) for Cora, and the occurrence rate minus the occurrence rate over all papers (distinctiveness) for ogbn-arxiv; the ranking of the words is constructed by taking one word at a time alternately from the per-field rankings of the scores, starting from the field with the largest number of papers (skipping words already taken).
The words of dimension $d$ are the first $d$ words of this ranking, so that the sets of words are nested as $d$ increases.
We use the distinctiveness rather than the occurrence rate for ogbn-arxiv because with the occurrence rate, general words independent of the fields, such as based, paper, show, and model, occupy the top ranks.

\paragraph{States}
The selection of the states does not depend on the dimension, and 20 papers are sampled at random for each episode (since the episode number is included in the seed, the same states are used in the same episode even when the dimension is changed).
For Cora, the number of candidates has median 25 (3--210), and the number of papers selected by the agent has median 4 (3--10).
For ogbn-arxiv, an edge $(u,w)$ represents that $u$ cites $w$ (98.2\% of the edges have a citing paper whose publication year is no earlier than that of the cited paper).
By the condition on the publication year (2019 or earlier), $E_s$ and $P_s$ consist only of papers that existed at the time $v$ was written.
The number of candidates has median 38 (6--211), and the number of papers selected by the agent has median 6 (5--8).
The fields of the episodes $0,1,\ldots,19$ are, in decreasing order of the number of papers in 2019, cs.LG, cs.CV, cs.CL, cs.IT, cs.RO, \ldots.

\paragraph{Degeneracy at the uniform weight}
In a state in which the OS loss is $0$ at the uniform weight $\theta^0=\mathbf{1}/d$, the barycenter of the probability simplex is itself a global minimizer.
The upper bound $k_s$ on the number of papers and the upper bound $r$ on the depth were chosen so that this degeneracy is reduced.
The fraction of the states at which the OS loss is positive at the uniform weight (60 states of the episodes 0--2) is 57\% for $d=20$ and 88\% for $d=40$ for Cora ($r=3$), and 33\% for $d=20$ and 82\% for $d=40$ for ogbn-arxiv ($r=5$); for $d=100$, it was positive at all 20 states of the episode 0 in both experiments.
Setting the upper bound on the number of papers to $k_s=|E_s|+1$ increases the degeneracy (because the extra paper can recover the words of the agent), so we set $k_s=|E_s|$.
In the measured range, the states at which the OS loss is positive under the uniform weight account for at least 80\% for $d=40$ and for all states for $d=100$ (episode 0).

{
\subsection{Pseudocode of the evaluation protocol and of each method}
\label{subsec:appendix_pseudocode_methods}

This subsection gives the evaluation protocol as implemented and the pseudocode of each method.
In what follows, ``evaluating'' a weight $\theta$ means solving $v^n\gets x^*(\theta,s^n)$ for all states ($N$ oracle calls) and computing and recording $\ell^{\mathrm{OP}}(\theta)=\frac{1}{N}\sum_{n=1}^N\sum_{i=1}^d\max\{x^n_i-v^n_i,0\}$ (\cref{eq:def_opl}).
In every method, if $\ell^{\mathrm{OP}}\le0$ at an evaluation, the whole method stops at that point and returns the weight evaluated then (target-attainment stopping).
The solves $\tilde{y}^n\gets x^*(\theta;x^n,s^n)$ used only for logging (for recording $\bar{\ell}^{\optdex}$) are not counted as oracle calls.
Each method is evaluated at the weight with the smallest $\ell^{\mathrm{OP}}$ among the weights visited so far (for Sub-ONS and Sub-MetaGrad, the averaged iterates at the end of each epoch (one pass over the states); for OS-DC, all evaluation points of the inner loop, the outer loop, and subgradient finishing) (best-so-far; among weights with the same $\ell^{\mathrm{OP}}$, the one with the smaller empirical mean of the OS loss), and the empirical mean of the OS loss is also measured at the same weight.
Letting $T$ be the number of iterations of the projected subgradient method and $N$ the number of states, we impose the oracle-call budget $B:=2TN$ common to all methods and compare the values at the time when the number of oracle calls reaches $B$.
OS-PSG, OS-ONS(FB), and OS-MetaGrad(FB) make $2N$ oracle calls per iteration and are run for $T$ iterations, and Sub-PSG, Sub-ONS(FB), and Sub-MetaGrad(FB) make $N$ oracle calls per iteration and are run for $2T$ iterations.
Sub-ONS and Sub-MetaGrad make $2N$ oracle calls per epoch (one pass over the states), $N$ for the updates and $N$ for the evaluation of the averaged iterate, and are run for $T$ epochs.
The number of oracle calls of OS-DC differs from one outer iteration to another, and it may be run beyond the oracle-call budget $B$, but the values at the time when the number of oracle calls reaches $B$ are used for the comparison.
All methods stop as target attainment when $\ell^{\mathrm{OP}}=0$ is reached.

\begin{algorithm}[tbp]
    \caption{Common evaluation protocol (oracle-call budget $B$ and best-so-far)}
    \label{alg:exp_protocol}
    \begin{algorithmic}[1]
        \REQUIRE method, oracle-call budget $B=2TN$, attainment tolerance $\epsilon_{\mathrm{att}}=10^{-9}$
        \STATE Run the method and obtain the sequence of records $(c_i,\ell^{\mathrm{OP}}_i,\bar{\ell}^{\optdex}_i)_{i=0,1,\ldots}$ ($c_i$ is the cumulative number of oracle calls at the time of recording: the number of calls to $x^*(\cdot,s)$ for the 5 existing methods, and the sum of the numbers of calls to $x^*(\cdot,s)$ and $x^*(\cdot\,;x^\prime,s)$ for the 8 proposed methods)
        \FOR{$i=0,1,\ldots$}
            \STATE $(L_i,O_i)\gets$ the lexicographic minimum of $(\ell^{\mathrm{OP}}_{i^\prime},\bar{\ell}^{\optdex}_{i^\prime})_{i^\prime\le i}$ (an undefined $\bar{\ell}^{\optdex}$ is regarded as $+\infty$, and ties keep the earlier record)
        \ENDFOR
        \STATE $i_B\gets\max\{i\mid c_i\le B\}$
        \RETURN $L_{i_B}$, $O_{i_B}$, attainment $[L_{i_B}\le\epsilon_{\mathrm{att}}]$, and computation time (the maximum wall-clock time from the start of the method)
    \end{algorithmic}
\end{algorithm}

\begin{algorithm}[tbp]
    \caption{Projected subgradient method (Sub-PSG, OS-PSG, subgradient finishing)}
    \label{alg:psg}
    \begin{algorithmic}[1]
        \REQUIRE initial point $\theta^{0}$, step-size coefficient $\eta$, number of iterations $n_{\mathrm{it}}$, type $\in\{\mathrm{Sub},\mathrm{OS}\}$
        \FOR{$j=0,1,\ldots,n_{\mathrm{it}}-1$}
            \STATE For each $n$, $v^n\gets x^*(\theta^{j},s^n)$ and $\tilde{y}^n\gets x^*(\theta^{j};x^n,s^n)$ (counted for OS; used only for logging for Sub)
            \STATE For each $n$, $y^n\gets\tilde{y}^n$ (OS) or $y^n\gets x^n$ (Sub)
            \STATE $\bar{\ell}^{\optdex}_j\gets\frac{1}{N}\sum_{n=1}^N\max\{\langle\theta^{j},v^n-\tilde{y}^n\rangle,0\}$
            \STATE For each $n$, $\hat{v}^n\gets y^n$ if $\langle\theta^{j},v^n-y^n\rangle<0$, and $\hat{v}^n\gets v^n$ otherwise
            \STATE Record $\ell^{\mathrm{OP}}_j\gets\frac{1}{N}\sum_{n=1}^N\sum_{i=1}^d\max\{x^n_i-\hat{v}^n_i,0\}$
            \IF{$\ell^{\mathrm{OP}}_j\le0$}
                \RETURN $\theta^{j}$
            \ENDIF
            \STATE $g_j\gets\sum_{n=1}^N(v^n-y^n)$ (using $v^n$ before the replacement)
            \IF{$g_j=0$}
                \RETURN $\theta^{j}$
            \ENDIF
            \STATE $\theta^{j+1}\gets\Pi_{\Delta^{d-1}}\bigl(\theta^{j}-\eta\,(j+1)^{-1/2}g_j/\|g_j\|_2\bigr)$
        \ENDFOR
        \RETURN $\theta^{n_{\mathrm{it}}-1}$ (the last evaluated point)
    \end{algorithmic}
\end{algorithm}

In \Cref{alg:psg}, $(\theta^{0},\eta,n_{\mathrm{it}})$ is $(\mathbf{1}/d,0.1,2T)$ for Sub-PSG, $(\mathbf{1}/d,0.1,T)$ for OS-PSG, and $(\text{the output of DC optimization},10^{-3},T)$ for subgradient finishing (type OS).
In subgradient finishing, no stopping based on $\bar{\ell}^{\optdex}$ is performed, and the decay $(j+1)^{-1/2}$ of the step size is counted again from $j=0$.
These parameters are common to all five experiments, including the single-machine scheduling problem.

In what follows, let $\mathcal{H}_d$ be the set of $d\times d$ symmetric positive definite matrices, and for $H\in\mathcal{H}_d$ and $z\in\mathbb{R}^d$, let $\Pi^{H}_{\Delta^{d-1}}(z):=\argmin_{\theta\in\Delta^{d-1}}(\theta-z)^\top H(\theta-z)$ (the minimizer is unique since the objective function is strictly convex; it is solved as a quadratic program by Clarabel).
An online learner is given by a set $\mathbb{L}$ of internal states together with an initialization $\mathsf{I}\colon\Delta^{d-1}\to\mathbb{L}$, a prediction $\mathsf{P}\colon\mathbb{L}\to\Delta^{d-1}$, and an update $\mathsf{U}\colon\mathbb{L}\times(\mathbb{R}^d\setminus\{0\})\to\mathbb{L}$.

\begin{definition}[ONS]
\label{def:learner_ons}
Let $\gamma=1$ and $\epsilon_{\mathrm{reg}}=1$. The set of internal states is $\mathbb{L}_{\mathrm{ONS}}:=\Delta^{d-1}\times\mathcal{H}_d$, and for $\mathcal{L}=(\theta,H)\in\mathbb{L}_{\mathrm{ONS}}$ we define
\begin{align*}
    \mathsf{I}(\theta^{\mathrm{init}})&:=(\theta^{\mathrm{init}},\epsilon_{\mathrm{reg}}I), \qquad \mathsf{P}(\mathcal{L}):=\theta,\\
    \mathsf{U}(\mathcal{L},g)&:=\bigl(\Pi^{H^\prime}_{\Delta^{d-1}}(\theta-\gamma^{-1}(H^\prime)^{-1}g),\,H^\prime\bigr),\qquad H^\prime:=H+gg^\top.
\end{align*}
\end{definition}

\begin{definition}[MetaGrad]
\label{def:learner_metagrad}
Let the diameter be $D=\sqrt{2}$ and the total number of rounds be $n_{\mathrm{r}}$, and put $n_\eta:=\lceil\log_2(n_{\mathrm{r}}+1)/2\rceil+1$. The set of internal states is
\[
    \mathbb{L}_{\mathrm{MG}}:=\bigl(\mathbb{R}_{>0}^{n_\eta}\cup\{\bot\}\bigr)\times\Delta^{n_\eta-1}\times(\Delta^{d-1})^{n_\eta}\times(\mathcal{H}_d)^{n_\eta},
\]
whose elements we write as $\mathcal{L}=(\eta,\pi,(w^i)_{i=0}^{n_\eta-1},(H^i)_{i=0}^{n_\eta-1})$. Here $\bot$ is a symbol indicating that the learning rates $\eta$ are not yet set; they are determined from the norm of the first subgradient at the first update.
We define the initialization and the prediction by
\begin{gather*}
    \mathsf{I}(\theta^{\mathrm{init}}):=\bigl(\bot,\pi^{\mathrm{init}},(\theta^{\mathrm{init}})_{i},(I/D^2)_{i}\bigr),\qquad \pi^{\mathrm{init}}_i:=\frac{n_\eta+1}{n_\eta(i+1)(i+2)},\\
    \mathsf{P}(\mathcal{L}):=\begin{cases} w^0 & (\eta=\bot),\\ \dfrac{\sum_i\pi_i\eta_iw^i}{\sum_i\pi_i\eta_i} & (\text{otherwise}).\end{cases}
\end{gather*}
The update $\mathsf{U}(\mathcal{L},g)=(\eta^\prime,\pi^\prime,(w^{i\prime})_i,(H^{i\prime})_i)$ is defined, with $\theta:=\mathsf{P}(\mathcal{L})$, by
\begin{align*}
    \eta^\prime&:=\begin{cases}\bigl(2^{-i}/(5D\|g\|_2)\bigr)_{i=0}^{n_\eta-1} & (\eta=\bot),\\ \eta & (\text{otherwise}),\end{cases}\\
    \varsigma_i&:=\langle w^i-\theta,g\rangle,\qquad H^{i\prime}:=H^i+2(\eta^\prime_i)^2gg^\top,\\
    w^{i\prime}&:=\Pi^{H^{i\prime}}_{\Delta^{d-1}}\bigl(w^i-\eta^\prime_i(1+2\eta^\prime_i\varsigma_i)(H^{i\prime})^{-1}g\bigr),\\
    \pi^\prime_i&:=\frac{\pi_i\exp(-\eta^\prime_i\varsigma_i-(\eta^\prime_i\varsigma_i)^2)}{\sum_{j}\pi_j\exp(-\eta^\prime_j\varsigma_j-(\eta^\prime_j\varsigma_j)^2)}.
\end{align*}
\end{definition}

Since $w^i=\theta^{\mathrm{init}}$ for all $i$ in $\mathsf{I}(\theta^{\mathrm{init}})$, the prediction right after $\eta$ is set equals $w^0$ also by the general formula, so the case distinction in $\mathsf{P}$ causes no discontinuous change of the prediction.

\begin{algorithm}[tbp]
    \caption{Sub-ONS and Sub-MetaGrad (state-by-state update and online-to-batch conversion)}
    \label{alg:sub_online}
    \begin{algorithmic}[1]
        \REQUIRE learner $(\mathsf{I},\mathsf{P},\mathsf{U})$ (ONS: \Cref{def:learner_ons}, MetaGrad: \Cref{def:learner_metagrad}; $n_{\mathrm{r}}=TN$ for MetaGrad), $\theta^{\mathrm{init}}=\mathbf{1}/d$, number of epochs $T$
        \STATE $\mathcal{L}\gets\mathsf{I}(\theta^{\mathrm{init}})$, $S\gets0$, $t\gets0$
        \FOR{$e=0,1,\ldots,T-1$}
            \FOR{$n=1,\ldots,N$}
                \STATE $\theta^{t}\gets\mathsf{P}(\mathcal{L})$, $S\gets S+\theta^{t}$, $t\gets t+1$
                \STATE $g_t\gets x^*(\theta^{t},s^n)-x^n$ (one oracle call)
                \IF{$g_t\ne0$}
                    \STATE $\mathcal{L}\gets\mathsf{U}(\mathcal{L},g_t)$
                \ENDIF
            \ENDFOR
            \STATE Evaluate $\bar{\theta}^e\gets S/t$ ($N$ oracle calls), and record $\bar{\ell}^{\optdex}$ from $\tilde{y}^n\gets x^*(\bar{\theta}^e;x^n,s^n)$ (used only for logging)
        \ENDFOR
        \RETURN $\bar{\theta}^{T-1}$ (the last evaluated point)
    \end{algorithmic}
\end{algorithm}

\begin{algorithm}[tbp]
    \caption{Full-batch ONS and MetaGrad (Sub-ONS(FB), Sub-MetaGrad(FB), OS-ONS(FB), OS-MetaGrad(FB))}
    \label{alg:fb_online}
    \begin{algorithmic}[1]
        \REQUIRE learner $(\mathsf{I},\mathsf{P},\mathsf{U})$ (ONS: \Cref{def:learner_ons}, MetaGrad: \Cref{def:learner_metagrad}; $n_{\mathrm{r}}=n_{\mathrm{it}}$ for MetaGrad), $\theta^{\mathrm{init}}=\mathbf{1}/d$, number of iterations $n_{\mathrm{it}}$ (OS: $T$, Sub: $2T$)
        \STATE $\mathcal{L}\gets\mathsf{I}(\theta^{\mathrm{init}})$
        \FOR{$j=0,1,\ldots,n_{\mathrm{it}}-1$}
            \STATE $\theta^{j}\gets\mathsf{P}(\mathcal{L})$
            \STATE For each $n$, $v^n\gets x^*(\theta^{j},s^n)$ and $\tilde{y}^n\gets x^*(\theta^{j};x^n,s^n)$ (counted for OS; used only for logging for Sub)
            \STATE For each $n$, let $y^n\gets\tilde{y}^n$ (OS) or $y^n\gets x^n$ (Sub), and set $v^n\gets y^n$ if $\langle\theta^{j},v^n-y^n\rangle<0$
            \STATE Record $\ell^{\mathrm{OP}}_j$ and $\bar{\ell}^{\optdex}_j$ with $v^n$ after the replacement
            \IF{$\ell^{\mathrm{OP}}_j\le0$}
                \RETURN $\theta^{j}$
            \ENDIF
            \STATE $g_j\gets\frac{1}{N}\sum_{n=1}^N(v^n-y^n)$
            \IF{$g_j=0$}
                \RETURN $\theta^{j}$
            \ENDIF
            \STATE $\mathcal{L}\gets\mathsf{U}(\mathcal{L},g_j)$
        \ENDFOR
        \RETURN $\theta^{n_{\mathrm{it}}-1}$ (the last evaluated point)
    \end{algorithmic}
\end{algorithm}

\begin{algorithm}[tbp]
    \caption{Inner loop of OS-DC[ONS] and OS-DC[MetaGrad] (replacing line \ref{line:dc_inner_call} of \Cref{alg:dc_outer})}
    \label{alg:dc_inner_online}
    \begin{algorithmic}[1]
        \REQUIRE branch $y^\prime$, outer point $\theta^k$, learner $(\mathsf{I},\mathsf{P},\mathsf{U})$ (ONS: \Cref{def:learner_ons}, MetaGrad: \Cref{def:learner_metagrad}; $D=\sqrt{2}$ and $n_{\mathrm{r}}=KJ$ for MetaGrad), internal state of the learner $\mathcal{L}$ (undefined for $k=0$), $\epsilon_{\mathrm{in}}=10^{-6}$, $J=20$, feasibility tolerance $\epsilon_{\mathrm{feas}}=10^{-9}$
        \STATE $f(\theta):=\bar{\ell}^{\sub}(\theta;y^\prime)$, $g(\theta):=\frac{1}{N}\sum_{n=1}^N(x^*(\theta,s^n)-y^{\prime n})$ (one evaluation is shared for the same $\theta$)
        \STATE $f^\star\gets+\infty$, $\mathrm{converged}\gets\mathrm{FALSE}$
        \IF{$\mathcal{L}$ is undefined}
            \IF{the learner is ONS}
                \STATE $\mathcal{L}\gets\mathsf{I}(\theta^k)$
            \ELSIF{the learner is MetaGrad}
                \STATE Evaluate $\theta^k$ ($N$ oracle calls), and set $f^\star\gets f(\theta^k)$, $\hat{\theta}\gets\theta^k$, and $G\gets\|g(\theta^k)\|_2$
                \IF{$G=0$}
                    \RETURN $(\hat{\theta},\mathrm{TRUE},\mathcal{L})$ ($\mathcal{L}$ remains undefined)
                \ENDIF
                \STATE Set $\mathcal{L}\gets\mathsf{I}(\theta^k)$ and replace the $\eta$ component of $\mathcal{L}$ by $\bigl(2^{-i}/(5DG)\bigr)_{i=0}^{n_\eta-1}$
            \ENDIF
        \ELSE
            \STATE Use the input $\mathcal{L}$ as it is ($\theta^k$ is not used)
        \ENDIF
        \STATE $\Sigma\gets0$
        \FOR{$t=1,\ldots,J$}
            \STATE Evaluate $\theta^{t}\gets\mathsf{P}(\mathcal{L})$ ($N$ oracle calls; at $t=1$ of the first outer iteration of MetaGrad, the prediction is mathematically equal to $\theta^k$, but it is evaluated again if the bit patterns do not coincide because of floating-point rounding)
            \STATE If $f(\theta^{t})<f^\star$, then $f^\star\gets f(\theta^{t})$ and $\hat{\theta}\gets\theta^{t}$. $\Sigma\gets\Sigma+\theta^{t}$
            \IF{$f^\star\le\epsilon_{\mathrm{in}}$ or $g(\theta^{t})=0$}
                \STATE $\mathrm{converged}\gets\mathrm{TRUE}$, \textbf{break}
            \ENDIF
            \STATE $\mathcal{L}\gets\mathsf{U}(\mathcal{L},g(\theta^{t}))$
        \ENDFOR
        \IF{$\mathrm{converged}=\mathrm{FALSE}$ and $\bar{\theta}:=\Sigma/t\in\Theta$ (tolerance $\epsilon_{\mathrm{feas}}$)}
            \STATE Evaluate $\bar{\theta}$ ($N$ oracle calls), and set $\hat{\theta}\gets\bar{\theta}$ if $f(\bar{\theta})<f^\star$
        \ENDIF
        \RETURN $(\hat{\theta},\mathrm{converged},\mathcal{L})$
    \end{algorithmic}
\end{algorithm}

\begin{algorithm}[p]
    \caption{The whole procedure of OS-DC[$\cdot$] (+SGF)}
    \label{alg:osdc_full}
    \begin{algorithmic}[1]
        \REQUIRE $K=\lfloor2T/20\rfloor$, $J=20$, $\epsilon_{\mathrm{tie}}=10^{-8}$, $\epsilon_{\mathrm{gap}}=10^{-6}$, inner solver, whether to perform subgradient finishing
        \STATE $\theta\gets\Pi_{\Delta^{d-1}}(\mathbf{1}/d)$, $y^{\prime n}\gets x^*(\theta;x^n,s^n)$ for each $n$ ($N$ oracle calls), $\mathcal{V}\gets\emptyset$, and let the internal state of the learner $\mathcal{L}$ be undefined (ONS, MetaGrad)
        \FOR{$k=0,1,\ldots,K-1$}
            \STATE Run the inner solver (CP, LV+CPF, PB+CPF: \Cref{alg:dc_inner} (using inherited cuts from $\mathcal{V}$); ONS, MetaGrad: \Cref{alg:dc_inner_online} (inputs $\theta^k:=\theta$ and $\mathcal{L}$; the output $\mathcal{L}$ replaces $\mathcal{L}$)) from the initial point $\theta$, and obtain $(\theta^\prime,\mathrm{converged})$
            \STATE Evaluate $\theta\gets\Pi_{\Delta^{d-1}}(\theta^\prime)$ to obtain $v^n$ and add them to $\mathcal{V}$, and set $y^{\prime\prime n}\gets x^*(\theta;x^n,s^n)$ for each $n$ ($2N$ oracle calls in total)
            \STATE $\ell_k\gets\frac{1}{N}\sum_{n=1}^N\langle\theta,v^n-y^{\prime\prime n}\rangle$, $\Delta_k\gets\frac{1}{N}\sum_{n=1}^N\langle\theta,y^{\prime\prime n}-y^{\prime n}\rangle$, $\theta^{(k)}\gets\theta$
            \IF{$\ell_k\le\epsilon_{\mathrm{gap}}$, or ($\Delta_k\le\epsilon_{\mathrm{tie}}$ and $\mathrm{converged}=\mathrm{TRUE}$)}
                \STATE \textbf{break}
            \ENDIF
            \STATE $y^\prime\gets y^{\prime\prime}$
        \ENDFOR
        \STATE $\theta^{\mathrm{DC}}\gets\theta^{(k^\star)}$, where $k^\star$ is the smallest index minimizing $\ell_k$ (regardless of the reason for stopping)
        \IF{subgradient finishing is performed}
            \STATE Run \Cref{alg:psg} with type OS and $(\theta^{0},\eta,n_{\mathrm{it}})=(\theta^{\mathrm{DC}},10^{-3},T)$, and let $\theta^{\mathrm{SGF}}$ be its output
            \RETURN $\theta^{\mathrm{SGF}}$
        \ENDIF
        \RETURN $\theta^{\mathrm{DC}}$
    \end{algorithmic}
\end{algorithm}

In OS-DC, $\bar{\ell}^{\optdex}$ is recorded at all evaluation points of the inner and outer loops by solves used only for logging.
The pool $\mathcal{V}$ of inherited cuts is shared within a single run of DC optimization.

\subsection{Pseudocode for instance generation}
\label{subsec:appendix_pseudocode_instances}

For each experiment, an instance is generated once for each pair of an episode and a dimension $d$, and the same $\{(s^n,x^n)\}_{n=1}^N$ is given to all methods.
All random numbers in the following pseudocode are drawn in order from a single generator of \texttt{numpy.random.}\allowbreak\texttt{default\_rng}. Its seed is $(20260928,e,d)$ for the knapsack problem and the single-machine scheduling problem (\Cref{alg:gen_knapsack,alg:gen_scheduling}) and $(20260928,e)$ for the weighted multi-cover problem (\Cref{alg:gen_cora,alg:gen_arxiv}), where $e$ is the episode index (\Cref{subsec:appendix_environment}).

\paragraph{Knapsack problem}
For a state $s=(w,c,u)$, we have $x^*(\theta,s)\in\argmax\{\langle\theta,x\rangle\mid x\in\mathbb{Z}^d,\ \langle w,x\rangle\le c,\ 0\le x\le u\}$ and $x^*(\theta;x^n,s)\in\argmax\{\langle\theta,x\rangle\mid x\in\mathbb{Z}^d,\ \langle w,x\rangle\le c,\ x^n\le x\le u\}$, both of which are solved exactly by dynamic programming for the bounded knapsack problem (for the latter, we solve the problem in $x-x^n$ with capacity $c-\langle w,x^n\rangle$ and upper bound $u-x^n$).

\begin{algorithm}[tbp]
    \caption{Instance generation for the knapsack problem (one episode and dimension $d$)}
    \label{alg:gen_knapsack}
    \begin{algorithmic}[1]
        \REQUIRE episode index $e$, dimension $d$, number of states $N=20$
        \STATE $\theta^*\sim\mathrm{Dir}(\mathbf{1})$
        \FOR{$n=1,\ldots,N$}
            \STATE $w^n_i\sim\mathrm{U}\{1,\ldots,9\}$, $u^n_i\sim\mathrm{U}\{1,\ldots,4\}$ ($i=1,\ldots,d$), $c^n\sim\mathrm{U}\{1,\ldots,19\}$ (all independent), $s^n\gets(w^n,c^n,u^n)$
        \ENDFOR
        \FOR{$n=1,\ldots,N$}
            \STATE $x^n\gets x^*(\theta^*,s^n)$
            \FOR{$i=1,\ldots,d$}
                \STATE with probability $0.2$, $x^n_i\gets\max\{x^n_i-1,0\}$
            \ENDFOR
        \ENDFOR
        \RETURN $\{(s^n,x^n)\}_{n=1}^N$
    \end{algorithmic}
\end{algorithm}

Since $0\le x^n\le x^*(\theta^*,s^n)$ and $w\ge0$, $x^n$ is feasible, and $x^*(\theta^*,s^n)$ is a feasible solution of the dominance-constrained problem.

\paragraph{Single-machine scheduling problem}
For a state $s=(r,p)$ and an upper bound $\bar{C}\in\mathbb{Z}^d$ on the completion times, we solve
\begin{align*}
    \min_{S,C\in\mathbb{Z}^d,\ z\in\{0,1\}^{\{(j,k):j<k\}}}\ & \sum_{j=1}^d\theta_jC_j\\
    \text{s.t.}\ & C_j=S_j+p_j,\quad r_j\le S_j,\quad C_j\le\bar{C}_j\quad(j=1,\ldots,d),\\
    & S_k\ge S_j+p_j-M_{\mathrm{big}}(1-z_{jk})\quad(1\le j<k\le d),\\
    & S_j\ge S_k+p_k-M_{\mathrm{big}}z_{jk}\quad(1\le j<k\le d)
\end{align*}
with SCIP and return the action $x=-C$.
For $x^*(\theta,s)$ we set $\bar{C}_j=\max_kr_k+\sum_kp_k$, for $x^*(\theta;x^n,s)$ we set $\bar{C}_j=-x^n_j$, and $M_{\mathrm{big}}=2\max_j(\bar{C}_j+p_j)$.

\begin{algorithm}[tbp]
    \caption{Instance generation for the single-machine scheduling problem (one episode and dimension $d$)}
    \label{alg:gen_scheduling}
    \begin{algorithmic}[1]
        \REQUIRE episode index $e$, number of jobs $d$, number of states $N=10$
        \STATE $\theta^*\sim\mathrm{Dir}(\mathbf{1})$
        \FOR{$n=1,\ldots,N$}
            \STATE $r^n_j\sim\mathrm{U}\{0,\ldots,14\}$, $p^n_j\sim\mathrm{U}\{1,\ldots,4\}$ ($j=1,\ldots,d$, all independent), $s^n\gets(r^n,p^n)$
        \ENDFOR
        \FOR{$n=1,\ldots,N$}
            \STATE $x^n\gets x^*(\theta^*,s^n)$
            \STATE $\tau^n_1\sim\mathrm{U}\{8,\ldots,12\}$, $\tau^n_2\sim\mathrm{U}\{18,\ldots,22\}$
            \FOR{$j=1,\ldots,d$}
                \STATE if $x^n_j\le-\tau^n_1$, then $x^n_j\gets x^n_j-1$
                \STATE if $x^n_j\le-\tau^n_2$, then $x^n_j\gets x^n_j-1$
            \ENDFOR
        \ENDFOR
        \RETURN $\{(s^n,x^n)\}_{n=1}^N$
    \end{algorithmic}
\end{algorithm}

Let $C^*_j$ be the completion times of $x^*(\theta^*,s^n)$. Then the completion times of the agent are $f(C^*_j):=C^*_j+[C^*_j\ge\tau^n_1]+[C^*_j\ge\tau^n_2-1]$. Since $f(c+1)-f(c)\ge1$, $f$ preserves the processing order and the release-time constraints, so $x^n$ is feasible and $x^*(\theta^*,s^n)$ is a feasible solution of the dominance-constrained problem.

\paragraph{Weighted multi-cover problem}
The word ranking common to both experiments is given in \Cref{alg:interleave}, the word scores for Cora and ogbn-arxiv in \Cref{alg:cora_words,alg:arxiv_words}, and the instance generation in \Cref{alg:gen_cora,alg:gen_arxiv}.
Here, $\mathrm{argsort}^{\downarrow}$ denotes a stable sort in descending order in which ties are ordered by increasing index.
The stop words for ogbn-arxiv are 164 words based on the lists of NLTK and scikit-learn.
The forward problem is solved with SCIP with $x\in\{0,\ldots,r\}^d$ ($x^n\le x\le r$ in the dominance-constrained version), and returns the componentwise $\min(r,A^\top z)$ of the selected papers $z$.

\begin{algorithm}[tbp]
    \caption{Word ranking that alternates between fields}
    \label{alg:interleave}
    \begin{algorithmic}[1]
        \REQUIRE word scores $S_c(j)$ for each field ($j=0,\ldots,n_{\mathrm{w}}-1$), an order of fields $(c_1,\ldots,c_{n_{\mathrm{f}}})$, upper limit on the number of words $L=200$
        \STATE For each field $c$, $\pi_c\gets\mathrm{argsort}^{\downarrow}_jS_c(j)$, $p_c\gets0$; $\mathcal{R}\gets()$
        \WHILE{$|\mathcal{R}|<L$ and a word was added to $\mathcal{R}$ in the previous round}
            \FOR{$l=1,\ldots,n_{\mathrm{f}}$}
                \STATE $c\gets c_l$, and $p_c\gets p_c+1$ while $\pi_c(p_c)\in\mathcal{R}$
                \IF{$p_c<n_{\mathrm{w}}$ and $|\mathcal{R}|<L$}
                    \STATE append $\pi_c(p_c)$ to the end of $\mathcal{R}$, and $p_c\gets p_c+1$
                \ENDIF
            \ENDFOR
        \ENDWHILE
        \RETURN $\mathcal{R}$ (the words for dimension $d$ are the first $d$ words)
    \end{algorithmic}
\end{algorithm}

\begin{algorithm}[tbp]
    \caption{Word scores for Cora}
    \label{alg:cora_words}
    \begin{algorithmic}[1]
        \REQUIRE Bag-of-Words $B^{\mathrm{BoW}}\in\{0,1\}^{2708\times1433}$, fields $c(i)\in\{0,\ldots,6\}$
        \STATE For each field $c$, $n_c\gets|\{i:c(i)=c\}|$, $S_c(j)\gets\frac{1}{n_c}\sum_{i:c(i)=c}B^{\mathrm{BoW}}_{ij}$ (occurrence rate)
        \STATE Sort the fields in descending order of $n_c$ (ties by increasing field index), apply \Cref{alg:interleave}, and let $\mathcal{R}$ be its output
        \RETURN $\mathcal{R}$
    \end{algorithmic}
\end{algorithm}

\begin{algorithm}[tbp]
    \caption{Word scores for ogbn-arxiv}
    \label{alg:arxiv_words}
    \begin{algorithmic}[1]
        \REQUIRE titles and abstracts of all papers $i=0,\ldots,n_{\mathrm{tot}}-1$ ($n_{\mathrm{tot}}=169343$, including papers from 2020), fields $c(i)\in\{0,\ldots,39\}$, a set of stop words $\mathcal{W}_{\mathrm{stop}}$
        \STATE For each paper $i$, concatenate the title and the abstract and convert them to lowercase; let $W_i$ be the set of words, extracted as maximal runs of alphabetic characters, that have length at least 3 and are not in $\mathcal{W}_{\mathrm{stop}}$
        \STATE Sort the words that appear in at least 20 papers in lexicographic order, and denote them by $w_0,\ldots,w_{n_{\mathrm{w}}-1}$
        \STATE For each field $c$, $n_c\gets|\{i:c(i)=c\}|$ (all years), $S_c(j)\gets\frac{|\{i:c(i)=c,\ w_j\in W_i\}|}{n_c}-\frac{|\{i:w_j\in W_i\}|}{n_{\mathrm{tot}}}$ (distinctiveness)
        \STATE Sort the fields in descending order of $n_c$ (ties by increasing field index), apply \Cref{alg:interleave}, and let $\mathcal{R}$ be its output
        \RETURN $\mathcal{R}$
    \end{algorithmic}
\end{algorithm}

\begin{algorithm}[tbp]
    \caption{Instance generation for Cora (episode $e$, dimension $d$)}
    \label{alg:gen_cora}
    \begin{algorithmic}[1]
        \REQUIRE neighborhoods $\mathcal{N}(\cdot)$ of the citation graph made undirected with self-loops removed, word ranking $\mathcal{R}$, $N=20$, $r=3$
        \STATE Let $(C_0,\ldots,C_{n_{\mathrm{cand}}-1})$ be the sequence of papers $v$ with $3\le|\mathcal{N}(v)|\le10$, sorted in increasing order of paper index
        \STATE Choose a subset from the uniform distribution over all $N$-element subsets of $\{0,\ldots,n_{\mathrm{cand}}-1\}$, and denote its elements in increasing order by $I_1<\cdots<I_N$
        \FOR{$n=1,\ldots,N$}
            \STATE $v\gets C_{I_n}$, $E_{s^n}\gets\mathcal{N}(v)$, $P_{s^n}\gets\bigl(E_{s^n}\cup\bigcup_{u\in E_{s^n}}\mathcal{N}(u)\bigr)\setminus\{v\}$
            \STATE $A_{ij}\gets B^{\mathrm{BoW}}_{i,\mathcal{R}_j}$ ($i\in P_{s^n}$, $j=1,\ldots,d$), $k_{s^n}\gets|E_{s^n}|$, $x^n_j\gets\min\bigl(r,\sum_{i\in E_{s^n}}A_{ij}\bigr)$
        \ENDFOR
        \RETURN $\{(s^n,x^n)\}_{n=1}^N$
    \end{algorithmic}
\end{algorithm}

\begin{algorithm}[tbp]
    \caption{Instance generation for ogbn-arxiv (episode $e$, dimension $d$)}
    \label{alg:gen_arxiv}
    \begin{algorithmic}[1]
        \REQUIRE set of directed edges $\mathcal{E}$ ($(u,w)\in\mathcal{E}$ means that $u$ cites $w$), publication year $t(\cdot)$, field $c(\cdot)$, word ranking $\mathcal{R}$, $N=20$, $r=5$
        \STATE $\mathrm{Ref}(u):=\{w:(u,w)\in\mathcal{E},\ t(w)\le2019,\ w\ne u\}$
        \STATE For each field $c$, let $G_c$ be the sequence of papers $v$ with $t(v)=2019$, $c(v)=c$, and $5\le|\mathrm{Ref}(v)|\le8$, sorted in increasing order of paper index, and let $m_c\gets|\{v:t(v)=2019,\ c(v)=c\}|$
        \STATE Sort the fields in descending order of $m_c$ (ties by increasing field index), keep only those with $|G_c|\ge N$, let $c$ be the $e$-th field of this sequence (0-indexed), and set $(G_0,\ldots,G_{n_{\mathrm{cand}}-1})\gets G_c$
        \STATE Choose a subset from the uniform distribution over all $N$-element subsets of $\{0,\ldots,n_{\mathrm{cand}}-1\}$, and denote its elements in increasing order by $I_1<\cdots<I_N$
        \FOR{$n=1,\ldots,N$}
            \STATE $v\gets G_{I_n}$, $E_{s^n}\gets\mathrm{Ref}(v)$, $P_{s^n}\gets\bigl(E_{s^n}\cup\bigcup_{w\in E_{s^n}}\mathrm{Ref}(w)\bigr)\setminus\{v\}$
            \STATE $A_{ij}\gets[\text{word }\mathcal{R}_j\text{ appears in paper }i]$ ($i\in P_{s^n}$, $j=1,\ldots,d$), $k_{s^n}\gets|E_{s^n}|$, $x^n_j\gets\min\bigl(r,\sum_{i\in E_{s^n}}A_{ij}\bigr)$
        \ENDFOR
        \RETURN $\{(s^n,x^n)\}_{n=1}^N$
    \end{algorithmic}
\end{algorithm}
} 

\subsection{Results for each dimension on the knapsack problem and the weighted multi-cover problem}
\label{subsec:appendix_main_tables}

For the knapsack problem and the weighted multi-cover problem, \Cref{tab:knapsack_median,tab:multicover_main} show, for each dimension, the median of $\ell^{\mathrm{OP}}$ of the seven methods at $B$ oracle calls.

\Cref{fig:opl_vs_calls_knapsack} shows the evolution of the worst 10\% of $\ell^{\mathrm{OP}}$ (best-so-far) on the knapsack problem.

\begin{figure}[tbp]
    \centering
    \includegraphics[width=\linewidth]{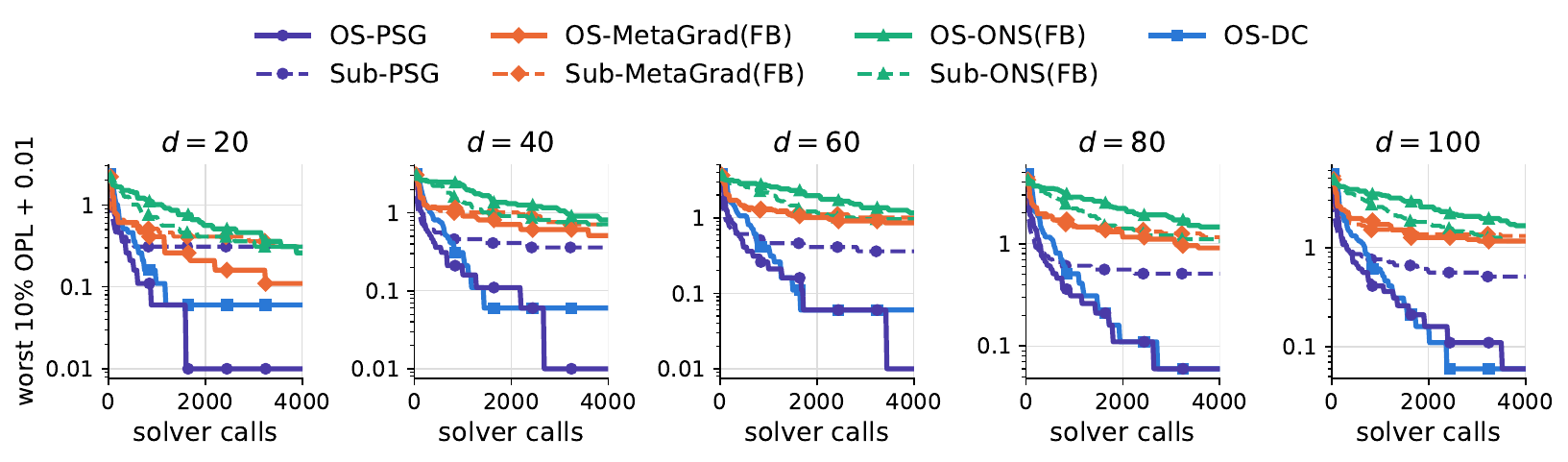}
    \caption{Relation between the worst 10\% of $\ell^{\mathrm{OP}}$ (best-so-far) and the number of oracle calls on the knapsack problem. The vertical axis and the lines are drawn as in \Cref{fig:opl_vs_calls}.}
    \label{fig:opl_vs_calls_knapsack}
\end{figure}

\begin{table}[tbp]
    \centering
    \caption{Knapsack problem: median of $\ell^{\mathrm{OP}}$ at $B=4000$ oracle calls (the attainment rate [\%] of $\ell^{\mathrm{OP}}=0$ in parentheses). 50 episodes for each $d$. For each optimization method, the pair of the OS loss (proposed) and the suboptimality loss (existing) is shown, and the one with the smaller median in each pair (with the higher attainment rate in case of a tie) is shown in bold.}
    \label{tab:knapsack_median}
    \setlength{\tabcolsep}{4pt}
    \begin{tabular}{l|ccccc}
        \hline
        Method & $d=20$ & $d=40$ & $d=60$ & $d=80$ & $d=100$ \\
        \hline
        OS-PSG & \textbf{0.00 (98)} & \textbf{0.00 (96)} & \textbf{0.00 (92)} & \textbf{0.00 (90)} & \textbf{0.00 (76)} \\
        Sub-PSG & 0.15 (16) & 0.20 (0) & 0.23 (0) & 0.28 (0) & 0.35 (0) \\
        \hline
        OS-MetaGrad(FB) & \textbf{0.00 (86)} & \textbf{0.00 (52)} & \textbf{0.08 (46)} & \textbf{0.35 (8)} & \textbf{0.38 (12)} \\
        Sub-MetaGrad(FB) & 0.15 (6) & 0.40 (0) & 0.65 (0) & 0.80 (0) & 0.95 (0) \\
        \hline
        OS-ONS(FB) & \textbf{0.10 (34)} & \textbf{0.40 (2)} & 0.70 (0) & 0.95 (0) & 1.15 (0) \\
        Sub-ONS(FB) & 0.15 (0) & 0.40 (0) & \textbf{0.62 (0)} & \textbf{0.70 (0)} & \textbf{0.88 (0)} \\
        \hline
        OS-DC & 0.00 (90) & 0.00 (88) & 0.00 (86) & 0.00 (82) & 0.00 (84) \\
        \hline
    \end{tabular}
\end{table}

\begin{table}[tbp]
    \centering
    \caption{Weighted multi-cover problem (left: Cora, right: ogbn-arxiv): median of $\ell^{\mathrm{OP}}$ at $B=4000$ oracle calls. 20 episodes for each $d$. For each optimization method, the pair of the OS loss (proposed) and the suboptimality loss (existing) is shown, and the one with the smaller median in each pair is shown in bold.}
    \label{tab:multicover_main}
    \setlength{\tabcolsep}{1pt}
    \begin{tabular}{l|ccccc|ccccc}
        \hline
        & \multicolumn{5}{c|}{Cora ($d$)} & \multicolumn{5}{c}{ogbn-arxiv ($d$)} \\
        Method & 20 & 40 & 60 & 80 & 100 & 20 & 40 & 60 & 80 & 100 \\
        \hline
        OS-PSG & \textbf{0.78} & \textbf{2.58} & \textbf{4.40} & \textbf{5.72} & \textbf{6.82} & \textbf{0.57} & \textbf{2.40} & \textbf{4.30} & \textbf{6.50} & \textbf{9.30} \\
        Sub-PSG & 1.25 & 3.00 & 4.95 & 6.25 & 7.47 & 1.12 & 2.95 & 4.85 & 7.22 & 9.40 \\
        \hline
        OS-MetaGrad(FB) & \textbf{0.97} & \textbf{2.92} & 5.25 & \textbf{6.75} & \textbf{7.80} & \textbf{0.78} & \textbf{2.73} & \textbf{4.65} & \textbf{7.18} & 9.55 \\
        Sub-MetaGrad(FB) & 1.27 & 3.23 & \textbf{5.18} & 7.05 & 8.15 & 1.15 & 2.97 & 4.85 & 7.20 & \textbf{9.47} \\
        \hline
        OS-ONS(FB) & \textbf{1.00} & \textbf{3.05} & 5.22 & 6.88 & \textbf{8.00} & \textbf{0.82} & \textbf{2.67} & \textbf{4.68} & \textbf{6.92} & 9.43 \\
        Sub-ONS(FB) & 1.33 & 3.23 & \textbf{5.20} & \textbf{6.82} & 8.38 & 1.15 & 2.97 & 4.85 & 7.00 & 9.43 \\
        \hline
        OS-DC & 0.82 & 2.60 & 4.40 & 5.83 & 7.30 & 0.65 & 2.52 & 4.53 & 7.00 & 9.35 \\
        \hline
    \end{tabular}
\end{table}

\subsection{Results on the single-machine scheduling problem and the movie selection problem with viewing costs}
\label{subsec:appendix_other_results}

\Cref{tab:onemachine_main,tab:timecost_main} show, for each dimension, $\ell^{\mathrm{OP}}$ at $B$ oracle calls for the three pairs that differ only in the loss function and for OS-DC, and \Cref{fig:opl_vs_calls_other} shows its trajectory on the worst 10\% of the episodes.

\begin{table}[tbp]
    \centering
    \caption{Single-machine scheduling problem: median of $\ell^{\mathrm{OP}}$ at $B=2000$ oracle calls (the attainment rate [\%] of $\ell^{\mathrm{OP}}=0$ in parentheses). 25 episodes for each $d$. For each optimization method, the pair of the OS loss (proposed) and the suboptimality loss (existing) is shown, and the one with the smaller median in each pair (with the higher attainment rate in case of a tie) is shown in bold.}
    \label{tab:onemachine_main}
    \begin{tabular}{l|ccc}
        \hline
        Method & $d=4$ & $d=6$ & $d=8$ \\
        \hline
        OS-PSG & \textbf{0.00 (100)} & \textbf{0.00 (100)} & \textbf{0.00 (96)} \\
        Sub-PSG & 0.00 (88) & 0.00 (72) & 0.10 (44) \\
        \hline
        OS-MetaGrad(FB) & \textbf{0.00 (100)} & \textbf{0.00 (100)} & \textbf{0.00 (96)} \\
        Sub-MetaGrad(FB) & 0.00 (84) & 0.00 (76) & 0.30 (28) \\
        \hline
        OS-ONS(FB) & \textbf{0.00 (100)} & \textbf{0.00 (100)} & \textbf{0.00 (84)} \\
        Sub-ONS(FB) & 0.00 (84) & 0.00 (52) & 0.30 (40) \\
        \hline
        OS-DC & 0.00 (100) & 0.00 (96) & 0.00 (96) \\
        \hline
    \end{tabular}
\end{table}

\begin{table}[tbp]
    \centering
    \caption{Movie selection problem with viewing costs: median of $\ell^{\mathrm{OP}}$ at $B=4000$ oracle calls. 20 episodes for each $d$. For each optimization method, the pair of the OS loss (proposed) and the suboptimality loss (existing) is shown, and the one with the smaller median in each pair is shown in bold.}
    \label{tab:timecost_main}
    \setlength{\tabcolsep}{1pt}
    \begin{tabular}{l|ccccc}
        \hline
        Method & $d=20$ & $d=40$ & $d=60$ & $d=80$ & $d=100$ \\
        \hline
        OS-PSG & \textbf{\twoline{0.72}{[0.59,1.01]}} & \textbf{\twoline{2.52}{[2.02,2.90]}} & \textbf{\twoline{4.05}{[3.65,4.22]}} & \textbf{\twoline{5.18}{[4.57,5.71]}} & \textbf{\twoline{6.50}{[5.65,7.09]}} \\
        Sub-PSG & \twoline{0.95}{[0.70,1.11]} & \twoline{2.85}{[2.44,3.17]} & \twoline{4.60}{[4.40,5.11]} & \twoline{5.80}{[5.39,6.53]} & \twoline{7.57}{[6.84,7.98]} \\
        \hline
        OS-MetaGrad(FB) & \textbf{\twoline{0.95}{[0.79,1.10]}} & \textbf{\twoline{2.70}{[2.60,3.10]}} & \textbf{\twoline{4.50}{[4.09,4.79]}} & \textbf{\twoline{5.65}{[5.42,6.19]}} & \textbf{\twoline{7.10}{[6.50,7.75]}} \\
        Sub-MetaGrad(FB) & \twoline{1.30}{[1.12,1.52]} & \twoline{3.25}{[2.91,3.42]} & \twoline{4.88}{[4.67,5.20]} & \twoline{6.18}{[5.85,6.53]} & \twoline{7.90}{[6.66,8.49]} \\
        \hline
        OS-ONS(FB) & \textbf{\twoline{1.10}{[0.85,1.30]}} & \textbf{\twoline{3.15}{[2.91,3.31]}} & \twoline{4.95}{[4.49,5.16]} & \twoline{6.45}{[6.01,6.76]} & \twoline{8.00}{[7.08,8.49]} \\
        Sub-ONS(FB) & \twoline{1.35}{[1.19,1.51]} & \twoline{3.27}{[2.95,3.40]} & \twoline{4.95}{[4.67,5.15]} & \textbf{\twoline{6.17}{[5.85,6.65]}} & \textbf{\twoline{7.72}{[6.96,8.26]}} \\
        \hline
        OS-DC & \twoline{0.62}{[0.50,0.85]} & \twoline{2.30}{[2.05,2.96]} & \twoline{4.20}{[3.89,4.36]} & \twoline{5.28}{[4.99,5.81]} & \twoline{6.83}{[6.12,7.15]} \\
        \hline
    \end{tabular}
\end{table}

\begin{figure}[tbp]
    \centering
    \includegraphics[width=0.75\linewidth]{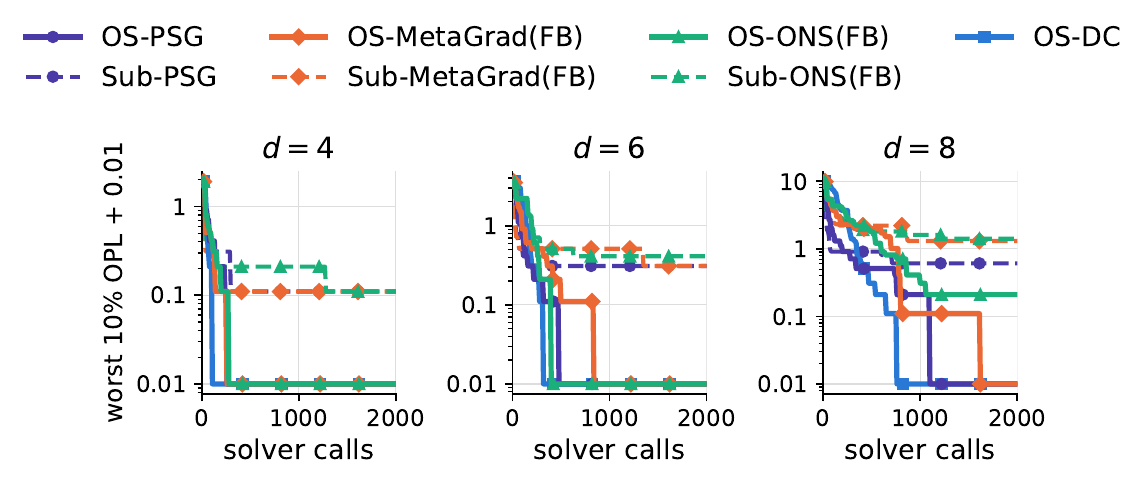}\\
    (a) Single-machine scheduling problem\\[2mm]
    \includegraphics[width=\linewidth]{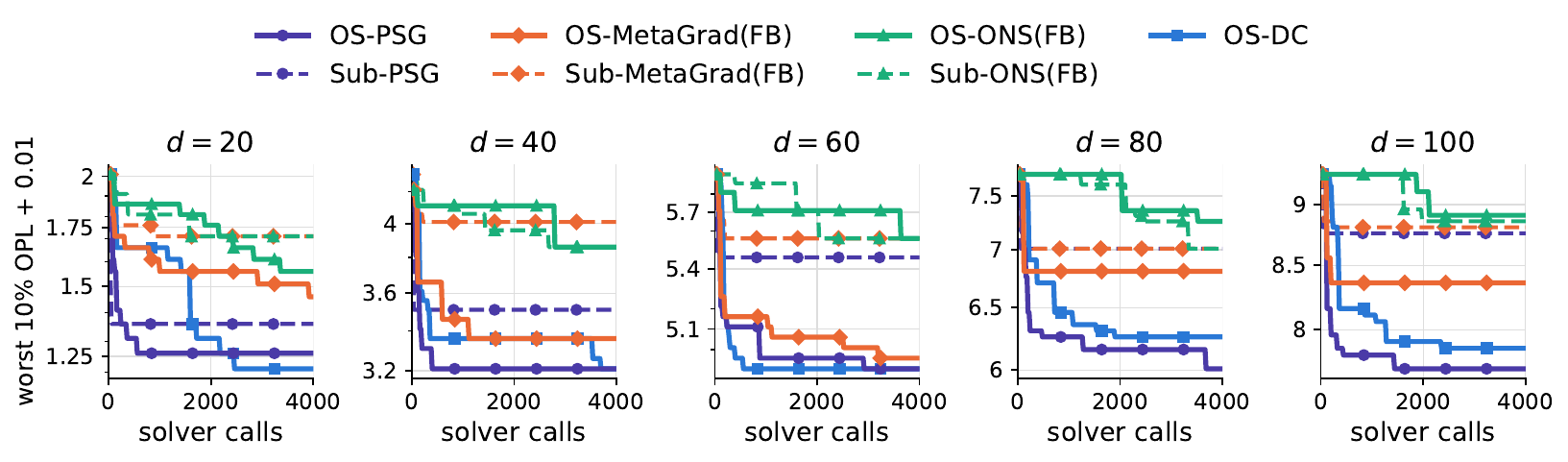}\\
    (b) Movie selection problem with viewing costs
    \caption{Relation between the worst 10\% of $\ell^{\mathrm{OP}}$ (best-so-far) and the number of oracle calls on the single-machine scheduling problem and the movie selection problem with viewing costs. The vertical axis and the lines are drawn as in \Cref{fig:opl_vs_calls}.}
    \label{fig:opl_vs_calls_other}
\end{figure}

\paragraph{Single-machine scheduling problem}
In all three pairs and in all dimensions, the OS-loss variant achieved a higher attainment rate (\Cref{tab:onemachine_main}).
The attainment rates of the four methods with the OS loss are 84--100\%, whereas those of the suboptimality-loss variant decrease with the dimension; at $d=8$, Sub-PSG attains 44\%, Sub-MetaGrad(FB) 28\%, and Sub-ONS(FB) 40\% (the OS-loss variants of the same pairs attain 96\%, 96\%, and 84\%, respectively).
Aggregated over all dimensions, the attainment rates of the eight methods with the OS loss are 91--99\%, which exceed those of the five existing methods including the online versions (23--68\%) (\Cref{tab:full_onemachine}).
Comparing on the same instances, among the 75 episodes, the numbers of episodes in which $\ell^{\mathrm{OP}}$ of OS-PSG is smaller than that of Sub-PSG, Sub-ONS, and Sub-MetaGrad are 24, 58, and 36, respectively, and the numbers of episodes in which it is larger are 0, 0, and 0 (for OS-DC, the numbers of episodes with smaller $\ell^{\mathrm{OP}}$ are 24, 58, and 36, and those with larger $\ell^{\mathrm{OP}}$ are 2, 0, and 0).
In the three pairs that differ only in the loss function, the OS-loss variant loses in only 3 episodes in total (\Cref{tab:loss_only_pairs}).
The remaining episodes are ties.
As in the knapsack problem, in this setting where the actions of the agent are degraded optimal solutions, the advantage of the proposed method becomes more pronounced in higher dimensions.

\paragraph{Movie selection problem with viewing costs}
In this problem, the actions of the agent are real data, and since we do not assume the existence of a weight satisfying \cref{eq:outperformance_IOP}, we compare the values of $\ell^{\mathrm{OP}}$ (\Cref{tab:full_timecost}).
In the comparison that replaces only the loss function, the OS-loss variant won in all three pairs: 97 wins and 1 loss in the PSG pair, 90 wins and 1 loss in the MetaGrad(FB) pair, and 45 wins and 32 losses in the ONS(FB) pair (\Cref{tab:loss_only_pairs}).
Looking at each dimension, in the PSG pair and the MetaGrad(FB) pair, the median of the OS-loss variant is smaller in all dimensions (\Cref{tab:timecost_main}).
In particular, in the PSG pair the difference widens with the dimension; at $d=100$, OS-PSG attains $6.50$ whereas Sub-PSG attains $7.57$.
In the ONS(FB) pair, the median of OS-ONS(FB) is smaller for $d\le40$, the two are equal at $d=60$, and the median of Sub-ONS(FB) is smaller for $d\ge80$.
Among the seven methods in the table, the smallest median is attained by OS-DC ($0.62$, $2.30$) for $d=20,40$ and by OS-PSG ($4.05$, $5.18$, $6.50$) for $d=60,80,100$; that is, it is attained by a method with the OS loss in all dimensions.
In other words, under the same optimization method, merely replacing the suboptimality loss with the OS loss improves the prediction of solutions that outperform the agent.
Note that the online versions Sub-ONS and Sub-MetaGrad, which update state by state, have medians of $3.550$ and $3.425$ over all dimensions, respectively, which are smaller than that of OS-PSG ($3.975$) (\Cref{tab:full_timecost}).
Since the medians of the full-batch versions of the same learners, Sub-ONS(FB) and Sub-MetaGrad(FB), are $4.950$ and $4.875$, respectively, this difference is presumably due not to the use of the suboptimality loss but to the optimization method, namely online learning that updates state by state.
Among the proposed methods, the median of OS-DC[ONS] over all dimensions ($3.550$) equals that of Sub-ONS ($3.550$), and its median computation time is $31$ seconds, about one third of that of Sub-ONS ($91$ seconds).
Combining the OS loss with methods that minimize it state by state via online learning is left for future work.

\subsection{Results of all methods on the five experiments}
\label{subsec:appendix_full_results}

\paragraph{Results of the methods not included in the main text}
On the knapsack problem, comparing on the same instances, among the 250 episodes, the numbers of episodes in which $\ell^{\mathrm{OP}}$ of OS-PSG is smaller than that of Sub-PSG, Sub-ONS, and Sub-MetaGrad are 241, 227, and 243, respectively, and there is no episode in which it is larger (for OS-DC, the numbers of episodes with smaller $\ell^{\mathrm{OP}}$ are 240, 222, and 242, and those with larger $\ell^{\mathrm{OP}}$ are 1, 3, and 0).
Here, Sub-ONS and Sub-MetaGrad are the online versions that update state by state, and their attainment rates are also 9\% and 3\%, respectively, over all dimensions (\Cref{tab:full_knapsack}).
The empirical mean of the OS loss measured at the same weights is also $0$ in median for OS-PSG and OS-DC, whereas it remains at $3.9\times10^{-4}$--$2.8\times10^{-3}$ for Sub-PSG, Sub-ONS, and Sub-MetaGrad (\Cref{tab:full_knapsack}).
On the weighted multi-cover problem, unlike the movie selection problem with viewing costs (\Cref{subsec:appendix_other_results}), the medians of $\ell^{\mathrm{OP}}$ of the online versions Sub-ONS and Sub-MetaGrad also fall short of the best method with the OS loss.
Among all 13 methods, the smallest median of $\ell^{\mathrm{OP}}$ over all dimensions is attained by OS-DC[LV+CPF]+SGF ($4.000$) on Cora and by OS-DC[ONS] ($3.575$) on ogbn-arxiv, which are smaller than the best of the existing methods ($4.500$ of Sub-ONS on Cora and $3.950$ of Sub-MetaGrad on ogbn-arxiv) (\Cref{tab:full_cora_multicover,tab:full_arxiv_multicover}).
Comparing on the same instances the pairs of the proposed method and the existing method with the smallest mean of $\ell^{\mathrm{OP}}$ over all episodes (OS-DC[LV+CPF]+SGF vs.\ Sub-MetaGrad on Cora, and OS-DC[MetaGrad] vs.\ Sub-MetaGrad on ogbn-arxiv), the proposed method wins 65 times and loses 30 times on Cora, and wins 85 times and loses 11 times on ogbn-arxiv.
The median computation time of OS-PSG is $8.7$ seconds on Cora (second to $8.2$ seconds of OS-ONS(FB)) and $7.4$ seconds on ogbn-arxiv (the shortest among all methods), about one quarter of that of Sub-MetaGrad ($35.4$ seconds and $31.1$ seconds); note that the time of Sub-MetaGrad includes the solves used only for recording, which OS-PSG does not need (see the definition of ``Time'' below).
Among the three pairs differing only in the loss function, in the two full-batch pairs, although the OS-loss variant won overall, it lost in 17 episodes (MetaGrad(FB)) and 22 episodes (ONS(FB)) on Cora, so the margin is smaller than in the PSG pair (\Cref{tab:loss_only_pairs}).

\Cref{tab:full_knapsack,tab:full_onemachine,tab:full_timecost,tab:full_cora_multicover,tab:full_arxiv_multicover} show the results of all 13 methods on the five experiments.
In each table, the upper block shows existing inverse optimization (the suboptimality loss), and the lower block shows outperformance inverse optimization (the OS loss).
In the method names, the bracket of OS-DC[$\cdot$] indicates the inner solver, and +SGF indicates that subgradient finishing is added.
Each value is the median, over all dimensions and episodes, of the value at $B$ oracle calls; for $\ell^{\mathrm{OP}}$, the interquartile range (the first and third quartiles) is also shown.
$\bar{\ell}^{\optdex}$ is measured at the weight that gives the best-so-far $\ell^{\mathrm{OP}}$ (\Cref{subsec:experimental_setup}).
``Time'' is the computation time until the run stops, that is, until the number of oracle calls reaches $B$ or $\ell^{\mathrm{OP}}=0$ is attained, whichever comes first (the part after that is not included), and its mean and median are shown. It also includes the time for the solves used only for recording and not counted as oracle calls, that is, the solves of $x^*(\theta;x^n,s^n)$ for recording $\bar{\ell}^{\optdex}$ in the Sub methods and OS-DC. Hence, in comparisons with the OS methods other than OS-DC, which do not need these solves, the times of the Sub methods are overestimated.

\begin{table}[tbp]
    \centering
    \setlength{\tabcolsep}{1pt}
    \caption{Knapsack problem (oracle-call budget $B=4000$). Each value is the median, over the 250 runs ($d\in\{20,40,60,80,100\}$, 50 episodes for each $d$), of the value at $B$ oracle calls, and the attainment rate is the fraction of these 250 runs that attained $\ell^{\mathrm{OP}}=0$. For the time (in seconds), the mean is also shown.}
    \label{tab:full_knapsack}
    \begin{tabular}{l|cccccc}
        \hline
        Method & \twoline{$\ell^{\mathrm{OP}}$}{median} & \twoline{interquartile}{range} & \twoline{attainment}{rate [\%]} & \twoline{$\bar{\ell}^{\optdex}$}{median} & \twoline{time}{mean} & \twoline{time}{median} \\
        \hline
        Sub-PSG & 0.250 & [0.150,0.300] & 3 & $4.9\times10^{-4}$ & 1.3 & 1.3 \\
        Sub-ONS & 0.150 & [0.100,0.250] & 9 & $3.9\times10^{-4}$ & 2.2 & 2.2 \\
        Sub-MetaGrad & 0.350 & [0.200,0.550] & 3 & $2.8\times10^{-3}$ & 8.4 & 8.3 \\
        Sub-ONS(FB) & 0.600 & [0.300,0.800] & 0 & $6.2\times10^{-3}$ & 1.4 & 1.4 \\
        Sub-MetaGrad(FB) & 0.550 & [0.300,0.850] & 1 & $5.5\times10^{-3}$ & 2.4 & 2.4 \\
        \hline
        OS-PSG & 0.000 & [0.000,0.000] & 90 & $0$ & 0.3 & 0.2 \\
        OS-ONS(FB) & 0.700 & [0.300,1.000] & 7 & $8.9\times10^{-3}$ & 0.7 & 0.7 \\
        OS-MetaGrad(FB) & 0.100 & [0.000,0.450] & 41 & $4.8\times10^{-4}$ & 0.9 & 0.8 \\
        OS-DC{[}ONS{]} & 0.100 & [0.000,0.250] & 26 & $5.2\times10^{-4}$ & 1.3 & 1.3 \\
        OS-DC{[}MetaGrad{]} & 0.050 & [0.000,0.150] & 44 & $1.1\times10^{-4}$ & 1.8 & 1.5 \\
        OS-DC{[}CP{]}+SGF & 0.650 & [0.213,1.050] & 12 & $7.2\times10^{-3}$ & 1.7 & 1.7 \\
        OS-DC{[}LV+CPF{]}+SGF & 0.000 & [0.000,0.000] & 85 & $0$ & 0.9 & 0.7 \\
        OS-DC{[}PB+CPF{]}+SGF & 0.000 & [0.000,0.000] & 86 & $0$ & 0.7 & 0.6 \\
        \hline
    \end{tabular}
\end{table}

\begin{table}[tbp]
    \centering
    \setlength{\tabcolsep}{1pt}
    \caption{Single-machine scheduling problem (oracle-call budget $B=2000$). Each value is the median, over the 75 runs ($d\in\{4,6,8\}$, 25 episodes for each $d$), of the value at $B$ oracle calls, and the attainment rate is the fraction of these 75 runs that attained $\ell^{\mathrm{OP}}=0$. For the time (in seconds), the mean is also shown.}
    \label{tab:full_onemachine}
    \begin{tabular}{l|cccccc}
        \hline
        Method & \twoline{$\ell^{\mathrm{OP}}$}{median} & \twoline{interquartile}{range} & \twoline{attainment}{rate [\%]} & \twoline{$\bar{\ell}^{\optdex}$}{median} & \twoline{time}{mean} & \twoline{time}{median} \\
        \hline
        Sub-PSG & 0.000 & [0.000,0.150] & 68 & $0$ & 17.7 & 2.8 \\
        Sub-ONS & 0.500 & [0.100,1.350] & 23 & $1.5\times10^{-2}$ & 32.6 & 17.3 \\
        Sub-MetaGrad & 0.000 & [0.000,0.300] & 52 & $0$ & 24.9 & 11.8 \\
        Sub-ONS(FB) & 0.000 & [0.000,0.300] & 59 & $0$ & 24.6 & 13.7 \\
        Sub-MetaGrad(FB) & 0.000 & [0.000,0.200] & 63 & $0$ & 19.7 & 1.2 \\
        \hline
        OS-PSG & 0.000 & [0.000,0.000] & 99 & $0$ & 6.3 & 1.1 \\
        OS-ONS(FB) & 0.000 & [0.000,0.000] & 95 & $0$ & 10.3 & 1.2 \\
        OS-MetaGrad(FB) & 0.000 & [0.000,0.000] & 99 & $0$ & 6.3 & 1.5 \\
        OS-DC{[}ONS{]} & 0.000 & [0.000,0.000] & 91 & $0$ & 19.6 & 1.9 \\
        OS-DC{[}MetaGrad{]} & 0.000 & [0.000,0.000] & 93 & $0$ & 18.7 & 3.3 \\
        OS-DC{[}CP{]}+SGF & 0.000 & [0.000,0.000] & 97 & $0$ & 14.4 & 3.4 \\
        OS-DC{[}LV+CPF{]}+SGF & 0.000 & [0.000,0.000] & 95 & $0$ & 9.9 & 1.7 \\
        OS-DC{[}PB+CPF{]}+SGF & 0.000 & [0.000,0.000] & 97 & $0$ & 10.9 & 2.2 \\
        \hline
    \end{tabular}
\end{table}

\begin{table}[tbp]
    \centering
    \setlength{\tabcolsep}{1pt}
    \caption{Movie selection problem with viewing costs (oracle-call budget $B=4000$). Each value is the median, over the 100 runs ($d\in\{20,40,60,80,100\}$, 20 episodes for each $d$), of the value at $B$ oracle calls, and the attainment rate is the fraction of these 100 runs that attained $\ell^{\mathrm{OP}}=0$. For the time (in seconds), the mean is also shown.}
    \label{tab:full_timecost}
    \begin{tabular}{l|cccccc}
        \hline
        Method & \twoline{$\ell^{\mathrm{OP}}$}{median} & \twoline{interquartile}{range} & \twoline{attainment}{rate [\%]} & \twoline{$\bar{\ell}^{\optdex}$}{median} & \twoline{time}{mean} & \twoline{time}{median} \\
        \hline
        Sub-PSG & 4.600 & [2.438,6.000] & 0 & $3.1\times10^{-2}$ & 106.7 & 95.3 \\
        Sub-ONS & 3.550 & [1.950,4.612] & 0 & $2.0\times10^{-2}$ & 98.9 & 91.2 \\
        Sub-MetaGrad & 3.425 & [1.900,4.600] & 0 & $2.4\times10^{-2}$ & 107.4 & 97.5 \\
        Sub-ONS(FB) & 4.950 & [2.950,6.312] & 0 & $5.0\times10^{-2}$ & 39.6 & 40.4 \\
        Sub-MetaGrad(FB) & 4.875 & [2.913,6.300] & 0 & $5.4\times10^{-2}$ & 76.1 & 62.5 \\
        \hline
        OS-PSG & 3.975 & [2.025,5.362] & 0 & $1.2\times10^{-2}$ & 75.7 & 81.1 \\
        OS-ONS(FB) & 4.950 & [2.913,6.487] & 0 & $4.9\times10^{-2}$ & 7.0 & 7.1 \\
        OS-MetaGrad(FB) & 4.500 & [2.600,5.850] & 0 & $7.5\times10^{-2}$ & 18.0 & 17.4 \\
        OS-DC{[}ONS{]} & 3.550 & [2.000,5.013] & 0 & $3.9\times10^{-2}$ & 21.6 & 23.0 \\
        OS-DC{[}MetaGrad{]} & 3.900 & [2.188,5.312] & 0 & $5.5\times10^{-2}$ & 23.2 & 23.6 \\
        OS-DC{[}CP{]}+SGF & 4.750 & [2.700,6.475] & 0 & $4.3\times10^{-2}$ & 16.2 & 16.0 \\
        OS-DC{[}LV+CPF{]}+SGF & 4.150 & [1.938,5.463] & 0 & $3.4\times10^{-3}$ & 115.0 & 110.3 \\
        OS-DC{[}PB+CPF{]}+SGF & 4.200 & [2.050,5.450] & 0 & $1.0\times10^{-2}$ & 105.2 & 102.9 \\
        \hline
    \end{tabular}
\end{table}

\begin{table}[tbp]
    \centering
    \setlength{\tabcolsep}{1pt}
    \caption{Weighted multi-cover problem (Cora) (oracle-call budget $B=4000$). Each value is the median, over the 100 runs ($d\in\{20,40,60,80,100\}$, 20 episodes for each $d$), of the value at $B$ oracle calls, and the attainment rate is the fraction of these 100 runs that attained $\ell^{\mathrm{OP}}=0$. For the time (in seconds), the mean is also shown.}
    \label{tab:full_cora_multicover}
    \begin{tabular}{l|cccccc}
        \hline
        Method & \twoline{$\ell^{\mathrm{OP}}$}{median} & \twoline{interquartile}{range} & \twoline{attainment}{rate [\%]} & \twoline{$\bar{\ell}^{\optdex}$}{median} & \twoline{time}{mean} & \twoline{time}{median} \\
        \hline
        Sub-PSG & 4.800 & [2.787,6.338] & 0 & $7.0\times10^{-2}$ & 18.9 & 16.9 \\
        Sub-ONS & 4.500 & [2.688,6.100] & 0 & $7.9\times10^{-2}$ & 17.9 & 16.9 \\
        Sub-MetaGrad & 4.600 & [2.537,6.075] & 0 & $9.1\times10^{-2}$ & 40.8 & 35.4 \\
        Sub-ONS(FB) & 5.150 & [2.888,6.812] & 0 & $1.1\times10^{-1}$ & 17.7 & 16.2 \\
        Sub-MetaGrad(FB) & 5.150 & [2.975,7.075] & 0 & $1.1\times10^{-1}$ & 21.8 & 20.3 \\
        \hline
        OS-PSG & 4.375 & [2.362,5.900] & 0 & $6.3\times10^{-2}$ & 8.9 & 8.7 \\
        OS-ONS(FB) & 5.150 & [2.650,6.963] & 0 & $1.4\times10^{-1}$ & 8.8 & 8.2 \\
        OS-MetaGrad(FB) & 5.125 & [2.712,6.975] & 0 & $1.1\times10^{-1}$ & 10.5 & 10.1 \\
        OS-DC{[}ONS{]} & 4.350 & [2.337,5.663] & 0 & $7.9\times10^{-2}$ & 17.2 & 16.4 \\
        OS-DC{[}MetaGrad{]} & 4.100 & [2.375,5.713] & 0 & $7.1\times10^{-2}$ & 20.1 & 19.4 \\
        OS-DC{[}CP{]}+SGF & 4.775 & [2.300,7.075] & 0 & $8.3\times10^{-2}$ & 13.5 & 12.3 \\
        OS-DC{[}LV+CPF{]}+SGF & 4.000 & [1.550,6.100] & 0 & $4.6\times10^{-2}$ & 15.3 & 13.7 \\
        OS-DC{[}PB+CPF{]}+SGF & 4.400 & [2.287,6.150] & 0 & $5.1\times10^{-2}$ & 15.1 & 13.7 \\
        \hline
    \end{tabular}
\end{table}

\begin{table}[tbp]
    \centering
    \setlength{\tabcolsep}{1pt}
    \caption{Weighted multi-cover problem (ogbn-arxiv) (oracle-call budget $B=4000$). Each value is the median, over the 100 runs ($d\in\{20,40,60,80,100\}$, 20 episodes for each $d$), of the value at $B$ oracle calls, and the attainment rate is the fraction of these 100 runs that attained $\ell^{\mathrm{OP}}=0$. For the time (in seconds), the mean is also shown.}
    \label{tab:full_arxiv_multicover}
    \begin{tabular}{l|cccccc}
        \hline
        Method & \twoline{$\ell^{\mathrm{OP}}$}{median} & \twoline{interquartile}{range} & \twoline{attainment}{rate [\%]} & \twoline{$\bar{\ell}^{\optdex}$}{median} & \twoline{time}{mean} & \twoline{time}{median} \\
        \hline
        Sub-PSG & 4.650 & [2.612,7.375] & 0 & $8.4\times10^{-2}$ & 17.2 & 14.2 \\
        Sub-ONS & 4.475 & [2.663,6.650] & 0 & $3.4\times10^{-2}$ & 16.3 & 14.6 \\
        Sub-MetaGrad & 3.950 & [2.012,6.013] & 0 & $7.2\times10^{-2}$ & 36.5 & 31.1 \\
        Sub-ONS(FB) & 4.650 & [2.650,7.312] & 0 & $1.1\times10^{-1}$ & 18.0 & 14.1 \\
        Sub-MetaGrad(FB) & 4.650 & [2.650,7.537] & 0 & $1.1\times10^{-1}$ & 21.4 & 17.3 \\
        \hline
        OS-PSG & 4.200 & [1.738,7.088] & 0 & $4.7\times10^{-2}$ & 8.5 & 7.4 \\
        OS-ONS(FB) & 4.625 & [2.300,7.188] & 0 & $1.2\times10^{-1}$ & 9.1 & 7.5 \\
        OS-MetaGrad(FB) & 4.625 & [2.325,7.263] & 0 & $1.2\times10^{-1}$ & 10.4 & 8.6 \\
        OS-DC{[}ONS{]} & 3.575 & [1.700,5.812] & 0 & $5.9\times10^{-2}$ & 17.5 & 14.4 \\
        OS-DC{[}MetaGrad{]} & 3.600 & [1.688,5.562] & 0 & $6.9\times10^{-2}$ & 19.4 & 16.8 \\
        OS-DC{[}CP{]}+SGF & 4.550 & [2.438,7.500] & 0 & $8.2\times10^{-2}$ & 10.0 & 8.6 \\
        OS-DC{[}LV+CPF{]}+SGF & 4.325 & [1.887,7.050] & 0 & $1.2\times10^{-4}$ & 11.7 & 9.8 \\
        OS-DC{[}PB+CPF{]}+SGF & 4.225 & [1.938,7.125] & 0 & $1.4\times10^{-2}$ & 10.8 & 9.2 \\
        \hline
    \end{tabular}
\end{table}

{\sloppy
On the knapsack problem, OS-PSG (attainment rate 90\%), OS-DC[PB+CPF]+SGF (86\%), and OS-DC[LV+CPF]+SGF (85\%) attained a median of $\ell^{\mathrm{OP}}$ equal to $0$ and outperform the five existing methods (attainment rates 0--9\%).
The attainment rates of OS-DC[CP]+SGF and OS-ONS(FB) are 12\% and 7\%, respectively.
On the single-machine scheduling problem, the attainment rates of the eight methods with the OS loss are 91--99\%, which exceed those of the five existing methods (23--68\%).
On the movie selection problem with viewing costs, the attainment rates of all methods are 0\%; the median of $\ell^{\mathrm{OP}}$ is the smallest for Sub-MetaGrad ($3.425$), followed by Sub-ONS and OS-DC[ONS] (both $3.550$).
On this problem, OS-DC[LV+CPF]+SGF, which has the smallest empirical mean of the OS loss (median $3.4\times10^{-3}$), has a median of $\ell^{\mathrm{OP}}$ equal to $4.150$, which is larger than that of Sub-ONS (median empirical mean of the OS loss $2.0\times10^{-2}$).
This is why we use $\ell^{\mathrm{OP}}$ as the main evaluation metric in addition to the empirical mean of the OS loss (\Cref{sec:performance_metric}).
On the weighted multi-cover problem, the attainment rates of all methods are 0\%, and the smallest median of $\ell^{\mathrm{OP}}$ is attained by OS-DC[LV+CPF]+SGF ($4.000$) on Cora and by OS-DC[ONS] ($3.575$) on ogbn-arxiv.
On ogbn-arxiv, OS-DC[LV+CPF]+SGF has the smallest median empirical mean of the OS loss among all methods, $1.2\times10^{-4}$, but its median of $\ell^{\mathrm{OP}}$ is $4.325$, which is larger than that of OS-DC[ONS].\par}

} 

\end{document}